\documentclass{article}

\usepackage[preprint]{neurips_2026}

\usepackage[utf8]{inputenc} %
\usepackage[T1]{fontenc}    %
\usepackage{hyperref}       %
\usepackage{url}            %
\usepackage{booktabs}       %
\usepackage{amsfonts}       %
\usepackage{nicefrac}       %
\usepackage{microtype}      %
\usepackage{xcolor}         %

\usepackage{mathtools}
\usepackage{amsthm}
\usepackage{amsmath}
\usepackage{amsxtra}
\usepackage{enumitem}      
\usepackage{amsthm}
\usepackage{mathtools}
\usepackage{bbm}
\usepackage{amsfonts}
\usepackage{amssymb}

\usepackage{MnSymbol} %

\usepackage{xpatch}

\xpatchcmd{\proof}{\itshape}{\normalfont\proofnameformat}{}{}
\newcommand{\proofnameformat}{\bfseries}

\usepackage{prettyref}
\newcommand{\pref}[1]{\prettyref{#1}}

\newcommand{\savehyperref}[2]{\texorpdfstring{\hyperref[#1]{#2}}{#2}}
\newrefformat{eq}{\savehyperref{#1}{\textup{(\ref*{#1})}}}
\newrefformat{eqn}{\savehyperref{#1}{Equation~\ref*{#1}}}
\newrefformat{con}{\savehyperref{#1}{Conjecture~\ref*{#1}}}
\newrefformat{lem}{\savehyperref{#1}{Lemma~\ref*{#1}}}
\newrefformat{def}{\savehyperref{#1}{Definition~\ref*{#1}}}
\newrefformat{line}{\savehyperref{#1}{line~\ref*{#1}}}
\newrefformat{thm}{\savehyperref{#1}{Theorem~\ref*{#1}}}
\newrefformat{corr}{\savehyperref{#1}{Corollary~\ref*{#1}}}
\newrefformat{sec}{\savehyperref{#1}{Section~\ref*{#1}}}
\newrefformat{app}{\savehyperref{#1}{Appendix~\ref*{#1}}}
\newrefformat{ass}{\savehyperref{#1}{Assumption~\ref*{#1}}}
\newrefformat{ex}{\savehyperref{#1}{Example~\ref*{#1}}}
\newrefformat{fig}{\savehyperref{#1}{Figure~\ref*{#1}}}
\newrefformat{tab}{\savehyperref{#1}{Table~\ref*{#1}}}
\newrefformat{alg}{\savehyperref{#1}{Algorithm~\ref*{#1}}}
\newrefformat{rem}{\savehyperref{#1}{Remark~\ref*{#1}}}
\newrefformat{conj}{\savehyperref{#1}{Conjecture~\ref*{#1}}}
\newrefformat{prop}{\savehyperref{#1}{Proposition~\ref*{#1}}}
\newrefformat{proto}{\savehyperref{#1}{Protocol~\ref*{#1}}}
\newrefformat{prob}{\savehyperref{#1}{Problem~\ref*{#1}}}
\newrefformat{claim}{\savehyperref{#1}{Claim~\ref*{#1}}}
\newrefformat{eq}{\savehyperref{#1}{Equation~\ref*{#1}}}

\allowdisplaybreaks

\DeclarePairedDelimiter{\crl}{\{}{\}}

\let\Pr\undefined

\DeclareMathOperator{\Pr}{Pr}

\newcommand{\ind}[1]{\mathbbm{1}\crl*{#1}}    %

\def\ddefloop#1{\ifx\ddefloop#1\else\ddef{#1}\expandafter\ddefloop\fi}
\def\ddef#1{\expandafter\def\csname bb#1\endcsname{\ensuremath{\mathbb{#1}}}}
\ddefloop ABCDEFGHIJKLMNOPQRSTUVWXYZ\ddefloop
\def\ddefloop#1{\ifx\ddefloop#1\else\ddef{#1}\expandafter\ddefloop\fi}
\def\ddef#1{\expandafter\def\csname b#1\endcsname{\ensuremath{\mathbf{#1}}}}
\ddefloop ABCDEFGHIJKLMNOPQRSTUVWXYZ\ddefloop
\def\ddef#1{\expandafter\def\csname c#1\endcsname{\ensuremath{\mathcal{#1}}}}
\ddefloop ABCDEFGHIJKLMNOPQRSTUVWXYZ\ddefloop
\def\ddef#1{\expandafter\def\csname h#1\endcsname{\ensuremath{\widehat{#1}}}}
\ddefloop ABCDEFGHIJKLMNOPQRSTUVWXYZabcdefghijklmnopqrsuvwxyz\ddefloop    %
\def\ddef#1{\expandafter\def\csname hc#1\endcsname{\ensuremath{\widehat{\mathcal{#1}}}}}
\ddefloop ABCDEFGHIJKLMNOPQRSTUVWXYZ\ddefloop
\def\ddef#1{\expandafter\def\csname t#1\endcsname{\ensuremath{\widetilde{#1}}}}
\ddefloop ABCDEFGHIJKLMNOPQRSTUVWXYZ\ddefloop
\def\ddef#1{\expandafter\def\csname tc#1\endcsname{\ensuremath{\widetilde{\mathcal{#1}}}}}
\ddefloop ABCDEFGHIJKLMNOPQRSTUVWXYZ\ddefloop

\usepackage{tikz}

\newcommand{\grad}{\nabla}

\newcommand{\E}{\mathbb{E}}
\newcommand{\V}{\mathrm{Var}}
\newcommand{\Cov}{\mathrm{Cov}}

\newcommand{\wE}{w_{\mathrm{\textsc{erm}}}}

\newcommand{\cc}{\rho_{\mathrm{cc}}}

\newcommand{\I}{\mathrm{MI}}

\newcommand{\diam}{\mathcal{D}}

\newcommand{\Cw}{c_0(w,q,g,\Delta)}
\newcommand{\Cwone}{c_0(1,q,g,\Delta)}
\newcommand{\Cx}{c_0(1,x,g,\Delta)}

\newcommand{\Pfam}{\bm{\mathcal{P}}}

\newcommand{\gmm}{\hat{\theta}^{\textsc{gmm}}}

\newcommand{\bayes}{\hat{\theta}^{*}}

\usepackage{etoc}
\usepackage{bm}
\usepackage{enumitem}
\usepackage{thm-restate}

\usepackage{physics}
\usepackage{natbib}
\usepackage{amsmath}
\usepackage{amssymb}
\usepackage{mathtools}
\usepackage{amsthm}

\theoremstyle{plain}
\newtheorem{theorem}{Theorem}[section]

\newtheorem{lemma}[theorem]{Lemma}

\theoremstyle{definition}
\newtheorem{definition}[theorem]{Definition}
\newtheorem{assumption}[theorem]{Assumption}
\newtheorem{remark}[theorem]{Remark}

\usepackage{color-edits}  
\addauthor[Linda Edit]{LL}{blue}
\addauthor[Karthik Edit]{KS}{red}

\title{Predictability as a Fine-Grained Measure for Privacy}

\author{%
  Linda Lu \\
  Cornell University \\
  \texttt{lgl46@cornell.edu} \\
  \And
  Karthik Sridharan \\
  Cornell University \\
  \texttt{sridharan@cs.cornell.edu} \\
}

\begin{document}

\maketitle

\begin{abstract}
    Differential privacy (DP) ensures rigorous individual-level privacy guarantees against even the most knowledgeable attackers, but its worst-case nature can impose a costly privacy-accuracy tradeoff. We introduce privacy via predictability, a fine-grained framework that explicitly incorporates the attacker’s core knowledge, a compromised portion of the dataset generated by a stochastic process, and a specified family of sensitive queries. Predictability measures privacy leakage as the incremental gain in an attacker’s ability to predict sensitive information about unknown individuals after observing the algorithm’s output, beyond what can already be inferred from the compromised data. We show that predictability and DP are generally incomparable: each can be small while the other is large. However, in the worst-case regime where all but one individual is compromised, and all binary queries are considered sensitive, predictability implies mutual-information DP. More generally, predictability provides a finer-grained privacy metric tailored to specific sensitive information and specific attacker models. We introduce a general framework, using the generalized method of moments (GMM), to analyze asymptotic predictability when the compromised data is generated by a stationary, ergodic, mixing process. Using this analysis, we derive a predictability-calibrated output perturbation scheme for ERM. Our approach is complementary to DP and can be used alongside DP to provide additional fine-grained privacy control.
\end{abstract}

\section{Introduction}

Data privacy has always been a central problem in machine learning, and it is even more critical today. Today's machine learning models consume internet-scale data, much of which could be sensitive. Researchers have reconstructed training data from deep vision models \citep{zhu2019visionleakage, zhu2020visionleakage} and large language models \citep{carlini2021llmleakage, wang2024llmleakage}; thus, privacy surrounding these systems is increasingly paramount.

Differential privacy (DP) \citep{dwork2006dp, dwork2006differentialprivacy, dwork2008dpsurvey, dwork2014privacy} has long been considered the gold standard for privacy, providing rigorous privacy guarantees for unknown individuals. Differentially private algorithms inject noise during model training to ensure privacy \citep{dwork2014privacy, abadi2016dldp, chaudhuri2011dperm}. Many differentially private algorithms 
have been deployed in practice \citep{appleprivacy, googleprivacy, wikipediaprivacy, rogers2020linkedins, tumultanalyticswhitepaper}. However, the noise injected by differentially private algorithms degrades the accuracy, and the resulting tradeoff between privacy and performance is very costly. Due to this costly tradeoff, differentially private algorithms are often deployed with a large privacy parameter $\varepsilon$, as large as $\varepsilon = 20$ \citep{dpregistry, rogers2020linkedins, census}. 
DP guarantees that, even against the most knowledgeable attacker (one with full knowledge of every individual but one), the attacker's posterior beliefs about any feature of the unknown individual change by at most a factor of $e^\varepsilon$ after observing the algorithm. For practical systems, the privacy guarantee from DP for these large $\varepsilon$ is extremely weak. A system designer may want stronger privacy protections for specific types of sensitive information. \textit{Can we provide a finer-grained privacy metric tailored to specific sensitive information?}
Furthermore, this worst-case attacker that compromises every individual but one is extremely improbable; we would expect an attacker to either compromise some fraction of the dataset or the entire dataset. In the latter case, we cannot hope to provide any privacy anyway. Training data for large-scale machine learning is often sharded across many servers \citep{mcmahan2017datasharding, jiang2023datasharding}. Data breaches of these systems often target a single misconfigured data server containing a subset of training data \citep{newman2019databreach, upguard2019databreach, sasson2023databreach, mauran2023fbdatabreach}. \textit{If only a fraction of the dataset is compromised, how do we ensure privacy for the entire population of unknown individuals?} Consider a differentially private release of the proportion of smokers in a dataset. If the portion of the dataset compromised by the attacker contains a small number of smokers, then the release of the proportion of smokers reveals that there must be a large proportion of smokers among the unknown individuals. Even though DP ensures that the attacker cannot identify whether any particular unknown individual is a smoker, a lot of information is revealed about the overall set of unknown individuals.

We propose studying the privacy guarantees of algorithms through the \textit{predictability} of sensitive information on the set of unknown individuals, given the attacker's knowledge. Predictability can be used to study the fine-grained guarantees of differentially private algorithms against specific types of attackers and sensitive queries, to inform users of more nuanced privacy guarantees. Predictability also informs us of new mechanisms to protect specific sensitive queries on the set of unknown individuals against specific attackers. 
Our contributions are as follows.
\vspace*{-1mm}
\begin{itemize}[nosep, leftmargin=1em]
    \item We introduce predictability, a privacy metric that explicitly incorporates the attacker's knowledge (a portion of compromised data generated by a stochastic process) and the set of sensitive queries, and measures how well the attacker can predict sensitive information about the set of unknown individuals before and after observing the algorithm.
    \item We show that predictability is incomparable to (neither implies nor is implied by) the privacy guarantees of differential privacy. However, under worst-case assumptions on the attacker's knowledge, predictability implies mutual information differential privacy.
    \item We introduce a framework using the generalized method of moments to compute the asymptotic predictability of an algorithm with respect to a single sensitive query when the attacker's knowledge is generated by a known stationary, ergodic, strongly mixing process. We prove that, for a noisy algorithm release, asymptotic predictability is governed by a canonical correlation term between the query and moment condition of the algorithm, along with a noise-attenuation term.
    \item We derive an improved predictability-calibrated output perturbation scheme for ERM. The scheme adapts the noise covariance to the loss curvature, gradient covariance, and attacker process, yielding improved accuracy bounds for linear regression compared with isotropic perturbation.
    \item We extend the framework to the asymptotic predictability for query families (finite, linear, Lipschitz) and the asymptotic predictability when the process is unknown.
\end{itemize}

\subsection{Related Work}
Many variations of DP have been proposed, modifying its stringent requirements. Some examples include R\'{e}nyi DP \citep{Mironov_2017_renyidp}, KL-DP \citep{barber2014klprivacystatisticalriskformalisms}, mutual information DP \citep{cuff2016differential}, distributional DP \citep{blum2008learningtheory}, $f$-DP \citep{dong2019gaussiandifferentialprivacy}, random DP \citep{hall2011randomdifferentialprivacy}, label DP \citep{ghazi2021deeplearninglabeldifferential}, coupled worlds DP \citep{bassily2013coupled}, zero knowledge privacy \citep{gehrke2011zkprivacy}, and Bayesian DP \citep{triastcyn2020bayesiandifferentialprivacymachine}. Some variations of DP also account for the attacker's knowledge. Blowfish privacy \citep{he2014blowfish} specifies the attacker's knowledge through deterministic constraints on the dataset. Inferential privacy \citep{ghosh2017inferentialprivacyguaranteesdifferentially} specifies the attacker's knowledge through prior distributions on the data.
Bounded-leakage DP \citep{ligett2020boundedleakage} assumes the attacker has additional information about the dataset through a leakage function.
In the streaming setting, pan-privacy \citep{Dwork2010PanPrivateSA} assumes the attacker's knowledge is the internal state of the algorithm. %
In distributed learning, shuffle DP \citep{cheu2019distributeddp} assumes the attacker's knowledge is a shuffled set of messages. 
Pufferfish privacy \citep{kifer2014pufferfish} specifies the attacker's knowledge through prior distributions on the data and specifies sensitive information through secret pairs. However, the mechanisms proposed for the Pufferfish framework are inefficient to compute without distributional assumptions on the dataset \citep{kifer2014pufferfish, Nuradha_2023_pufferfish}. We find that predictability can be computed efficiently, given any dataset.
DP has also been extended to provide privacy for the membership of groups of individuals through group privacy \citep{dwork2014privacy}. Many of the above definitions have extensions to analogous group privacy definitions. Additionally, attribute privacy \citep{zhang2022attributeprivacy}, an instantiation of the Pufferfish framework, has been proposed to provide privacy for global properties of a dataset.

\section{Privacy via Predictability} \label{sec:def-pred}

The goal of a private learning algorithm is to prevent an attacker from extracting sensitive unknown information about a specific or random individual from the uncompromised population through the algorithm's output. To better understand this goal, we take the perspective of the attacker. Before observing the algorithm's output, the attacker has some initial core knowledge about the dataset. We model this via a stochastic process that reveals a compromised portion of the dataset to the attacker. Let $\mathcal{X}$ be an instance space, and let $S = (x_1, \dots, x_N) \in \mathcal{X}^N$ be a dataset over $N$ individuals.

\begin{definition} [Core Knowledge]
    Let $\mathcal{P}$ be a stochastic process that, given a dataset $S$, generates a sequence $x_1, \dots, x_n$, such that each $x_i \in S$. The attacker's core knowledge is $C(S) = \{x_1, \dots, x_n\}$.
\end{definition}
\vspace*{-1mm}
A key example is an attacker that compromises a random shard of the dataset. This can be modeled by the stochastic process $\mathcal{P}_{\mathrm{RS}}$ that produces each $x_i$ by uniformly sampling from $S$ without replacement. Generally, $\mathcal{P}$ can be much more complex. $\mathcal{P}$ could have a Markovian structure such that the choice of each $x_i$ depends on the previously selected $x_{i-1}$, or $\mathcal{P}$ could be biased towards a (potentially adversarial) subset of $S$. We keep $\mathcal{P}$ general to capture a variety of potential core knowledges. 

Let $A : \mathcal{X}^N \mapsto \Delta (\mathcal{F})$ be a randomized algorithm that takes in a dataset $S$ and outputs a distribution over a model class $\mathcal{F}$. We assume that the designer of $A$ knows the process $\mathcal{P}$ but does not know the realized $C(S)$. This captures a setting where the algorithm designer knows how a data breach occurs but does not know the exact set of compromised data. In \pref{sec:query-families}, we relax the assumption that the process is known to the algorithm designer. The algorithm designer wants to protect the privacy of a family of sensitive queries $\mathcal{Q}$. 
Using their core knowledge $C(S)$ and $A(S)$, the attacker will attempt to answer a query $q \in \mathcal{Q}$ about an unknown individual $x$ as accurately as they can, where 
\begin{equation}
    x \sim \Pi,\; \Pi = \mathrm{Unif}(S \setminus C(S)).
\end{equation}
We select $x$ uniformly at random from the unknown individuals to measure the attacker's prediction for the average unknown individual.

\begin{definition}[Bayes Optimal Estimator]
    Given knowledge $K(S)$ and query $q$, the Bayes optimal estimator for loss $\ell: \Theta \times \mathcal{Y} \mapsto \mathbb{R}$ is $\bayes_{q \mid K(S)} = \arg \min_{\hat{\theta}} \mathop{\E}_{x \sim \Pi} [\ell (\hat{\theta}, q(x)) \mid K(S)]$.
\end{definition}
\vspace*{-1mm}
The Bayes optimal estimator represents the best prediction of $q(x)$ that an attacker could make. We primarily use $K(S) = C(S)$ or $K(S) = (C(S), A(S))$ to compare the difference in prediction before and after observing $A(S)$. The information contained in $C(S)$ is already exposed to the attacker. The best that a private algorithm can do is to ensure that the attacker does not gain any additional information through the algorithm's output. We formally define this goal as predictability.

\begin{definition} [Predictability]
    Let process $\mathcal{P}$ produce the core knowledge of the attacker $C(S)$. Let $\mathcal{Q} = \{q: \mathcal{X} \mapsto \mathcal{Y}\}$ be a family of sensitive queries. Let $\ell: \Theta \times \mathcal{Y} \mapsto \mathbb{R}$ be a loss function.  Let $\bayes_{q \mid C(S)}$ and $\bayes_{q \mid C(S), A(S)}$ be the Bayes optimal estimator for $q(x)$ given $C(S)$, and given $C(S)$ and $A(S)$, respectively. Then algorithm $A$ has $\gamma(\mathcal{P}, \mathcal{Q}, \ell, A)$-predictability $\gamma$ if
    \begin{align*}
        \gamma(\mathcal{P}, \mathcal{Q}, \ell, A) = \sup_{S} \mathop{\E}_{C(S) \sim \mathcal{P}, A}[ \sup_{\substack{q \in \mathcal{Q}}} \mathop{\E}_{\substack{x \sim \Pi}} [\ell(\bayes_{q \mid C(S)}, q(x)) - \ell(\bayes_{q \mid C(S), A(S)}, q(x))]] \leq \gamma.
    \end{align*}
\end{definition}

Given core knowledge $C(S)$, predictability measures the increase in accuracy of the attacker's prediction for any sensitive query $q \in \mathcal{Q}$ on an unknown individual before and after observing the algorithm's output. %
If the difference in prediction is small, the algorithm's output reveals very little additional information beyond what an attacker could have already learned about the average unknown individual from $C(S)$. 
Predictability as a measure of the additional information exposed by $A(S)$ about the unknown individuals with respect to $q(x)$ is formalized when $\ell$ is the log loss. 
\begin{restatable}{theorem}{predtokl} \label{thm:predtokl}
    Let $P_X$ denote the distribution of $X$. If $\mathcal{Q} \subseteq \{q : \mathcal{X} \rightarrow [K]\}$ is a family of categorical queries and $\ell(\hat{\theta}, y) = - y_i \mathrm{log} (\hat{\theta}_i) $ is the log loss, predictability is equivalent to
    \begin{align*}
        \gamma(\mathcal{P}, \mathcal{Q}, \ell, A) &= \sup_{\substack{S}} \mathop{\E}_{C(S) \sim \mathcal{P}, A}[ \sup_{\substack{q \in \mathcal{Q}}}  \mathop{\E}_{\substack{x \sim \Pi}} [D_{\textsc{kl}}(P_{q(x) \mid C(S), A(S)} \| P_{q(x) \mid{} C(S)})]] \\
        &\geq \sup_{\substack{S}} \sup_{q \in \mathcal{Q}} \I(q(x) ; A(S) \mid C(S))
    \end{align*}
\end{restatable}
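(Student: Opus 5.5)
We first identify the Bayes optimal estimators under the log loss, then compute the inner expectation in closed form, and finally pass to mutual information by Jensen's inequality. For a categorical query with one-hot label $y = e_{q(x)}$, the loss $\ell(\hat{\theta}, y) = -\log \hat{\theta}_{q(x)}$ is a strictly proper scoring rule. Given knowledge $K(S)$, the expected loss $\E_{x \sim \Pi}[-\log \hat{\theta}_{q(x)} \mid K(S)]$ is a cross-entropy between the conditional law $P_{q(x)\mid K(S)}$ and $\hat{\theta}$. Gibbs' inequality then gives $\bayes_{q \mid K(S)} = P_{q(x)\mid K(S)}$, the posterior probability vector over $[K]$. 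Here the randomness is over $x \sim \Pi$ together with whatever remains unknown about $S \setminus C(S)$ given $K(S)$. We apply this with $K(S) = C(S)$ and with $K(S) = (C(S), A(S))$.

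Next we substitute these estimators into the definition of predictability. The inner term becomes $\E_{x\sim\Pi}\big[\log P_{q(x)\mid C(S),A(S)}(q(x)) - \log P_{q(x)\mid C(S)}(q(x))\big]$. The point to handle carefully is that the expectation over $x$ is the conditional one given $(C(S),A(S))$, so the law of $q(x)$ under it is exactly $P_{q(x)\mid C(S),A(S)}$. With that conditional law, the log-ratio averages to $D_{\textsc{kl}}(P_{q(x)\mid C(S),A(S)} \,\|\, P_{q(x)\mid C(S)})$. Taking $\sup_q$, then $\E_{C(S),A}$, then $\sup_S$ yields the claimed equality. If the paper's $\E_{x\sim\Pi}$ is read as unconditional in the attacker's posterior, we would instead use the tower property: first condition on $(C(S),A(S))$, then note that the outer expectation over $C(S),A$ already averages. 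The result is the same expression.

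For the inequality we use $\E[\sup_q f_q] \ge \sup_q \E[f_q]$. This moves the supremum over $q$ outside the expectation over $C(S)$ and $A(S)$, giving
\[
\gamma \;\ge\; \sup_S \sup_{q\in\mathcal{Q}} \E_{C(S),A(S)}\Big[ D_{\textsc{kl}}\big(P_{q(x)\mid C(S),A(S)} \,\|\, P_{q(x)\mid C(S)}\big)\Big].
\]
The right-hand side is, by definition, the conditional mutual information $\I(q(x);A(S)\mid C(S))$. To see this, write $\I(Y;A\mid C)=\E_{C,A}[D_{\textsc{kl}}(P_{Y\mid C,A}\|P_{Y\mid C})]$ with $Y=q(x)$ and $x\sim\Pi$ drawn jointly with $C(S)$ and $A(S)$.

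The main obstacle is bookkeeping about which expectation is conditional. The identification of the Bayes estimator as the posterior, and the cross-entropy-minus-cross-entropy step, both require that the averaging over $x\sim\Pi$ be the attacker's posterior given the respective knowledge. This is what allows the loss difference to collapse to a KL divergence rather than a difference of two cross-entropies taken under different measures. We will make this precise by fixing the joint law of $(C(S), A(S), x)$ and writing every quantity as an expectation under that single joint law.
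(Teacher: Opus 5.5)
Your proposal is correct and follows the paper's proof essentially step for step: you identify the log-loss Bayes estimators as the posteriors $P_{q(x)\mid C(S)}$ and $P_{q(x)\mid C(S),A(S)}$, collapse the loss difference to $D_{\textsc{kl}}$ by averaging $q(x)$ under the posterior given $(C(S),A(S))$, and then use $\E[\sup_q f_q]\ge\sup_q\E[f_q]$ together with $\E_{C,A}[D_{\textsc{kl}}(P_{Y\mid C,A}\|P_{Y\mid C})]=\I(Y;A\mid C)$; your sign on the log-ratio is right, whereas the paper's intermediate display flips it. The one thing to drop is your fallback claim that an unconditional reading of $\E_{x\sim\Pi}$ gives ``the same expression'' via the tower property: since $\sup_q$ sits between $\E_{C(S),A}$ and the average over what the attacker does not know about $S\setminus C(S)$, that route only yields an inequality (again by $\E[\sup_q f_q]\ge\sup_q\E[f_q]$), which suffices for the mutual-information lower bound but not for the claimed equality.
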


Under the log loss, predictability ensures that the amount of extra information $A(S)$ reveals about a query $q(x)$ on the \textit{average} unknown individual beyond what is present in the core knowledge $C(S)$ is bounded by $\gamma$. %
We can think of predictability as bounding the information exposure of a specific sensitive query over the whole set of unknown individuals. Compare this to differential privacy, which guarantees that information revealed by $A(S)$ is bounded by $\varepsilon$ for \textit{each} unknown individual. Both privacy notions bound a form of information leakage, but are generally incomparable.

\begin{restatable}{theorem}{dppredcontrast} \label{thm:dp-pred-contrast}
    Let $\ell(\hat{\theta}, y) = (\hat{\theta}-y)^2$ be the squared loss. There exists a process $\mathcal{P}$, query $q$, and algorithm $A$ such that $A$ is $\varepsilon$-differentially private, $\varepsilon = O(\frac{1}{\sqrt{N}})$, but $\gamma(\mathcal{P}, \mathcal{Q}, \ell, A)$-predictability is $\Omega(1)$. Conversely, there exists a process $\mathcal{P}$, query $q$, and algorithm $A$ such that $\gamma(\mathcal{P}, \mathcal{Q}, \ell, A)$-predictability is $O(\frac{1}{N})$, but $A$ is not differentially private. 
\end{restatable}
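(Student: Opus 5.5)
We prove the two directions separately, each by an explicit construction, working with $\mathcal{X}=\{0,1\}$ and the identity query $q(x)=x$. Under the squared loss, the Bayes optimal estimator is the conditional mean of $q(x)$ given the knowledge. The gap in the definition of predictability is therefore $\E[(\bar q_U-\bayes_{q\mid C})^2-(\bar q_U-\bayes_{q\mid C,A})^2]$, where $\bar q_U$ is the mean of $q$ over the unknown individuals $S\setminus C(S)$. Since $\bayes_{q\mid C,A}$ is a refinement of $\bayes_{q\mid C}$, this reduces (by the Pythagorean identity for conditional expectations) to $\E[(\bayes_{q\mid C,A}-\bayes_{q\mid C})^2]$. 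In particular, the gap is determined by how much the variance of $\bar q_U$, conditional on $C(S)$, is reduced by also observing $A(S)$.

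\emph{First direction (DP with $\varepsilon=O(1/\sqrt N)$ but $\Omega(1)$ predictability).}
\begin{itemize}
\item Take $\mathcal{P}$ to reveal exactly half of $S$, say the first $N/2$ individuals, chosen deterministically or as a random shard.
\item Let $A$ release the empirical mean $\frac1N\sum_i x_i$ plus Laplace noise of scale $b=1/(N\varepsilon)$. The sensitivity of the mean is $1/N$, so $A$ is $\varepsilon$-DP.
\item Choose $\varepsilon= 1/\sqrt N$, so $b=1/\sqrt N$.
\item The attacker knows the mean of the compromised half. It can therefore solve for $\bar q_U$ up to noise of order $2b = O(1/\sqrt N)$.
\item Take the supremum over $S$. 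Interpreting the Bayes estimator with $S$ unknown, we choose a $\sup_S$-compatible prior over the unknown half, for instance a worst-case $S$ whose unknown half has mean $0$ or $1$ depending on a hidden bit. Then, given $C(S)$ alone, the attacker's estimate has error $\Omega(1)$, while after seeing $A(S)$ the error is $O(1/N)$.
\item Hence $\gamma=\Omega(1)$.
\end{itemize}
The subtle point here is how the Bayes estimator is defined when $S$ is fixed by the $\sup_S$. I would make the randomness explicit: the expectation is over $C(S)$, and since $\mathcal{P}$ is a random shard, the realized unknown set varies. Choosing $S$ to be half ones and half zeros, a random shard's count deviates by $\Theta(\sqrt N)$ from $N/4$. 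This makes $\bar q_U$ conditional on $C$ essentially determined by $C$, so this choice fails. Instead I would take the prior/process to include randomness over which $S$ is realized. If that is disallowed, I would exploit the fact that the attacker's estimator from $C$ alone is a fixed function of $C$ that cannot track $\bar q_U$ on every $S$. This modeling step is the main obstacle.

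\emph{Second direction ($O(1/N)$ predictability but not DP).}
\begin{itemize}
\item Let $\mathcal{P}$ reveal all individuals except a uniformly random one. Alternatively, keep $\mathcal{P}$ generic.
\item Let $A$ output a single designated record, $x_1$, with no noise. This is clearly not DP for any finite $\varepsilon$.
\item With the random-leave-one-out process, $x_1$ is already in $C(S)$ with probability $1-1/N$, so the output then adds nothing.
\item With probability $1/N$, $x_1$ is exactly the unknown individual, and the squared-loss gain is at most $1$ since $q\in[0,1]$.
\item Hence $\gamma\le 1/N$.
\end{itemize}
This direction is routine. The key computation is just the conditioning on whether the released record lies in $C(S)$.
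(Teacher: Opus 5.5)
Your proposal is correct. The first direction is essentially the paper's construction. Both release the Laplace-noised mean at $\varepsilon=1/\sqrt N$ and let $C(S)$ be a half of $S$ that carries no information about the other half. Both then invert $\bar q_U=2p-\bar q_C$, so the post-release squared error is $8/N$ against a worst-case pre-release error of $1/4$. The paper gets an uninformative $C(S)$ from a process that only selects records with $x_i=1$. Your deterministic first half plus a two-point prior on the unknown half does the same job. It also makes ``Bayes optimal given $C(S)$'' precise, where the paper just asserts that the best estimate is $0.5$. You should drop the random-shard option, which, as you already observe, fails. Your scale $b=1/(N\varepsilon)$ is the correct one: the paper's displayed $\mathrm{Lap}(0,\varepsilon/N)$ is a slip, and its $8/N$ computation actually uses your scale.

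The second direction genuinely differs from the paper's. The paper releases the exact mean and takes $C(S)$ to be $N/2$ i.i.d.\ draws. It then bounds the gain by $\E[(p_{C(S)}-p_{S\setminus C(S)})^2]=O(1/N)$ via concentration, substituting the plug-in $p_{C(S)}$ for the Bayes estimator. You instead release the raw record $x_1$ under random leave-one-out. Then $A(S)$ is $C(S)$-measurable except on an event of probability $1/N$, where the gain is at most $1$. This buys you several things:
\begin{itemize}
\item you need no concentration argument;
\item you get the explicit bound $\gamma\le 1/N$ for any $[0,1]$-valued loss;
\item the bound does not depend on how the Bayes estimators are defined, since under any prior the two conditional means coincide on $\{x_1\in C(S)\}$.
\end{itemize}
Your route does assume the compromised records are identified by index, which is the paper's own convention for $\mathcal{P}_{\mathrm{DP}}$. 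Your ``keep $\mathcal{P}$ generic'' alternative would not work. The paper's example says more about the framework: an exact aggregate statistic is harmless when the core knowledge is a random sample, which is the phenomenon the later GMM analysis builds on. Yours instead isolates that predictability averages over a random unknown individual. Read against Theorem~\ref{thm:pred-implies-dp}, it shows that the separation from (MI-)DP comes entirely from replacing the worst-case left-out $i$ by a uniformly random one. Finally, your opening Pythagorean reduction holds only after averaging over a common prior, not for a fixed $S$ inside $\sup_S$, but neither construction uses it.
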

Differential privacy measures, relative to the information in the rest of the dataset $S \setminus \{x_i\}$, how much additional information can be learned about the last remaining individual. Predictability measures, relative to the information in $C(S)$, how much additional information can be learned about the overall set of unknown individuals. When $C(S)$ and $S \setminus \{x_i\}$ are very different, then the privacy leakage of predictability and differential privacy are measured relative to very different quantities. Consider when $C(S)$ is very uninformative about the dataset (i.e. $C(S) = \emptyset$), then a differentially private algorithm that does not reveal much about any particular individual still reveals a lot about the overall dataset of unknown individuals.

We also point out a key distinction between predictability and group privacy \citep{dwork2014privacy}, a generalization of differential privacy. The individuals in the unexposed shards could be treated as a group. Group privacy guarantees that (almost) no information can be learned about the group of unknown individuals from the algorithm, including all group-level statistics. However, under predictability, if a group-level statistic on the set of unknown individuals can be well-estimated using only $C(S)$ (i.e. $C(S)$ is a random subset), then this information can be revealed by the algorithm because this information is already revealed to the attacker through $C(S)$. Thus, an algorithm that reveals fairly accurate group-level statistics can maintain small predictability while providing little to no group privacy.

We find that in worst-case settings, predictability implies mutual information differential privacy. Consider a process $\mathcal{P}_{\mathrm{DP}}$ that reveals the worst-case subset of $n = N-1$ individuals from the dataset $S$ and all binary queries are considered sensitive. This is exactly the worst-case setting considered by differential privacy and its variants.

\begin{definition}[$\varepsilon$-mutual-information differential privacy \citep{cuff2016differential}] \label{def:mi-dp}
    A randomized algorithm $A(S)$ is $\varepsilon$-mutual-information differentially private (MI-DP) if %
    \begin{align*}
        \sup_{\chi \in \Delta (\mathcal{X}^N), i \in [N]} \I (x_i ; A(S) \mid{} S \setminus \{x_i\}) \leq \varepsilon.
    \end{align*}
\end{definition}

\begin{restatable}{theorem}{predimpliesdp} \label{thm:pred-implies-dp}
    Let $\mathcal{Q}_{\mathrm{bin}} = \{ q : \mathcal{X} \mapsto \{0,1\}\}$ be the set of all binary queries on $\mathcal{X}$ and let $\ell$ be the log loss. If algorithm $A$ has $\gamma(\mathcal{P}_{\mathrm{DP}}, \mathcal{Q}_{\mathrm{bin}}, \ell, A)$-predictability $\gamma$, then $A$ is $\gamma$-MI-DP.
\end{restatable}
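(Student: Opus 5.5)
The plan is to reduce the claim to \pref{thm:predtokl} by specializing the process to $\mathcal{P}_{\mathrm{DP}}$. Under $\mathcal{P}_{\mathrm{DP}}$ the core knowledge is $C(S)=S\setminus\{x_i\}$ for the worst-case index $i$. So $\Pi=\mathrm{Unif}(S\setminus C(S))$ is the point mass at $x_i$, and the ``average unknown individual'' is exactly the single remaining individual that MI-DP protects. The randomness in $x_i$ given $C(S)$ comes from the prior $\chi\in\Delta(\mathcal{X}^N)$ over datasets, which is also what the supremum in \pref{def:mi-dp} ranges over. I will therefore read the outer $\sup_S$ in the predictability definition as also covering every such prior, with $S\sim\chi$, so that the conditional laws $P_{q(x)\mid C(S)}$ and $P_{q(x)\mid C(S),A(S)}$ are well defined. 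Applying \pref{thm:predtokl} with $\ell$ the log loss and $K=2$ then gives, for every $\chi$ and every $i$,
\[
\gamma \;\geq\; \mathop{\E}_{C(S),A}\Big[\sup_{q\in\mathcal{Q}_{\mathrm{bin}}} D_{\textsc{kl}}\big(P_{q(x_i)\mid C(S),A(S)} \,\|\, P_{q(x_i)\mid C(S)}\big)\Big].
\]

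The remaining task is to show that this quantity dominates $\I(x_i;A(S)\mid S\setminus\{x_i\})=\E_{C,A}[D_{\textsc{kl}}(P_{x_i\mid C,A}\,\|\,P_{x_i\mid C})]$. A key feature is that the supremum over $q$ sits \emph{inside} the expectation, so $q$ may be chosen after seeing the realized $(C(S),A(S))$; this is much stronger than the bound $\sup_q \I(q(x);A\mid C)$ in \pref{thm:predtokl}. The first attempt will be to choose $q$ pointwise as the likelihood-ratio threshold $q^\star=\ind{dP_{x_i\mid C,A}/dP_{x_i\mid C}>1}$. The resulting binary KL then needs to be compared with the full KL.

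I expect this comparison to be the main obstacle, because data processing goes the wrong way: for any fixed realization the binary KL is at most the full KL. Two routes seem available. First, since MI-DP takes a supremum over $\chi$ while predictability takes a supremum over datasets and priors, I would try to show that the relevant worst case can be restricted to priors under which $x_i\mid C(S)$ is supported on two points. In that case $q=\ind{x_i=a}$ for one support point $a$ is a sufficient statistic for $x_i$, and the binary KL equals the full KL exactly. Second, if the first route fails, I would encode $x_i$ by binary indicators $q_1,q_2,\dots$, either finitely many or via a limiting partition argument for general $\mathcal{X}$. The chain rule writes $\I(x_i;A\mid C)=\sum_j \I(q_j(x_i);A\mid C,q_{<j}(x_i))$. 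Each term would then be viewed as a predictability instance in which the revealed bits are absorbed into the attacker's knowledge, either by refining the prior $\chi$ or by choosing a different dataset in the outer supremum. The aim would be to bound the total by a single $\gamma$ via a convexity or averaging argument rather than summing the terms.

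If both routes fail, the fallback is to prove the statement for finite $\mathcal{X}$ and priors with two-point conditional support, and to flag exactly where the general case needs the binary-to-full KL step. Once the inequality $\I(x_i;A(S)\mid S\setminus\{x_i\})\le\gamma$ holds for every $\chi$ and $i$, taking the supremum over both gives $\gamma$-MI-DP.
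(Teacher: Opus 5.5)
\textbf{There is a genuine gap: it is the step you flag yourself, and neither route closes it.}

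Once you pass through \pref{thm:predtokl}, you have $\gamma\ge\E_{C,A}\bigl[\sup_q D_{\textsc{kl}}(P_{q(x_i)\mid C,A}\,\|\,P_{q(x_i)\mid C})\bigr]$. Here $q$ may depend only on $(C(S),A(S))$, and for each realization this binary KL is at most the full KL.

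Route 1 fails because the MI-DP supremum over $\chi$ is generally not approached by two-point conditional priors. If $A$ reveals $x_i$ and $|\mathcal{X}|=4$, two-point priors give at most $\log 2$, while the uniform prior gives $\log 4$.

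Route 2 bounds each chain-rule term by $\gamma$ only after conditioning $\chi$ on the revealed bits. It therefore yields $\lceil\log_2|\mathcal{X}|\rceil\,\gamma$, and no averaging can remove the factor, because in your formalization the claim is false. Release each $x_j\in\{0,1,2\}$ independently as $x_j+Z_j \bmod 3$, with $\Pr[Z_j=k]=0.6,\,0.3,\,0.1$ for $k=0,1,2$.
\begin{itemize}
\item For every full-support prior on $x_i$ and every realized release, the likelihood ratio in $x_i$ takes three distinct values. So no binary query is sufficient, and every binary KL is strictly below the full KL.
\item Two-point priors stay strictly below capacity, since the uniform prior is the unique capacity-achieving input of this invertible symmetric channel.
\item By compactness, the supremum of your quantity over priors is strictly below the capacity, which is exactly the MI-DP level.
\end{itemize}
So your fallback would not give the theorem either.

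\textbf{The missing idea is to let $q$ depend on the realized secret, not only on $(C(S),A(S))$.} The displays at the start of the paper's proof allow this. In $\sup_\chi\E_{S\sim\chi}\bigl[\sup_i\sup_q\E_A[\cdots]\bigr]\le\gamma$, the query $q$ is chosen after $S$ is fixed and before $A$ is drawn.

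For countable $\mathcal{X}$, take $q=\ind{x=x_i}$. Then $q(x_i)=1$, and the realized log-loss gap is $\log\frac{\Pr[x_i=a\mid C,A]}{\Pr[x_i=a\mid C]}$ at $a=x_i$. This is the information density of $x_i$ itself. Its expectation over $S\sim\chi$ and $A$ is exactly $\I(x_i;A(S)\mid S\setminus\{x_i\})$, so you never compare a binary KL with the full KL. For general $\mathcal{X}$, use the indicator of the cell containing $x_i$ in a finite partition, and take the supremum over partitions.

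The paper's own proof does not do this; it follows your reduction. It pulls $\sup_{i,q}$ outside $\E_{S\sim\chi}$ and identifies the gap with $\I(q(x_i);A(S)\mid S\setminus\{x_i\})$. It then asserts ``which implies'' the bound on $\I(x_i;A(S)\mid S\setminus\{x_i\})$. That is the same data-processing step in the wrong direction, and it fails on its own. With $A(S)=S$ and $|\mathcal{X}|=4$, every binary term is at most $\log 2$, while $\I(x_i;A(S)\mid S\setminus\{x_i\})$ reaches $\log 4$. The adaptive choice of $q$ above is what repairs that step.
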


Additionally, \cite{cuff2016differential} showed that $\varepsilon$-MI-DP $\Rightarrow$ $(0, \sqrt{2\varepsilon})$-DP, establishing a connection between MI-DP and DP. However, when $\varepsilon \geq 0.5$, which is very typical, $(0, \sqrt{2\varepsilon})$-DP provides no individual-level privacy. Thus, we do not find the connection to be meaningful, and we believe this emphasizes the distinctiveness between predictability and differential privacy.

Furthermore, the worst-case setting is overly stringent. An attacker may only have the ability to compromise a subset of the dataset via some process that is likely subject to stochasticity. An algorithm designer may only care about protecting privacy leakages of specific queries, but not all information leakages are created equal. Against these attackers and queries, predictability guarantees can be much more favorable. For the same algorithm, we can derive different predictability guarantees for different attacker models and query families, giving users a detailed understanding of an algorithm's privacy guarantees beyond the worst-case setting.

\begin{remark}
    In \pref{app:other-dists}, we discuss an extension of predictability beyond the estimation of $q(x)$ for the average unknown individual (in particular, when the attacker estimates $\E_{x \sim D'} [q(x)]$ for some other distribution $D'$ on the dataset $S$, instead of $\Pi$). This extension allows us to measure predictability guarantees for targeted subgroups of the dataset.
\end{remark}

In \pref{app:pp-and-composition}, we establish post-processing and composition properties of predictability, analogous to the properties of conditional mutual information from \cite{steinke2020reasoninggeneralizationconditionalmutual}.

\subsection{Computing Predictability} \label{sec:comp-pred}

We focus on understanding the privacy guarantees of large-scale machine learning systems. Thus, we present a general framework to analyze the large sample, asymptotic predictability of a query $q$ when $n, N \rightarrow \infty$. 
To bound predictability, we must bound the difference in loss between two estimators of $q$. For a convex and smooth loss (i.e. log loss or squared loss), the difference in the loss of the estimators can be characterized by the difference of the estimators themselves.

\begin{restatable}{lemma}{losslem} \label{lem:loss-lem}
    Let $\ell (\hat{\theta}) = \E_{x \sim \Pi}[\ell(\hat{\theta}, q(x)]$ be a convex, $H$-smooth loss. Let $\theta_0^q = \E_{x \sim \Pi}[q(x)]$. Then for any pair of estimators $\hat{\theta}_{q \mid C(S)}$ and $\hat{\theta}_{q \mid C(S), A(S)}$,
    \begin{align*}
        \mathop{\E}_{\substack{x_i \sim \Pi}}[\ell(\hat{\theta}_{q \mid C(S)}, q(x)) - \ell(\hat{\theta}_{q \mid C(S), A(S)}, q(x))] \leq H |\hat{\theta}_{q \mid C(S)} - \theta^q_0| \cdot |\hat{\theta}_{q \mid C(S)} - \hat{\theta}_{q \mid C(S), A(S)}|.
    \end{align*}
    For log loss, if the true $p_0 = \Pr[q(x)=1]$ satisfies $p_0 \in [\tau, 1-\tau]$, $H = \frac{1}{\tau^2}$. For squared loss, $H=2$.
\end{restatable}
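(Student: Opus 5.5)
The plan is to combine the first-order characterization of convexity with the Lipschitz-gradient property of $H$-smooth functions, using the fact that $\theta_0^q$ is a stationary point of the population loss $\ell(\hat\theta)=\E_{x\sim\Pi}[\ell(\hat\theta,q(x))]$. Write $a=\hat{\theta}_{q\mid C(S)}$ and $b=\hat{\theta}_{q\mid C(S),A(S)}$. The quantity to bound is $\ell(a)-\ell(b)$, since the expectation over $x\sim\Pi$ is exactly the population loss evaluated at these two points. Note that $a$ and $b$ are fixed given $C(S),A(S)$, and $\Pi$ depends only on $S$ and $C(S)$.

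\textbf{Step 1 (convexity).} By convexity of $\ell(\cdot)$, $\ell(b)\ge \ell(a)+\ell'(a)(b-a)$. Hence
\[
\ell(a)-\ell(b)\le \ell'(a)(a-b)\le |\ell'(a)|\,|a-b| .
\]

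\textbf{Step 2 (stationarity of $\theta_0^q$).} We check $\ell'(\theta_0^q)=0$ in both cases.
\begin{itemize}
\item For squared loss, $\ell(\theta)=\E_{x\sim\Pi}[(\theta-q(x))^2]$, so $\ell'(\theta)=2(\theta-\theta_0^q)$. This vanishes at $\theta_0^q$, and $\ell'$ is $2$-Lipschitz, so $H=2$.
\item For binary log loss (writing the estimator as the probability $\theta$ of label $1$), $\ell(\theta)=-p_0\log\theta-(1-p_0)\log(1-\theta)$ with $p_0=\theta_0^q$. Then $\ell'(\theta)=-p_0/\theta+(1-p_0)/(1-\theta)$, which vanishes at $\theta=p_0$.
\end{itemize}

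\textbf{Step 3 (smoothness).} Since $\ell'(\theta_0^q)=0$, $H$-smoothness gives
\[
|\ell'(a)|=|\ell'(a)-\ell'(\theta_0^q)|\le H|a-\theta_0^q|.
\]
Combining with Step 1 yields the claimed bound $\ell(a)-\ell(b)\le H\,|a-\theta_0^q|\,|a-b|$.

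\textbf{Main obstacle: the smoothness constant for log loss.} Log loss is not globally smooth, so the constant must be justified on the relevant interval. We have
\[
\ell''(\theta)=\frac{p_0}{\theta^2}+\frac{1-p_0}{(1-\theta)^2}\le \max\Bigl(\frac{1}{\theta^2},\frac{1}{(1-\theta)^2}\Bigr),
\]
which is at most $1/\tau^2$ whenever $\theta\in[\tau,1-\tau]$. Step 3 only needs $\ell'$ to be $H$-Lipschitz on the segment between $a$ and $\theta_0^q$. Since $p_0\in[\tau,1-\tau]$, this holds provided $a$ also lies in $[\tau,1-\tau]$.

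I would handle this by restricting (or projecting) estimators to $[\tau,1-\tau]$. This is harmless for the Bayes-optimal estimator: its risk is convex with minimizer $p_0$ inside the interval, and under the stated assumption we may restrict to estimators of this form. I would state this restriction explicitly as the interpretation of the hypothesis $p_0\in[\tau,1-\tau]$. With it, $H=1/\tau^2$ follows, and the squared-loss case needs no such restriction.
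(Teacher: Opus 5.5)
Your proof is correct and follows the paper's argument. Convexity gives $\ell(a)-\ell(b)\le \ell'(a)(a-b)\le|\ell'(a)|\,|a-b|$, and $H$-smoothness together with $\ell'(\theta_0^q)=0$ gives $|\ell'(a)|\le H|a-\theta_0^q|$. You also fill in details that the paper's two-line proof leaves implicit: the explicit stationarity check, the absolute value on $\ell'(a)$, and the observation that the log-loss constant $1/\tau^2$ needs $\hat{\theta}_{q\mid C(S)}$, not just $p_0$, to lie in $[\tau,1-\tau]$. That restriction is genuinely required, since without it the stated log-loss bound fails as $\hat{\theta}_{q\mid C(S)}\to 0$.
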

In particular, the difference in loss can be upper-bounded by the variance of $\hat{\theta}_{q \mid C(S)}$ and $\hat{\theta}_{q \mid C(S), A(S)}$ around the true $\theta_0^q$. Thus, the best that an attacker could do is to select the estimator with the smallest variance among all possible estimators. In statistics, this is known as the efficient estimator. The efficient estimator represents the most accurate prediction that an attacker could make, given infinite computational resources. We will find that asymptotically, the efficient estimator coincides with the Bayes optimal estimator. Predictability only decreases if the attacker chooses a worse estimator. Importantly, we do not assume that the attacker uses the efficient estimator; it simply allows us to measure the worst-case increase in predictability.
If the variance of the efficient estimator given $C(S)$ and given $C(S)$ and $A(S)$ is known, then $|\hat{\theta}_{q \mid C(S)} - \theta^q_0|$ and $|\hat{\theta}_{q \mid C(S)} - \hat{\theta}_{q \mid C(S), A(S)}|$ can be bounded. Receiving $C(S)$ is equivalent to receiving a sample from dataset $S$ generated from a stochastic process. $A(S)$ then reveals some additional information about $S$. %
If the process is stationary, ergodic, and mixing, and the information from $A(S)$ can be captured by a moment condition, the behavior of the efficient estimator can be found through the generalized method of moments.

\section{Generalized Method of Moments} \label{sec:gmm}
The generalized method of moments (GMM) \citep{hansen1982gmm, hansen2010gmm, zsohar-gmm} is a method for parameter estimation in semiparametric statistics. In semiparametric statistics, the parameter of interest is finite-dimensional, but the data distribution is assumed to be infinite-dimensional and not limited to a specific parametric family; thus, standard maximum likelihood estimation cannot be applied. GMM allows population moment conditions to be incorporated into the estimation along with the random sample to improve parameter estimation. GMM estimators are known to be consistent, asymptotically normal, and most efficient in the class of all estimators that do not use any additional information other than that contained in the moment conditions. Importantly, the behavior of the most efficient GMM estimator is known.
Consider an algorithm that releases the mean of the dataset. The mean reveals a population moment condition on the dataset. For a binary $q$ under the log loss, the attacker wants to estimate $p = \Pr[q(x)=1]$. Given only the compromised data, the attacker can estimate $p$ with some accuracy. The additional mean moment condition can improve the accuracy of the estimate. The improvement in the variance of the best estimate of $p$ with and without the moment condition can be determined using GMM. GMM assumes that the sample is produced by a stationary, ergodic, $\alpha$-mixing process. %

\begin{assumption} \label{ass:process}
    $\mathcal{P}$ is a stationary, ergodic, and $\alpha$-mixing process over $N$ elements (see definitions in \pref{app:definitions}). $\mathcal{P}$ has stationary distribution $D = (p_1, \dots, p_N)$, such that $p_i > 0$, $\forall i \in [N]$.
\end{assumption}

Some examples of such processes are i.i.d.\@ sampling, sampling without replacement from a dataset (this models an attacker that compromises a random shard of the data), a process that samples certain portions of the dataset with higher probability (this models a system where certain data servers are more susceptible to attacks), or a process that samples from an aperiodic, irreducible Markov chain.

Let $x_1, x_2, \dots, x_n$ be a random sample drawn from a stochastic process satisfying \pref{ass:process}. Let $\theta \in \mathbb{R}^r$ be an unknown vector of parameters to estimate, with true value $\theta_0$. Let population moment conditions $f(x, \theta)$ be a continuous and differentiable $\mathbb{R}^r \mapsto \mathbb{R}^m$ function of $\theta$ such that
\begin{align*}
    \E [f(x, \theta)] = 0 \text{, only at the true parameter $\theta_0$.}
\end{align*}
The first moment condition specifies the parameter of estimation, and additional moment conditions specify known population parameters. Additional moment conditions have the form $g(x, \lambda)$ for some (possibly vector-valued) function $g$ and known population parameter $\lambda$.
For example, consider the problem of estimating $p = \E[q(x)]$ for some query $q$, given a known population mean $\mu$. We have $\theta = (p)$ and $f(x, \theta) = [q(x) - p,~ x - \mu]^\top$
where $g(x, \mu)=x - \mu$. 

GMM also allows for noisy moment conditions. If instead of knowing $\lambda$ directly, we only know $\tilde{\lambda} = \lambda + \Delta$, where $\Delta$ is independent mean-zero noise, then we have $\theta = (p, \lambda)$ and moment conditions
\begin{align*}
    f(x, \theta) = [q(x) - p,~ g(x, \lambda),~ \tilde{\lambda} - \lambda]^\top.
\end{align*}

Let $f_n (\theta) = \frac{1}{n} \sum_{i=1}^n f(x_i, \theta)$ be the sample moments. Given the sample, the GMM estimator of $\theta$ is %
\begin{align*}
    \hat{\theta} = \arg \min_{\theta \in \Theta \subseteq \mathbb{R}^r} f_n (\theta)^\top W f_n (\theta)
\end{align*}
where $W$ is some positive definite $m \times m$ weight matrix. %
Under standard regularity conditions (listed in \pref{app:reg-conditions}), GMM estimators are regular, consistent, and asymptotically normal, and we can characterize the behavior of the efficient GMM estimator $\gmm$.

\begin{restatable}[Efficient GMM Estimator \cite{hansen1982}]{theorem}{gmmefficiency}  \label{thm:gmm-efficiency}
    Let $G = \E [\frac{\partial f(x, \theta)}{\partial \theta}]$ be the Jacobian and let $\Omega = \sum_{k=-\infty}^{\infty} \mathbb{E}\left[f(x_0,\theta) f(x_k,\theta)^\top\right]$ be the long-run covariance. %
    As $n \rightarrow \infty$, with $W = \Omega^{-1}(\theta_0)$, the efficient GMM estimator $\gmm$ is regular and asymptotically efficient 
    with asymptotic distribution
    {\small \begin{align*}
        \sqrt{n} (\gmm - \theta_0) \sim \mathcal{N}(0, [ G(\theta_0)^\top \Omega^{-1}(\theta_0) G(\theta_0)]^{-1}).
    \end{align*}}
\end{restatable}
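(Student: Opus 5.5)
This is Hansen's classical result, so the plan follows the standard route. First I establish consistency and a first-order expansion for a general weight matrix $W$, then optimize over $W$. For the efficiency claim I compare against any regular estimator that uses only the moment-condition information.

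\textbf{Consistency.} Under \pref{ass:process}, the ergodic theorem for stationary ergodic sequences gives $f_n(\theta)\to \E[f(x,\theta)]$ almost surely for each $\theta$. The regularity conditions of \pref{app:reg-conditions} (compact $\Theta$, continuity, a dominating envelope) upgrade this to uniform convergence. Since $\E[f(x,\theta)]=0$ only at $\theta_0$ and $W\succ 0$, the population objective $\E[f]^\top W\E[f]$ has a unique minimizer. A standard argmin-continuity argument then gives $\hat\theta\to\theta_0$.

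\textbf{Asymptotic normality for general $W$.} The first-order condition is $\big(\partial f_n(\hat\theta)/\partial\theta\big)^\top W f_n(\hat\theta)=0$. I mean-value expand $f_n(\hat\theta)=f_n(\theta_0)+\bar G_n(\hat\theta-\theta_0)$. By consistency and the uniform law of large numbers, both $\bar G_n$ and $\partial f_n(\hat\theta)/\partial\theta$ converge to $G$, which gives
\[
\sqrt n(\hat\theta-\theta_0)=-(G^\top W G)^{-1}G^\top W\,\sqrt n f_n(\theta_0)+o_p(1).
\]
This step needs $G$ to have full column rank $r$. The key probabilistic input is a CLT for $\sqrt n f_n(\theta_0)$ under $\alpha$-mixing. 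I invoke a mixing CLT (Ibragimov or Herrndorf type, with moment and mixing-rate conditions among the regularity assumptions), which gives $\sqrt n f_n(\theta_0)\Rightarrow\mathcal N(0,\Omega)$. The long-run covariance $\Omega=\sum_k\E[f(x_0,\theta)f(x_k,\theta)^\top]$ is absolutely summable by the covariance inequality for mixing sequences. Combining with Slutsky, the asymptotic covariance is the sandwich
\[
V(W)=(G^\top WG)^{-1}G^\top W\Omega WG(G^\top WG)^{-1}.
\]

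\textbf{Optimal weight and efficiency.} Substituting $W=\Omega^{-1}$ collapses $V(W)$ to $(G^\top\Omega^{-1}G)^{-1}$. To see that this choice is optimal among all $W$, set $A=(G^\top WG)^{-1}G^\top W\Omega^{1/2}$ and $B=(G^\top\Omega^{-1}G)^{-1}G^\top\Omega^{-1/2}$. Then $AB^\top=BB^\top$, so
\[
V(W)-(G^\top\Omega^{-1}G)^{-1}=(A-B)(A-B)^\top\succeq 0.
\]
In practice $\Omega$ is unknown, so I use a consistent estimate $\hat W\to\Omega^{-1}$, e.g. from a two-step or HAC estimator. The expansion above is unchanged to first order, so plugging in $\hat W$ does not affect the limit.

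\textbf{Main obstacle.} The hardest step is regularity together with efficiency over \emph{all} regular estimators, not just GMM estimators with different weights. My plan is to cite Chamberlain's (1987) semiparametric efficiency bound for moment-restriction models, extended to stationary mixing data, which equals $(G^\top\Omega^{-1}G)^{-1}$. Regularity follows from the linear representation above, which holds uniformly along local alternatives. Because this is the classical theorem of \cite{hansen1982}, I would state these steps with references rather than reprove the mixing CLT and the efficiency bound in full.
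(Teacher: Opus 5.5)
Your proof is correct and follows the same standard skeleton as the paper's appendix argument, which is adapted from McFadden's textbook proof. The steps are consistency via a uniform law of large numbers, a mean-value expansion of the first-order condition, a CLT for $f_n(\theta_0)$, the sandwich $V(W)$, and optimality of $W=\Omega^{-1}$. Your identity $V(W)-(G^\top\Omega^{-1}G)^{-1}=(A-B)(A-B)^\top$ is exactly the content of the lemma the paper cites as \pref{thm:opt-w}. The differences lie in the setting and in the efficiency step.

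\textbf{Setting.} The paper's written proof is actually for sampling without replacement, via \pref{thm:ulln-swor} and the finite-population CLT \pref{thm:clt-swor}. It therefore gets the scaling $\sqrt{n/(1-\nu)}$ and a plain population covariance. It also first proves $\sqrt n$-boundedness from stochastic boundedness of $nQ_n$, and only then uses the first-order condition. You work directly under \pref{ass:process}, using the ergodic theorem and a mixing CLT with the long-run covariance. That is the setting the statement actually asserts. You go straight to the first-order condition; add one line noting that it holds with probability tending to one because $\hat\theta$ is consistent and $\theta_0$ is interior.

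\textbf{Efficiency.} The paper proves only a Gauss--Markov-type optimality: any estimator of the form $A^\top f_n$ with $A^\top G=I$ has variance $A^\top\Omega A\succeq(G^\top\Omega^{-1}G)^{-1}$. This is also Hansen's (1982) sense of efficiency, and your weight-matrix comparison already delivers it. Your appeal to Chamberlain's bound aims at the stronger claim of efficiency over all regular estimators. However, Chamberlain (1987) is an i.i.d.\ result, and its extension to stationary mixing data is not an off-the-shelf theorem you can simply cite. Likewise, regularity under dependence does not follow from the linear representation alone; it needs a contiguity/LAN argument.

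So the step you flag as the main obstacle remains unproved as written. It also goes beyond what the paper's proof establishes or needs.
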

The proof of efficiency is standard from \cite{hansen1982gmm}. Importantly, the sample moments must follow an appropriate uniform law of large numbers (\pref{thm:ulln-se}) and central limit theorem (\pref{thm:clt-se}).

We assume the posterior induced by both $C(S)$ and $C(S), A(S)$ satisfies a Bernstein–von Mises (BvM) theorem for the query functional. Under this assumption, the Bayes optimal estimators $\hat{\theta}^*_{q|C(S)}$ and $\hat{\theta}^*_{q|C(S),A(S)}$ are known to be regular and asymptotically efficient. In this paper, we assume that the (functional) BvM holds, and under this condition, we show in the theorem below that one can replace the Bayes optimal estimator in the definition of predictability by the asymptotically efficient GMM estimator instead. Standard sufficient conditions for the BvM theorem have been extensively studied in statistics literature (see \cite[Chapter 10]{vaart1998asymptotic}, \cite{bickel1993efficient} \cite[Chapter 12]{ghosal2017fundamentals}). 

\begin{restatable}{theorem}{bayesandgmm} \label{thm:bayes-and-gmm}
    Let $\ell (\hat{\theta}) = \E_{x \sim \Pi}[\ell(\hat{\theta}, q(x)]$ be a twice continuously differentiable loss function with $\nabla \ell(\theta_0) = 0$ and $\nabla^2 \ell(\theta_0)$ positive definite. Then %
    {\small \begin{align*}
        \mathop{\E}_{\substack{x \sim \Pi}}[\ell(\bayes_{q \mid C(S)}, q(x)) - \ell(\bayes_{q \mid C(S), A(S)}, q(x))] \leq \mathop{\E}_{\substack{x \sim \Pi}}[\ell(\gmm_{q \mid C(S)}, q(x)) - \ell(\gmm_{q \mid C(S), A(S)}, q(x))] + o(\tfrac{1}{n})
    \end{align*}}
\end{restatable}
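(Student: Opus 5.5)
We prove the statement by a second-order Taylor expansion of the population loss $\ell(\hat\theta)=\E_{x\sim\Pi}[\ell(\hat\theta,q(x))]$ around $\theta_0=\theta_0^q$. Then we compare the resulting quadratic forms for the Bayes and GMM estimators, using the BvM assumption and \pref{thm:gmm-efficiency}. Since $\nabla\ell(\theta_0)=0$ and $\ell$ is $C^2$, for any estimator $\hat\theta$ with $\hat\theta-\theta_0=O_p(n^{-1/2})$ we have
\begin{align*}
    \ell(\hat\theta)-\ell(\theta_0) = \tfrac12(\hat\theta-\theta_0)^\top H_0(\hat\theta-\theta_0) + o_p(\tfrac1n), \qquad H_0=\nabla^2\ell(\theta_0)\succ 0.
\end{align*}
The loss difference in the statement is therefore, up to $o(1/n)$, a difference of two such quadratic forms. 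It suffices to show two things. First, the leading term for the Bayes pair matches the leading term for the GMM pair. Second, any discrepancy is $o(1/n)$.

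\textbf{Step 1: Bayes and efficient GMM are asymptotically equivalent.} Under the functional BvM assumption, the posterior of the query functional given $K(S)$ (with $K(S)=C(S)$ or $(C(S),A(S))$) is asymptotically $\mathcal N(\hat\theta^{\mathrm{eff}}_K, I_K^{-1}/n)$ in total variation. Here $\hat\theta^{\mathrm{eff}}_K$ is any regular efficient estimator and $I_K^{-1}$ is the semiparametric efficiency bound. Since the Bayes estimator under a smooth loss with positive definite Hessian is, asymptotically, the posterior mean/minimizer, this gives $\sqrt n(\bayes_{q\mid K}-\hat\theta^{\mathrm{eff}}_K)\to 0$ in probability. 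By \pref{thm:gmm-efficiency}, $\gmm_{q\mid K}$ is regular and efficient with asymptotic covariance $[G^\top\Omega^{-1}G]^{-1}$. Since the information in $K$ is captured by the moment conditions, this covariance equals $I_K^{-1}$. Two regular efficient estimators differ by $o_p(n^{-1/2})$ (by the convolution theorem: a regular estimator's limit is the efficient normal convolved with independent noise, which is zero exactly when efficient). Hence $\sqrt n(\bayes_{q\mid K}-\gmm_{q\mid K})=o_p(1)$ for both choices of $K$.

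\textbf{Step 2: transfer to the loss.} Write $u=\sqrt n(\bayes-\theta_0)$ and $v=\sqrt n(\gmm-\theta_0)$. Then
\begin{align*}
    n\big(\ell(\bayes)-\ell(\gmm)\big)=\tfrac12\big(u^\top H_0u-v^\top H_0v\big)+o_p(1)=\tfrac12 (u-v)^\top H_0(u+v)+o_p(1).
\end{align*}
This is $o_p(1)$ because $u-v=o_p(1)$ and $u+v=O_p(1)$. Applying this to both conditioning sets and subtracting gives the difference of Bayes losses equal to the difference of GMM losses plus $o_p(1/n)$, which yields the inequality (in fact with equality to that order).

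\textbf{Main obstacle.} The hard step is upgrading the $o_p(1/n)$ to an $o(1/n)$ statement under the expectation over $C(S)\sim\mathcal P$ and $A$ that appears in predictability. This requires uniform integrability of $n\|\hat\theta-\theta_0\|^2$ for both estimators. I would first try to obtain it from bounded queries and a compact parameter space (so losses are bounded after truncation, e.g.\ $p_0\in[\tau,1-\tau]$ for the log loss). Then I would combine this with a moment bound on the GMM estimator coming from the mixing CLT (\pref{thm:clt-se}) and ULLN (\pref{thm:ulln-se}), and split the expectation on a high-probability event where the Taylor remainder is controlled. Since Bayes estimators have bounded risk, the complement event contributes $o(1/n)$ provided its probability decays faster than $1/n$. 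That decay rate is exactly what the $\alpha$-mixing exponential-type concentration should supply.
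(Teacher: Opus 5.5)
Your proposal is correct and follows the paper's argument. Both estimators are regular and efficient, so they share the influence function and differ by $o_p(n^{-1/2})$, and a second-order Taylor expansion at $\theta_0$ turns this into an $o(1/n)$ loss gap for each conditioning set. Your factorization $\tfrac12(u-v)^\top H_0(u+v)$ is in fact the correct identity. The paper's appendix writes the difference of quadratic forms as $\tfrac12(\hat\theta_1-\hat\theta_2)^\top\nabla^2\ell(\theta_0)(\hat\theta_1-\hat\theta_2)$, which is algebraically wrong though harmless for the conclusion.

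The uniform-integrability issue you flag as the main obstacle is not needed for this statement. The statement involves only the population loss at the realized estimators, so the paper's $o(1/n)$ is effectively your $o_p(1/n)$, and the paper does not address that upgrade either.
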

As discussed above, under our assumption, both estimators, $\gmm$ and $\bayes$, are regular and asymptotically efficient. Thus, both estimators must share the same influence function and therefore, the same behavior up to lower order terms. %
As $n \rightarrow \infty$, the lower order difference vanishes, and we can replace the difference in the loss of $\bayes$ with the difference in the loss of $\gmm$ in the predictability bound. Then we can use the behavior of $\gmm$ to bound predictability.

\subsection{Applying GMM to Predictability}

In the setting of predictability, the attacker receives $C(S)$, a random sample of dataset $S$ from process $\mathcal{P}$. Using GMM, we want to compare the behavior of the estimator given only $C(S)$ versus given $C(S)$ and $A(S)$. %
For GMM estimation, we must first specify our parameter of interest. 
For a single $q$, the attacker wants to estimate $q(x)$ for the average individual in the dataset, $p = \E_{D_{\mathrm{U}}}[q(x)]$, $D_{\mathrm{U}} = \mathrm{Unif}(S)$, given the information they have. This is almost in the right form for a moment condition, but the expectation is under a different distribution $D_{\mathrm{U}}$, instead of $D$. However, notice that a moment condition on $D_{\mathrm{U}}$ implies a moment condition on $D$. In particular, at the unique $\theta_0$,
\begin{align*}
    \E_{D_{\mathrm{U}}} [f(x, \theta_0)] = \E_D [w(x) f(x, \theta_0)] = 0, \text{ where } w(x_i) = \tfrac{d D_{\mathrm{U}}}{d D}(x_i) = \tfrac{1}{p_i N}.
\end{align*}
To incorporate the information from the algorithm, consider $A(S) = \tilde{\lambda} = \lambda + \Delta$, for some $\lambda$ and $\Delta \sim \mathcal{N}(0, \Sigma_\Delta)$. $A(S)$ exactly reveals a noisy moment $\lambda$ on the entire dataset $S$. If we can express $\lambda$ in the form $\E[g(x, \lambda)] = \frac{1}{N} \sum_{i=1}^N g(x_i, \lambda) = 0$ for some $g$, then the information from $A(S)$ can be expressed as a moment condition on $D_{\mathrm{U}}$ and thus, a weighted moment condition on $D$. Then the moment conditions for predictability are 
\begin{equation}
    \begin{aligned}
        f_{C(S)}(x, \theta) &= [w(x)(q(x) - p)] \\
        f_{C(S), A(S)}(x, \theta) &= [w(x)(q(x) - p),~ w(x)g(x, \lambda),~ \tilde{\lambda} - \lambda]^\top
    \end{aligned}
    \label{eq:moment-conditions}
\end{equation}
with $\theta = (p)$ and $\theta = (p, \lambda)$, respectively. We note that $\theta$ must capture all unknown parameters; however, we focus only on the variance of the estimate of $p$.

Recall from \pref{lem:loss-lem}, to bound predictability, we must characterize the behavior of $|\gmm_{q \mid C(S)} - \theta^q_0|$ and $|\gmm_{q \mid C(S)} - \gmm_{q \mid C(S), A(S)}|$. \pref{thm:gmm-efficiency} characterizes the behavior of $|\gmm_{q \mid C(S)} - \theta^q_0|$. For the behavior of $|\gmm_{q \mid C(S)} - \gmm_{q \mid C(S), A(S)}|$, we employ one additional property of GMM estimators.
\begin{restatable}{theorem}{distofgmmdiff} \label{thm:dist-of-gmm-diff}
    When the efficient GMM estimators $\gmm_{q \mid C(S)}$ and $\gmm_{q \mid C(S), A(S)}$ 
    are constructed on the same sample, their difference has asymptotic distribution
    {\small \begin{align*}
        \gmm_{q \mid C(S)} - \gmm_{q \mid C(S), A(S)} \sim \mathcal{N}(0, \V[\gmm_{q \mid C(S)}] - \V[\gmm_{q \mid C(S),A(S)}])
    \end{align*}}
    \vspace{-2\belowdisplayskip}
\end{restatable}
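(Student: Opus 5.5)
This is the Hausman-type result for nested efficient GMM estimators, and I would prove it through asymptotic linear (influence-function) representations. By \pref{thm:gmm-efficiency} and the standard Taylor expansion of the GMM first-order conditions, each efficient estimator admits an expansion of the form
\begin{align*}
\sqrt{n}(\gmm_{q \mid C(S)} - p_0) &= -e_1^\top [G_1^\top \Omega_1^{-1} G_1]^{-1} G_1^\top \Omega_1^{-1} \sqrt{n} f_{n,1}(\theta_0) + o_p(1), \\
\sqrt{n}(\gmm_{q \mid C(S),A(S)} - p_0) &= -e_1^\top [G_2^\top \Omega_2^{-1} G_2]^{-1} G_2^\top \Omega_2^{-1} \sqrt{n} f_{n,2}(\theta_0) + o_p(1),
\end{align*}
where $f_{n,1}$ and $f_{n,2}$ are the sample moments from \pref{eq:moment-conditions}. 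The key observation is that $f_{n,1}$ is the first coordinate of $f_{n,2}$, because both estimators are built from the same sample. The joint CLT for the stationary, ergodic, $\alpha$-mixing process (\pref{thm:clt-se}, applied to $f_{n,2}$, which contains $f_{n,1}$) then shows that the two scaled errors $Z_1$ and $Z_2$ are jointly asymptotically normal with mean zero. The noise component $\tilde\lambda-\lambda$ is independent of the sample, so it adds a block-diagonal term to $\Omega_2$ and the CLT still applies.

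Consequently, the difference is asymptotically normal with variance
\[
\V[Z_1]+\V[Z_2]-2\Cov(Z_1,Z_2).
\]
It therefore suffices to show that $\Cov(Z_1,Z_2)=\V[Z_2]$, i.e.\ that the efficient estimator is asymptotically uncorrelated with its difference from any other regular consistent estimator. I would argue this in one of two ways.

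\emph{(a) Direct computation.} Write $Z_1 = a^\top \xi$ and $Z_2 = b^\top \xi$, where $\xi=\sqrt{n}f_{n,2}(\theta_0)$ and $a$ is supported on the first coordinate. Then $\Cov(Z_1,Z_2)=a^\top\Omega_2 b$. Since $\Omega_2 b = -\Omega_2\Omega_2^{-1}G_2[\cdot]^{-1}e_1 = -G_2 [G_2^\top\Omega_2^{-1}G_2]^{-1}e_1$, we get
\[
a^\top\Omega_2 b = -a^\top G_2[\cdot]^{-1}e_1 .
\]
Because $Z_1$ is itself a consistent estimator of $p$ built from a subset of the moment conditions, its linear combination satisfies $a^\top G_2 = -e_1^\top$. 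Here $G_2$ has first row $(-\E[w]\,,0)=(-1,0)$, since $\E_D[w]=1$. Hence
\[
\Cov(Z_1,Z_2) = e_1^\top [G_2^\top\Omega_2^{-1}G_2]^{-1}e_1 = \V[Z_2].
\]

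\emph{(b) Hájek-type argument.} If $\Cov(Z_2, Z_1-Z_2)\neq 0$, the combination $Z_2+t(Z_1-Z_2)$ would have strictly smaller variance for small $t$ of the appropriate sign. That combination is a valid estimator in the larger moment model, which contradicts the efficiency of $\gmm_{q\mid C(S),A(S)}$ from \pref{thm:gmm-efficiency}.

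With $\Cov(Z_1,Z_2)=\V[Z_2]$, the variance of the difference equals $\V[Z_1]-\V[Z_2]$. Rescaling by $1/\sqrt n$ gives the stated distribution $\V[\gmm_{q\mid C(S)}]-\V[\gmm_{q\mid C(S),A(S)}]$.

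The main obstacle is the bookkeeping in approach (a). The two models have different parameter vectors, $(p)$ versus $(p,\lambda)$, so the identity $a^\top G_2=-e_1^\top$ requires padding the smaller model's influence vector with zeros in the extra coordinates. It also requires checking that the zero columns in $\lambda$ work out: the first moment does not depend on $\lambda$. A related subtlety is that the noise coordinate does not scale with $n$ in the same way as the sample moments, so the $\sqrt n$ normalization of $\Sigma_\Delta$ has to be handled consistently within $\Omega_2$. I would rely on argument (b) as the robust route and use (a) only as a check.
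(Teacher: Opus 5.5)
Your proposal is correct and follows the paper's proof. Route (a) is exactly the paper's argument: the paper writes the optimality of the efficient weights as the Lagrange condition $\Omega A^* = G\Lambda$, notes that the zero-padded weights of the smaller model satisfy the same constraint, and concludes $\Cov(Z_1,Z_2)=\V[Z_2]$; route (b) is the variational form of that same orthogonality. Your explicit check that $a^\top G_2=-e_1^\top$, which handles the $(p)$ versus $(p,\lambda)$ mismatch, is more careful than the paper, and the paper also glosses over the $\sqrt{n}$-scaling of the noise block that you flag.
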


$\V[\gmm_{q \mid C(S)}]$ and $\V[\gmm_{q \mid C(S), A(S)}]$ are found by directly computing $\frac{1}{n} [ G^\top \Omega^{-1}G]^{-1}$, plugging the respective weighted moment conditions $f(x, \theta)$ into $G$ and $\Omega^{-1}$. Then we use the concentration of these estimators to bound the predictability.

\begin{definition} [Canonical Correlation \citep{anderson2003multivariate}] \label{def:can-corr}
    Let $\rho (X, Y) = \frac{\Cov(X, Y)}{\sigma_X \sigma_Y}$ be standard correlation. The canonical correlation between 1-dimensional random variable $X$ and vector random variable $V$ is $\cc (X, V) = \max_{c \in \mathbb{R}^d} \rho (X, c^\top V)$, the max correlation between $X$ and any linear projection of $V$.
\end{definition}

\begin{restatable}[Asymptotic Predictability of a Query]{theorem}{predbound} \label{thm:pred-bound}
    Let $A(S) = \tilde{\lambda} = \lambda + \Delta$, $\Delta = \mathcal{N}(0, \Sigma_\Delta)$.
    Assume process $\mathcal{P}$ satisfies \pref{ass:process}. 
    Assume the moment conditions in \pref{eq:moment-conditions} satisfy the regularity conditions (\pref{app:reg-conditions}). We have 
    $\sqrt{n} (\gmm_{q \mid C(S)} - \theta_0^q) \sim \mathcal{N}(0, \sigma_1^2)$ and $\sqrt{n}(\gmm_{q \mid C(S)} - \gmm_{q \mid C(S), A(S)}) \sim \mathcal{N}(0, \sigma_1^2 - \sigma_2^2)$.
    For a convex, $H$-smooth loss, with probability $1 - \delta$,

    \vspace{-2\belowdisplayskip}
    {\small \begin{align*}
        \mathop{\E}_{\substack{x_i \sim \Pi}}[\ell(\gmm_{q \mid C(S)}, q(x)) - \ell(\gmm_{q \mid C(S), A(S)}, q(x))]] \leq \frac{H \log(4 /\delta)}{n} \cdot \sigma_1 \sqrt{\sigma_1^2 - \sigma_2^2}.
    \end{align*}}
    
    \vspace{-3mm}
    Computing the expectation of the above, asymptotically, $\gamma (\mathcal{P}, q, \ell, A) \leq \frac{H(1 + \log(4))}{n} \cdot \sigma_1 \sqrt{\sigma_1^2 - \sigma_2^2}$.
    
    Let $q' = w(x) (q(x)-p)$ and let $g' = w(x) g(x, \lambda)$. Let
    {\small \begin{align*}
        \Cw &= \V[q']^{-1} \Cov(q', g') \allowbreak \V[g']^{-1} \allowbreak \E\left[\tfrac{\partial g'}{\partial \lambda}\right] \allowbreak \Big (\E\left[\tfrac{\partial g'}{\partial \lambda}\right]^\top \allowbreak \V[g']^{-1} \allowbreak \E\left[\tfrac{\partial g'}{\partial \lambda}\right] \allowbreak + \allowbreak \Sigma_\Delta^{-1} \Big)^{-1} \\
        &\;\;\;\;\;\;\cdot \E\left[\tfrac{\partial g'}{\partial \lambda}\right]^\top \allowbreak \V[g']^{-1} \allowbreak \Cov(q', g').
    \end{align*}}
    Plugging in the form of $\sigma_1$ and $\sigma_2$, the algorithm has predictability

    \vspace{-2\belowdisplayskip}
    {\small \begin{align*}
        \gamma (\mathcal{P}, q, \ell, A) \leq \frac{H(1 + \log (4))}{n} \cdot \V[q'] \cdot \sqrt{\cc (q', g')^2 - \Cw}.
    \end{align*}}
\end{restatable}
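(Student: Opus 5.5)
The proof has three parts: a high-probability bound, its expectation, and an explicit formula for $\sigma_1^2-\sigma_2^2$.

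\textbf{High-probability bound.} We start from \pref{lem:loss-lem}, which reduces the loss gap to
\[
H\,|\gmm_{q \mid C(S)} - \theta_0^q|\cdot|\gmm_{q \mid C(S)} - \gmm_{q \mid C(S), A(S)}|.
\]
By \pref{thm:gmm-efficiency} applied to $f_{C(S)}$, we have $\sqrt{n}(\gmm_{q \mid C(S)}-\theta_0^q)\to\mathcal{N}(0,\sigma_1^2)$. By \pref{thm:dist-of-gmm-diff}, $\sqrt{n}(\gmm_{q \mid C(S)}-\gmm_{q \mid C(S),A(S)})\to\mathcal{N}(0,\sigma_1^2-\sigma_2^2)$. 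Here $\sigma_1^2$ and $\sigma_2^2$ are the $p$-entries of $[G^\top\Omega^{-1}G]^{-1}$ for the two moment systems in \pref{eq:moment-conditions}. For a Gaussian $Z\sim\mathcal{N}(0,s^2)$ we have $\Pr[|Z|>s\sqrt{2\log(4/\delta)}]\le \delta/2$. A union bound over the two limiting Gaussians then shows that, with probability $1-\delta$, each factor is at most $s\sqrt{2\log(4/\delta)}/\sqrt n$ with the appropriate $s$. The product is $2\log(4/\delta)\,\sigma_1\sqrt{\sigma_1^2-\sigma_2^2}/n$. The stated constant should then follow from the Cauchy--Schwarz/tail computation below, or by absorbing the factor $2$ through the sharper tail $\Pr[|Z|>t]\le 2e^{-t^2/2s^2}$ together with the correct normalisation. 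I will fix constants carefully at that point.

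\textbf{Expectation bound.} To obtain the expectation, I will integrate the tail: set $T=n\cdot(\text{loss gap})/(H\sigma_1\sqrt{\sigma_1^2-\sigma_2^2})$. Then $\Pr[T>\log(4/\delta)]\le\delta$, i.e. $\Pr[T>t]\le 4e^{-t}$. Hence
\[
\E T\le \log 4+\int_{\log 4}^\infty 4e^{-t}\,dt = 1+\log 4,
\]
which gives $\gamma\le H(1+\log4)\sigma_1\sqrt{\sigma_1^2-\sigma_2^2}/n$ asymptotically. Alternatively, one could use $\E|Z_1Z_2|\le\sqrt{\E Z_1^2\E Z_2^2}$ directly, but the tail route matches the stated constant. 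Replacing the limiting Gaussians by the finite-$n$ quantities is justified asymptotically by uniform integrability under the regularity conditions.

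\textbf{Variance computation (main obstacle).} The remaining and hardest step is to compute $\sigma_1^2$ and $\sigma_2^2$. For $f_{C(S)}$ we have $G=-\E[w]=-1$, since $\E_D[w]=1$, and $\Omega=\V[q']$ in the long-run sense. Hence $\sigma_1^2=\V[q']$.

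For $f_{C(S),A(S)}$, the parameter is $\theta=(p,\lambda)$. The Jacobian is block structured:
\[
G=\begin{pmatrix}-1&0\\0&\E[\partial g'/\partial\lambda]\\0&-I\end{pmatrix}.
\]
The long-run covariance $\Omega$ has the $(q',g')$ block $\begin{pmatrix}\V[q']&\Cov(q',g')\\ \Cov(g',q')&\V[g']\end{pmatrix}$, and $\Sigma_\Delta$ in the noise block. The noise block is uncorrelated with the sample moments because $\Delta$ is independent; I will treat it as a single contribution scaled appropriately in $n$, as the paper's convention implicitly does.

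The key step is to compute the $(p,p)$ entry of $(G^\top\Omega^{-1}G)^{-1}$ via Schur complements. First, invert the $(q',g')$ block using the partitioned inverse, which produces $\V[q']-\Cov(q',g')\V[g']^{-1}\Cov(g',q')$ terms. Then eliminate $\lambda$, whose information matrix is $J^\top\V[g']^{-1}J+\Sigma_\Delta^{-1}$ with $J=\E[\partial g'/\partial\lambda]$, where the $g'$-part is taken conditional on $q'$. I expect the result to be
\[
\sigma_1^2-\sigma_2^2=\V[q']^2\bigl(\cc(q',g')^2-\Cw\bigr),
\]
using $\cc(q',g')^2=\Cov(q',g')\V[g']^{-1}\Cov(g',q')/\V[q']$, the standard formula for canonical correlation with a vector.

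The delicate points are getting the conditional-versus-marginal $\V[g']$ right in the $\lambda$-information term, and verifying that the Woodbury identity rearranges the result exactly into the stated $\Cw$. The fallback is to verify the identity in the scalar case first and then lift it via Woodbury. Substituting $\sigma_1=\sqrt{\V[q']}$ finally gives $\sigma_1\sqrt{\sigma_1^2-\sigma_2^2}=\V[q']\cdot\V[q']^{1/2}\sqrt{\cdot}$. The normalization of $\Cw$ (the leading $\V[q']^{-1}$) is exactly what is needed to match the stated $\V[q']\sqrt{\cc^2-\Cw}$ form, so I will track that factor carefully.
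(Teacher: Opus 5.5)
Your route is the paper's: \pref{lem:loss-lem}, the GMM limit laws with \pref{thm:dist-of-gmm-diff} for the difference, a union bound over two Gaussian tails, tail integration, and a partitioned inverse of $G^\top\Omega^{-1}G$. You also need the paper's first step, \pref{thm:bayes-and-gmm}, to pass from the Bayes-optimal estimators that define $\gamma$ to the GMM ones at cost $o(1/n)$.

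\textbf{The variance identity is wrong.} Your claim $\sigma_1^2-\sigma_2^2=\V[q']^2(\cc(q',g')^2-\Cw)$ is off by a factor of $\V[q']$. It cannot be right on scaling grounds: $\sigma_i^2$ scale like $(q')^2$, while $\cc^2$ and $\Cw$ are scale-free. This error is exactly what produces the $\V[q']^{3/2}$, and no normalization of $\Cw$ fixes it. Write $s=\V[q'](1-\cc(q',g')^2)$, $\beta=\V[g']^{-1}\Cov(g',q')$ and $J=\E[\partial g'/\partial\lambda]$. The $(\lambda,\lambda)$ block of $G^\top\Omega^{-1}G$ is $J^\top(\V[g']^{-1}+\beta\beta^\top/s)J+\Sigma_\Delta^{-1}$, which does involve the conditional variance of $g'$ given $q'$, as you suspected. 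But its Schur complement with respect to the $p$-block is $J^\top\V[g']^{-1}J+\Sigma_\Delta^{-1}$, so
\[
\sigma_2^2=s+\beta^\top J\bigl(J^\top\V[g']^{-1}J+\Sigma_\Delta^{-1}\bigr)^{-1}J^\top\beta=\V[q']\bigl(1-\cc(q',g')^2+\Cw\bigr).
\]
Hence $\sigma_1^2-\sigma_2^2=\V[q'](\cc(q',g')^2-\Cw)$, and $\sigma_1\sqrt{\sigma_1^2-\sigma_2^2}=\V[q']\sqrt{\cc(q',g')^2-\Cw}$ with no further adjustment. Eliminating $\lambda$ first, as you propose, gives the same result after Sherman--Morrison with $u=J^\top\beta$.

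\textbf{Constants.} Your factor $2$ is genuine and cannot be absorbed. The bound $\Pr[|Z|>t]\le 2e^{-t^2/(2s^2)}$ is exactly the tail you already used. The paper reaches $H\log(4/\delta)/n$ only by using $\Pr[|\gmm_{q\mid C(S)}-\theta_0^q|\ge t]\le 2e^{-t^2n/\sigma_1^2}$. That drops the $1/2$ in the Gaussian exponent and is false for moderately large $t$: at $t=2s$ the true tail is about $0.0455>2e^{-4}\approx 0.0366$.

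Your expectation step then plugs in the stated constant rather than the one you derived. With your correct constant, tail integration gives only $2H(1+\log 4)\,\sigma_1\sqrt{\sigma_1^2-\sigma_2^2}/n$, which falls short of the claim. What actually delivers the expectation claim is the Cauchy--Schwarz route you set aside. Let $Z_1=\gmm_{q\mid C(S)}-\theta_0^q$ and $Z_2=\gmm_{q\mid C(S)}-\gmm_{q\mid C(S),A(S)}$. Then $H\,\E|Z_1Z_2|\le H\sqrt{\E Z_1^2\,\E Z_2^2}=H\sigma_1\sqrt{\sigma_1^2-\sigma_2^2}/n$ asymptotically, given the uniform integrability you mention. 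This is sharper than the stated $H(1+\log 4)/n$ bound, so it implies it.

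The high-probability display with constant $H\log(4/\delta)/n$ is established neither by your argument nor by the paper's. With correct Gaussian tails you get it with $2H$ in place of $H$.
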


All expectations in \pref{thm:pred-bound} are weighted averages on the dataset using weights $w(x_i) = \frac{1}{p_i N}$. Therefore, given a dataset, all terms (expectations, variances, covariances) in \pref{thm:pred-bound} can be computed efficiently.

\begin{remark}
    The resulting predictability looks unwieldy, but we break down the key terms.
    \vspace*{-1mm}
    \begin{itemize}[nosep, leftmargin=1em]
        \item $\cc (q', g')^2$ - The canonical correlation captures how correlated $q$ and $A(S)$ are. Intuitively, it quantifies the most information we can extract from $A(S)$ about $q(x)$. Given a dataset and stationary distribution, $\cc$ can be efficiently computed, $\cc^2(q', g') = \tfrac{1}{\V[q']} \cdot \Cov(q', g') \V[g']^{-1} \mathrm{Cov}(g', q')$.
        \item $\Cw$ - $c_0$ captures how adding noise affects predictability. $c_0$ ranges from $0$ to $\cc (q', g')^2$. Given a dataset and stationary distribution, $c_0$ can also be efficiently computed.
    \end{itemize}
\end{remark}
\vspace*{-1mm}

To better understand predictability, let $\Sigma_\Delta = \sigma^2 I_d$, then consider:

\begin{tabular}{lll}
\toprule
Condition & Predictability & Comment \\ 
\midrule
$\sigma^2 \to 0$ 
& $\gamma = \tfrac{H(1+\log 4)}{n} \cdot \V[q'] \cdot |\cc(q', g')|$ 
& Noiseless case, $c_0 \to 0$
\\
$\sigma^2 \to \infty$ 
& $\gamma = 0$ 
& Infinite noise, $c_0 \to \cc(q', g')^2$
\\

$\cc(q', g') = 0$ 
& $\gamma = 0$ 
& Privacy for free; $q$ and $A$ are unrelated
\\
\bottomrule
\end{tabular}

\vspace*{1mm}
\pref{thm:pred-bound} yields a general recipe for computing asymptotic predictability.
\vspace*{-2mm}
\begin{itemize}[nosep, leftmargin=1em]
    \item For a private algorithm $A(S) = \tilde{\lambda} = \lambda + \Delta$, express $\lambda$ in the form $\E[g(x, \lambda)] = 0$.
    \item Define moment conditions $f_{C(S)}(x, \theta)$ and $f_{C(S), A(S)}(x, \theta)$, accordingly.
    \item Compute the variance of efficient estimators and apply \pref{thm:pred-bound}.
\end{itemize}

\textbf{Random Shard is Compromised \hspace*{2mm}}
Consider the process $\mathcal{P}_{\mathrm{RS}}$ where a sample of size $n = \alpha N$ is drawn uniformly without replacement from the dataset. This is the setting in which the attacker compromises a random shard of data. Under $\mathcal{P}_{\mathrm{RS}}$, the stationary distribution $D = D_{\mathrm{U}}$, and $w(x) = 1$. %

\begin{restatable}{corollary}{predboundsigmaderived} \label{corr:pred-bound-sigma-derived}

    Let $A(S) = \tilde{\lambda} = \lambda + \Delta$, $\Delta = \mathcal{N}(0, \Sigma_\Delta)$.
    As $n, N \rightarrow \infty$, $n/N \rightarrow \alpha$, for a convex, $H$-smooth $\ell$, $\gamma (\mathcal{P}_{\mathrm{RS}}, q, \ell, A) \leq \frac{H(1 + \log(4))}{(1-\alpha) \alpha N} \cdot \V[q(x)] \cdot \sqrt{\cc (q(x), g(x, \lambda))^2 - \Cwone}$.

\end{restatable}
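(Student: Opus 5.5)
This corollary is a specialization of \pref{thm:pred-bound} to $\mathcal{P}_{\mathrm{RS}}$. The only new ingredient is the long-run covariance $\Omega$ for sampling without replacement, which contributes the finite-population factor $(1-\alpha)$.

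\textbf{Step 1: weights and moment conditions.} Under $\mathcal{P}_{\mathrm{RS}}$ every index is equally likely at each draw, so the stationary distribution is $D = D_{\mathrm{U}}$, $p_i = 1/N$, and $w(x) = 1$. The weighted moment conditions in \pref{eq:moment-conditions} then reduce to
\[
q' = q(x) - p, \qquad g' = g(x,\lambda),
\]
with all expectations taken under $\mathrm{Unif}(S)$. In particular, $\V[q'] = \V[q(x)]$, $\cc(q',g') = \cc(q(x), g(x,\lambda))$, and $c_0(w,q,g,\Delta)$ becomes $\Cwone$.

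I would check \pref{ass:process} by treating sampling without replacement (in random order) as an exchangeable sequence. It is stationary, and its dependence is a uniform negative correlation that the GMM central limit theorem (\pref{thm:clt-se}) can accommodate. Alternatively, one can invoke the finite-population CLT (H\'ajek) for sample means directly, since the sample moments here are linear statistics.

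\textbf{Step 2: compute $\Omega$ for sampling without replacement.} For $i \neq j$, two draws satisfy
\[
\Cov\big(f(x_i), f(x_j)\big) = -\frac{\V_{D_{\mathrm{U}}}[f]}{N-1}.
\]
Hence the variance of the sample mean is
\[
\V[f_n] = \frac{1}{n}\,\V_{D_{\mathrm{U}}}[f]\cdot\frac{N-n}{N-1} \;\to\; \frac{1-\alpha}{n}\,\V_{D_{\mathrm{U}}}[f].
\]
Because the correlation does not decay, the long-run sum in \pref{thm:gmm-efficiency} is ill-defined. I would therefore replace $\Omega$ by the effective covariance $n\V[f_n] = (1-\alpha)\Sigma_f$ over the sample-moment block. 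The known-noise block $\tilde\lambda - \lambda$ is unchanged.

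\textbf{Step 3: rerun the variance computation from \pref{thm:pred-bound}.} Scaling the data block of $\Omega$ by $(1-\alpha)$ multiplies both $\sigma_1^2$ and $\sigma_2^2$ by $(1-\alpha)$, provided $\Sigma_\Delta$ is rescaled consistently. Under that scaling the formula $\V[q']\sqrt{\cc^2 - c_0}$ keeps its form. This step is the main obstacle. The ratio between the data information and $\Sigma_\Delta^{-1}$ inside $c_0$ shifts by $(1-\alpha)$, so I would first check whether $c_0$ is effectively unchanged to leading order. If it is not, I would absorb the $(1-\alpha)$ into the definition of $\Cwone$ as evaluated for this process.

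\textbf{Step 4: convert to the stated rate.} With $\sigma_1\sqrt{\sigma_1^2-\sigma_2^2}$ carrying a factor $(1-\alpha)$, the bound becomes
\[
\frac{H(1+\log 4)}{n}\,(1-\alpha)\,\V[q]\sqrt{\cc^2 - \Cwone}.
\]
The prefactor does not match the stated one, which has $(1-\alpha)$ in the denominator: substituting $n = \alpha N$ gives $(1-\alpha)/(\alpha N)$ rather than the stated $1/\big((1-\alpha)\alpha N\big)$. A factor $1/(1-\alpha)$ would instead arise naturally from two sources. First, the loss is averaged over $\Pi = \mathrm{Unif}(S\setminus C(S))$ rather than $\mathrm{Unif}(S)$. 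Second, the target $\theta_0^q = \E_\Pi[q]$ relates to the full-dataset mean $p$ through
\[
p = \alpha\,\bar q_C + (1-\alpha)\,\theta_0^q,
\]
so estimation errors for $p$ are inflated by $1/(1-\alpha)$ when converted to errors for $\theta_0^q$. Before writing the final proof, I would recompute $\sigma_1$ and $\sigma_2$ for estimating $\theta_0^q$ directly, using the exact finite-population variance, and settle which power of $(1-\alpha)$ is correct.
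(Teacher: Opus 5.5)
Your proposal has a genuine gap: it never reaches the stated bound, and the bound you display in Step~4 is centered at the wrong parameter. \pref{lem:loss-lem} controls the loss through $|\hat\theta_1-\theta_0^q|$ and $|\hat\theta_1-\hat\theta_2|$, where $\hat\theta_1=\gmm_{q\mid C(S)}$, $\hat\theta_2=\gmm_{q\mid C(S),A(S)}$ and $\theta_0^q=\E_{x\sim\Pi}[q(x)]=p_{S\setminus C(S)}$. Your Steps~2--4 instead compute fluctuations of estimators of the full-dataset mean $p$, which is why $(1-\alpha)$ lands in the numerator.

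The missing step is the paper's \pref{thm:pred-method}. For a fixed sample, $p=\alpha\bar q_C+(1-\alpha)p_{S\setminus C(S)}$. So each estimator of $p_{S\setminus C(S)}$ is the image of the corresponding estimator of $p$ under the same affine map $\hat p\mapsto(\hat p-\alpha\bar q_C)/(1-\alpha)$. Hence $|\hat\theta_1-\theta_0^q|$ and $|\hat\theta_1-\hat\theta_2|$ are the corresponding $p$-quantities divided by $(1-\alpha)$; the $\alpha\bar q_C$ term cancels in the difference. The SWOR variance $\frac{1-\alpha}{\alpha N}[G^\top\Omega^{-1}G]^{-1}$ from \pref{thm:gmm-efficiency-swor} therefore becomes $\frac{1}{(1-\alpha)\alpha N}[G^\top\Omega^{-1}G]^{-1}$.

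Your own two observations already settle the exponent once you multiply them. These are the finite-population factor $(1-\alpha)$ in the variance and the $1/(1-\alpha)$ inflation of errors. Together they give the product in \pref{lem:loss-lem} a factor $(1-\alpha)\cdot(1-\alpha)^{-2}=(1-\alpha)^{-1}$. Rerunning the Gaussian tail bounds and the tail integration of \pref{thm:pred-bound} with $n$ replaced by $(1-\alpha)\alpha N$ and $w\equiv 1$ then gives exactly the statement. Your ``first source,'' averaging the loss over $\Pi$, is not a separate factor; it is the same fact that the minimizer is $\theta_0^q$.

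On Step~3, do not absorb $(1-\alpha)$ into $c_0$; that would prove a different statement. The paper adopts the ``consistent rescaling'' you mention as its convention. In \pref{thm:gmm-efficiency-swor} and \pref{thm:pred-method}, the normalization $\sqrt{n/(1-\nu)}$ multiplies the whole asymptotic covariance $[G^\top\Omega^{-1}G]^{-1}$, with the noise block of $\Omega$ left as $\Sigma_\Delta$. Thus $\sigma_1^2=\V[q(x)]$ and $\sigma_2^2=\V[q(x)]\bigl(1-\cc(q(x),g(x,\lambda))^2+\Cwone\bigr)$ are literally the formulas of \pref{thm:pred-bound} at $w\equiv 1$, and $\Cwone$ is unchanged. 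Your remark that $\tilde\lambda-\lambda$ is not a sample average, so the noise block does not naturally carry the finite-population factor, is a fair criticism of that convention. But the corollary as stated is derived under it, so you should use the same global normalization rather than leave the question open.
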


For typical choices of losses (log loss and squared loss), we can plug in $H = \frac{1}{\tau^2}$, $H=2$, appropriately. Given a differentially private algorithm and query of interest, predictability bounds can be determined simultaneously for all random fractions $\alpha$ of compromised data that hold alongside worst-case guarantees from differential privacy, giving users a more detailed understanding of privacy. %

\section{Analyzing the Predictability of Private Algorithms Using GMM} \label{sec:pred-of-algs}

For several important algorithms, we demonstrate how to express the algorithm as $\E[g(x, \lambda)] = 0$.

\subsection{Private Dataset Statistics} 
\label{sec:stats}
Consider an algorithm that releases a noisy aggregate statistic from the dataset, such as the proportion of smokers. %
Here, specifying the algorithm as a moment condition is straightforward. A statistic $\lambda$ takes the form, $\lambda = \frac{1}{N} \sum_{i=1}^N \Gamma (x_i)$. For example, if $\lambda$ is the mean, $\Gamma(x) = x$, or if $\lambda$ is a count statistic, $\Gamma(x) = \ind{x \text{ is a smoker}}$. Then $g(x, \lambda) = \Gamma(x) - \lambda$ ensures $\E[g(x, \lambda)]=0$ only at the true $\lambda$.

\begin{restatable}{theorem}{noisystat} \label{thm:noisy-stat}
    Let $A(S) = \tilde{\lambda} = \frac{1}{N} \sum_{i=1}^N \Gamma (x_i) + \Delta$, $\Delta \sim \mathcal{N}(0, \Sigma_\Delta)$. Let $g(x, \lambda) = \Gamma(x) - \lambda$. %
    For a convex, $H$-smooth $\ell$, $\gamma (\mathcal{P}_{\mathrm{RS}}, q, \ell, A) = \frac{H(1 + \log(4))}{(1-\alpha) \alpha N} \cdot \V[q(x)] \cdot \rho(q(x), \Gamma (x)) \cdot \sqrt{\frac{\V[\Gamma(x)]}{\V[\Gamma(x)] + \Sigma_\Delta}}$.

\end{restatable}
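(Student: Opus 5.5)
The plan is to specialize \pref{corr:pred-bound-sigma-derived} to the moment function $g(x,\lambda) = \Gamma(x) - \lambda$ and evaluate the two terms $\cc(q(x), g(x,\lambda))^2$ and $\Cwone$ in closed form. Under $\mathcal{P}_{\mathrm{RS}}$ we have $D = D_{\mathrm{U}}$ and $w \equiv 1$. Hence $q' = q(x) - p$ and $g' = \Gamma(x) - \lambda$, so $\V[q'] = \V[q(x)]$, $\V[g'] = \V[\Gamma(x)]$ and $\Cov(q', g') = \Cov(q(x), \Gamma(x))$. The corollary already carries the factor $\frac{H(1+\log 4)}{(1-\alpha)\alpha N}\V[q(x)]$, so the only work is simplifying the square root. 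I will not revisit the GMM regularity conditions or the finite-population (without replacement) long-run covariance, since these are absorbed into the corollary. I only need to note that the moment conditions in \pref{eq:moment-conditions} are linear in $(p,\lambda)$, so the regularity conditions of \pref{app:reg-conditions} hold trivially whenever $\Gamma$ and $q$ have finite second moments.

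First, the Jacobian is $\E[\partial g'/\partial \lambda] = -I$, and its sign cancels because it appears twice in $\Cwone$. I will carry out the computation in the scalar case, where $\Gamma$ is real-valued and $\Sigma_\Delta$ is a scalar, because this is the reading under which the displayed formula $\sqrt{\V[\Gamma]/(\V[\Gamma]+\Sigma_\Delta)}$ makes sense. Write $V_\Gamma = \V[\Gamma(x)]$. The inner matrix becomes $(V_\Gamma^{-1} + \Sigma_\Delta^{-1})^{-1} = \frac{V_\Gamma \Sigma_\Delta}{V_\Gamma + \Sigma_\Delta}$. Therefore
\[
\Cwone = \frac{\Cov(q,\Gamma)^2}{\V[q]\, V_\Gamma^2}\cdot \frac{V_\Gamma \Sigma_\Delta}{V_\Gamma + \Sigma_\Delta} = \rho(q,\Gamma)^2 \cdot \frac{\Sigma_\Delta}{V_\Gamma+\Sigma_\Delta}.
\]
For a scalar $g'$, the canonical correlation reduces to the ordinary correlation: by the formula in the remark, $\cc^2 = \Cov(q,\Gamma)^2/(\V[q] V_\Gamma) = \rho(q,\Gamma)^2$.

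Subtracting the two terms gives $\cc^2 - \Cwone = \rho^2\left(1 - \frac{\Sigma_\Delta}{V_\Gamma+\Sigma_\Delta}\right) = \rho^2 \frac{V_\Gamma}{V_\Gamma + \Sigma_\Delta}$. Taking the square root yields $|\rho(q(x),\Gamma(x))|\sqrt{V_\Gamma/(V_\Gamma+\Sigma_\Delta)}$, and substituting this into \pref{corr:pred-bound-sigma-derived} gives the stated expression. The correlation should be read in absolute value, and the displayed equality should be understood as the asymptotic bound inherited from the corollary. For vector-valued $\Gamma$, the same algebra goes through with $\V[\Gamma]^{1/2}$-whitening and the matrix identity $(V^{-1}+\Sigma^{-1})^{-1} = V(V+\Sigma)^{-1}\Sigma$. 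This gives $\cc^2 - \Cwone = \frac{1}{\V[q]}\Cov(q,\Gamma)(V_\Gamma+\Sigma_\Delta)^{-1}\Cov(\Gamma,q)$, which is the matrix analogue of the scalar formula.

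The main obstacle is this matrix simplification of $\Cwone$ in the vector case, which requires applying the Woodbury-type identity correctly. In the scalar case the argument is a direct substitution, so I would present that case in full and state the matrix form as the natural generalization.
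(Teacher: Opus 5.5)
Your proposal is correct and follows the same route as the paper: specialize \pref{corr:pred-bound-sigma-derived} with $w \equiv 1$, use $(V_\Gamma^{-1}+\Sigma_\Delta^{-1})^{-1} = V_\Gamma\Sigma_\Delta/(V_\Gamma+\Sigma_\Delta)$ to get $c_0 = \rho^2\,\Sigma_\Delta/(V_\Gamma+\Sigma_\Delta)$, note that the squared canonical correlation reduces to $\rho^2$ for scalar $\Gamma$, and subtract. Your side remarks are all valid refinements of the paper's somewhat loose write-up: the Jacobian is $-1$ rather than the paper's $1$, the factor should be $|\rho|$, the ``$=$'' is really the inherited asymptotic upper bound, and the vector-case Woodbury formula holds.
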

We plug the appropriate $g(x, \lambda)$ into \pref{corr:pred-bound-sigma-derived} to obtain the expression for predictability. When $g(x, \lambda) = \Gamma(x) - \lambda$, we obtain a simplified expression, $\Cwone = \rho (q(x), \Gamma(x))^2 \cdot \left( 1- \frac{\V[\Gamma(x)]}{\V[\Gamma(x)] + \Sigma_\Delta} \right)$. Here, we can clearly observe how $\Sigma_\Delta$ affects $c_0$ and predictability.

\subsection{Private Empirical Risk Minimization} \label{sec:private-erm}
Next, consider an algorithm that releases a noisy empirical risk minimizer (ERM). Many common machine learning problems can be reduced to ERM; thus, analyzing and understanding the privacy and predictability guarantees under the release of ERM is crucial.

Dataset entries consist of $z = (x, y)$ pairs. Consider some loss function $\ell (z, w)$. The ERM of $\ell$ is $\wE = \arg \min_w \frac{1}{N} \sum_{i=1}^N \ell (z_i, w)$.
This condition cannot be directly written in the form $\E[f(z, \theta)]=0$. However, at $\wE$, we have $\frac{1}{N} \sum_{i=1}^N \grad \ell (z_i, \wE) = \E[\grad \ell (z, \wE)] = 0.$
Thus, let $\lambda = \wE$ and $g(x, \wE) = \grad \ell (z, \wE).$ We restrict our attention to strictly convex $\ell$ to ensure $\E[g(x, \wE)]=0$ only holds at the $\wE$ released by the algorithm. %

\begin{restatable}{theorem}{noisyerm} \label{thm:noisy-erm}
    Let $\wE = \arg \min_w \frac{1}{N} \sum_{i=1}^N \ell (z_i, w)$, $\ell$ strictly convex and $\E[\grad^2 \ell (z, \wE)]$ positive definite. Let $A(S) = \tilde{w} = \wE + \Delta$, $\Delta \sim \mathcal{N}(0, \Sigma_\Delta)$. Let $g(z, \wE) = \grad \ell (z, \wE)$. 
    For a convex, $H$-smooth $\ell$, $\gamma (\mathcal{P}_{\mathrm{RS}}, q, \ell, A) \leq \frac{H(1 + \log(4)) }{(1-\alpha) \alpha N} \cdot \V[q(z)] \cdot \sqrt{\cc (q(z), g (z, \wE))^2 - \Cw}$.

\end{restatable}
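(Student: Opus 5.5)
The plan is to obtain \pref{thm:noisy-erm} as a direct instance of \pref{corr:pred-bound-sigma-derived}. The work is in checking that the noisy ERM release fits the moment-condition framework of \pref{eq:moment-conditions} and that the regularity conditions of \pref{app:reg-conditions} hold. Under $\mathcal{P}_{\mathrm{RS}}$ we have $D = D_{\mathrm{U}}$, so $w \equiv 1$. The first-order optimality condition of the ERM gives $\frac{1}{N}\sum_{i=1}^N \grad \ell(z_i,\wE) = 0$, so I take $\lambda = \wE$ and $g(z,\lambda) = \grad \ell(z,\lambda)$. The moment conditions are then
\begin{align*}
f_{C(S)}(z,\theta) = [q(z)-p], \qquad f_{C(S),A(S)}(z,\theta) = [q(z)-p,~ \grad\ell(z,\lambda),~ \tilde{w}-\lambda]^\top,
\end{align*}
with $\theta=(p)$ and $\theta=(p,\lambda)$ respectively, and $\tilde w = \wE + \Delta$ with $\Delta$ independent Gaussian noise.

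First I would verify the identification and regularity conditions.

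\emph{Uniqueness of the root.} Strict convexity of $\ell(z,\cdot)$ makes the empirical risk strictly convex, so $\E_{D_{\mathrm U}}[\grad\ell(z,\lambda)]=0$ holds only at $\lambda=\wE$. The first coordinate pins down $p$ uniquely, so $\theta_0=(p_0,\wE)$ is the unique zero.

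\emph{Jacobian.} With $\lambda\in\mathbb{R}^d$, the Jacobian $G$ is block-structured. The row for $q-p$ is $(-1, 0)$, the rows for $g$ are $(0, \E[\grad^2\ell(z,\wE)])$, and the rows for the noise condition are $(0,-I_d)$. Since $\E[\grad^2\ell(z,\wE)]$ is positive definite, $G$ has full column rank $1+d$, which is the rank condition.

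\emph{Continuity and differentiability} of $f$ in $\theta$ follow from the smoothness of $\ell$ implicit in taking its Hessian. The remaining ULLN and CLT conditions are inherited from \pref{ass:process} for $\mathcal{P}_{\mathrm{RS}}$, because the dataset is finite and all moments are finite weighted averages over $S$.

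With the hypotheses checked, \pref{corr:pred-bound-sigma-derived} applies verbatim. It yields the bound with $\V[q(z)]$, with $\cc(q(z),\grad\ell(z,\wE))^2$, and with $c_0$ evaluated at $w=1$, $\E[\partial g/\partial\lambda] = \E[\grad^2\ell(z,\wE)]$ and $\V[g]=\V[\grad\ell(z,\wE)]$. This is exactly the displayed quantity $\Cw$ with $w\equiv1$.

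I expect the main obstacle to be the long-run covariance $\Omega$ under sampling without replacement, together with the $(1-\alpha)$ factor. For the sample moments, the finite-population correction gives the variance scaling $(1-\alpha)/n$. The noise moment $\tilde w-\lambda$ is independent of the sample and contributes the block $\Sigma_\Delta$. This makes $\Omega$ block diagonal between the data moments and the noise moment, and the Schur complement computation of $[G^\top\Omega^{-1}G]^{-1}$ in the $p$-coordinate then produces $c_0$.

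Since the corollary already packages this computation, my approach is to confirm that nothing ERM-specific breaks it. One subtlety is that $\lambda=\wE$ depends on the entire dataset, and so does $g$. This is harmless, because in GMM $\lambda$ is treated as a fixed unknown population parameter of $S$, exactly as in \pref{thm:noisy-stat}.

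If $\Omega$ turned out singular, for instance if $\V[\grad\ell]$ is degenerate, I would restrict to a maximal linearly independent set of gradient coordinates. Canonical correlation and $c_0$ are invariant to this reduction.
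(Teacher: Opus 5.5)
Your proposal is correct and follows the paper's proof, which is the one-line substitution of $g(z,\wE)=\grad\ell(z,\wE)$, $\E[\partial g/\partial\lambda]=\E[\grad^2\ell(z,\wE)]$ and $w\equiv 1$ into Corollary~\ref{corr:pred-bound-sigma-derived}; your checks of identification and of the rank of $G$ are extra care the paper skips. The one thing to drop is the singular-$\Omega$ fallback: the regularity conditions already require $\Omega(\theta_0)$ to be positive definite, and a zero-variance gradient direction $c^\top g$ still has nonzero Jacobian $c^\top\E[\grad^2\ell(z,\wE)]$, so discarding it loses information about $\lambda$ and the reduced system need not give the efficient variance.
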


To achieve small canonical correlation, $q(z)$ must be fairly unrelated to all coordinates of the loss gradient. \textit{We note that, surprisingly, it is not enough for the loss itself and the query to be uncorrelated.} 

In \pref{app:bgd}, we discuss how to express batch gradient descent (BGD) as a moment condition, thus allowing us to handle approximate or non-unique ERM computed by BGD. A more complete characterization of predictability for the ERM of a nonconvex loss is deferred to future work.

\subsection{Predictability-Calibrated Noise Scheme for Private ERM}
One could add isotropic noise to the ERM, $\Sigma_\Delta = \sigma^2 I_d$, but this may not be the best choice. Instead, one could adapt the noise to the specific ERM and the process generating the compromised data.

\begin{restatable}{lemma}{targetednoise} \label{lem:targeted-noise}
    Let $\grad_z = \grad \ell (z, \wE)$ and $H_z = \grad^2 \ell (z, \wE)$. Let $A(S) = \tilde{w} = \wE + \Delta$, $$\Delta \sim \mathcal{N}(0, \sigma^2 \E[w(x) H_z]^{-1} \E[w(x)^2 \grad_z \grad_z^\top] \E[w(x) H_z]^{-1}), $$ where the expectation is taken with respect to the stationary distribution of process $\mathcal{P}$. For a convex, $H$-smooth $\ell$, $\gamma (\mathcal{P}, q, \ell, A) \leq \frac{H(1 + \log(4)) }{(1-\alpha) \alpha N} \cdot \V[q'] \cdot \cc (q', g') \cdot \sqrt{\frac{1}{\sigma^2 + 1}}$.
\end{restatable}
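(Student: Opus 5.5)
The plan is to apply \pref{thm:pred-bound} directly with the ERM moment condition $g'(z,\lambda) = w(x)\grad\ell(z,\lambda)$ at $\lambda = \wE$. The calibrated covariance $\Sigma_\Delta$ is built so that the noise-attenuation term $\Cw$ becomes an exact multiple of $\cc(q',g')^2$. Once that is shown, the claimed bound follows by substitution.

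\textbf{Step 1: identify the ingredients of $\Cw$.} All expectations are under the stationary distribution $D$. Write
\[
J := \E\!\left[\tfrac{\partial g'}{\partial \lambda}\right] = \E[w(x) H_z], \qquad V := \V[g'], \qquad C := \Cov(q',g').
\]
Since $\E_D[w(x)\grad_z] = \E_{D_{\mathrm{U}}}[\grad_z] = \frac{1}{N}\sum_i \grad\ell(z_i,\wE) = 0$ at $\wE$, the moment $g'$ has mean zero. Hence $V = \E[w(x)^2\grad_z\grad_z^\top]$, which is exactly the middle factor in the prescribed $\Sigma_\Delta$. If one uses the long-run covariance $\Omega$ of \pref{thm:gmm-efficiency} in place of the marginal variance, the same choice should be made in $\Sigma_\Delta$; the algebra below is unchanged. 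The matrix $J$ is a weighted average of Hessians of a strictly convex loss, and $\E[\grad^2\ell]$ is positive definite by assumption. So $J$ is symmetric and invertible, and $\Sigma_\Delta = \sigma^2 J^{-1} V J^{-1}$ is well defined and positive definite.

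\textbf{Step 2: the key algebraic identity.} Inverting $\Sigma_\Delta$ and using the symmetry of $J$ gives $\Sigma_\Delta^{-1} = \sigma^{-2} J V^{-1} J$. Therefore
\[
J^\top V^{-1} J + \Sigma_\Delta^{-1} = \left(1 + \tfrac{1}{\sigma^2}\right) J V^{-1} J,
\]
and its inverse is $\frac{\sigma^2}{\sigma^2+1} J^{-1} V J^{-1}$. Substituting into the definition of $\Cw$ in \pref{thm:pred-bound}, the sandwich $V^{-1}J \cdot J^{-1} V J^{-1} \cdot J V^{-1}$ collapses to $V^{-1}$. This yields
\[
\Cw = \tfrac{\sigma^2}{\sigma^2+1}\, \V[q']^{-1} C V^{-1} C^\top = \tfrac{\sigma^2}{\sigma^2+1}\, \cc(q',g')^2,
\]
where the last equality uses the closed form of the canonical correlation noted in the remark after \pref{thm:pred-bound}.

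\textbf{Step 3: conclude.} From Step 2, $\cc^2 - \Cw = \cc^2/(\sigma^2+1)$, so
\[
\sqrt{\cc(q',g')^2 - \Cw} = \cc(q',g') \sqrt{\tfrac{1}{\sigma^2+1}}.
\]
Plugging this into the bound of \pref{thm:pred-bound} gives the stated predictability. For the leading factor, I would track $n = \alpha N$ and the finite-population correction exactly as in \pref{corr:pred-bound-sigma-derived}; this produces the $\frac{1}{(1-\alpha)\alpha N}$ normalization.

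\textbf{Main obstacle.} The step needing the most care is making sure the covariance matrix used in $\Sigma_\Delta$ is exactly the one appearing in $\sigma_2^2$ inside \pref{thm:pred-bound}. Under a mixing process that is the long-run covariance, and the prescribed $\E[w^2\grad_z\grad_z^\top]$ matches it only when autocovariances vanish, as under i.i.d.\ or shard sampling. I would check this first, and if needed interpret the expectation in $\Sigma_\Delta$ as the long-run version. The remaining work is the exact matrix cancellation, which relies only on $J$ being symmetric and invertible.
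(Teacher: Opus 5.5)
Your proposal is correct and follows the paper's proof essentially step for step. The paper likewise sets $A = \E[w(z)H_z]^\top \E[w(z)^2\grad_z\grad_z^\top]^{-1}\E[w(z)H_z]$, notes $\Sigma_\Delta^{-1} = \sigma^{-2}A$, uses $(A + \sigma^{-2}A)^{-1} = \tfrac{\sigma^2}{\sigma^2+1}A^{-1}$ to collapse the sandwich in $\Cw$ to $\tfrac{\sigma^2}{\sigma^2+1}\cc(q',g')^2$, and substitutes into \pref{thm:pred-bound}; your caveat about long-run versus marginal covariance is a fair point the paper glosses over (its computation of $\sigma_2^2$ uses the marginal $\E[ff^\top]$), but it does not change the argument as the paper states it.
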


The calibrated noise scheme adapts to the specific ERM and stochastic process and results in a clear relationship between a desired level of predictability and the amount of noise required. We note that $w(x)$ must be known. The complexity to compute the covariance of the noise is $O(Nd^2 + d^3)$.

\textbf{Linear Regression \hspace*{2mm}}
We instantiate the calibrated noise scheme for linear regression under the random sharding process to understand the structure of the noise and to highlight the improved tradeoff between predictability and accuracy. For linear regression under $\mathcal{P}_{\mathrm{RS}}$, we have $\ell (z, w) = \frac{1}{2}(w^\top x - y)^2$ and $w(x)=1$. %
Thus, we add noise $\Delta \sim \mathcal{N}(0, \sigma^2 \E[xx^\top]^{-1} \E[(\wE^\top x - y)^2 xx^\top] \E[xx^\top]^{-1})$.
Analyzing the structure, we see that more noise is added in directions where $\wE$ has a large loss or $x$'s are sparse. 
When $\wE$ has a small empirical loss, $\tilde{w}$ is less noisy for the same predictability. %

\begin{restatable}{theorem}{accuracychangetargeted} \label{thm:accuracy-change-targeted}
    Let $\tilde{w} = \wE + \Delta$, $\Delta \sim \mathcal{N}(0, \sigma^2 \cdot \E[xx^\top]^{-1} \E[(\wE^\top x - y)^2 xx^\top] \E[xx^\top]^{-1})$. Then
    \vspace{-0.5\belowdisplayskip}
    {\small \begin{align*}
        \E &\left[ \frac{1}{N} \sum_{i=1}^N (\tilde{w}^\top x_i - \wE^\top x_i)^2 \right] \leq \min \bigg \{ \frac{\sigma^2(\max_i \|x_i\|^2)}{\lambda_{\min}(\E[xx^\top])} \cdot \hat{\mathcal{L}}_S (\wE), \sigma^2d \cdot \hat{\mathcal{L}}^{\max}_S (\wE) \bigg \}.
    \end{align*}}

    \vspace{-\belowdisplayskip}
    where $\hat{\mathcal{L}}_S (\wE) = \frac{1}{N} \sum_{i=1}^N (\wE^\top x_i - y_i)^2$ and $\hat{\mathcal{L}}^{\max}_S (\wE) = \max_{i} (\wE^\top x_i - y_i)^2$.
\end{restatable}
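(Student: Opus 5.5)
The left-hand side is a quadratic form in the Gaussian noise, so the plan is to compute its expectation exactly and then bound the resulting trace in two different ways. Write $\tilde{w}^\top x_i - \wE^\top x_i = \Delta^\top x_i$, so the quantity is $\frac{1}{N}\sum_i \Delta^\top x_i x_i^\top \Delta = \Delta^\top \E[xx^\top]\Delta$. Set $M = \E[xx^\top]$, $B = \E[(\wE^\top x - y)^2 xx^\top]$ and $\Sigma_\Delta = \sigma^2 M^{-1} B M^{-1}$. Taking expectation over the Gaussian $\Delta$ gives
\begin{align*}
\E[\Delta^\top M \Delta] = \mathrm{tr}(M \Sigma_\Delta) = \sigma^2\, \mathrm{tr}(B M^{-1}).
\end{align*}
This needs $M$ invertible, which is implicit in the statement since $M^{-1}$ appears in the noise covariance. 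The proof then reduces to two upper bounds on $\mathrm{tr}(BM^{-1})$, and we take their minimum.

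\textbf{First bound.} Both $B$ and $M^{-1}$ are positive semidefinite, so $\mathrm{tr}(BM^{-1}) \le \lambda_{\max}(M^{-1})\,\mathrm{tr}(B) = \mathrm{tr}(B)/\lambda_{\min}(M)$. Next,
\begin{align*}
\mathrm{tr}(B) = \frac{1}{N}\sum_i (\wE^\top x_i - y_i)^2 \|x_i\|^2 \le \Big(\max_i \|x_i\|^2\Big)\, \hat{\mathcal{L}}_S(\wE).
\end{align*}
Combining these gives the first term of the minimum.

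\textbf{Second bound.} Use the Loewner ordering $B \preceq \hat{\mathcal{L}}^{\max}_S(\wE)\, M$, which holds because each residual satisfies $(\wE^\top x_i - y_i)^2 \le \hat{\mathcal{L}}^{\max}_S(\wE)$ and each $x_i x_i^\top$ is PSD. Conjugating by $M^{-1/2}$ preserves the ordering, and then
\begin{align*}
\mathrm{tr}(BM^{-1}) = \mathrm{tr}(M^{-1/2} B M^{-1/2}) \le \hat{\mathcal{L}}^{\max}_S(\wE)\, \mathrm{tr}(I_d) = d\, \hat{\mathcal{L}}^{\max}_S(\wE).
\end{align*}
This gives the second term of the minimum.

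No step is a real obstacle; each is a standard linear-algebra fact. The only point needing care is the trace inequality $\mathrm{tr}(PQ) \le \lambda_{\max}(Q)\,\mathrm{tr}(P)$ for PSD $P, Q$. To justify it, write $\mathrm{tr}(PQ) = \mathrm{tr}(P^{1/2} Q P^{1/2})$ and note that $P^{1/2} Q P^{1/2} \preceq \lambda_{\max}(Q)\, P$.
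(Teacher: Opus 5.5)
Your proposal is correct and follows the paper's proof. Both arguments reduce the left-hand side to $\sigma^2\,\mathrm{tr}\bigl(\E[(\wE^\top x - y)^2 xx^\top]\,\E[xx^\top]^{-1}\bigr)$ and bound this trace in the same two ways. The only difference is presentational: the paper writes the trace as $\E[(\wE^\top x - y)^2\, x^\top \E[xx^\top]^{-1} x]$ and bounds the integrand pointwise, whereas you state the same steps as matrix trace and Loewner-order inequalities.
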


We can upper bound the change in accuracy of $\tilde{w}$ in terms of the empirical loss. %
\begin{restatable}{lemma}{accuracychangeuniform} \label{lem:accuracy-change-uniform}
    When $\Delta \sim \mathcal{N}(0, \sigma^2 I_d)$, $\E \left[ \frac{1}{N} \sum_{i=1}^N (\tilde{w}^\top x_i - \wE^\top x_i)^2 \right] = \sigma^2 \left(\frac{1}{N} \sum_{i=1}^N \|x_i\|^2\right)$.
\end{restatable}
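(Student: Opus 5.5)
The identity is a direct second-moment computation, so I would expand the square and push the expectation through the trace. Since $\tilde{w} - \wE = \Delta$, each summand is $(\Delta^\top x_i)^2 = \Delta^\top x_i x_i^\top \Delta$. Averaging over $i$ gives
\begin{align*}
\frac{1}{N}\sum_{i=1}^N (\tilde{w}^\top x_i - \wE^\top x_i)^2 = \Delta^\top \Big(\frac{1}{N}\sum_{i=1}^N x_i x_i^\top\Big)\Delta = \Delta^\top \E[xx^\top]\Delta ,
\end{align*}
where $\E[xx^\top]$ denotes the empirical second-moment matrix of the dataset. Under $\mathcal{P}_{\mathrm{RS}}$ this matrix is deterministic given $S$, since $w(x)=1$ and $D = D_{\mathrm{U}}$.

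Next I take the expectation over $\Delta$ alone, with $S$ fixed. Using the trace trick,
\begin{align*}
\E_\Delta[\Delta^\top M \Delta] = \mathrm{tr}\big(M\,\E[\Delta\Delta^\top]\big), \qquad M = \E[xx^\top].
\end{align*}
Here $\E[\Delta\Delta^\top] = \sigma^2 I_d$, because $\Delta$ has mean zero and isotropic covariance. This yields $\sigma^2\,\mathrm{tr}(\E[xx^\top])$.

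Finally, by linearity of trace and $\mathrm{tr}(x_i x_i^\top) = \|x_i\|^2$,
\begin{align*}
\sigma^2\,\mathrm{tr}(\E[xx^\top]) = \sigma^2\,\frac{1}{N}\sum_{i=1}^N \|x_i\|^2 ,
\end{align*}
which is the claimed identity.

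There is no real obstacle here. The only care needed is to make explicit that the expectation is over the noise $\Delta$ only, with $S$ (hence $\wE$ and the empirical second-moment matrix) held fixed. This is what lets the trace identity apply with a deterministic $M$.

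For comparison with \pref{thm:accuracy-change-targeted}, the same computation with the calibrated covariance gives $\sigma^2\,\mathrm{tr}\big(\E[xx^\top]^{-1}\E[(\wE^\top x - y)^2 xx^\top]\big)$. That expression scales with the empirical loss, whereas the isotropic bound above does not.
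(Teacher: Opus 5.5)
Your proposal is correct and follows essentially the same route as the paper: write each summand as $\Delta^\top x_i x_i^\top \Delta$, apply the trace identity with $\E[\Delta\Delta^\top] = \sigma^2 I_d$, and use $\mathrm{tr}(x_i x_i^\top) = \|x_i\|^2$. The only cosmetic difference is that you first collect the terms into the empirical second-moment matrix and then take the trace, whereas the paper takes the trace summand by summand.
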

When $\hat{\mathcal{L}}_S (\wE)$ or $\hat{\mathcal{L}}^{\max}_S (\wE)$ is small (which we would expect), the calibrated noise scheme leads to a smaller change in accuracy compared to an isotropic noise scheme.

Using post-processing of DP, we can compose our calibrated noise scheme for predictability with any differentially private algorithm to ensure stronger privacy protections for specific queries on unknown populations against specific types of attackers, in addition to typical worst-case DP guarantees. In particular, predictability-calibrated perturbation can be combined with DP mechanisms,
provided that any data-dependent quantities (i.e. covariance of $\Delta$) are computed in a differentially private manner.

\section{Predictability of a Family of Queries or Unknown Process} \label{sec:query-families}
So far, we have only analyzed the predictability of an algorithm with respect to a single query $q$ when the algorithm knows the process $\mathcal{P}$. However, in practice, we often want privacy guarantees against an entire family of sensitive queries. Additionally, the exact process $\mathcal{P}$ generating the attacker's knowledge may be unknown to the algorithm designer. We extend the predictability analysis to a family of queries and an unknown process from a family of processes $\Pfam$. %

\textbf{Finite Family of Queries and Processes \hspace*{2mm}}
Notice that for a particular process $\mathcal{P} \in \Pfam$, the predictability depends on the weight function $w \in \mathcal{W}_{\Pfam}$ derived from the stationary distribution of $\mathcal{P}$. From \pref{thm:pred-bound}, we know that for a fixed weight function $w$ and query $q$, with high probability
\begin{align*}
    \mathop{\E}_{\substack{x_i \sim \Pi}}[\ell(\gmm_{q \mid C(S)}, q(x)) - \ell(\gmm_{q \mid C(S), A(S)}, q(x))]]
\end{align*}
is small. We use a union bound to show that for all $w \in \mathcal{W}_{\Pfam}$, $q \in \mathcal{Q}$, the above is small with high probability. Then, taking the expectation, the worst-case predictability over $\Pfam$ is bounded.
\begin{restatable}{theorem}{predfinite} \label{thm:pred-finite}
    Let $A(S) = \tilde{\lambda} = \lambda + \Delta$, $\Delta = \mathcal{N}(0, \Sigma_\Delta)$. For a family $\Pfam$ satisfying \pref{ass:process} and a convex, $H$-smooth $\ell$, the worst-case predictability is $\sup_{\mathcal{P} \in \Pfam} \gamma (\mathcal{P}, \mathcal{Q}, \ell, A)$ %

    \vspace{-2\belowdisplayskip}
    {\small \begin{align*}
         \leq \frac{H(1 + \log(4 |\mathcal{Q}| |\mathcal{W}_{\Pfam}|))}{n} \sup_{w \in \mathcal{W}_{\Pfam}} \sup_{q \in \mathcal{Q}} \V[w(x) q(x)] \sqrt{\cc(w(x) q(x), w(x) g(x, \lambda))^2 - \Cw}.
    \end{align*}    }
    \vspace{-\belowdisplayskip}
\end{restatable}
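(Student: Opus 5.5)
The plan is to re-run the proof of \pref{thm:pred-bound} pointwise for each pair $(w,q) \in \mathcal{W}_{\Pfam} \times \mathcal{Q}$, combine the resulting tail bounds with a union bound, and then convert the uniform high-probability bound into a bound in expectation.

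\textbf{Step 1 (fixed pair).} Fix a process $\mathcal{P} \in \Pfam$ with weight function $w$, and a query $q \in \mathcal{Q}$. Set $q' = w(x)(q(x)-p)$ and $g' = w(x)g(x,\lambda)$. By \pref{thm:gmm-efficiency} and \pref{thm:dist-of-gmm-diff}, $\sqrt{n}(\gmm_{q\mid C(S)} - \theta_0^q)$ and $\sqrt{n}(\gmm_{q\mid C(S)} - \gmm_{q\mid C(S),A(S)})$ are asymptotically $\mathcal{N}(0,\sigma_{1}^2)$ and $\mathcal{N}(0,\sigma_{1}^2-\sigma_{2}^2)$, where $\sigma_1,\sigma_2$ depend on $(w,q)$. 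Apply a Gaussian tail bound to each deviation with failure probability $\delta'/2$. Then \pref{lem:loss-lem} gives, with probability $1-\delta'$,
\[
\mathop{\E}_{x\sim\Pi}[\ell(\gmm_{q\mid C(S)},q(x)) - \ell(\gmm_{q\mid C(S),A(S)},q(x))] \le \frac{H\log(4/\delta')}{n}\,\sigma_1\sqrt{\sigma_1^2-\sigma_2^2}.
\]
The closed-form computation of $\sigma_1^2 = \V[q']$ (from $G=-\E[w]$ and the normalization of $w$) and of $\sigma_2^2$, carried out exactly as in \pref{thm:pred-bound}, yields $\sigma_1\sqrt{\sigma_1^2-\sigma_2^2} = \V[q']\sqrt{\cc(q',g')^2 - \Cw}$. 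I take this step as given from the earlier proof. I also bound $\V[q']$ by $\V[w(x)q(x)]$: the two coincide up to the centering, since $w(x)p$ has variance $p^2\V[w]$, so I will either absorb this term or note that it matches the paper's convention.

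\textbf{Step 2 (union bound).} Set $\delta' = \delta/(|\mathcal{Q}||\mathcal{W}_{\Pfam}|)$. Then, with probability at least $1-\delta$, the bound holds simultaneously for all pairs, with $\log(4|\mathcal{Q}||\mathcal{W}_{\Pfam}|/\delta)$ in place of $\log(4/\delta)$. Replacing each pair's constant by the supremum over pairs bounds $\sup_{q\in\mathcal{Q}}$ of the loss gap by
\[
\frac{H}{n}\,\log\!\Big(\frac{4|\mathcal{Q}||\mathcal{W}_{\Pfam}|}{\delta}\Big)\,M,
\]
where $M$ is the double supremum appearing in the statement. This bound holds uniformly in $\mathcal{P}$.

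\textbf{Step 3 (expectation).} Let $Z$ denote the supremum over $q$ of the loss gap. Step 2 says $\Pr[Z > \tfrac{HM}{n}(\log(4|\mathcal{Q}||\mathcal{W}_{\Pfam}|) + t)] \le e^{-t}$, so integrating the tail gives
\[
\E[Z] \le \frac{HM}{n}\big(\log(4|\mathcal{Q}||\mathcal{W}_{\Pfam}|) + 1\big).
\]
This is the claimed form, matching the $(1+\log 4)$ constant of \pref{thm:pred-bound}. Applying \pref{thm:bayes-and-gmm} replaces the GMM estimators by the Bayes optimal estimators up to $o(1/n)$, which is dropped asymptotically. Taking $\sup_S$ and $\sup_{\mathcal{P}}$ then finishes the proof.

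\textbf{Main obstacle.} The delicate step is the passage from asymptotic normality to the finite tail bound, followed by integrating the tail to obtain an expectation. Convergence in distribution does not control tails or expectations, and the union bound additionally requires the normal approximation to hold simultaneously across all pairs. My first approach is to treat the statement as asymptotic: a finite union of weakly convergent sequences converges jointly, and I would assume uniform integrability of $n\cdot$(loss gap), for example from bounded $q$ together with the regularity conditions. Under that assumption, the limiting Gaussian tails justify both the union bound and the expectation computation. This is the same convention \pref{thm:pred-bound} uses when it states its expectation bound.
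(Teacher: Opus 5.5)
Your proposal follows the paper's proof: per-pair tail bounds from \pref{thm:pred-bound}, a union bound over $\mathcal{Q}\times\mathcal{W}_{\Pfam}$, tail integration giving the factor $1+\log(4|\mathcal{Q}||\mathcal{W}_{\Pfam}|)$, and then moving $\sup_{\mathcal{P}\in\Pfam}$ outside (the paper does this via Jensen's inequality for $Y=\sup_{w,q}X_{w,q}$); your uniform-integrability remark is a refinement the paper omits. One caution on your side remark: $\V[q']\le\V[w(x)q(x)]$ is false in general for non-constant $w$ (under $D$, the difference $\V[q']-\V[w(x)q(x)]$ equals $p^2(1+\E_{D_{\mathrm{U}}}[w])-2p\,\E_{D_{\mathrm{U}}}[wq]$, which can be positive), so only your second option works: read $\V[w(x)q(x)]$ and $\cc(w(x)q(x),\cdot)$ as shorthand for the centered $q'=w(x)(q(x)-p)$, as the paper implicitly does, the two agreeing when $w\equiv 1$.
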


\textbf{Linear Query Family \hspace*{2mm}}
Consider the family $\mathcal{Q}_{\mathrm{lin}} = \{q(x) = u^\top x \mid{} u \in \mathcal{U}_{\mathrm{lin}} \subseteq \mathbb{R}^d\}$. For infinitely large families, we must rely on something stronger than the concentration of the estimators of single queries. Consider $\gmm_{x \mid C(S)}$ and $\gmm_{x \mid C(S), A(S)}$, estimators for $\E[x \mid C(S)]$ and $\E[x \mid C(S), A(S)]$. Using GMM, we know that $\sqrt{n} (\gmm_{x \mid C(S)} - \mu) \sim \mathcal{N}(0, \Sigma_1)$ and $\sqrt{n} (\gmm_{x \mid C(S), A(S)} - \mu) \sim \mathcal{N}(0, \Sigma_2)$. If estimates of $x$ concentrate well, then estimates of linear functions of $x$ also concentrate well. %

\begin{restatable}{theorem}{predeuclidball} \label{thm:pred-euclid-ball}
    Let query family $\mathcal{Q}_{\mathrm{lin}} = \{q(x) = u^\top x \mid{} u \in \mathcal{U}_{\mathrm{lin}} \subseteq \mathbb{R}^d\}$. 
    If $\mathcal{U}_{\mathrm{lin}}$ is contained within the Euclidean ball of radius $B$, for a convex, $H$-smooth $\ell$, the $\gamma (\mathcal{P}_{\mathrm{RS}}, \mathcal{Q}_{\mathrm{lin}}, \ell, A)$-predictability is $ \leq \frac{4HB (2 + \log (2)) }{(1-\alpha) \alpha N} \cdot \mathrm{tr}(\V[x]) \cdot \sqrt{\cc^1 (x, g(x, \lambda))^2 - \frac{\lambda_{\min}(\Cx)}{\mathrm{tr}(\V[x])}}$.
\end{restatable}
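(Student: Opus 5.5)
The plan is to reduce the whole linear family to the behavior of a single vector-valued GMM estimator, the one for the mean $\mu = \E[x]$, so that no union bound over $\mathcal{U}_{\mathrm{lin}}$ is needed.

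\textbf{Step 1: reduce to the mean estimator.} For $q(x) = u^\top x$, the efficient GMM estimator of $\E[q(x)]$ under either information set is $u^\top \gmm_{x \mid C(S)}$, respectively $u^\top \gmm_{x \mid C(S), A(S)}$. This holds because the moment condition for $q$ is a linear image of the vector moment condition $x - \mu$, and efficient GMM commutes with linear functionals.

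\textbf{Step 2: bound the per-query loss gap.} Applying \pref{lem:loss-lem} pointwise in $u$ gives, for every $u$ in the ball of radius $B$,
\begin{align*}
\text{loss gap}(u) \le H\, |u^\top(\gmm_{x\mid C(S)} - \mu)|\, |u^\top(\gmm_{x\mid C(S)} - \gmm_{x \mid C(S),A(S)})| \le H B^2 \|Z_1\|\,\|Z_2\|,
\end{align*}
where $Z_1 = \gmm_{x\mid C(S)} - \mu$ and $Z_2 = \gmm_{x\mid C(S)} - \gmm_{x\mid C(S),A(S)}$. The supremum over $q$ is thus controlled by $\|Z_1\|\,\|Z_2\|$. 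The statement is linear in $B$, so I would instead normalize by absorbing one factor of $B$ into the smoothness scaling, or bound $\sup_u$ via $|u^\top Z| \le B\|Z\|$ once and treat the other factor with a trace bound. Either way, the remaining work is concentration of two Gaussian vectors.

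\textbf{Step 3: identify the covariances.} By \pref{thm:gmm-efficiency}, $\sqrt n Z_1 \sim \mathcal{N}(0,\Sigma_1)$ with $\Sigma_1 = \V[x]/(1-\alpha)$-type scaling, where the finite-population correction comes from \pref{corr:pred-bound-sigma-derived}. By the vector version of \pref{thm:dist-of-gmm-diff} (same proof, since the Hausman-type argument is dimension-free), $\sqrt n Z_2 \sim \mathcal{N}(0, \Sigma_1 - \Sigma_2)$. Computing $\Sigma_1 - \Sigma_2$ by the same block-inverse computation as in \pref{thm:pred-bound}, with $q'$ replaced by the vector $x$, gives
\begin{align*}
\Sigma_1 - \Sigma_2 = \Cov(x,g)\V[g]^{-1}\Cov(g,x) - \V[x]^{1/2}\, \Cx\, \V[x]^{1/2},
\end{align*}
where $c_0$ is now the matrix analogue.

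\textbf{Step 4: Gaussian-norm concentration.} I would use a Gaussian-norm tail bound, e.g.\ Laurent--Massart or $\|Z\|^2 \le \mathrm{tr}(\Sigma)(1+\sqrt{2\log(1/\delta)})^2$ in a crude form, with probability $1-\delta/2$ for each of $\|Z_1\|^2 \lesssim \mathrm{tr}(\Sigma_1)\log(2/\delta)/n$ and $\|Z_2\|^2 \lesssim \mathrm{tr}(\Sigma_1-\Sigma_2)\log(2/\delta)/n$. Integrating the tail, as in \pref{thm:pred-bound}, converts the high-probability bound into the expectation bound with constant $(2+\log 2)$.

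\textbf{Step 5: express the trace ratio (main obstacle).} It remains to write
\begin{align*}
\mathrm{tr}(\Sigma_1-\Sigma_2) \le \mathrm{tr}(\V[x])\left(\cc^1(x,g)^2 - \tfrac{\lambda_{\min}(\Cx)}{\mathrm{tr}(\V[x])}\right).
\end{align*}
This is the step I expect to be the main obstacle. Here $\cc^1$ should be read as the trace-normalized (nuclear-type) canonical correlation,
\begin{align*}
\mathrm{tr}(\Cov(x,g)\V[g]^{-1}\Cov(g,x))/\mathrm{tr}(\V[x]).
\end{align*}
The subtracted term then requires a lower bound on $\mathrm{tr}(\V[x]^{1/2} c_0 \V[x]^{1/2})$, which I would try to obtain via $\mathrm{tr}(M^{1/2} C M^{1/2}) \ge \lambda_{\min}(C)\,\mathrm{tr}(M)$ up to normalization. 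Matching exactly the normalization in the statement is where care is needed.

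Finally, I would combine the bounds $\sqrt{\mathrm{tr}\Sigma_1}\sqrt{\mathrm{tr}(\Sigma_1-\Sigma_2)}$ with $n = \alpha N$ and the $(1-\alpha)$ correction, which yields the stated bound with the factor $\mathrm{tr}(\V[x])$.
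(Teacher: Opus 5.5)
Your route is essentially the paper's. You reduce $\mathcal{Q}_{\mathrm{lin}}$ to the vector estimators $\gmm_{x\mid C(S)}$ and $\gmm_{x\mid C(S),A(S)}$, apply \pref{lem:loss-lem} pointwise in $u$, and use Gaussian concentration for $Z_1$ and $Z_2$ with covariances $\Sigma_1=\V[x]$ and $\Sigma_1-\Sigma_2$ from \pref{thm:sigma-x-form}. (Using the $\sqrt{(1-\alpha)\alpha N}$ normalization of \pref{thm:pred-method} removes your $\V[x]/(1-\alpha)$ bookkeeping.) You then integrate the tail and finish with trace and canonical-correlation inequalities. The paper goes through the Gaussian-complexity bound of \pref{thm:pred-linear}. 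For the ball that bound specializes, via $\mathcal{G}(\mathcal{U}_{\mathrm{lin},\Sigma})\le B\sqrt{\mathrm{tr}(\Sigma)}$ and $\diam_{\Sigma}(\mathcal{U}_{\mathrm{lin}})\le B^2\lambda_{\max}(\Sigma)$, to your Cauchy--Schwarz plus Gaussian-norm argument, which is also how the paper proves \pref{thm:pred-lipschitz}.

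\textbf{The genuine gap is in Step 2}, where you try to turn $HB^2\|Z_1\|\,\|Z_2\|$ into something linear in $B$. Neither proposed fix works. $H$ is fixed by the loss and cannot absorb $B$, and the second factor $|u^\top Z_2|$ needs $\|u\|\le B$ just as much as the first.

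More fundamentally, no argument can do this. The loss gap is homogeneous of degree two in $u$. Take the squared loss and let $\mathcal{U}_{\mathrm{lin}}$ be the whole ball. Replacing $u$ by $cu$ multiplies the target and both Bayes estimators by $c$, so the predictability over the radius-$B$ ball is exactly $B^2$ times its value over the unit ball. A bound linear in $B$ therefore fails for large $B$ whenever the unit-ball predictability is positive.

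The paper's own derivation also produces $B^2$. After substituting $B\sqrt{\mathrm{tr}(\Sigma)}$ and $B\sqrt{\lambda_{\max}(\Sigma)}$ into both factors, it writes $\tfrac{HB}{n}$ in front, silently dropping one $B$. So keep $B^2$ and prove that version, which implies the displayed bound only when $B\le 1$; do not try to match the statement.

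\textbf{Step 5, by contrast, is not an obstacle, and you should not redefine $\cc^1$.} Use the paper's definition $\cc^1(x,g)^2=\lambda_{\max}(\V[x]^{-1}M)$ with $M=\Cov(x,g)\V[g]^{-1}\Cov(g,x)$. Take $\Cx$ to be the unnormalized matrix of \pref{thm:sigma-x-form}, so that $\Sigma_1-\Sigma_2=M-\Cx$ with no $\V[x]^{1/2}$ sandwich. You then need only two inequalities:
\begin{itemize}
\item $\mathrm{tr}(M)\le\lambda_{\max}(\V[x]^{-1/2}M\V[x]^{-1/2})\,\mathrm{tr}(\V[x])=\cc^1(x,g)^2\,\mathrm{tr}(\V[x])$;
\item $\mathrm{tr}(\Cx)\ge\lambda_{\min}(\Cx)$, which holds because $\Cx\succeq 0$.
\end{itemize}
This is what the paper does.

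Your normalized variants are not wrong, and they give a bound at least as strong. Indeed $\mathrm{tr}(M)/\mathrm{tr}(\V[x])\le\cc^1(x,g)^2$. Also, if $\Cx=\V[x]^{1/2}\tilde{c}\,\V[x]^{1/2}$, then $\lambda_{\min}(\tilde{c})\,\mathrm{tr}(\V[x])\ge\lambda_{\min}(\tilde{c})\,\lambda_{\max}(\V[x])\ge\lambda_{\min}(\Cx)$. To land on the quantities in the theorem, you would need to state these two comparisons explicitly.
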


\begin{definition}[First Canonical Correlation from CCA]
    For two vector random variables, $X$ and $Y$, $\cc^1 (X, Y)^2 = \lambda_{\max} (\V[X]^{-1} \Cov (X,Y) \V[Y]^{-1} \Cov (Y,X))$.
\end{definition}
We recover a dependence on a correlation-type quantity, now between $x$ and $g (x, \lambda)$. Canonical correlation analysis (CCA) is a foundational technique in dimension reduction and multiview learning \citep{kakade2007multiview, chaudhuri2009multiview, sindhwani2005coregularization, sridharan2008multiview}, and we find its connection to predictability compelling. Importantly, CCA, and thus $\cc^1 (x, g(x, \lambda))$, can be computed efficiently.
For general $\mathcal{U}_{\mathrm{lin}}$, see the bound on predictability for general linear query families (\pref{thm:pred-linear}), where the bound depends on the Gaussian complexity (\pref{def:gaussian-complexity}) of $\mathcal{U}_{\mathrm{lin}}$.

\textbf{Lipschitz Query Family \hspace*{2mm}}
Similarly, if estimates of $x$ concentrate well, then estimates of Lipschitz functions of $x$ also concentrate well.

\begin{restatable}{theorem}{predlipschitz} \label{thm:pred-lipschitz}
    Let query family $\mathcal{Q}_{\mathrm{L}} = \{ f(x) \mid \text{$f$ is $L$-Lipschitz}\}$.
    For a convex, $H$-smooth loss $\ell$, $\gamma (\mathcal{P}_{\mathrm{RS}}, \mathcal{Q}_{\mathrm{L}}, \ell, A) \leq \frac{4HL^2 (2 + \log (2)) }{(1-\alpha) \alpha N} \cdot \mathrm{tr}(\V[x]) \cdot \sqrt{\cc^1 (x, g(x, \lambda))^2 - \frac{\lambda_{\min}(\Cx)}{\mathrm{tr}(\V[x])}}$.
\end{restatable}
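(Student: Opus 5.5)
We plan to reduce the Lipschitz family to the behaviour of the vector estimators $\gmm_{x \mid C(S)}$ and $\gmm_{x \mid C(S), A(S)}$, following the same pattern as the proof of \pref{thm:pred-euclid-ball}. First, for each $q \in \mathcal{Q}_{\mathrm{L}}$ we apply \pref{lem:loss-lem}. This bounds the per-query loss gap by $H |\gmm_{q \mid C(S)} - \theta_0^q| \cdot |\gmm_{q \mid C(S)} - \gmm_{q \mid C(S), A(S)}|$. The key step is to show that, asymptotically, the efficient estimator of $\E[q(x)]$ given either knowledge set is controlled by the efficient estimator of the mean of $x$. Concretely, we would use the delta-method / plug-in representation $\gmm_{q \mid K} \approx \theta_0^q + \nabla q(\mu)^\top(\gmm_{x \mid K} - \mu)$, or more robustly the $L$-Lipschitz bound $|\gmm_{q\mid K} - \theta_0^q| \lesssim L\|\gmm_{x \mid K} - \mu\|$, and similarly for the difference of the two estimators. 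This replaces the supremum over $\mathcal{Q}_{\mathrm{L}}$ by a product of the form
\[
H L^2 \, \|\gmm_{x \mid C(S)} - \mu\| \cdot \|\gmm_{x \mid C(S)} - \gmm_{x \mid C(S), A(S)}\|.
\]
This has exactly the structure of the linear case with $B$ replaced by $L$. Each of the two factors contributes one $L$, which produces $L^2$; this matches the $L^2$ in the statement.

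Second, we control the two norms using GMM asymptotics. By \pref{thm:gmm-efficiency}, $\sqrt{n}(\gmm_{x \mid C(S)} - \mu) \sim \mathcal{N}(0,\Sigma_1)$. Under $\mathcal{P}_{\mathrm{RS}}$, $\Sigma_1$ is $\V[x]$ times the finite-population factor, which yields the $\frac{1}{(1-\alpha)\alpha N}$ scaling as in \pref{corr:pred-bound-sigma-derived}. By \pref{thm:dist-of-gmm-diff}, the difference is asymptotically $\mathcal{N}(0, (\Sigma_1 - \Sigma_2)/n)$. Gaussian norm concentration (e.g.\ $\E\|Z\|^2 = \mathrm{tr}(\Sigma)$ plus a Gaussian tail bound, and integrating the tail) gives the constant $2 + \log 2$. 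It also gives the factor $4$, via Cauchy--Schwarz and a union bound over the two events. The resulting bound on the expected product is of order $\frac{1}{n}\sqrt{\mathrm{tr}(\Sigma_1)\,\mathrm{tr}(\Sigma_1 - \Sigma_2)}$.

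Third, we rewrite $\mathrm{tr}(\Sigma_1 - \Sigma_2)$ using the multivariate version of the computation behind \pref{thm:pred-bound}. We have $\Sigma_1 - \Sigma_2 = \Cov(x,g)\V[g]^{-1}\Cov(g,x) - c_0$-type matrix, which we bound as follows:
\[
\mathrm{tr}(\Sigma_1-\Sigma_2) \le \mathrm{tr}(\V[x])\,\cc^1(x,g)^2 - \lambda_{\min}(\Cx).
\]
Here the first term uses $\mathrm{tr}(\V[x]^{1/2} M \V[x]^{1/2}) \le \lambda_{\max}(M)\,\mathrm{tr}(\V[x])$ with $M$ the CCA matrix. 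The second term lower-bounds the trace of the PSD noise-correction matrix by its smallest eigenvalue; this is possibly loose by a dimension factor, but it has the stated form. Factoring $\mathrm{tr}(\V[x])$ out of the square root gives the claimed expression.

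The main obstacle is the first step: justifying that the efficient estimator of a general Lipschitz functional is controlled uniformly over the whole infinite family by the mean-estimator error. A nonlinear Lipschitz $q$ is not a function of $\E[x]$, so $\E[q(x)]$ is not determined by $\mu$. We would first try to argue that the Bayes/GMM estimator of $\E[q(x)]$ can be taken to be a plug-in shift of the empirical distribution by the mean correction. Under such a shift, $|\E[q(x+\delta)] - \E[q(x)]| \le L\|\delta\|$ holds simultaneously for all $q$. If that fails, we would fall back on a covering-number/Gaussian-complexity argument analogous to \pref{thm:pred-linear}.
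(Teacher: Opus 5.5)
Your outline follows the paper's proof step for step. You apply \pref{lem:loss-lem}, reduce to $HL^2\,\|\gmm_{x\mid C(S)}-\mu\|\cdot\|\gmm_{x\mid C(S)}-\gmm_{x\mid C(S),A(S)}\|$, use Gaussian concentration of the two vector estimators, and borrow the bound $\mathrm{tr}(\Sigma_1-\Sigma_2)\le\mathrm{tr}(\V[x])\,\cc^1(x,g)^2-\lambda_{\min}(\Cx)$ from \pref{thm:pred-euclid-ball}. Your account of the constant $4(2+\log 2)$ is vaguer than the paper's, which integrates Gaussian-norm tails and then uses $(a+b)^2\le 2(a^2+b^2)$ and $\lambda_{\max}\le\mathrm{tr}$. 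Nothing hinges on it: Cauchy--Schwarz applied to the two Gaussian limits already bounds the expected product by $\frac{1}{n}\sqrt{\mathrm{tr}(\Sigma_1)\,\mathrm{tr}(\Sigma_1-\Sigma_2)}$, i.e.\ with constant $1$.

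The step you single out as the main obstacle is a genuine gap, and the paper does not close it either. Its proof simply writes $|\gmm_{f\mid C(S)}-\theta_0^f|=|f(\gmm_{x\mid C(S)})-f(\mu)|$, and similarly for the difference of the two estimators. That treats the target as $f(\E_\Pi[x])$ instead of $\E_\Pi[f(x)]$.

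For nonlinear $f$ this identification fails, and neither of your fallbacks repairs it.

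\begin{itemize}
\item \emph{The first factor is not a mean error.} Given only $C(S)$, the efficient estimator of $\E_\Pi[f(x)]$ is the sample average of $f$. By Kantorovich--Rubinstein duality, $\sup_{f\in\mathcal{Q}_{\mathrm{L}}}|\gmm_{f\mid C(S)}-\theta_0^f|=L\,W_1(\mathrm{Unif}(C(S)),\Pi)$. Take $S$ to be a grid in $[0,1]^d$. Then the two measures have disjoint supports whose points are at least $N^{-1/d}$ apart, so this quantity is at least $LN^{-1/d}$. That is far above $L\sqrt{\mathrm{tr}(\V[x])/n}$ once $d\ge 3$.
\item \emph{The mean-shift plug-in does not help.} It can at best control the difference factor, not this sample-versus-$\Pi$ discrepancy.
\item \emph{A covering argument does not help.} Over the Lipschitz class (metric entropy of order $\epsilon^{-d}$) it reproduces the same $N^{-1/d}$ rate.
\item \emph{Bounding the supremum of the product jointly does not help.} The difference factor is $\Cov(f(x),g)\V[g]^{-1}$ applied to a query-independent random vector of order $n^{-1/2}$. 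Since $\V[f(x)]\le L^2\,\mathrm{tr}(\V[x])$, this factor is indeed uniformly of order $n^{-1/2}$. But the supremum over queries sits inside the expectation, so $q$ may depend on $C(S)$. Combine a suitably aligned linear query with $L\,\mathrm{dist}(x,S\setminus C(S))$, rescaled to keep Lipschitz constant $L$. The distance term has typical size of order $N^{-1/d}$, so it barely moves the difference factor. It does, however, raise the first factor to order $N^{-1/d}$, making the product of order $N^{-1/2-1/d}\gg N^{-1}$.
\end{itemize}

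So the stated $1/N$ rate cannot be reached along this route for the full class $\mathcal{Q}_{\mathrm{L}}$. Your reduction, like the paper's, is exact only if the predicted quantity is $f(\E_\Pi[x])$.
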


\section{Discussion and Conclusion}
We present predictability as a new fine-grained measure for privacy. We present a general framework for studying asymptotic predictability using GMM. %
We discuss a few extensions of our analysis. Under regularity assumptions (sub-Gaussian moment conditions, smoothness, etc.), one could convert the central limit theorem of the moments to appropriate tail bounds and follow this through to obtain a finite sample version of the asymptotic bound on predictability. We can also generalize predictability beyond a privacy measure for the average unknown individual by selecting the unknown individual $x$ from another distribution, other than $\Pi$, and adjusting the weights $w(x)$ accordingly. Open questions include developing a predictability-calibrated noise scheme for ERM when the process is unknown. Additionally, developing DP-style adaptive composition for predictability would be valuable.

\begin{ack}
LL acknowledges support from the Cornell CIS LinkedIn Fellowship.
\end{ack}

\bibliographystyle{abbrv}
\bibliography{refs}

\newpage
\appendix

\section{Appendix}
\addcontentsline{toc}{section}{Appendix}

\etocsetnexttocdepth{3} %

\localtableofcontents

\newpage

\subsection{Batch Gradient Descent Discussion} \label{app:bgd}

When the exact, unique ERM is released, we apply \pref{thm:noisy-erm} to compute the predictability. However, in many machine learning settings, the algorithm releases an approximate ERM, typically found through a version of gradient descent. We can express the result of batch gradient descent (BGD) as a moment condition. Recall the BGD update, 
\begin{align*}
    w_{t+1} = w_t - \eta \cdot \frac{1}{N} \sum_{i=1}^N \grad \ell (z_i, w_t).
\end{align*}
We observe that each step of BGD can be expressed as a moment condition, %
\begin{align*}
    g(z, w_{t-1, t}) = w_{t-1} - w_{t} - \eta \grad \ell (z, w_{t-1}).
\end{align*}
Given a sequence of $w_t$'s from BGD on a dataset, we define the algorithm's moment condition $g(z, w_{1:T})$ as a vector containing all $T$ steps of BGD. 
\begin{align*}
    g(z, w_{1:T}) &= \begin{bmatrix}
        w_{0} - w_{1} - \eta \grad \ell (z, w_{0}) \\
        w_{1} - w_{2} - \eta \grad \ell (z, w_{1}) \\
        \vdots \\
        w_{T-1} - w_{T} - \eta \grad \ell (z, w_{T-1})
    \end{bmatrix}
\end{align*}
Thus, using \pref{thm:pred-bound}, we can compute the predictability for any approximate ERM obtained from BGD. Note that for a fixed starting point $w_0$, the sequence of $w_t$'s that satisfy $\E[g(z, w_{1:T})]=0$ is unique for any $\ell$, thus eliminating the need to assume strictly convex $\ell$ for proper estimation. 

\subsection{Predictability Beyond Estimating the Average Value of $q(x)$} \label{app:other-dists}

Consider the following extension of predictability. We say algorithm $A$ has $\gamma_{D'}(\mathcal{P}, \mathcal{Q}, \ell, A)$-predictability $\gamma$ if
    \begin{align*}
        \gamma_{D'}(\mathcal{P}, \mathcal{Q}, \ell, A) = \sup_{S} \mathop{\E}_{C(S) \sim \mathcal{P}, A}[ \sup_{\substack{q \in \mathcal{Q}}} \mathop{\E}_{\substack{x \sim D'}} [\ell(\bayes_{q \mid C(S)}, q(x)) - \ell(\bayes_{q \mid C(S), A(S)}, q(x))]] \leq \gamma.
    \end{align*}
for some known distribution $D' = (p_1', \dots, p_N')$ on dataset $S$. A simple example is when the attacker cares about the average of $q(x)$ on some particular subgroup $G \subseteq S$, and we have $D' = \mathrm{Unif}(G)$.

In \pref{sec:gmm}, we demonstrate how to use the generalized method of moments to compute asymptotic $\gamma (\mathcal{P}, \mathcal{Q}, \ell, A)$-predictability, where the $x$ is drawn uniformly from the set of unknown individuals, $x \sim \Pi$. We can compute asymptotic $\gamma_{D'}(\mathcal{P}, \mathcal{Q}, \ell, A)$-predictability with one small change to the moment conditions.

Consider when the attacker wants to estimate $p = \E_{D'}[q(x)]$, $D' = (p_1', \dots, p_N')$, given the information they have. Then, at the unique $\theta_0$,
\begin{align*}
    \E_{D'} [q(x)-p] = \E_D [w'(x) (q(x)-p)] = 0, \text{ where } w'(x_i) = \tfrac{d D_{\mathrm{U}}}{d D}(x_i) = \tfrac{p'_i}{p_i}.
\end{align*}
Note that we keep 
\begin{align*}
    \E_{\mathrm{D_U}} [g(x, \lambda)] = \E_D [w(x) g(x, \lambda)] = 0, \text{ where } w(x_i) = \tfrac{d D'}{d D}(x_i) = \tfrac{1}{p_i N},
\end{align*}
since we still expect $A(S)$ to have the form $\frac{1}{N} \sum_{i=1}^N g(x_i, \lambda) = 0$. Then we use moment conditions
    \begin{align*}
        f_{C(S)}(x, \theta) &= [w'(x)(q(x) - p)] \\
        f_{C(S), A(S)}(x, \theta) &= [w'(x)(q(x) - p),~ w(x)g(x, \lambda),~ \tilde{\lambda} - \lambda]^\top
    \end{align*}
to compute $\V[\gmm_{q \mid C(S)}]$, $\V[\gmm_{q \mid C(S), A(S)}]$, and the resulting predictability.

\begin{restatable}{theorem}{predbound} \label{thm:pred-bound-d-prime}
    Let $A(S) = \tilde{\lambda} = \lambda + \Delta$, $\Delta = \mathcal{N}(0, \Sigma_\Delta)$.
    Assume process $\mathcal{P}$ satisfies \pref{ass:process}. 
    Assume the moment conditions in \pref{eq:moment-conditions} satisfy the regularity conditions (\pref{app:reg-conditions}). %

    Let $q'' = w'(x) (q(x)-p)$ and let $g' = w(x) g(x, \lambda)$. Then $A$ has predictability

    \vspace{-2\belowdisplayskip}
    {\small \begin{align*}
        \gamma (\mathcal{P}, q, \ell, A) \leq \frac{H(1 + \log (4))}{n} \cdot \V[q''] \cdot \sqrt{\cc (q'', g')^2 - \Cw},
    \end{align*}}

    \vspace{-0.75\belowdisplayskip}
    where
    {\small \begin{align*}
        \Cw &= \V[q'']^{-1} \Cov(q'', g') \allowbreak \V[g']^{-1} \allowbreak \E\left[\tfrac{\partial g'}{\partial \lambda}\right] \allowbreak \Big (\E\left[\tfrac{\partial g'}{\partial \lambda}\right]^\top \allowbreak \V[g']^{-1} \allowbreak \E\left[\tfrac{\partial g'}{\partial \lambda}\right] \allowbreak + \allowbreak \Sigma_\Delta^{-1} \Big)^{-1} \\
        &\;\;\;\;\;\;\cdot \E\left[\tfrac{\partial g'}{\partial \lambda}\right]^\top \allowbreak \V[g']^{-1} \allowbreak \Cov(q'', g').
    \end{align*}}
    
\end{restatable}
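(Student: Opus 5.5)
The plan is to follow the proof of \pref{thm:pred-bound} verbatim, replacing the first moment condition $w(x)(q(x)-p)$ by $w'(x)(q(x)-p)$ while leaving the algorithm's moment condition weighted by $w(x)$. The argument has three stages.

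\textbf{Step 1: reduce to variances.} By \pref{thm:bayes-and-gmm}, I may replace the Bayes optimal estimators by the efficient GMM estimators built from the modified moment conditions, at the cost of an $o(1/n)$ term. Then \pref{lem:loss-lem} (now with expectation over $D'$ and $\theta_0^q=\E_{D'}[q(x)]$) bounds the loss gap by
\begin{align*}
H\,|\gmm_{q\mid C(S)}-\theta_0^q|\cdot|\gmm_{q\mid C(S)}-\gmm_{q\mid C(S),A(S)}|.
\end{align*}
\pref{thm:gmm-efficiency} gives $\sqrt{n}(\gmm_{q\mid C(S)}-\theta_0^q)\to\mathcal{N}(0,\sigma_1^2)$. \pref{thm:dist-of-gmm-diff} gives $\sqrt{n}(\gmm_{q\mid C(S)}-\gmm_{q\mid C(S),A(S)})\to\mathcal{N}(0,\sigma_1^2-\sigma_2^2)$. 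Gaussian tail bounds, each at level $\delta/2$, give the high-probability bound $\frac{H\log(4/\delta)}{n}\sigma_1\sqrt{\sigma_1^2-\sigma_2^2}$. Integrating over $\delta$ yields the factor $1+\log 4$, exactly as in \pref{thm:pred-bound}.

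\textbf{Step 2: compute $\sigma_1^2$ and $\sigma_2^2$ via $[G^\top\Omega^{-1}G]^{-1}$.}

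For the first estimator, $f=q''$, $G=-\E[w'(x)]=-1$ (since $\E_D[w']=\sum_i p_i'=1$), and $\Omega=\V[q'']$ in the long-run sense. Hence $\sigma_1^2=\V[q'']$.

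For the second estimator, $\theta=(p,\lambda)$, and the block Jacobian is
\begin{align*}
G=\begin{bmatrix}-1&0\\0&\E[\partial g'/\partial\lambda]\\0&-I\end{bmatrix}.
\end{align*}
The noise row $\tilde\lambda-\lambda$ is independent of the sample, so $\Omega$ is block diagonal: the upper block is the joint covariance of $(q'',g')$, and the lower block is $\Sigma_\Delta$.

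\textbf{Step 3: Schur complements.} First invert the covariance of $(q'',g')$ via its Schur complement, then eliminate $\lambda$ from $G^\top\Omega^{-1}G$ to get the $(p,p)$ entry of the inverse. Writing $J=\E[\partial g'/\partial\lambda]$, this gives
\begin{align*}
\sigma_2^2=\V[q'']-\Cov(q'',g')\V[g']^{-1}\Cov(g',q'')+\Cov(q'',g')\V[g']^{-1}J\bigl(J^\top\V[g']^{-1}J+\Sigma_\Delta^{-1}\bigr)^{-1}J^\top\V[g']^{-1}\Cov(g',q'').
\end{align*}
Therefore
\begin{align*}
\sigma_1^2-\sigma_2^2=\V[q'']\bigl(\cc(q'',g')^2-\Cw\bigr),
\end{align*}
using the closed form for $\cc^2$ stated in the remark after \pref{thm:pred-bound}. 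Then $\sigma_1\sqrt{\sigma_1^2-\sigma_2^2}=\V[q'']\sqrt{\cc(q'',g')^2-\Cw}$, which is the stated bound.

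\textbf{Main obstacle.} The main obstacle is the Schur-complement algebra in Step 3, in particular obtaining the noise term in the specific form $(J^\top\V[g']^{-1}J+\Sigma_\Delta^{-1})^{-1}$. I would first profile out $\lambda$: minimize over the $\lambda$-block of the quadratic form and apply the Woodbury identity. A secondary check is that the modified weights $w'$ (in particular $\E_D[w']=1$) do not change the Jacobian entries or the regularity conditions relative to \pref{thm:pred-bound}.
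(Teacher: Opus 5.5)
Your proposal is correct and follows the paper's argument: the paper proves this theorem by rerunning the proof of \pref{thm:pred-bound} with $w'(x)(q(x)-p)$ in place of $w(x)(q(x)-p)$, and the two checks that substitution needs are the ones you make, namely that $\E_D[w'(x)]=\sum_i p_i'=1$ leaves $G$ unchanged and that the Schur-complement elimination of $\lambda$ gives $\sigma_1^2-\sigma_2^2=\V[q'']\bigl(\cc(q'',g')^2-\Cw\bigr)$. One minor slip, inherited from the paper: the valid Gaussian tail is $2e^{-t^2 n/(2\sigma^2)}$, so your high-probability bound should carry $2\log(4/\delta)$ rather than $\log(4/\delta)$; the stated expectation bound is unaffected, since Cauchy--Schwarz directly gives $\E\bigl[|\gmm_{q\mid C(S)}-\theta_0^q|\cdot|\gmm_{q\mid C(S)}-\gmm_{q\mid C(S),A(S)}|\bigr]\le\sigma_1\sqrt{\sigma_1^2-\sigma_2^2}/n$.
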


\newpage

\subsection{Notation}
\begin{itemize}[leftmargin=2em]
    \item $[N] = \{1, \dots, N\}$
    \item $\mathrm{Unif}(S)$ is the uniform distribution over the element in set $S$
    \item $X \sim \mathcal{N}(\mu, \Sigma)$ is normally distributed with mean $\mu$ and variance $\Sigma$
    \item $\E[X]$ is the expected value of $X$ 
    \item $\V[X] = \E[(X-\E[X])^2]$ for a one-dimensional random variable $X$
    \item $\V[V] = \E[(X-\E[X])(X-\E[X])^\top]$ for a vector random variable $V$
    \item $\Cov (U, V) = \E[(U-\E[U])(V-\E[V])^\top]$
    \item $\rho (X, Y) = \frac{\Cov(X, Y)}{\sqrt{\V[X] \cdot \V[Y]}}$ is the correlation between $X$ and $Y$
    \item $D_{\textsc{KL}}(P \| Q)$ is the KL-divergence between distributions $P$ and $Q$
    \item $\I (X ; Y \mid Z)$ is the conditional mutual information between $X$ and $Y$ given $Z$
    \item $X_n \rightarrow_p Y_0$, $X_n$ converges to $Y_0$ in probability 
    \item $X_n \rightarrow_d Y_0$, $X_n$ converges to $Y_0$ in distribution
    \item $A[i, j]$ denotes the $(i, j)$ entry of matrix $A$ 
\end{itemize}

\newpage

\subsection{Definitions} \label{app:definitions}

\begin{definition}[$(\varepsilon, \delta)$-Differential Privacy] \label{def:dp}
    A randomized algorithm $A$ is $(\varepsilon, \delta)$-differentially private, if for all neighboring datasets $S, S'$, and for all $F \subseteq \mathcal{F}$,
    \begin{align*}
        \Pr[A(S) \in F] \leq e^\varepsilon \Pr[A(S') \in F] + \delta.
    \end{align*}
\end{definition}

\begin{definition}[Stationarity]
    The stochastic process $(X_t)_{t \in \mathbb{Z}}$ is stationary if for all $k \in \mathbb{N}$, all $t_1, \dots, t_k \in \mathbb{Z}$, and all $h \in \mathbb{Z}$, $(X_{t_1}, \dots, X_{t_k}) \overset{d}{=} (X_{t_1 + h}, \dots, X_{t_k + h})$.
\end{definition}

\begin{definition}[Ergodicity]
    The stochastic process $(X_t)_{t \in \mathbb{Z}}$ is ergodic if for any integrable function $f$, the time average $\frac{1}{n} \sum_{t=1}^n f(X_t)$ converges almost surely to $\E[f(X_0)]$.
\end{definition}

\begin{definition} [$\alpha$-mixing]
    For $k \geq 1$, the $\alpha$-mixing coefficients are defined as 
    \begin{align*}
        \alpha(k) = \sup_{t \in \mathbb{Z}} \sup_{\substack{A \in \sigma(X_s : s \le t), B \in \sigma(X_s : s \ge t+k)}} \left| \Pr[A \cap B] - \Pr[A]\Pr[B] \right|.
    \end{align*}
    The stochastic process $(X_t)_{t \in \mathbb{Z}}$ is $\alpha$-mixing if $\alpha(k) \to 0$, as $k \to \infty$.
\end{definition}

\begin{definition}[Convex Function]
    A function $f$ is convex if $f(y) \geq f(x) + \grad f(x)^\top \norm{y-x}$.
\end{definition}

\begin{definition}[Smooth Function]
    A function $f$ is $H$-smooth if $\norm{\grad f(x) - \grad f(y)} \leq H \cdot \norm{x-y}$.
\end{definition}

\begin{definition}[Lipschitz Function]
    A function $f$ is $L$-Lipschitz if $|f(x) - f(y)| \leq L \cdot \norm{x-y}$.
\end{definition}

\newpage

\subsection{Proofs from \pref{sec:def-pred}}

\subsubsection{\pref{thm:predtokl}}

\predtokl*

\begin{proof}
    From predictability, we know that $A(S)$ satisfies
    \begin{align*}
        \sup_{\substack{S}} \E_{C(S) \sim \mathcal{P}, A} [ \sup_{ q \in \mathcal{Q}} \E_{x \sim \Pi} [\ell(\bayes_{q \mid C(S)}, q(x)) - \ell(\bayes_{q \mid C(S), A(S)}, q(x))]] \leq \gamma.
    \end{align*}

    Under the log loss, the optimal $\bayes_{q \mid C(S)}$ and $\bayes_{q \mid C(S), A(S)}$ predict $\Pr[q(x)=i \mid{} C(S)]$ and $\Pr[q(x)=i \mid{} C(S), A(S)]$, respectively, for each $i \in [K]$. Plugging these in and expanding the loss,
    \begin{align*}
        &\E_{x \sim \Pi, A} [\ell(\Pr[q(x)=i \mid{} C(S)], q(x)) - \ell(\Pr[q(x)=i \mid{} C(S), A(S)], q(x))] \\
        &\;\;\;= \E_{x \sim \Pi, A} \left[ - \sum_{i=1}^K \ind{y=i} \log(\Pr[q(x)=i \mid{} C(S)]) + \sum_{i=1}^K \ind{y=i} \log(\Pr[q(x)=i \mid{} C(S), A(S)]) \right] \\
        &\;\;\;= \E_{x \sim \Pi, A} \left[ - \sum_{i=1}^K \ind{y=i} \log(\frac{\Pr[q(x)=i \mid{} C(S), A(S)]}{\Pr[q(x)=i \mid{} C(S)]}) \right] \\
        &\;\;\;= \E_{x \sim \Pi, A} [D_{\textsc{kl}}(P_{q(x) \mid{} C(S), A(S)} \| P_{q(x) \mid{} C(S)})].
    \end{align*}

    Thus,
    \begin{align*}
        \sup_{\substack{S}} \E_{C(S) \sim \mathcal{P}} [\sup_{ q \in \mathcal{Q}} \E_{x \sim \Pi, A} [ D_{\textsc{kl}}(P_{q(x) \mid{} C(S), A(S)} \| P_{q(x) \mid{} C(S)})]] \leq \gamma
    \end{align*}

    From here, observe that
    \begin{align*}
        \sup_{\substack{S}} \E_{C(S) \sim \mathcal{P}} [\sup_{ q \in \mathcal{Q}} \E_{x \sim \Pi, A} &[ D_{\textsc{kl}}(P_{q(x) \mid{} C(S), A(S)} \| P_{q(x) \mid{} C(S)})]] \\
        &\geq \sup_{\substack{S}} \sup_{ q \in \mathcal{Q}} \E_{C(S) \sim \mathcal{P}, A} [ \E_{x \sim \Pi} [ D_{\textsc{kl}}(P_{q(x) \mid{} C(S), A(S)} \| P_{q(x) \mid{} C(S)})]] \\
        &\geq \sup_{\substack{S}} \sup_{ q \in \mathcal{Q}} \I(q(x) ; A(S) \mid C(S))
    \end{align*}
\end{proof}

\subsubsection{\pref{thm:dp-pred-contrast}, Predictability and Differential Privacy are Incomparable}

\dppredcontrast*
\begin{proof}
Let $S$ be a binary dataset, $x_i \in \{0, 1\}$. Let $p = \frac{1}{N} \sum_{i=1}^N x_i$ be the proportion of $1$'s. Consider a single query of interest $q(x) = x$.

For the first part of the statement, let $A(S)$ be an $\varepsilon$-differentially private release of $p$, $A(S)=p + \mathrm{Lap}(0, \frac{\varepsilon}{N})$, $\varepsilon = \frac{1}{\sqrt{N}}$. Consider a process $\mathcal{P}$ that only selects entries such that $x_i = 1$ to form $C(S)$ of size $\frac{N}{2}$. Let $p_{S \setminus C(S)}$ be the proportion of $1$'s. Given only $C(S)$, $\bayes_{q \mid C(S)}$ cannot estimate $p_{S \setminus C(S)}$ accurately. The best that $\bayes_{q \mid C(S)}$ can do is estimate $0.5$ because $p_{S \setminus C(S)}$ can range anywhere from $0$ to $1$ and $C(S)$ is always a set of $1$'s. 

However, given $C(S)$ and $A(S)$, $\bayes_{q \mid C(S), A(S)}$ can estimate $p_{S \setminus C(S)}$ very accurately. In particular,
\begin{align*}
    \frac{1}{2} p_{C(S)} + \frac{1}{2} p_{S \setminus C(S)} = p \\
    p_{S \setminus C(S)} = 2p - p_{C(S)} 
\end{align*}

Consider when $\bayes_{q \mid C(S), A(S)}$ constructs the estimator
\begin{align*}
    \hat{p}_{S \setminus C(S)} &= 2\cdot A(S) - p_{C(S)} \\
    &= 2\cdot (p + \Delta)- p_{C(S)} \tag{$\Delta \sim \mathrm{Lap}(0, \frac{\varepsilon}{N})$} \\
    &= p_{S \setminus C(S)} + 2 \Delta \\
    &= p_{S \setminus C(S)} + \Delta' \tag{$\Delta \sim \mathrm{Lap}(0, \frac{2\varepsilon}{N})$}
\end{align*}

Let $\ell(\hat{y}, y) = (\hat{y}-y)^2$ be the squared loss. Consider the predictability of $q$ and plug in the estimators $\bayes_{q \mid C(S)}$ and $\bayes_{q \mid C(S), A(S)}$,
\begin{align*}
    \sup_{S} \mathop{\E}_{C(S) \sim \mathcal{P}}[ \mathop{\E}_{\substack{x \sim \Pi, A}} &[\ell(\bayes_{q \mid C(S)}, q(x)) - \ell(\bayes_{q \mid C(S), A(S)}, q(x))]] \\
    &= \sup_{S} \mathop{\E}_{C(S) \sim \mathcal{P}}[ \mathop{\E}_{\substack{x \sim \Pi, A}} [(\bayes_{q \mid C(S)} - q(x)) - (\bayes_{q \mid C(S), A(S)} - q(x))]] \\
    &= \sup_{S} \mathop{\E}_{C(S) \sim \mathcal{P}, A}[ (\bayes_{q \mid C(S)} - \mathop{\E}_{\substack{x \sim \Pi}}[q(x)])^2 + \mathop{\E}_{\substack{x \sim \Pi}} [(\mathop{\E}_{\substack{x \sim \Pi}}[q(x)] - q(x))^2] - (\bayes_{q \mid C(S), A(S)} - \mathop{\E}_{\substack{x \sim \Pi}}[q(x)])^2 \\
    &\;\;\;\;\;\;- \mathop{\E}_{\substack{x \sim \Pi}} [(\mathop{\E}_{\substack{x \sim \Pi}}[q(x)] - q(x))^2] ] \tag{bias-variance decomposition of squared loss} \\
    &= \sup_{S} \mathop{\E}_{C(S) \sim \mathcal{P}, A}[ (\bayes_{q \mid C(S)} - \mathop{\E}_{\substack{x \sim \Pi}}[q(x)])^2 - (\bayes_{q \mid C(S), A(S)} - \mathop{\E}_{\substack{x \sim \Pi}}[q(x)])^2 \\
    &= \sup_{S} \mathop{\E}_{C(S) \sim \mathcal{P}, A}[ (\bayes_{q \mid C(S)} - p_{S \setminus C(S)})^2 - (\bayes_{q \mid C(S), A(S)} - p_{S \setminus C(S)})^2] \\
    &= \sup_{S} \mathop{\E}_{C(S) \sim \mathcal{P}, A}[ (0.5 - p_{S \setminus C(S)})^2 - (p_{S \setminus C(S)} + \Delta' - p_{S \setminus C(S)})^2] \\
    &= \sup_{S} \mathop{\E}_{C(S) \sim \mathcal{P}}[ (0.5 - p_{S \setminus C(S)})^2 - \E_A[(\Delta')^2]] \\
    &= \sup_{S} \mathop{\E}_{C(S) \sim \mathcal{P}}\left[ (0.5 - p_{S \setminus C(S)})^2 - \frac{8\varepsilon^2}{N^2}\right] \\
    &= \sup_{S} \mathop{\E}_{C(S) \sim \mathcal{P}}\left[ (0.5 - p_{S \setminus C(S)})^2 - \frac{8}{N}\right] \tag{$\varepsilon = \frac{1}{\sqrt{N}}$} \\
    &= \sup_{S} \left((0.5 - p_{S \setminus C(S)})^2 - \frac{8}{N} \right) \tag{$\mathcal{P}$ always selects the same $C(S)$}
\end{align*}

For the worst case dataset $S$, $p_{S \setminus C(S)} \rightarrow 0$, and predictability $\rightarrow 0.5^2 - \frac{8}{N}$. Thus, $\gamma \geq \Omega (1)$.

For the second part of the statement, let $A(S)$ release the exact proportion, $A(S) = p$. Consider a process $\mathcal{P}$ which selects $\frac{N}{2}$ entries i.i.d. from $S$ to form $C(S)$. Clearly, $A(S)$ is not differentially private.

We demonstrate that $A$ maintains small predictability. Given $A(S)$ and $C(S)$, $\bayes_{q \mid C(S), A(S)}$ can determine $p_{S \setminus C(S)}$ exactly. However, given only $C(S)$, $\bayes_{q \mid C(S)}$ can still estimate $p_{S \setminus C(S)}$ fairly accurately using $p_{C(S)}$ because $C(S)$ is an i.i.d. draw from the overall dataset.

Let $\ell(\hat{y}, y) = (\hat{y}-y)^2$ be the squared loss. Consider the predictability of $q$ and plug in the estimators $\bayes_{q \mid C(S)}$ and $\bayes_{q \mid C(S), A(S)}$,
\begin{align*}
    \sup_{S} \mathop{\E}_{C(S) \sim \mathcal{P}}[ \mathop{\E}_{\substack{x \sim \Pi, A}} &[\ell(\bayes_{q \mid C(S)}, q(x)) - \ell(\bayes_{q \mid C(S), A(S)}, q(x))]] \\
    &= \sup_{S} \mathop{\E}_{C(S) \sim \mathcal{P}}[ \mathop{\E}_{\substack{x \sim \Pi}} [(\bayes_{q \mid C(S)} - q(x))^2 - (\bayes_{q \mid C(S), A(S)} - q(x))^2]] \\
    &= \sup_{S} \mathop{\E}_{C(S) \sim \mathcal{P}}[ \mathop{\E}_{\substack{x \sim \Pi}} [(p_{C(S)} - q(x))^2 - (p_{S \setminus C(S)} - q(x))^2]] \\
    &= \sup_{S} \mathop{\E}_{C(S) \sim \mathcal{P}}[ (p_{C(S)} - \mathop{\E}_{\substack{x \sim \Pi}}[q(x)])^2 + \mathop{\E}_{\substack{x \sim \Pi}} [(\mathop{\E}_{\substack{x \sim \Pi}}[q(x)] - q(x))^2] - (p_{S \setminus C(S))} - \mathop{\E}_{\substack{x \sim \Pi}}[q(x)])^2 \\
    &\;\;\;\;\;\;- \mathop{\E}_{\substack{x \sim \Pi}} [(\mathop{\E}_{\substack{x \sim \Pi}}[q(x)] - q(x))^2] ] \tag{bias-variance decomposition} \\
    &= \sup_{S} \mathop{\E}_{C(S) \sim \mathcal{P}}[ (p_{C(S)} - \mathop{\E}_{\substack{x \sim \Pi}}[q(x)])^2 - (p_{S \setminus C(S))} - \mathop{\E}_{\substack{x \sim \Pi}}[q(x)])^2] \\
    &= \sup_{S} \mathop{\E}_{C(S) \sim \mathcal{P}}[ (p_{C(S)} - p_{S \setminus C(S)})^2 - (p_{S \setminus C(S))} - p_{S \setminus C(S)})^2] \\
    &= \sup_{S} \mathop{\E}_{C(S) \sim \mathcal{P}}[ (p_{C(S)} - p_{S \setminus C(S)})^2 ] \\
    &= \sup_{S} \mathop{\E}_{C(S) \sim \mathcal{P}}[ ((p_{C(S)} - p_S) - (p_{S \setminus C(S)} - p_S))^2 ] \\
    &= \sup_{S} \mathop{\E}_{C(S) \sim \mathcal{P}}[ ((p_{C(S)} - p_S))^2 ] + \mathop{\E}_{C(S) \sim \mathcal{P}}[ ((p_{S \setminus C(S)} - p_S))^2 ] - 2 \mathop{\E}_{C(S) \sim \mathcal{P}}[ (p_{C(S)} - p_S) (p_{S \setminus C(S)} - p_S)] \\ 
    &\leq O(\tfrac{1}{N}) + O(\tfrac{1}{N}) + 0 \\
    &= O(\tfrac{1}{N}),
\end{align*}
where the last line follows from the concentration of $p_{C(S)}$ and $p_{S \setminus C(S)}$ around $p_S$.
\end{proof}

\subsubsection{\pref{thm:pred-implies-dp}, Connection to Mutual Information Differential Privacy}

\predimpliesdp*
\begin{proof}
    We know that $A(S)$ satisfies
    \begin{align*}
        \sup_{S} \E_{C(S) \sim \mathcal{P}_{\mathrm{DP}}, A} [ \sup_{ q \in \mathcal{Q}_{\mathrm{bin}}} \E_{x \sim \mathrm{Unif}(S \setminus C(S))} [\ell(\bayes_{q \mid C(S)}, q(x)) - \ell(\bayes_{q \mid C(S), A(S)}, q(x))]] \leq \gamma.
    \end{align*}
    Plugging in the form of $\mathcal{P}_{\mathrm{DP}}$, we have
    \begin{align*}
        \sup_{S} \sup_{i \in [N]} \sup_{q \in \mathcal{Q}_{\mathrm{bin}}} \E_A[\ell(\bayes_{q \mid S \setminus \{x_i\}}, q(x_i)) - \ell(\bayes_{q \mid S \setminus \{x_i\}, A(S)}, q(x_i))]] \leq \gamma.
    \end{align*}
    This is equivalent to
    \begin{align*}
        \sup_{\substack{\chi \in \Delta (\mathcal{X}^N)}} &\E_{S \sim \chi} [ \sup_{i \in [N]} \sup_{ q \in \mathcal{Q}_{\mathrm{bin}}} \E_A[\ell(\bayes_{q \mid S \setminus \{x_i\}}, q(x_i)) - \ell(\bayes_{q \mid S \setminus \{x_i\}, A(S)}, q(x_i))]] \leq \gamma.
    \end{align*}
    Observe that
    \begin{align*}
        \E_{S \sim \chi} [ &\sup_{i \in [N], q \in \mathcal{Q}_{\mathrm{bin}}} \E_A[\ell(\bayes_{q \mid S \setminus \{x_i\}}, q(x_i)) - \ell(\bayes_{q \mid S \setminus \{x_i\}, A(S)}, q(x_i))]] \\
        &\geq \sup_{i \in [N], q \in \mathcal{Q}_{\mathrm{bin}}} \E_{S \sim \chi, A} [ \ell(\bayes_{q \mid S \setminus \{x_i\}}, q(x_i)) - \ell(\bayes_{q \mid S \setminus \{x_i\}, A(S)}, q(x_i))] 
    \end{align*}

    Thus,
    \begin{align*}
        \sup_{i \in [N], q \in \mathcal{Q}_{\mathrm{bin}}} \E_{S \sim \chi, A} [ \ell(\bayes_{q \mid S \setminus \{x_i\}}, q(x_i)) - \ell(\bayes_{q \mid S \setminus \{x_i\}, A(S)}, q(x_i))] \leq \gamma.
    \end{align*}

    Under the log loss, the optimal $\bayes_{q \mid C(S)}$ and $\bayes_{q \mid C(S), A(S)}$ predict $\Pr[q(x_i)=1 \mid{} S \setminus \{x_i\}]$ and $\Pr[q(x_i)=1 \mid{} S \setminus \{x_i\}, A(S)]$, respectively.
    
    Plugging these in and expanding the loss,
    \begin{align*}
        &\E_{S \sim \chi, A}[\ell(\Pr[q(x_i)=1 \mid{} S \setminus \{x_i\}], q(x_i)) - \ell(\Pr[q(x_i)=1 \mid{} S \setminus \{x_i\}, A(S)], q(x_i))] \\
        &\;\;\;=\E_{S \setminus \{x_i\} \sim \chi}[ \E_{x_i \mid S \setminus \{x_i\}, A}[ \ell(\Pr[q(x_i)=1 \mid{} S \setminus \{x_i\}], q(x_i)) - \ell(\Pr[q(x_i)=1 \mid{} S \setminus \{x_i\}, A(S)], q(x_i))]] \\
        &\;\;\;= \E_{S \setminus \{x_i\} \sim \chi} [ D_{\textsc{kl}}(P_{q(x) \mid{} S \setminus \{x_i\}, A(S)} \| P_{q(x) \mid{} S \setminus \{x_i\}}) ] \tag{\pref{thm:predtokl}} \\
        &\;\;\;= \I(q(x_i); A(S) \mid{} S \setminus \{x_i\})
    \end{align*}
    where $\I$ is the conditional mutual information between $q(x_i)$ and $A(S)$ given $S \setminus \{x_i\}$.

    Thus, we have
    \begin{align*}
        \sup_{\chi \in \Delta(\mathcal{X}^N)} \sup_{i \in [N]} \sup_{ q \in \mathcal{Q}_{\mathrm{bin}}} \I(q(x_i); A(S) \mid{} S \setminus \{x_i\}) \leq \gamma,
    \end{align*}
    which implies
    \begin{align*}
        \sup_{\chi \in \Delta(\mathcal{X}^N)} \sup_{i \in [N]} \I(x_i; A(S) \mid{} S \setminus \{x_i\}) \leq \gamma.
    \end{align*}
    Thus, $A(S)$ is $\gamma$-mutual-information differentially private (\pref{def:mi-dp}). 
\end{proof}

\begin{restatable}{theorem}{dpimpliespred} \label{thm:dp-implies-pred}
    If $A$ is $\varepsilon$-differentially private, then for process $\mathcal{P}_{\mathrm{DP}}$, $A$ satisfies $\varepsilon$-predictability for all binary queries $\mathcal{Q}_{\mathrm{bin}}$ under the log loss.
\end{restatable}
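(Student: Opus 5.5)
The plan is to reduce, exactly as in the proof of \pref{thm:pred-implies-dp} but in the reverse direction, to bounding a KL divergence between posteriors of the single unknown bit $q(x_i)$, and then to use the $e^{\varepsilon}$ likelihood-ratio bound from $\varepsilon$-DP. Under $\mathcal{P}_{\mathrm{DP}}$ we have $C(S) = S \setminus \{x_i\}$ for the worst-case index $i$, and $\Pi$ is the point mass on $x_i$. By \pref{thm:predtokl}, with the log loss the inner quantity is $\E_{A}\big[D_{\textsc{kl}}(P_{q(x_i)\mid S\setminus\{x_i\},A(S)} \,\|\, P_{q(x_i)\mid S\setminus\{x_i\}})\big]$. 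So it suffices to show that for every $i$, every binary $q$, every realization of $S\setminus\{x_i\}$ and every output $a$ of $A$, this KL divergence is at most $\varepsilon$. Pointwise bounds survive the $\sup_q$, the expectation over $A$ and the $\sup_S$.

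The posteriors are taken with respect to some (attacker's) prior over $x_i$ given $S\setminus\{x_i\}$; I will take it arbitrary. Fix $S_{-i}=S\setminus\{x_i\}$ and write $\pi_b = \Pr[q(x_i)=b \mid S_{-i}]$. Let $\mu_b(a)$ be the density of $A(S)$ at $a$, averaged over $x_i$ conditioned on $q(x_i)=b$ and on $S_{-i}$.

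Any two values of $x_i$ give neighboring datasets. Hence, by \pref{def:dp} with $\delta=0$ (in density form), $e^{-\varepsilon} \le \mu_0(a)/\mu_1(a) \le e^{\varepsilon}$, since each $\mu_b$ is a mixture of densities that are pairwise within a factor $e^{\varepsilon}$. By Bayes' rule the posterior is
\[
\Pr[q(x_i)=b \mid S_{-i}, a] = \frac{\pi_b\,\mu_b(a)}{\pi_0\mu_0(a)+\pi_1\mu_1(a)}.
\]
Its ratio to the prior $\pi_b$ is $\mu_b(a)/\bar\mu(a)$, where $\bar\mu$ is a convex combination of $\mu_0$ and $\mu_1$. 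That ratio therefore lies in $[e^{-\varepsilon},e^{\varepsilon}]$. The KL divergence is
\[
\sum_b \Pr[q(x_i)=b\mid S_{-i},a]\,\log\frac{\Pr[q(x_i)=b\mid S_{-i},a]}{\pi_b},
\]
a weighted average of logs each at most $\varepsilon$. So it is at most $\varepsilon$, which yields $\varepsilon$-predictability.

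The main obstacle is conceptual bookkeeping rather than computation. The Bayes estimators in the definition need a prior over $x_i$, and the bound must hold uniformly over that prior; the mixture argument above handles this because DP gives the ratio bound for every pair of values of $x_i$. I would also remark that the crude bound $\varepsilon$ could be sharpened to $O(\varepsilon^2)$, for example $\varepsilon(e^{\varepsilon}-1)$, by using that the KL divergence is bounded by the $\chi^2$ divergence. That refinement is not needed for the statement, so I would keep the simple pointwise-log bound.
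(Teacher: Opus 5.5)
Your proposal is correct and is essentially the paper's argument: $\varepsilon$-DP plus Bayes' rule confines each ratio $\Pr[q(x_i)=b\mid S\setminus\{x_i\},A(S)]/\Pr[q(x_i)=b\mid S\setminus\{x_i\}]$ to $[e^{-\varepsilon},e^{\varepsilon}]$, and the log-loss gain is then a weighted average of log-ratios each bounded by $\varepsilon$. The differences are cosmetic. The paper expands the log-loss difference directly, obtaining a two-sided bound in $[-\varepsilon,\varepsilon]$, rather than going through \pref{thm:predtokl}. Your mixture argument for $\mu_0/\mu_1$ is a more careful version of the paper's Bayes-rule step.
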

\begin{proof}

    Plugging in the form of $\mathcal{P}_{\mathrm{DP}}$ into predictability, we look to bound
    \begin{align*}
        \sup_{S} \sup_{i \in [N]} \sup_{q \in \mathcal{Q}_{\mathrm{bin}}} \E_A[\ell(\bayes_{q \mid S \setminus \{x_i\}}, q(x_i)) - \ell(\bayes_{q \mid S \setminus \{x_i\}, A(S)}, q(x_i))]
    \end{align*}

    Since $A$ is $\varepsilon$-differentially private, we know that for all $S$ and $i$,
    \begin{align*}
        e^{-\varepsilon} \leq \frac{\Pr[A(S) \mid{} S]}{\Pr[A(S) \mid{} S \setminus \{x_i\}]} \leq e^\varepsilon.
    \end{align*}

    Applying Bayes Rule, we know
    \begin{align*}
         \Pr[q(x_i) = 1 \mid{} S \setminus \{x_i\}, A(S)] &= \frac{\Pr[A(S) \mid{} S \setminus \{x_i\}, q(x_i) = 1] \cdot \Pr[q(x_i) = 1 \mid{} S \setminus \{x_i\}]}{\Pr[A(S) \mid{} S \setminus \{x_i\}]}.
    \end{align*}

    Let $\mathcal{S} = \{S' \mid{} S' \setminus \{x_i'\} = S \setminus \{x_i\} \text{ and } q(x_i')=1$. Then
    \begin{align*}
         \Pr[q(x_i) = 1 \mid{} S \setminus \{x_i\}, A(S)] &= \frac{\sum_{S' \in \mathcal{S}} \Pr[A(S) \mid{} S'] \cdot \Pr[q(x_i) = 1 \mid{} S \setminus \{x_i\}]}{\Pr[A(S) \mid{} S \setminus \{x_i\}]}.
    \end{align*}

    Thus, plugging in the ratio from differential privacy, we have
    \begin{align*}
        e^{-\varepsilon} \cdot \Pr[q(x_i) = 1 \mid{} S \setminus \{x_i\}] \leq \Pr[q(x_i) = 1 \mid{} S \setminus \{x_i\}, A(S)] &\leq e^\varepsilon \cdot \Pr[q(x_i) = 1 \mid{} S \setminus \{x_i\}].
    \end{align*}

    We can similarly conclude
    \begin{align*}
        e^{-\varepsilon} \cdot \Pr[q(x_i) = 0 \mid{} S \setminus \{x_i\}] \leq \Pr[q(x_i) = 0 \mid{} S \setminus \{x_i\}, A(S)] &\leq e^\varepsilon \cdot \Pr[q(x_i) = 0 \mid{} S \setminus \{x_i\}].
    \end{align*}

    In other words, 
    \begin{align*}
        -\varepsilon \leq \log \left(\frac{\Pr[q(x_i)=1 \mid{} S \setminus \{x_i\}, A(S)]}{\Pr[q(x_i)=1 \mid{} S \setminus \{x_i\}]}\right), \log \left(\frac{\Pr[q(x_i)=0 \mid{} S \setminus \{x_i\}, A(S)]}{\Pr[q(x_i)=0 \mid{} S \setminus \{x_i\}]}\right) \leq \varepsilon.
    \end{align*}

    Expanding out the log loss in the predictability bound,
    \begin{align*}
        \ell(&\bayes_{q \mid S \setminus \{x_i\}}, q(x_i)) - \ell(\bayes_{q \mid S \setminus \{x_i\}, A(S)}, q(x_i)) \\
        &= \ell(\Pr[q(x_i)=1 \mid{} S \setminus \{x_i\}], q(x_i)) - \ell(\Pr[q(x_i)=1 \mid{} S \setminus \{x_i\}, A(S)], q(x_i)) \\
        &= -\Pr[q(x_i)=1]\cdot \log (\Pr[q(x_i)=1 \mid{} S \setminus \{x_i\}]) - (1-\Pr[q(x_i)=1]) \cdot \log (1-\Pr[q(x_i)=1 \mid{} S \setminus \{x_i\}]) \\
        &\;\;\;\;\;+ \Pr[q(x_i)=1] \cdot \log (\Pr[q(x_i)=1 \mid{} S \setminus \{x_i\}, A(S)]) \\
        &\;\;\;\;\;- (1-\Pr[q(x_i)=1]) \cdot \log (1-\Pr[q(x_i)=1 \mid{} S \setminus \{x_i\}, A(S)]) \\
        &= \Pr[q(x_i)=1] \cdot \log \left(\frac{\Pr[q(x_i)=1 \mid{} S \setminus \{x_i\}, A(S)]}{\Pr[q(x_i)=1 \mid{} S \setminus \{x_i\}]}\right) + \Pr[q(x_i)=0] \cdot \log \left(\frac{\Pr[q(x_i)=0 \mid{} S \setminus \{x_i\}, A(S)]}{\Pr[q(x_i)=0 \mid{} S \setminus \{x_i\}]}\right)
    \end{align*}

    Since both
    \begin{align*}
        -\varepsilon \leq \log \left(\frac{\Pr[q(x_i)=1 \mid{} S \setminus \{x_i\}, A(S)]}{\Pr[q(x_i)=1 \mid{} S \setminus \{x_i\}]}\right), \log \left(\frac{\Pr[q(x_i)=0 \mid{} S \setminus \{x_i\}, A(S)]}{\Pr[q(x_i)=0 \mid{} S \setminus \{x_i\}]}\right) \leq \varepsilon,
    \end{align*}
    we must have
    \begin{align*}
        -\varepsilon \leq \ell(\bayes_{q \mid S \setminus \{x_i\}}, q(x_i)) - \ell(\bayes_{q \mid S \setminus \{x_i\}, A(S)}, q(x_i)) \leq \varepsilon.
    \end{align*}

    The above holds for every $S$, $i$, and $q$; thus, we can conclude
    \begin{align*}
        \sup_{S} \sup_{i \in [N]} \sup_{q \in \mathcal{Q}_{\mathrm{bin}}} \E_A[\ell(\bayes_{q \mid S \setminus \{x_i\}}, q(x_i)) - \ell(\bayes_{q \mid S \setminus \{x_i\}, A(S)}, q(x_i))] \leq \gamma.
    \end{align*}
\end{proof}

\subsubsection{\pref{lem:loss-lem}}

\losslem*
\begin{proof}
    \begin{align*}
        \mathop{\E}_{\substack{x_i \sim \Pi}}[\ell(\hat{\theta}_{q \mid C(S)}, q(x))& - \ell(\hat{\theta}_{q \mid C(S), A(S)}, q(x))] \\
        &= \ell (\hat{\theta}_{q \mid C(S)}) - \ell (\hat{\theta}_{q \mid C(S), A(S)}) \\
        &\leq \ell'(\hat{\theta}_{q \mid C(S)}) |\hat{\theta}_{q \mid C(S)} - \hat{\theta}_{q \mid C(S), A(S)}| \tag{convexity} \\
        &\leq H |\hat{\theta}_{q \mid C(S)} - \theta^q_0| \cdot |\hat{\theta}_{q \mid C(S)} - \hat{\theta}_{q \mid C(S), A(S)}| \tag{smoothness}
    \end{align*}    
\end{proof}

\subsection{Post-Processing and Composition of Predictability} \label{app:pp-and-composition}

\begin{restatable}[Post-Processing]{theorem}{postprocessing}
    If $A(S)$ has $\gamma (\mathcal{P}, \mathcal{Q}, \ell, A)$-predictability $\gamma$, then $f \circ A$ has $\gamma (\mathcal{P}, \mathcal{Q}, \ell, f \circ A))$-predictability $\gamma$.
\end{restatable}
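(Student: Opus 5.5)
The plan is to compare, for each fixed $S$, realization of $C(S)$, and query $q$, the attacker's best achievable loss when seeing $A(S)$ versus seeing $f(A(S))$. The key observation is that the Bayes optimal estimator given $(C(S), f(A(S)))$ is one particular measurable function of $(C(S), A(S))$. The optimal estimator given $(C(S), A(S))$ minimizes the conditional expected loss over all such functions, so it can only do better. In other words, knowing $A(S)$ lets the attacker simulate $f(A(S))$, using independent randomness if $f$ is randomized; this is the standard data-processing argument. The term $\ell(\bayes_{q \mid C(S)}, q(x))$ does not depend on the algorithm at all, so it is identical in the two predictability expressions.

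Concretely, I would first fix $S$ and $q$ and work with the pointwise objective. The conditional risk $\E_{x\sim\Pi}[\ell(\hat\theta, q(x)) \mid C(S), A(S)]$ is minimized by $\bayes_{q\mid C(S),A(S)}$. Averaging over $A$ (and the randomness of $f$) given $C(S)$, the tower property yields
\begin{align*}
\E\big[\ell(\bayes_{q \mid C(S), A(S)}, q(x)) \,\big|\, C(S)\big] \le \E\big[\ell(\bayes_{q \mid C(S), f(A(S))}, q(x)) \,\big|\, C(S)\big].
\end{align*}
Subtracting both sides from the common term $\E[\ell(\bayes_{q \mid C(S)}, q(x)) \mid C(S)]$ shows that the per-$q$ gain for $f\circ A$ is at most the gain for $A$.

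The step I expect to be the main obstacle is the placement of $\sup_{q}$ inside $\E_{C(S),A}$ in the definition. The inequality above holds only after averaging over $A$ given $C(S)$, not pointwise in $A(S)$, yet the supremum sits outside the inner expectation and inside the outer one. My first attempt would be to view $\bayes_{q\mid C(S),f(A(S))}$ as a fixed candidate estimator for each realization. The sup over $q$ of the conditional expected gain under $f\circ A$ is attained (or approached) by a $q$ chosen as a function of $(C(S), f(A(S)))$. That same choice is also measurable with respect to $(C(S), A(S))$. Since $A$'s predictability allows $q$ to depend on $(C(S), A(S))$, the gain for $f\circ A$ at the adaptively chosen $q$ is dominated, in expectation, by the gain for $A$ at that same $q$, and hence by the sup. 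If the pointwise form proves awkward, I would fall back on the log-loss characterization in \pref{thm:predtokl}, where the gain is an expected KL divergence of posteriors of $q(x)$, and use the data-processing inequality for conditional mutual information, as in \cite{steinke2020reasoninggeneralizationconditionalmutual}. Finally, I would take $\E_{C(S)}$ and $\sup_S$ to conclude that $\gamma(\mathcal{P},\mathcal{Q},\ell,f\circ A)\le\gamma$.
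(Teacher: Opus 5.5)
Your proposal is correct and matches the paper's proof. Both rest on monotonicity of Bayes risk: the estimator built from $(C(S), f(A(S)))$ is a possibly randomized function of $(C(S), A(S))$, and the common term $\ell(\bayes_{q \mid C(S)}, q(x))$ cancels.

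The paper applies the per-$q$ inequality directly to the expression with $\sup_q$ inside $\E_{C(S),A}$ without comment, so your adaptive choice $q^*(C(S), f(A(S)))$ adds care the paper lacks. That step is sound under your reading of the quantity under $\sup_q$ as a posterior expected gain given the released information. It also needs $f$'s internal randomness to be independent of the unseen records given $(C(S), A(S))$.
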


\begin{proof}
    Notice that due to the monotonicity of Bayes risk, the Bayes optimal predictor must satisfy
    \begin{align*}
        \E[\ell(\bayes_{q \mid C(S), A(S)}, q(x))] \leq \E[\ell(\bayes_{q \mid C(S), f(A(S))}, q(x))].
    \end{align*}
    Thus,
    \begin{align*}
        \sup_{S} \mathop{\E}_{C(S) \sim \mathcal{P}, A}[ \sup_{\substack{q \in \mathcal{Q}}} \mathop{\E}_{\substack{x \sim \Pi}} &[\ell(\bayes_{q \mid C(S)}, q(x))  - \ell(\bayes_{q \mid C(S), f(A(S))}, q(x))]] \\
        &\leq \sup_{S} \mathop{\E}_{C(S) \sim \mathcal{P}, A}[ \sup_{\substack{q \in \mathcal{Q}}} \mathop{\E}_{\substack{x \sim \Pi}} [\ell(\bayes_{q \mid C(S)}, q(x))  - \ell(\bayes_{q \mid C(S), A(S)}, q(x))]]
    \end{align*}
\end{proof}

\begin{restatable}[Composition]{theorem}{composition}
    For two algorithms $A_1$ and $A_2$, the $\gamma (\mathcal{P}, \mathcal{Q}, \ell, A)$-predictability of the composed $(A_1(S), A_2(S))$ is bounded by
    \begin{align*}
        \gamma (\mathcal{P}, \mathcal{Q}, \ell, (A_1, A_2)) &\leq \sup_{S} \mathop{\E}_{C(S) \sim \mathcal{P}, A_1}[ \sup_{\substack{q \in \mathcal{Q}}} \mathop{\E}_{\substack{x \sim \Pi}} [\ell(\bayes_{q \mid C(S)}, q(x))  - \ell(\bayes_{q \mid C(S), A_1(S)}, q(x))]] \\
        &\;\;\;\;\;\;+ \sup_{S} \mathop{\E}_{C(S) \sim \mathcal{P}, A_1, A_2}[ \sup_{\substack{q \in \mathcal{Q}}} \mathop{\E}_{\substack{x \sim \Pi}} [\ell(\bayes_{q \mid C(S), A_1(S)}, q(x)) - \ell(\bayes_{q \mid C(S), A_1(S), A_2(S)}, q(x))]]
    \end{align*}
\end{restatable}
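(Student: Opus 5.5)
The plan is a telescoping decomposition through the intermediate estimator $\bayes_{q \mid C(S), A_1(S)}$. Fix $S$, a realization of $C(S)$, the algorithms' randomness, and a query $q$. Write the incremental loss as a sum of two terms by adding and subtracting $\ell(\bayes_{q \mid C(S), A_1(S)}, q(x))$:
\begin{align*}
\mathop{\E}_{x \sim \Pi}\big[\ell(\bayes_{q \mid C(S)}, q(x)) - \ell(\bayes_{q \mid C(S), A_1(S), A_2(S)}, q(x))\big]
&= \mathop{\E}_{x \sim \Pi}\big[\ell(\bayes_{q \mid C(S)}, q(x)) - \ell(\bayes_{q \mid C(S), A_1(S)}, q(x))\big] \\
&\quad + \mathop{\E}_{x \sim \Pi}\big[\ell(\bayes_{q \mid C(S), A_1(S)}, q(x)) - \ell(\bayes_{q \mid C(S), A_1(S), A_2(S)}, q(x))\big].
\end{align*}
This identity holds pointwise, so no property of the loss is used at this stage.

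Next I take $\sup_{q \in \mathcal{Q}}$ of both sides. I use the elementary fact that $\sup_q (a_q + b_q) \le \sup_q a_q + \sup_q b_q$. Then I take the expectation over $C(S) \sim \mathcal{P}$ and the randomness of $A_1, A_2$, using linearity of expectation.

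The first summand depends only on $C(S)$ and $A_1(S)$. Averaging it over $A_2$ leaves it unchanged, so its expectation reduces to $\mathop{\E}_{C(S)\sim\mathcal{P}, A_1}$, as in the first term of the claim. The second summand keeps the full expectation over $C(S), A_1, A_2$.

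Finally I take $\sup_S$ of the sum and bound it by the sum of the suprema, $\sup_S(u_S+v_S)\le \sup_S u_S+\sup_S v_S$. This gives exactly the two terms in the statement.

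The only point needing care is the conditioning interpretation. The Bayes estimator $\bayes_{q \mid C(S),A_1(S)}$ must be the same object in both summands so that the telescoping is exact. This holds because it is determined by $(C(S), A_1(S))$ alone, and $A_1, A_2$ use independent internal randomness.

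I expect no step to be a serious obstacle. The mildest subtlety is ensuring that each summand is well defined, that is, that the expected losses are finite, so that the sub-additivity of the suprema and the linearity of expectation are legitimate. I would state finiteness of the expected losses as a standing assumption.
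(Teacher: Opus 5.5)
Your proposal is correct and is essentially the paper's proof. The paper also adds and subtracts $\ell(\bayes_{q \mid C(S), A_1(S)}, q(x))$ and then invokes subadditivity of the supremum; you spell out the steps more carefully (supremum over $q$, then linearity of expectation, then supremum over $S$), and your remark that $A_1$ and $A_2$ need independent randomness is unnecessary. The add-and-subtract step is an exact identity, and the first summand depends only on $(C(S), A_1(S))$, so averaging out $A_2$ works for any joint distribution.
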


\begin{proof}
    \begin{align*}
        \sup_{S} &\mathop{\E}_{C(S) \sim \mathcal{P}}[ \sup_{\substack{q \in \mathcal{Q}}} \mathop{\E}_{\substack{x \sim \Pi, A_1, A_2}} [\ell(\bayes_{q \mid C(S)}, q(x)) - \ell(\bayes_{q \mid C(S), A_1(S), A_2(S)}, q(x))]] \\
        &= \sup_{S} \mathop{\E}_{C(S) \sim \mathcal{P}}[ \sup_{\substack{q \in \mathcal{Q}}} \mathop{\E}_{\substack{x \sim \Pi, A_1, A_2}} [\ell(\bayes_{q \mid C(S)}, q(x))  - \ell(\bayes_{q \mid C(S), A_1(S)}, q(x)) \\
        &\;\;\;\;\;\;+ \ell(\bayes_{q \mid C(S), A_1(S)}, q(x)) - \ell(\bayes_{q \mid C(S), A_1(S), A_2(S)}, q(x))]] \\
        &\leq \sup_{S} \mathop{\E}_{C(S) \sim \mathcal{P}}[ \sup_{\substack{q \in \mathcal{Q}}} \mathop{\E}_{\substack{x \sim \Pi, A_1}} [\ell(\bayes_{q \mid C(S)}, q(x))  - \ell(\bayes_{q \mid C(S), A_1(S)}, q(x))]] \\
        &\;\;\;\;\;\;+ \sup_{S} \mathop{\E}_{C(S) \sim \mathcal{P}}[ \sup_{\substack{q \in \mathcal{Q}}} \mathop{\E}_{\substack{x \sim \Pi, A_1, A_2}} [\ell(\bayes_{q \mid C(S), A_1(S)}, q(x)) - \ell(\bayes_{q \mid C(S), A_1(S), A_2(S)}, q(x))]]
    \end{align*}
    where the last line follows from the subadditivity of supremum.
\end{proof}

\subsection{Proofs from \pref{sec:gmm}}

\subsubsection{Regularity Conditions} \label{app:reg-conditions}
Define sample quantities
\begin{align*}
    f_n (\theta) &= \frac{1}{n} \sum_{i=1}^n f(x_i, \theta) \\
    G_n (\theta) &= \frac{1}{n} \sum_{i=1}^n \grad f(x_i, \theta) \\
    \Omega_n (\theta) &= \frac{1}{n} \sum_{i=1}^n f(x_i, \theta) f(x_i, \theta)^\top
\end{align*}
and define population quantities
\begin{align*}
    \E[f(\theta)] &= \E[f(x, \theta)] \\
    G(\theta) &= \E[\grad f(x, \theta)] \\
    \Omega (\theta) &= \sum_{k = -\infty}^{\infty} f(x_0, \theta) f(x_k, \theta)^\top.
\end{align*}
Assume the following regularity conditions.
    \begin{itemize}[nosep, leftmargin=2em]
        \item The domain $\Theta$ of $\theta$ is a compact subset of $\mathbb{R}^r$ and $\theta_0$ is its interior.
        \item $f(x, \theta)$ is $\mathbb{R}^r \mapsto \mathbb{R}^m$ function of $\theta$, where $r \leq m$.
        \item $f$ is continuous in $\theta$ and continuously differentiable on a neighborhood of $\theta_0$ in $\theta$.
        \item $\E[f(\theta)] = 0$ if and only if $\theta = \theta_0$.
        \item $G(\theta_0)$ is a $m \times r$ matrix of rank $r$.
        \item $\Omega (\theta_0)$ is a $m \times m$ positive definite matrix.
        \item $W$ is a $m \times m$ positive definite matrix.
        \item $f(x, \theta)$, $G (\theta)$, and $\Sigma (\theta)$ are dominated (there exists a function $\beta(x)$ with $\E[\beta(x)] < +\infty$ such that $|f(x, \theta)| \leq \beta(x)$, $|G (\theta)| \leq \beta(x)$ and $|\Omega (\theta)| \leq \beta(x)$).
        \item For each $\theta$, $f_n (\theta)$ obeys a central limit theorem (\pref{thm:clt-se}) 
        \item $f_n (\theta)$, $G_n (\theta)$, and $\Omega_n (\theta)$ obey a uniform law of large numbers (\pref{thm:ulln-se}).
    \end{itemize}

\begin{theorem}[Uniform Law of Large Numbers for Stationary Ergodic Processes \citep{hansen1982}] \label{thm:ulln-se}
    Let $Y_1(\theta), \dots, Y_n (\theta)$ be from a stationary ergodic process. Define
    \begin{align*}
        \psi(\theta) = \E[Y_1(\theta)], \;\;\;\; \bar{Y}_n (\theta) = \frac{1}{n} \sum_{i=1}^n Y_i (\theta).
    \end{align*}
    Assume a finite mean $\psi (\theta) < +\infty$ for all $\theta$ in a closed bounded set $\Theta \subseteq \mathbb{R}^k$. Assume $Y_i (\theta)$ is continuous at each $\theta \in \Theta$. Assume that $Y_i (\theta)$ is dominated; there exists a random variable $Z$ with a finite mean such that $\sup_{\theta \in \Theta} |Y_i (\theta)| \leq Z$. Then
    \begin{align*}
        \sup_{\theta \in \Theta} |\bar{Y}_n (\theta) - \psi(\theta)| \rightarrow_p 0
    \end{align*}
\end{theorem}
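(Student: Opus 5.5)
We prove the uniform law of large numbers (ULLN) in the standard way: combine a pointwise ergodic law of large numbers with a finite-cover argument over the compact set $\Theta$, using dominated convergence to control the oscillation of $Y_i(\cdot)$ on small balls.

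\emph{Pointwise convergence.} For each fixed $\theta\in\Theta$, the sequence $Y_i(\theta)$ is stationary and ergodic with finite mean. The ergodicity definition in \pref{app:definitions} (Birkhoff's ergodic theorem) then gives $\bar Y_n(\theta)\to\psi(\theta)$ almost surely.

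\emph{Local oscillation.} For $\theta\in\Theta$ and $\eta>0$, define the envelope functions
\begin{align*}
u_i(\theta,\eta)=\sup_{\theta'\in\Theta,\ \|\theta'-\theta\|\le\eta} Y_i(\theta'),\qquad
l_i(\theta,\eta)=\inf_{\theta'\in\Theta,\ \|\theta'-\theta\|\le\eta} Y_i(\theta').
\end{align*}
Continuity of $Y_i(\cdot)$ makes these measurable, since the supremum may be taken over a countable dense subset. Each is a fixed function of the underlying process, so $\{u_i(\theta,\eta)\}_i$ and $\{l_i(\theta,\eta)\}_i$ are again stationary and ergodic. Both are bounded in absolute value by $Z$. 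As $\eta\downarrow 0$, continuity gives $u_i(\theta,\eta)-l_i(\theta,\eta)\to 0$ pointwise, so dominated convergence yields $\E[u_1(\theta,\eta)-l_1(\theta,\eta)]\to 0$. The same dominated-convergence argument shows that $\psi$ is continuous on $\Theta$.

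\emph{Finite cover and conclusion.} Fix $\epsilon>0$. For each $\theta$, choose $\eta_\theta$ with $\E[u_1(\theta,\eta_\theta)]-\E[l_1(\theta,\eta_\theta)]<\epsilon$. The open balls $B(\theta,\eta_\theta)$ cover $\Theta$, so compactness (closed and bounded in $\mathbb{R}^k$) gives a finite subcover indexed by $\theta_1,\dots,\theta_J$. Applying the ergodic theorem to the $2J$ sequences $u_i(\theta_j,\eta_j)$ and $l_i(\theta_j,\eta_j)$, almost surely all $2J$ averages converge to their means. Take any $\theta\in B(\theta_j,\eta_j)$. On the upper side,
\begin{align*}
\bar Y_n(\theta)-\psi(\theta)\le \bar u_n(\theta_j,\eta_j)-\E[l_1(\theta_j,\eta_j)],
\end{align*}
and the right-hand side has $\limsup_n$ at most $\E[u_1]-\E[l_1]<\epsilon$. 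On the lower side,
\begin{align*}
\bar Y_n(\theta)-\psi(\theta)\ge \bar l_n(\theta_j,\eta_j)-\E[u_1(\theta_j,\eta_j)],
\end{align*}
and the right-hand side has $\liminf_n$ at least $-\epsilon$. Because $J$ is finite, it follows that $\limsup_n\sup_{\theta\in\Theta}|\bar Y_n(\theta)-\psi(\theta)|\le\epsilon$ almost surely. Since $\epsilon$ was arbitrary, we get almost-sure uniform convergence, which implies the claimed convergence in probability.

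\emph{Main obstacle.} The delicate step is the local-oscillation step. We must justify that the envelopes are measurable and inherit stationarity and ergodicity. Measurability follows from continuity via the countable supremum. Stationarity and ergodicity follow because the envelopes are time-invariant transformations of the underlying ergodic process, so the invariant $\sigma$-field is unchanged. The domination by $Z$ is used exactly here, to pass the limit $\eta\to 0$ inside the expectation.
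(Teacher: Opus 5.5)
The paper gives no proof of this statement; it quotes the result from Hansen (1982). Your argument is the standard proof of that lemma and is correct: you apply the ergodic theorem to upper and lower envelopes over a finite cover of the compact $\Theta$, and domination gives $\E[u_1(\theta,\eta)-l_1(\theta,\eta)]\to 0$. This is often phrased equivalently through the modulus of continuity $\sup_{\|\theta'-\theta\|\le\eta}|Y_1(\theta')-Y_1(\theta)|$, and it in fact yields almost sure uniform convergence, which is stronger than the stated convergence in probability.

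The step where the envelopes inherit stationarity and ergodicity uses the reading $Y_i(\theta)=f(x_i,\theta)$ with $\{x_i\}$ stationary ergodic. You adopt this implicitly, and it is exactly the setting in which the paper applies the theorem.
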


\begin{theorem}[Central Limit Theorem for Stationary Ergodic Processes \citep{hansen1982}] \label{thm:clt-se}
    Let $X_1, \dots, X_n$ be from a stationary ergodic process. Assume $(X_t)$ is $\alpha$-mixing with mixing coefficients satisfying
    \begin{align*}
        \sum_{k=1}^\infty \alpha(k)^{\delta/(2+\delta)} < \infty.
    \end{align*}
    As $n \rightarrow \infty$, %
    \begin{align*}
        \sqrt{n} (\Bar{X}_n - \theta) \rightarrow_d \mathcal{N}(0, \Omega_{\infty})
    \end{align*}
    where $\theta$ is the population mean and $\Omega_{\infty} = \sum_{k=-\infty}^{\infty} \mathrm{Cov}(X_0, X_k)$ is the long-run covariance.
\end{theorem}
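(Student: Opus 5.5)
The plan follows the classical Bernstein big-block/small-block argument of Ibragimov. Throughout I assume the implicit moment condition $\E\|X_0\|^{2+\delta}<\infty$ that accompanies the summability hypothesis $\sum_k \alpha(k)^{\delta/(2+\delta)}<\infty$; without it the statement cannot hold.

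\textbf{Reductions.} By the Cramér--Wold device it suffices to prove the result for scalar sequences. Fix $c\in\mathbb{R}^d$ and set $Y_t = c^\top (X_t-\theta)$. This process is stationary, $\alpha$-mixing with the same coefficients (it is a measurable function of $X_t$), and satisfies $\E|Y_0|^{2+\delta}<\infty$.

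\textbf{Step 1: the long-run variance.} I first show that $\Omega_\infty$ is well defined and identify the limit of $\V[\sqrt n \bar Y_n]$. Davydov's covariance inequality gives
\[
|\Cov(Y_0,Y_k)| \le 8\,\alpha(k)^{\delta/(2+\delta)}\,\|Y_0\|_{2+\delta}^2 .
\]
By the summability hypothesis the series $\sigma^2=\sum_{k\in\mathbb{Z}}\Cov(Y_0,Y_k)=c^\top\Omega_\infty c$ therefore converges absolutely. Stationarity then gives
\[
\V\Big[\tfrac{1}{\sqrt n}\textstyle\sum_{t\le n}Y_t\Big]=\sum_{|k|<n}\big(1-\tfrac{|k|}{n}\big)\Cov(Y_0,Y_k)\to\sigma^2
\]
by dominated convergence. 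If $\sigma^2=0$, Chebyshev's inequality yields convergence to the degenerate normal, so I may assume $\sigma^2>0$.

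\textbf{Step 2: blocking.} Choose integers $p=p_n$ and $r=r_n$ with $r\to\infty$, $r/p\to0$, $p/n\to0$ and $k_n\,\alpha(r)\to0$, where $k_n=\lfloor n/(p+r)\rfloor$ (for instance $p\approx n^{1/2}$ with $r$ growing slowly enough). Split $\{1,\dots,n\}$ into alternating big blocks of length $p$ and small blocks of length $r$, plus a remainder. Let $U_j$ and $V_j$ be the sums over the $j$-th big and small blocks, and $R$ the remainder sum. Applying the variance computation of Step 1 to each piece and using $\sum_k|\Cov(Y_0,Y_k)|<\infty$,
\[
\E\Big(\sum_j V_j\Big)^2=O(k_n r)=o(n), \qquad \E R^2=O(p+r)=o(n).
\]
Hence $n^{-1/2}(\sum_j V_j+R)\to_p0$, and it suffices to prove $n^{-1/2}\sum_j U_j\to_d\mathcal N(0,\sigma^2)$.

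\textbf{Step 3: decoupling the big blocks.} Consecutive big blocks are separated by gaps of length $r$. Iterating the Volkonskii--Rozanov inequality (the $\alpha$-mixing bound for products of bounded variables) on $\exp(itU_j/\sqrt n)$ gives
\[
\Big|\E\, e^{it\sum_j U_j/\sqrt n}-\prod_j \E\, e^{itU_j/\sqrt n}\Big|\le 16\,k_n\,\alpha(r)\to0 .
\]
So it is enough to prove the CLT for i.i.d.\ copies $U_j'$ of $U_1$. Their variance satisfies $k_n\V[U_1]/n\to\sigma^2$ by Step 1, since $k_np/n\to1$.

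\textbf{Step 4: Lindeberg condition (main obstacle).} For the independent copies I need Lindeberg's condition
\[
\frac{k_n}{n}\,\E\big[U_1^2\,\ind{|U_1|>\epsilon\sqrt n}\big]\to0 .
\]
This step forces the choice of $p$ and $r$ in Step 2. It would follow from uniform integrability of $U_1^2/p$, which in turn holds if $\E|U_1|^{2+\delta'}\le Cp^{1+\delta'/2}$ for some $\delta'>0$: the quantity above is then $O\big(\E|U_1|^{2+\delta'}/(p\,n^{\delta'/2})\big)=O\big((p/n)^{\delta'/2}\big)\to0$. My first attempt would be Yokoyama's moment inequality for mixing sums under the same summability hypothesis. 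If that proves hard to verify, I would fall back on a truncation argument: split $Y_t=Y_t\ind{|Y_t|\le M}+Y_t\ind{|Y_t|>M}$, run Steps 1--3 on the bounded part (where Lindeberg is immediate because $|U_1|\le Mp=o(\sqrt n)$ for $p=o(\sqrt n)$), and show the tail part contributes variance $\sup_n\V\big[n^{-1/2}\sum_t Y_t\ind{|Y_t|>M}\big]\to0$ as $M\to\infty$. This last bound again uses Davydov's inequality together with $\|Y_0\ind{|Y_0|>M}\|_{2+\delta}\to0$. A standard approximation (Billingsley's Theorem 3.2) then completes the argument.
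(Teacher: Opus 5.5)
The paper gives no proof of this statement. It is imported as a black box (under $\alpha$-mixing with the $\alpha(k)^{\delta/(2+\delta)}$ summability condition it is Ibragimov's CLT), so your Bernstein blocking argument is a self-contained route rather than a reconstruction of anything in the paper. You are right that the statement as written omits $\E\|X_0\|^{2+\delta}<\infty$ and leaves $\delta>0$ unquantified. A moment hypothesis of this kind is needed: i.i.d.\ sequences have $\alpha(k)=0$ for $k\ge 1$, so the statement already fails for heavy-tailed i.i.d.\ data without one. Steps 1--3 are sound. Davydov's inequality, the bound $\E(\sum_{t\in T}Y_t)^2\le |T|\sum_k|\Cov(Y_0,Y_k)|$ for any index set $T$, and Volkonskii--Rozanov decoupling are what is needed.

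The weak point is Step 4. Yokoyama's inequality does not hold ``under the same summability hypothesis''. It needs $\E|Y_0|^{r+\eta}<\infty$ with $r>2$ and $\sum_k k^{r/2-1}\alpha(k)^{\eta/(r+\eta)}<\infty$. With only $2+\delta$ moments this reads $\sum_k k^{\delta'/2}\alpha(k)^{(\delta-\delta')/(2+\delta)}<\infty$, which fails for every $\delta'\in(0,\delta)$ when, e.g., $\alpha(k)^{\delta/(2+\delta)}\asymp 1/(k\log^2 k)$, even though that rate satisfies your hypothesis. So drop that branch: the truncation argument is the one that closes the proof. It does work, but you need to supply two details.

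(i) Block sizes. Truncation needs $p_n=o(\sqrt n)$ strictly, not $p\approx\sqrt n$. Then $k_n\alpha(r_n)\approx (n/p_n)\alpha(r_n)\to 0$ forces $r_n$ to grow \emph{fast} enough, not slowly, while keeping $r_n=o(p_n)$. This is possible only because $\alpha$ is nonincreasing, so summability gives $k\,\alpha(k)^{\delta/(2+\delta)}\to 0$, i.e.\ $\alpha(k)=o(k^{-1-2/\delta})$. Taking $p_n=\lfloor\sqrt n/\log n\rfloor$ and $r_n=\lfloor\sqrt n/\log^2 n\rfloor$ then gives $(n/p_n)\alpha(r_n)=o\bigl(n^{-1/\delta}(\log n)^{3+4/\delta}\bigr)\to 0$. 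It also gives $|U_1|\le 2Mp_n=o(\sqrt n)$ for the truncated part, so its Lindeberg condition is trivial.

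(ii) Matching limits. Billingsley's Theorem 3.2 also requires $\sigma_M\to\sigma$, where $\sigma_M^2$ is the long-run variance of the truncated sequence. Write $S_n=S_n^{\le M}+S_n^{>M}$ for the centered truncated and tail partial sums. The reverse triangle inequality in $L^2$ gives $\bigl|\V[S_n]^{1/2}-\V[S_n^{\le M}]^{1/2}\bigr|\le \V[S_n^{>M}]^{1/2}$. Divide by $\sqrt n$, let $n\to\infty$, and use your uniform-in-$n$ tail variance bound to get $|\sigma-\sigma_M|\to 0$ as $M\to\infty$.
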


\subsubsection{\pref{thm:gmm-efficiency}, GMM Efficiency}

\gmmefficiency*

This is a standard result from \cite{hansen1982}.

For completeness, we include a proof of the efficient GMM estimator for sampling without replacement. In particular, consider sample quantities
\begin{align*}
    f_n (\theta) &= \frac{1}{n} \sum_{i=1}^n f(x_i, \theta) \\
    G_n (\theta) &= \frac{1}{n} \sum_{i=1}^n \grad f(x_i, \theta) \\
    \Omega_n (\theta) &= \frac{1}{n} \sum_{i=1}^n f(x_i, \theta) f(x_i, \theta)^\top
\end{align*}
and population quantities
\begin{align*}
    \E[f(\theta)] &= \frac{1}{N} \sum_{i=1}^N f(x_i, \theta) \\
    G(\theta) &= \frac{1}{N} \sum_{i=1}^N \grad f(x_i, \theta) \\
    \Omega (\theta) &= \frac{1}{N} \sum_{i=1}^N f(x_i, \theta) f(x_i, \theta)^\top.
\end{align*}

Consider the specific form of ULLN and CLT for sampling without replacement.

\begin{theorem}[Uniform Law of Large Numbers for Sampling without Replacement \citep{horowitz1990uniformlln}] \label{thm:ulln-swor}
    Let $Y_1(\theta), \dots, Y_n (\theta)$ be a simple random sample without replacement from finite population $\{Y_1 (\theta), \dots, Y_N (\theta)\}$, where $n/N = \nu$, $0 < \nu < 1$. Define
    \begin{align*}
        \psi(\theta) = \frac{1}{N} \sum_{i=1}^N Y_i (\theta), \;\;\;\; \bar{Y}_n (\theta) = \frac{1}{n} \sum_{i=1}^n Y_i (\theta).
    \end{align*}
    Assume a finite mean $\psi (\theta) < +\infty$ for all $\theta$ in a closed bounded set $\Theta \subseteq \mathbb{R}^k$. Assume $Y_i (\theta)$ is continuous at each $\theta \in \Theta$. Assume that $Y_i (\theta)$ is dominated; there exists a random variable $Z$ with a finite mean such that $\sup_{\theta \in \Theta} |Y_i (\theta)| \leq Z$. Then
    \begin{align*}
        \sup_{\theta \in \Theta} |\bar{Y}_n (\theta) - \psi(\theta)| \rightarrow_p 0
    \end{align*}
\end{theorem}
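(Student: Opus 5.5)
The plan is the classical Glivenko--Cantelli ``bracketing'' argument. Compactness of $\Theta$ and continuity reduce the uniform statement to finitely many pointwise statements. Those pointwise statements are then handled with the exact first two moments of a simple random sample without replacement.

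Since the finite population $\{Y_1(\theta),\dots,Y_N(\theta)\}$ changes with $N$ (a triangular array), I read the hypotheses uniformly in $N$:
\begin{itemize}[nosep, leftmargin=2em]
    \item the envelope $Z$ is uniformly integrable over the populations, i.e.\ $\sup_N \frac{1}{N}\sum_{j} Z_j \ind{Z_j > M} \to 0$ as $M \to \infty$;
    \item continuity in $\theta$ holds in the averaged sense below.
\end{itemize}
These are the conditions under which the result of \cite{horowitz1990uniformlln} is stated.

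\textbf{Pointwise law of large numbers.} Fix a function $h$ of the population units with $|h| \leq Z$. Truncate as $h = h\ind{Z \leq M} + h\ind{Z > M}$ and treat the two parts separately.
\begin{itemize}[nosep, leftmargin=2em]
    \item \emph{Truncated part.} Under sampling without replacement the sample mean is unbiased. Its variance is $\frac{1-\nu}{n}\cdot\frac{N}{N-1}\,\sigma_{N,M}^2$, where $\sigma_{N,M}^2 \leq M^2$ is the population variance of the truncated values. Chebyshev's inequality therefore gives convergence in probability at rate $O(M^2/n)$.
    \item \emph{Tail part.} The sample mean of $|h|\ind{Z>M}$ is nonnegative. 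Its expectation equals the population mean, which is at most $\frac{1}{N}\sum_j Z_j\ind{Z_j>M}$ and is uniformly small by uniform integrability. Markov's inequality then controls it.
\end{itemize}
Choosing $M$ large and then letting $n \to \infty$ with $n/N = \nu$ gives $\bar h_n - \bar h_N \to_p 0$.

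\textbf{Bracketing.} For $\theta \in \Theta$ and $\delta>0$, define the brackets
\begin{align*}
u_j(\theta,\delta) = \sup_{\|\theta'-\theta\|<\delta} Y_j(\theta'), \qquad l_j(\theta,\delta) = \inf_{\|\theta'-\theta\|<\delta} Y_j(\theta').
\end{align*}
Both are dominated by $Z_j$. By continuity of each $Y_j(\cdot)$, $u_j - l_j \to 0$ as $\delta \to 0$. Dominated convergence (the uniform-in-$N$ version, via uniform integrability of $Z$) then gives $\sup_N \frac{1}{N}\sum_j (u_j - l_j) \to 0$. Given $\epsilon > 0$, pick for each $\theta$ a radius $\delta_\theta$ making this bracket gap at most $\epsilon$, and use compactness to extract a finite subcover $\theta^{(1)},\dots,\theta^{(K)}$.

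\textbf{Sandwich.} For $\theta$ in the $k$-th ball, $\bar l^{(k)}_n \leq \bar Y_n(\theta) \leq \bar u^{(k)}_n$ and $\bar l^{(k)}_N \leq \psi(\theta) \leq \bar u^{(k)}_N$. Hence
\begin{align*}
\sup_\theta |\bar Y_n(\theta)-\psi(\theta)| \leq \max_{k} \Big( |\bar u^{(k)}_n - \bar u^{(k)}_N| + |\bar l^{(k)}_n - \bar l^{(k)}_N| \Big) + \epsilon.
\end{align*}
The maximum is over only $2K$ fixed functions, so the pointwise law of large numbers together with a union bound makes it $o_p(1)$. Since $\epsilon$ was arbitrary, the theorem follows.

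\textbf{Main obstacle.} I expect the hardest step to be making the bracketing step uniform over the changing finite populations. Plain dominated convergence applies to one fixed measure, not to a sequence of empirical population measures. The cleanest remedy I would try first is to assume that the population empirical measures converge weakly to a limit law $P$, with $Z$ uniformly integrable. One then chooses the cover using $P$ and transfers the bracket-gap bound to large $N$, using continuity of $u-l$ in the unit $x$ (or an equicontinuity assumption on $Y(\cdot)$ if that fails). The remaining steps are routine.
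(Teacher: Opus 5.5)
The paper gives no proof of this statement; it is imported from the cited reference. So your proposal has to stand on its own, and as written it has a genuine gap.

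You are right that, as stated, the domination and continuity hypotheses are vacuous for each fixed finite population, so some structure linking the populations across $N$ is needed. Your truncation-plus-Chebyshev pointwise law is correct, including the finite-population variance $\frac{1-\nu}{n}\frac{N}{N-1}\sigma_{N,M}^2$, and so is the sandwich step.

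The gap is the bracketing step, exactly where you suspect. You need, for each $\theta$, an $N$-independent radius $\delta_\theta$ with $\frac1N\sum_j(u_j-l_j)\le\epsilon$ for all large $N$. You derive this from continuity of each $Y_j$ plus uniform integrability of $Z$, through a ``uniform-in-$N$ dominated convergence'' that does not exist. No argument can supply it from those two hypotheses, because under them the conclusion fails.

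Here is a counterexample. Take $\Theta=[0,1]$ and place the $\binom{N}{n}$ subsets $s\subseteq[N]$ of size $n$ at distinct points $t_s\in[0,1]$. Let $Y_j$ be the piecewise-linear interpolation of $t_s\mapsto\ind{j\in s}$. Each $Y_j$ is continuous with $|Y_j|\le 1$, so $Z\equiv 1$ is trivially uniformly integrable. Yet for the realized sample $s^\ast$ one has $\bar Y_n(t_{s^\ast})=1$ and $\psi(t_{s^\ast})=\nu$. Hence $\sup_\theta|\bar Y_n(\theta)-\psi(\theta)|\ge 1-\nu$ with probability one. Since the rest of your argument is sound, the uniform bracket-gap bound must fail here.

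So averaged equicontinuity has to be an explicit hypothesis. For example, require $\lim_{\delta\to 0}\limsup_{N}\frac1N\sum_j\bigl(u_j(\theta,\delta)-l_j(\theta,\delta)\bigr)=0$ for every $\theta$, with the sup and inf taken over $\theta'\in\Theta$. Your first bullet hints at this, but the proof then tries to derive it instead. With it assumed, your argument is a complete proof of that modified statement.

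Your closing remedy is workable but only sketched. It needs $u-l$ continuous in the unit, which requires joint continuity of $Y$ and closed balls. With open balls, $u-l$ is merely lower semicontinuous, which gives the wrong inequality under weak convergence.

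A cleaner way to supply the missing structure is a superpopulation model, which also matches the statement's wording (``a random variable $Z$ with a finite mean''). Set $Y_j(\theta)=f(X_j,\theta)$, with $X_1,\dots,X_N$ i.i.d.\ from $P$, $f(x,\cdot)$ continuous, $\sup_\theta|f(x,\theta)|\le Z(x)$ and $\E_P Z<\infty$. Draw the sample independently of the $X_j$. A simple random sample without replacement from an i.i.d.\ population is then itself i.i.d.\ from $P$.

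Under this model, apply the i.i.d.\ case of \pref{thm:ulln-se} separately to $\bar Y_n$ and to $\psi$, each against $\E_P f(X,\theta)$. The triangle inequality then proves the theorem.

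If you want to keep your bracketing proof, choose $\delta_\theta$ by ordinary dominated convergence under the single measure $P$ and extract the finite cover. Then transfer $\E_P[u-l]<\epsilon$ to the population bracket gaps by the strong law for those finitely many functions. This needs no continuity in the unit and no weak convergence.
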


\begin{theorem}[Central Limit Theorem for Sampling without Replacement \citep{zuzana2009asymptotics-finite-sampling}] \label{thm:clt-swor}
    When sampling $n$ elements from $N$ elements uniformly at random, where $n/N = \nu$, $0 < \nu < 1$. As $n, N \rightarrow \infty$, %
    \begin{align*}
        \sqrt{\frac{n}{1 - \nu}} (\Bar{X}_n - \theta) \rightarrow_d \mathcal{N}(0, F)
    \end{align*}
    where $\theta, F$ are the population mean and covariance matrix.
\end{theorem}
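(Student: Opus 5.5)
We would prove this by reducing to the scalar case and then using Hájek's coupling of simple random sampling without replacement with independent Bernoulli sampling. Throughout, we regard $\{x_1,\dots,x_N\}$ as a triangular array of finite populations indexed by $N$. Let the population mean be $\theta=\theta_N$ and the population covariance be $F=F_N$, and assume $F_N\to F$ with $F$ positive definite. We also need the standard Lindeberg--Hájek negligibility condition
\begin{align*}
\frac{1}{N}\sum_{i=1}^N \|x_i-\theta\|^2\,\ind{\|x_i-\theta\| > \eta\sqrt{N}} \to 0 \quad \text{for every } \eta>0,
\end{align*}
which holds, for example, if the $x_i$ are uniformly bounded. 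The statement leaves this implicit, so we would add it as a hypothesis.

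\textbf{Step 1 (reduction and variance).} By Cramér--Wold it suffices to show that $\sqrt{n/(1-\nu)}\,c^\top(\bar X_n-\theta)\to_d \mathcal{N}(0,c^\top F c)$ for each fixed $c\in\mathbb{R}^d$. Put $y_i=c^\top(x_i-\theta)$, so that $\sum_i y_i=0$. The exact variance of the sample mean under sampling without replacement is
\begin{align*}
\V[\bar y_n]=\frac{1-n/N}{n}\cdot\frac{N}{N-1}\cdot\frac{1}{N}\sum_{i=1}^N y_i^2 .
\end{align*}
Multiplying by $n/(1-\nu)$ and using $n/N\to\nu$, this tends to $c^\top F c$. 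So the normalization is the correct one, and only asymptotic normality remains.

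\textbf{Step 2 (Bernoulli coupling).} Draw $\xi_i\sim\mathrm{Bernoulli}(\nu)$ independently and let $T=\sum_i \xi_i y_i$. Couple the without-replacement sample $s$ with the Bernoulli sample $\{i:\xi_i=1\}$: if the Bernoulli sample has more than $n$ units, delete a uniformly random subset of the excess; if it has fewer, add a uniformly random set of missing units. Then
\begin{align*}
\textstyle\sum_i \xi_i y_i \;=\; \sum_i \xi_i (y_i-\bar y)
\end{align*}
(the two sides agree because $\bar y=0$) is a sum of independent, non-identically distributed terms. Its variance is $\nu(1-\nu)\sum_i y_i^2$, and the negligibility condition gives exactly the Lindeberg condition for these summands. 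The Lindeberg--Feller CLT then yields $T/\sqrt{\nu(1-\nu)\sum_i y_i^2}\to_d\mathcal{N}(0,1)$.

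\textbf{Step 3 (the coupling error, the main obstacle).} We must show that $\sum_{i\in s}y_i - T = o_p(\sqrt N)$. The number of units added or removed is $|\sum_i\xi_i - n| = O_p(\sqrt N)$. These units form a uniformly random subset of the appropriate pool, so, conditionally on the pool, their $y$-values are a without-replacement draw of size $K=O_p(\sqrt N)$. The conditional mean of their sum is $K$ times the pool average, and the pool average is itself $O_p(N^{-1/2})$ times the population scale, because the pool is a random subset of a population with $\bar y=0$. The conditional variance is $O(K\cdot \tfrac1N\sum_i y_i^2)=O_p(\sqrt N)$. Hence the error is $O_p(N^{1/4})=o_p(\sqrt N)$. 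This bookkeeping, especially controlling the pool mean uniformly in $N$, is the delicate part.

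If Step 3 proves messy, we would instead use a martingale CLT. Draw the sample sequentially and set $D_k = y_{(k)} - \bar y_{\text{remaining before draw }k}$. These are martingale differences, and $\sum_k D_k$ equals $\sum_{i\in s}y_i$ up to explicit deterministic weights. We would then check the conditional-variance and conditional-Lindeberg conditions directly.

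\textbf{Conclusion.} Dividing $\sum_{i\in s}y_i$ by $n$ and combining Steps 1--3 with Slutsky's lemma gives $\sqrt{n/(1-\nu)}(\bar y_n)\to_d\mathcal{N}(0,c^\top F c)$. Cramér--Wold then gives the vector statement.
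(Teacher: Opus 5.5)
Your argument is correct, but it does not follow the paper, because the paper gives no proof of this statement at all; it simply cites the Pr\'a\v{s}kov\'a--Sen survey. What you have written is essentially H\'ajek's coupling proof of the Erd\H{o}s--R\'enyi--H\'ajek theorem, and the steps check out:
\begin{itemize}
\item Cram\'er--Wold reduces the vector statement to the scalar one.
\item The exact without-replacement variance shows that $\sqrt{n/(1-\nu)}$ is the right normalization.
\item Lindeberg--Feller gives the limit for the Bernoulli total $T$.
\item The add/delete coupling transfers that limit to the without-replacement sum.
\end{itemize}

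Step 3 is easier than you fear. Conditionally on the Bernoulli sample, the pool mean is exactly $T/M$ or $-T/(N-M)$. Chebyshev on $T$ together with $M/N\to_p\nu$ then gives $K\,\bar y_{\mathrm{pool}}=O_p(1)$. The conditional without-replacement variance gives a fluctuation of order $O_p(\sqrt K)=O_p(N^{1/4})$. So the martingale fallback is not needed.

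What your route buys over the bare citation is that it makes the hypotheses explicit, and this matters. The statement as printed only makes sense for a sequence of finite populations with $F_N\to F$. It is also false without a Lindeberg-type negligibility condition. A single unit with $|y_i|\asymp\sqrt N$ carries a fixed share of the variance. Whether it is sampled (probabilities $\nu$ and $1-\nu$) adds a two-point component to the limit, so the limit is not Gaussian. The condition you added is therefore a necessary repair of the statement, not a convenience; the citation buys only brevity.
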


\begin{theorem}[Optimal GMM Estimator for Sampling without Replacement] \label{thm:gmm-efficiency-swor}
    Consider a sample without replacement $x_1, \dots, x_n$ from a larger population of $N$ elements, where $n/N = \nu$ and $0 < \nu < 1$ is a constant.
    Given the sample, the GMM estimator of $\theta$ is defined as
    \begin{align*}
        \hat{\theta}_n = \arg \min_{\theta \in \Theta \subseteq \mathbb{R}^r} f_n (\theta)^\top W f_n (\theta).
    \end{align*}
    Assume the regularity conditions from \pref{app:reg-conditions} hold.
    Then $\hat{\theta}_n$ is asymptotically normal,
    \begin{align*}
        \sqrt{\frac{n}{1 - \nu}} (\hat{\theta}_n - \theta_0) \rightarrow_d \mathcal{N}(0, (G(\theta_0)^\top W G(\theta_0))^{-1} G(\theta_0)^\top W \Omega(\theta_0) W G(\theta_0) (G(\theta_0)^\top W G(\theta_0))^{-1}).
    \end{align*}
    With optimal choice $W = \Omega (\theta_0)$, $\gmm_n$ is asymptotically normal as
    \begin{align*}
        \sqrt{\frac{n}{1 - \nu}} (\gmm_n - \theta_0) \rightarrow_d \mathcal{N}(0, (G(\theta_0)^\top W G(\theta_0))^{-1}.
    \end{align*}
    and is asymptotically efficient among the class of all regular estimators that only use the information contained in the moment conditions.
\end{theorem}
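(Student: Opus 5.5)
The proof adapts Hansen's standard argument, with the finite-population CLT and ULLN for sampling without replacement (\pref{thm:clt-swor}, \pref{thm:ulln-swor}) replacing the stationary-ergodic versions. The two usual stages are consistency, then asymptotic normality by linearizing the first-order condition. Throughout, $\theta_0$ is the unique zero of the finite-population moment $\E[f(\theta)]=\frac{1}{N}\sum_{i=1}^N f(x_i,\theta)$.

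\textbf{Consistency.} By \pref{thm:ulln-swor} and the domination condition, $f_n(\theta)\to_p \E[f(\theta)]$ uniformly on the compact set $\Theta$. Hence the criterion $f_n(\theta)^\top W f_n(\theta)$ converges uniformly in probability to $Q_0(\theta)=\E[f(\theta)]^\top W\,\E[f(\theta)]$. Because $W$ is positive definite and the identification condition holds, $Q_0$ is continuous and uniquely minimized at $\theta_0$. The standard argmin (extremum-estimator) argument then gives $\hat\theta_n\to_p\theta_0$.

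\textbf{Linearization.} Since $\theta_0$ is interior, with probability tending to one the estimator satisfies the first-order condition $G_n(\hat\theta_n)^\top W f_n(\hat\theta_n)=0$. A mean-value expansion gives $f_n(\hat\theta_n)=f_n(\theta_0)+G_n(\bar\theta)(\hat\theta_n-\theta_0)$ for some $\bar\theta$ between the two, and solving yields
\begin{align*}
\hat\theta_n-\theta_0=-\big(G_n(\hat\theta_n)^\top W G_n(\bar\theta)\big)^{-1}G_n(\hat\theta_n)^\top W f_n(\theta_0).
\end{align*}
The ULLN applied to $G_n$, together with consistency and continuity of $G$, gives $G_n(\hat\theta_n),G_n(\bar\theta)\to_p G(\theta_0)$. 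Since $G(\theta_0)$ has full column rank $r$, the matrix $G^\top W G$ is invertible, so the inverse above converges in probability.

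\textbf{Limit distribution and efficiency.} \pref{thm:clt-swor} applied to the vectors $f(x_i,\theta_0)$, which have population mean $0$ and population covariance $\Omega(\theta_0)$, gives $\sqrt{n/(1-\nu)}\,f_n(\theta_0)\to_d\mathcal N(0,\Omega(\theta_0))$. Slutsky's lemma then yields the sandwich covariance $(G^\top WG)^{-1}G^\top W\Omega WG(G^\top WG)^{-1}$. The optimal weight here is $W=\Omega^{-1}$; the statement's ``$W=\Omega(\theta_0)$'' should be read this way, consistent with \pref{thm:gmm-efficiency}. With this choice the sandwich collapses to $(G^\top\Omega^{-1}G)^{-1}$. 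For any other $W$, the difference between the sandwich and $(G^\top\Omega^{-1}G)^{-1}$ equals $C\,\Omega\,C^\top$ for some matrix $C$, which is positive semidefinite. This is the usual projection argument: write $\Omega^{1/2}$ and use that $I-P$ is PSD for an orthogonal projection $P$. So the optimal weight is best within the GMM class. In practice $\Omega$ is unknown, so one uses a consistent estimate $\Omega_n(\tilde\theta)$ built from a preliminary consistent $\tilde\theta$. Its consistency follows from the ULLN, and Slutsky shows the limit is unchanged.

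\textbf{Main obstacle.} The hard part is efficiency among all regular estimators that use only the moment information. I would argue this through the semiparametric bound: the efficient influence function for a model defined by $\E[f]=0$ is $-(G^\top\Omega^{-1}G)^{-1}G^\top\Omega^{-1}f$ (Chamberlain; Hansen), and the GMM estimator attains it. For the finite-population setting I would rescale by the factor $1-\nu$ and invoke this bound rather than rederive it.
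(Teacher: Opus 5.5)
Up to the efficiency claim, your argument is the paper's Hansen/McFadden proof. The paper organizes consistency and $\sqrt{n}$-boundedness around $Q_n(\hat\theta_n)\le Q_n(\theta_0)$ and the stochastic boundedness of $\frac{n}{1-\nu}Q_n$, rather than the argmin theorem; that difference is cosmetic. Both versions tacitly need identification uniform in $N$, since the finite-population criterion and $\theta_0$ change with $N$. You are right that the optimal weight is $\Omega(\theta_0)^{-1}$; that is what makes the stated limit $(G^\top W G)^{-1}$ correct.

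The genuine difference is the final claim of efficiency among all regular estimators. The paper asserts that any competing estimator is asymptotically of the form $A^\top f_n(\theta_0)$ with $A^\top G=I$. It then solves $\min_{A^\top G=I}A^\top\Omega A=(G^\top\Omega^{-1}G)^{-1}$. This is the same constrained problem whose Lagrange condition $\Omega A^*=G\Lambda$ it reuses in \pref{thm:dist-of-gmm-diff}. That route is elementary and stays inside the design-based without-replacement setting. However, it only proves efficiency among estimators that are linear in the sample moments, and the restriction to that class is asserted rather than derived.

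Your appeal to Chamberlain's bound aims at the stronger statement, but ``rescale by $1-\nu$'' is not a formality. That bound concerns i.i.d.\ sampling from a semiparametric model. Here the target is a finite-population parameter, and the only randomness is in which units are drawn.

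One way to close the step is a superpopulation model in which $x_1,\dots,x_N$ are i.i.d.\ from $P$. The sample is then i.i.d.\ and independent of the $N-n$ unsampled units. Linearize $\theta_0$ around the superpopulation parameter, and write $V=(G^\top\Omega^{-1}G)^{-1}$. The error of any regular estimator then splits into two pieces. The first is an in-sample part whose asymptotic variance is at least $(1-\nu)^2V/n$, by the convolution theorem. The second is an independent out-of-sample part with variance $\nu(1-\nu)V/n$. These add to exactly the claimed $(1-\nu)V/n$.

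Without an argument of this kind, your last step is a citation rather than a proof. The paper's constrained-minimization route, by contrast, fully proves the weaker, linear-class version of the efficiency statement.
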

\begin{proof}
    The proof follows similarly to the standard GMM proof for IID samples from \cite{mcfadden1999econometricsbook}, Chapter 3, Theorem 3.1.

    Let $Q_n = f_n (\theta)^\top W f_n (\theta)$.

    For every fixed $\theta$, $f_n (\theta)$, $G_n(\theta)$, and $\Omega_n (\theta)$ converge in probability to $\E[f(x, \theta)]$, $G(\theta)$, and $\Omega(\theta)$, respectively, using \pref{thm:ulln-swor}. Furthermore, 
    \begin{align*}
        \Omega(\theta_0)^{-1/2} \sqrt{\frac{n}{1 - \nu}} (f_n(\theta_0)) \equiv U_n \rightarrow_d U \sim \mathcal{N}(0, I).
    \end{align*}
    For each fixed $\theta$, the uniform law of large numbers implies that $f_n(\theta)\rightarrow_p \E[f(\theta)]$. This implies that $Q_n (\theta_0) \rightarrow_p \E[f(\theta_0)]^\top W \E[f(\theta_0)] = 0$. Thus, $\frac{n}{1-\nu} \cdot Q_n (\theta_0)$ is stochastically bounded ($\frac{n}{1-\nu} \cdot Q_n (\theta_0)$ does not go to infinity in probability).
    
    Consider any estimator $T_n^*$ that satisfies $Q_n(T_n^*) \rightarrow_p 0$. For each fixed $\theta$, \pref{thm:ulln-swor} implies that $f_n(\theta)\rightarrow_p \E[f(\theta)]$. Then applying \pref{thm:func-of-uniform}, $Q_n (\theta) \rightarrow_p \E[f(\theta)]^\top W \E[f(\theta)]$. Since $\E[f(\theta)] = 0$ only when $\theta = \theta_0$, then $T_n^* \rightarrow_p \theta_0$. $T_n^*$ is consistent.
    
    Consider any estimator $T_n^*$ that satisfies $\frac{n}{1-\nu} \cdot Q_n(T_n^*)$ is stochastically bounded. This implies that $Q_n(T_n^*) \rightarrow_p 0$, which means $T_n^* \rightarrow_p \theta_0$. The mean value theorem and the CLT condition for $f_n (\theta)$ give
    \begin{align*}
        \sqrt{\frac{n}{1 - \nu}} (f_n(T_n^*)) &= \sqrt{\frac{n}{1 - \nu}} (f_n(\theta_0)) - G_n (T_n^{\textsc{mid}}) \sqrt{\frac{n}{1 - \nu}} (T_n^* - \theta_0) \\
        \sqrt{\frac{n}{1 - \nu}} (f_n(T_n^*)) &= \Omega(\theta_0)^{1/2} U_n - G_n (T_n^{\textsc{mid}}) \sqrt{\frac{n}{1 - \nu}} (T_n^* - \theta_0)
    \end{align*}
    for some intermediate value $T_n^{\textsc{mid}}$. Then using the triangle inequality, $\norm{a-b}_W^2 \leq \norm{a}^2_W + \norm{b}^2_W$,
    \begin{align*}
        \norm{G_n (T_n^{\textsc{mid}}) \sqrt{\frac{n}{1 - \nu}} (T_n^* - \theta_0)}^2_{W} &\leq \norm{\sqrt{\frac{n}{1 - n/N}} (f_n(T_n^*))}^2_W + \norm{\Omega(\theta_0)^{1/2} U_n}^2_W \\
        \frac{n}{1 - \nu} \cdot (T_n^* - \theta_0) G_n (T_n^{\textsc{mid}})^\top W G_n (T_n^{\textsc{mid}}) (T_n^* - \theta_0) &\leq \frac{n}{1 - \nu} \cdot Q_n (T_n^*) + \frac{1}{2} U_n^\top \Omega(\theta_0)^{1/2} W \Omega(\theta_0)^{1/2} U_n
    \end{align*}
    $U_n$ and $\frac{n}{1 - \nu} \cdot Q_n (T_n^*)$ are stochastically bounded, so $\frac{n}{1 - \nu} \cdot (T_n^* - \theta_0) G_n (T_n^{\textsc{mid}})^\top W G_n (T_n^{\textsc{mid}}) (T_n^* - \theta_0)$ is stochastically bounded.
    
    Since we have $T_n^{\textsc{mid}} \rightarrow_p \theta_0$ and uniform convergence of $G_n (\theta)$, \pref{thm:continous-mapping} implies that $G_n(T_n^{\textsc{mid}})^\top W G_n (T_n^{\textsc{mid}}) \rightarrow_p G(\theta_0)^\top W G(\theta_0)$, where $G(\theta_0)^\top W G(\theta_0)$ is positive definite. Let $\lambda$ be the smallest eigenvalue of $G(\theta)^\top W G(\theta)$. Then in the probability limit,
    \begin{align*}
        \lambda \cdot \norm{\sqrt{\frac{n}{1 - \nu}} (T_n^* - \theta_0)}^2 \leq \frac{n}{1 - \nu} \cdot (T_n^* - \theta_0) G_n (T_n^{\textsc{mid}})^\top W G_n (T_n^{\textsc{mid}}) (T_n^* - \theta_0).
    \end{align*}
    Thus, $\sqrt{\frac{n}{1 - \nu}} \cdot (T_n^* - \theta_0)$ must be stochastically bounded. This in turn implies that $\sqrt{\frac{n}{1 - \nu}} (f_n(T_n^*))$ is stochastically bounded.
    
    Consider the estimator $T_n = \arg \min Q_n(\theta)$. We must have $Q_n(T_n) \leq Q_n(\theta_0)$. $Q_n (\theta_0)$ and $Q_n(T_n)$ are stochastically bounded, which implies that $T_n$ is consistent and $\sqrt{\frac{n}{1 - \nu}} (f_n(T_n))$ is stochastically bounded, by applying the previous steps. $T_n$ must satisfy the first order condition $G_n(T_n)^\top W \sqrt{\frac{n}{1 - \nu}} f_n(T_n) = 0$. Using the mean value theorem, the first order condition is equivalent to
    \begin{align*}
        - G_n(T_n)^\top W \Omega(\theta_0)^{1/2} U_n + G_n(T_n)^\top W G_n (T_n^{\textsc{mid}}) \sqrt{\frac{n}{1 - \nu}} (T_n - \theta_0)
    \end{align*}
    
    In the probability limit,
    \begin{align*}
        \sqrt{\frac{n}{1 - \nu}} (T_n - \theta_0) = (G_n(T_n)^\top W G_n (T_n^{\textsc{mid}}))^{-1} G_n(T_n)^\top W \Omega(\theta_0)^{1/2} U_n
    \end{align*}
    exists because $G_n(T_n^{\textsc{mid}})^\top W G_n (T_n) \rightarrow_p G(\theta_0)^\top W G(\theta_0)$, and $G(\theta)^\top W G(\theta)$ is positive definite. Then in the limit,
    \begin{align*}
        \sqrt{\frac{n}{1 - \nu}} (T_n - \theta_0) \rightarrow_d (G(\theta_0)^\top W G(\theta_0))^{-1} G(\theta_0)^\top W \Omega(\theta_0)^{1/2} U
    \end{align*}
    where $G_n(T_n^{\textsc{mid}})^\top W G_n (T_n)$ converges in probability and in distribution to $(G(\theta_0)^\top W G(\theta_0))^{-1}$, $G_n(T_n)^\top W \Omega(\theta_0)^{1/2}$ converges in probability and in distribution to $G(\theta_0)^\top W \Omega(\theta_0)^{1/2}$, and $U_n$ converges in
    distribution to $U$. Thus, 
    \begin{align*}
        \sqrt{\frac{n}{1 - \nu}} (T_n - \theta_0) \rightarrow_d \mathcal{N}(0, (G(\theta_0)^\top W G(\theta_0))^{-1} G(\theta_0)^\top W \Omega(\theta_0) W G(\theta_0) (G(\theta_0)^\top W G(\theta_0))^{-1}).
    \end{align*}
    The choice of $W = \Omega(\theta_0)$ leads to the smallest covariance of $(G(\theta_0)^\top \Omega(\theta_0) G(\theta_0))^{-1}$ using \pref{thm:opt-w}.

    Finally, following \cite{hansen1982} and Proposition 3.6 from \cite{DOVONON2025105723}, we justify why the choice of $W = \Omega(\theta_0)$ is asymptotically efficient.

    Consider any consistent regular estimator that only uses the moment conditions for estimation. It must lie in the span of the moment conditions and take the form,
    $A^\top f_n$, and have asymptotic variance $A^\top \Omega A$, since $\E[f_n(\theta_0) f_n(\theta_0)^\top] = \Omega$. The consistent estimator must also satisfy the first-order condition $A^\top G = I$. Consider the minimum possible variance that can be achieved,
    \begin{align*}
        \min_{A \text{ s.t. } A^\top G = I} A^\top \Omega A = (G^\top \Omega G)^{-1},
    \end{align*}
    thus demonstrating optimality and asymptotic efficiency among the class of all estimators that only use the moment conditions for any estimator that achieves asymptotic variance $(G^\top \Omega G)^{-1}$.
\end{proof}

\bayesandgmm*
\begin{proof}
    \begin{align*}
        \mathop{\E}_{\substack{x_i \sim \Pi}}[\ell(\bayes_{q \mid C(S)}, q&(x)) - \ell(\bayes_{q \mid C(S), A(S)}, q(x))] - \mathop{\E}_{\substack{x_i \sim \Pi}}[\ell(\gmm_{q \mid C(S)}, q(x)) - \ell(\gmm_{q \mid C(S), A(S)}, q(x))] \\
        &= \ell (\bayes_{q \mid C(S)}) - \ell (\bayes_{q \mid C(S), A(S)}) - \ell (\gmm_{q \mid C(S)}) + \ell (\gmm_{q \mid C(S), A(S)}) \\
        &= \ell (\bayes_{q \mid C(S)}) - \ell (\gmm_{q \mid C(S)}) + \ell (\gmm_{q \mid C(S), A(S)}) -\ell (\bayes_{q \mid C(S), A(S)}) \\ 
        &\leq o\left(\frac{1}{n}\right) + o\left(\frac{1}{n}\right) \tag{\pref{thm:diff-of-efficient-est}} \\
        &= o\left(\frac{1}{n}\right)
    \end{align*}    

    We can apply \pref{thm:diff-of-efficient-est} because both $\bayes$ and $\gmm$ are asymptotically efficient, asymptotically normal, and regular. 
\end{proof}

\begin{theorem} \label{thm:diff-of-efficient-est}
    Let $\hat{\theta}_1$ and $\hat{\theta}_2$ be any two asymptotically efficient and asymptotically normal regular estimators with respect to the same observation and information. Let $\ell(\theta)$ be a twice continuously differentiable loss function with $\nabla \ell(\theta_0) = 0$ and $\nabla^2 \ell(\theta_0)$ positive definite. Then $\ell(\hat{\theta}_1) - \ell(\hat{\theta}_2) = o(\frac{1}{n})$.
\end{theorem}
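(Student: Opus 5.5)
The plan is to combine the Hájek–Le Cam representation of efficient estimators with a second-order Taylor expansion of $\ell$ around $\theta_0$. Since $\hat\theta_1$ and $\hat\theta_2$ are both regular and asymptotically efficient for the same model and information, the convolution theorem (equivalently, the characterization of efficient estimators, e.g.\ \cite[Chapter 8 and Lemma 25.23]{vaart1998asymptotic}) implies that each is asymptotically linear with the \emph{same} efficient influence function $\psi$:
\begin{align*}
\sqrt{n}(\hat\theta_j - \theta_0) = \frac{1}{\sqrt n}\sum_{i=1}^n \psi(x_i) + o_p(1), \qquad j=1,2.
\end{align*}
For the dependent stationary setting, we use the limiting experiment: efficiency forces the limit to be the unique minimal element $Z$ of the convolution, and regularity gives joint convergence. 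Therefore $\sqrt{n}(\hat\theta_1 - \hat\theta_2) = o_p(1)$, i.e.\ $\hat\theta_1 - \hat\theta_2 = o_p(n^{-1/2})$, and each $\hat\theta_j - \theta_0 = O_p(n^{-1/2})$.

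Next, we expand $\ell$ to second order at $\theta_0$. Because $\nabla\ell(\theta_0)=0$ and $\ell$ is $C^2$,
\begin{align*}
\ell(\hat\theta_j) = \ell(\theta_0) + \tfrac12 (\hat\theta_j-\theta_0)^\top \nabla^2\ell(\bar\theta_j)(\hat\theta_j-\theta_0),
\end{align*}
where $\bar\theta_j$ lies on the segment between $\hat\theta_j$ and $\theta_0$. Consistency and continuity of $\nabla^2\ell$ give $\nabla^2\ell(\bar\theta_j) = \nabla^2\ell(\theta_0) + o_p(1)$. Writing $u_j = \sqrt n(\hat\theta_j-\theta_0) = O_p(1)$, we get $n\,\ell(\hat\theta_j) - n\,\ell(\theta_0) = \tfrac12 u_j^\top \nabla^2\ell(\theta_0) u_j + o_p(1)$.

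Subtracting the two expansions, and using $u_1 - u_2 = o_p(1)$ together with $u_1 + u_2 = O_p(1)$,
\begin{align*}
n\big(\ell(\hat\theta_1)-\ell(\hat\theta_2)\big) = \tfrac12 (u_1-u_2)^\top \nabla^2\ell(\theta_0)(u_1+u_2) + o_p(1) = o_p(1).
\end{align*}
This is the claim, interpreted in probability as $o_p(1/n)$. If an expectation version is needed, as used inside predictability, it would follow from uniform integrability of $n\|\hat\theta_j-\theta_0\|^2$, which we would add as a mild condition or derive from the BvM assumption.

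The main obstacle is the first step: justifying that two \emph{different} efficient regular estimators share the same asymptotic linear representation, so that their difference is $o_p(n^{-1/2})$ rather than only having matching marginal laws. Matching marginal distributions alone does not suffice. I would handle this with the standard argument that efficiency in the convolution theorem $Z \overset{d}{=} Z_0 + W$ forces $W=0$, and that in a LAN experiment a regular estimator with the minimal limit satisfies $\sqrt n(\hat\theta-\theta_0) - \tilde I^{-1}\Delta_n \to_p 0$, where $\Delta_n$ is the (common) central sequence. For the Bayes estimator, this is exactly the BvM assumption, which centers the posterior at $\theta_0 + \tilde I^{-1}\Delta_n/\sqrt n$. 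For the GMM estimator, it is the linear representation derived in the proof of \pref{thm:gmm-efficiency-swor}, with $A^\top f_n$ at the optimal $A$.
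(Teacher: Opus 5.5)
Your proposal is correct and takes the same route as the paper. Both estimators share the efficient influence function, so $\hat\theta_1-\hat\theta_2=o_p(n^{-1/2})$, and a second-order Taylor expansion at $\theta_0$ using $\nabla\ell(\theta_0)=0$ completes the argument. Your factorization $\tfrac12(u_1-u_2)^\top\nabla^2\ell(\theta_0)(u_1+u_2)$ is the correct way to handle the difference of the two quadratic forms. The paper instead rewrites that difference as $\tfrac12(\hat\theta_1-\hat\theta_2)^\top\nabla^2\ell(\theta_0)(\hat\theta_1-\hat\theta_2)$, which is not an identity; the $o(1/n)$ conclusion still holds, but only through your $o_p(1)\cdot O_p(1)$ argument.
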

\begin{proof}
Since both $\hat{\theta}_1$ and $\hat{\theta}_2$ are regular and asymptotically efficient, they admit the same influence function 
$\psi(X)$. Thus, we express each estimator as
\begin{align*}
    \sqrt{n}(\hat{\theta}_1 - \theta_0) &= \frac{1}{\sqrt{n}} \sum_{i=1}^n \psi(X_i) + o(1), \\
    \sqrt{n}(\hat{\theta}_2 - \theta_0) &= \frac{1}{\sqrt{n}} \sum_{i=1}^n \psi(X_i) + o(1).
\end{align*}
Taking the difference of the above, we have
\begin{align*}
    \sqrt{n}\big(\hat{\theta}_1 - \hat{\theta}_2\big) &= o(1) \\
    \hat{\theta}_1 - \hat{\theta}_2 &= o\left(\frac{1}{\sqrt{n}}\right)
\end{align*}

Next, following the delta method from \pref{lem:delta-method}, we perform a second-order Taylor expansion of $\ell(\hat{\theta})$ around $\theta_0$.
\begin{align*}
    \ell(\hat{\theta}) &= \ell(\theta_0) + \nabla \ell(\theta_0)^\top (\hat{\theta} - \theta_0) + \frac{1}{2}(\hat{\theta} - \theta_0)^\top \nabla^2 \ell(\theta_0)(\hat{\theta} - \theta_0) + o\left(\|\hat{\theta} - \theta_0\|^2\right) \\
    &= \ell(\theta_0) + \frac{1}{2}(\hat{\theta} - \theta_0)^\top \nabla^2 \ell(\theta_0)(\hat{\theta} - \theta_0) + o\left(\|\hat{\theta} - \theta_0\|^2\right) \tag{$\nabla \ell(\theta_0)=0$}
\end{align*}

Then consider $\ell(\hat{\theta}_1) - \ell(\hat{\theta}_2)$, and plug in the second order expansion.
\begin{align*}
    \ell(\hat{\theta}_1) - \ell(\hat{\theta}_2) &= \frac{1}{2}(\hat{\theta}_1 - \theta_0)^\top \nabla^2 \ell(\theta_0)(\hat{\theta}_1 - \theta_0) - \frac{1}{2}(\hat{\theta}_2 - \theta_0)^\top \nabla^2 \ell(\theta_0)(\hat{\theta}_2 - \theta_0) + o\left(\frac{1}{n}\right) \\
    &= \frac{1}{2}(\hat{\theta}_1 - \hat{\theta}_2)^\top \nabla^2 \ell(\theta_0)(\hat{\theta}_1 - \hat{\theta}_2) + o\left(\frac{1}{n}\right) \\
    &= o\left(\frac{1}{n}\right) \tag{$\hat{\theta}_1 - \hat{\theta}_2 = o(\frac{1}{\sqrt{n}})$}
\end{align*}
\end{proof}

\begin{lemma} [Delta Method] \label{lem:delta-method}
    Let $Y_n$ be a sequence of random variables that satisfies $\sqrt{n}(Y_n - \theta_0) \rightarrow \mathcal{N}(0, \sigma^2)$ in distribution. For a given function $g$ and $\theta_0$, suppose that $g'(\theta_0) = 0$ and $g''(\theta_0)$ exists and is nonzero. Then %
    \begin{align*}
        n(g(Y_n) - g(\theta_0)) \rightarrow \sigma^2 \frac{g''(\theta_0)}{2} \chi^2_1.
    \end{align*}
\end{lemma}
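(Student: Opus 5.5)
The natural route is a second-order Taylor expansion of $g$ around $\theta_0$, after which the conclusion follows from the continuous mapping theorem applied to the square of an asymptotically normal quantity.

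First, since $\sqrt{n}(Y_n-\theta_0)$ converges in distribution, it is stochastically bounded, so $Y_n-\theta_0 = O_p(n^{-1/2})$ and in particular $Y_n \rightarrow_p \theta_0$.

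Next, expand $g$ at $\theta_0$ using Taylor's theorem with the Peano remainder. Because only existence of $g''(\theta_0)$ is assumed, we avoid a mean-value form, which would need $g''$ continuous nearby. This gives
\begin{align*}
g(Y_n) - g(\theta_0) = g'(\theta_0)(Y_n-\theta_0) + \tfrac{1}{2}g''(\theta_0)(Y_n-\theta_0)^2 + r(Y_n)(Y_n-\theta_0)^2,
\end{align*}
where $r(y)\to 0$ as $y\to\theta_0$, with $r(\theta_0):=0$ so that $r$ is continuous at $\theta_0$. The linear term vanishes because $g'(\theta_0)=0$. Multiplying by $n$ yields
\begin{align*}
n(g(Y_n)-g(\theta_0)) = \big(\tfrac{1}{2}g''(\theta_0) + r(Y_n)\big)\,\big(\sqrt{n}(Y_n-\theta_0)\big)^2 .
\end{align*}

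Now handle the two factors separately.
\begin{itemize}[nosep, leftmargin=1em]
\item By continuity of $r$ at $\theta_0$ and $Y_n\rightarrow_p\theta_0$, we have $r(Y_n)\rightarrow_p 0$, so the first factor converges in probability to $\tfrac12 g''(\theta_0)$.
\item By the continuous mapping theorem applied to $t\mapsto t^2$, $(\sqrt{n}(Y_n-\theta_0))^2 \rightarrow_d \sigma^2 Z^2$ with $Z\sim\mathcal{N}(0,1)$, that is, $\sigma^2\chi^2_1$.
\end{itemize}
Slutsky's lemma then combines the two factors and gives $n(g(Y_n)-g(\theta_0))\rightarrow_d \sigma^2\tfrac{g''(\theta_0)}{2}\chi^2_1$. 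The hypothesis $g''(\theta_0)\neq 0$ only ensures the limit is nondegenerate; it is not needed for the convergence itself.

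The only real subtlety is the remainder term. Writing it as $o_p(n^{-1})$ needs the stochastic-boundedness argument from the first step, and we should use the Peano form so that no regularity of $g''$ beyond existence at $\theta_0$ is required. Everything else is standard.
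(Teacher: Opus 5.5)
Your argument is correct. The Peano-remainder expansion, $r(Y_n)\rightarrow_p 0$ from $Y_n\rightarrow_p\theta_0$, the continuous mapping theorem for the squared term, and Slutsky's lemma together give exactly the stated limit. You are also right that only existence of $g''(\theta_0)$ is needed, and that $g''(\theta_0)\neq 0$ merely makes the limit nondegenerate.

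The paper states this lemma without proof, as a standard fact. Its only use, in the proof of \pref{thm:diff-of-efficient-est}, is the same second-order Taylor expansion with an $o(\|\hat{\theta}-\theta_0\|^2)$ remainder that you employ, so your route is essentially the one the paper relies on.
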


\subsubsection{\pref{thm:dist-of-gmm-diff}, Difference of GMM Estimators}

\distofgmmdiff*
\begin{proof}
    We know 
\begin{align*}
    \Omega(\theta_0)^{-1/2} \sqrt{\frac{n}{1 - \nu}} (f_n(\theta_0)) \equiv U_n \rightarrow_d U \sim \mathcal{N}(0, I).
\end{align*}
We have, in the limit, for a GMM predictor $T_n$
\begin{align*}
    \sqrt{\frac{n}{1 - \nu}} (T_n - \theta_0) \rightarrow_d (G(\theta_0)^\top W G(\theta_0))^{-1} G(\theta_0)^\top W \Omega(\theta_0)^{1/2} U_n
\end{align*}
Thus, $T_n$ is asymptotically linear with 
\begin{align*}
    \sqrt{\frac{n}{1 - \nu}} (T_n - \theta_0) \rightarrow_d (G^\top W G)^{-1} G^\top W \sqrt{\frac{n}{1 - \nu}} (f_n(\theta_0)) 
\end{align*}
Let $(G^\top W G))^{-1} G^\top W = A^\top$.
This predictor has covariance
\begin{align*}
    A^\top \E\left[\frac{n}{1-\nu} f_n(\theta_0) f_n(\theta_0)^\top \right] A.
\end{align*}

All choices of $A$ must satisfy $A^\top G = I$ for the first order condition. The optimal $A$ also minimizes the above covariance.
\begin{align*}
    A^* &= \arg \min_{A \text{ s.t. } A^\top G = I} A^\top \Omega A
\end{align*}
where $\E[f_n(\theta_0) f_n(\theta_0)^\top] = \Omega$. \\

Consider another $\bar{A}$ that satisfies $\bar{A}^\top G = I$. The optimality condition of $A^*$ implies that $\Omega A^* = G \Lambda$ where $\Lambda$ is the Lagrangian multiplier. Consider
\begin{align*}
    (\bar{A} - A^*)^\top \Omega A &= (\bar{A} - A^*)^\top G \Lambda \\
    &= (\bar{A}^\top G - {A^*}^\top G) \Lambda \\
    &= (I - I) \Lambda \\
    &= 0
\end{align*}

The predictor that only uses the first half of the moment conditions in $f_n$ is asymptotically linear with 
\begin{align*}
    \sqrt{\frac{n}{1 - \nu}} (T_n^1 - \theta_0) \rightarrow_d (G_1^\top W_1 G_1)^{-1} G_1^\top W_1 \sqrt{\frac{n}{1 - \nu}} (f_1(\theta_0)) 
\end{align*}
Let $(G^\top_1 W_1 G_1))^{-1} G^\top_1 W_1 = A_1^\top$. $A_1$ satisfies $A_1^\top G_1 = I$ by the first order condition. Let $\bar{A}^\top = \begin{bmatrix}
        A_1^\top & 0
    \end{bmatrix}$. Then
\begin{align*}
    \bar{A}^\top G &= \begin{bmatrix}
        A_1^\top & 0
    \end{bmatrix} G \\
    &= \begin{bmatrix}
        A_1^\top & 0
    \end{bmatrix} \begin{bmatrix}
        G_1 \\
        G_2
    \end{bmatrix} \\
    &= A_1^\top G_1 \\
    &= I
\end{align*}
Thus, we must have $(\bar{A} - A^*)^\top \Omega A = 0$. Thus, $\bar{A}^\top \Omega {A^*} = {A^*}^\top \Omega {A^*}$. $\bar{A}^\top \Omega {A^*}$ is the covariance between $(T_n^1 - \theta_0)$ and $(T_n - \theta_0)$ (when $T_n$ is the optimal predictor). ${A^*}^\top \Omega {A^*}$ is the variance of $(T_n - \theta_0)$ (when $T_n$ is the optimal predictor). Thus, $\Cov(T_n^1, T_n) = \V[T_n]$, and $\V[T_n^1 - T_n] = \V[T_n^1] - \V[T_n]$.

Thus, in the limit, $T_n^1 - T_n$ is distributed $\mathcal{N}(0, \V[T_n^1] - \V[T_n])$. Here, $\bayes_{q \mid C(S)}$ is exactly $T_n^1$, and $\bayes_{q \mid C(S), A(S)}$ is exactly $T_n$.
\end{proof}

\subsubsection{\pref{thm:pred-bound}, Asymptotic Predictability of a Query}

\predbound*
\begin{proof}
    \begin{align*}
        \gamma(\mathcal{P}, q, \ell, A) &= \sup_{S} \mathop{\E}_{C(S) \sim \mathcal{P}}[ \mathop{\E}_{\substack{x \sim \Pi, A}} [\ell(\bayes_{q \mid C(S)}, q(x)) - \ell(\bayes_{q \mid C(S), A(S)}, q(x))]] \\
        &\leq \sup_{S} \mathop{\E}_{C(S) \sim \mathcal{P}}[ \mathop{\E}_{\substack{x \sim \Pi, A}} [\ell(\gmm_{q \mid C(S)}, q(x)) - \ell(\gmm_{q \mid C(S), A(S)}, q(x))]] + o \left( \frac{1}{n}\right) \tag{using \pref{thm:bayes-and-gmm}} \\
        &\leq \sup_{S} \mathop{\E}_{C(S) \sim \mathcal{P}}[ H \cdot |\gmm_{q \mid C(S)} - \theta^q_0 | \cdot  |\gmm_{q \mid C(S)} - \gmm_{q \mid C(S), A(S)}|]] + o \left( \frac{1}{n}\right) \tag{using \pref{lem:loss-lem}} \\
        &= \sup_{S} \mathop{\E}_{C(S) \sim \mathcal{P}}[ H \cdot |\gmm_{q \mid C(S)} - \theta^q_0 | \cdot  |\gmm_{q \mid C(S)} - \gmm_{q \mid C(S), A(S)}|]]
    \end{align*}
    where asymptotically, we drop the lower-order term.

    Then we bound $|\gmm_{q \mid C(S)} - \theta^q_0 |$ and $|\gmm_{q \mid C(S)} - \gmm_{q \mid C(S), A(S)}|$ using the behavior of the estimators.

    We know $\gmm_{q \mid C(S)} = \theta_1 \sim \mathcal{N}(\theta_0, \frac{\sigma_1^2}{n})$ and $\gmm_{q \mid C(S), A(S)} = \theta_2 \sim \mathcal{N}(\theta_0, \frac{\sigma_2^2}{n})$. Thus, 
\begin{align*}
    \Pr[|\theta_1 - \theta_0| \geq t] &\leq 2 e^{-t^2 n / \sigma_1^2}
\end{align*}
We want this probability to be less than $\delta / 2$.
\begin{align*}
    \frac{\delta}{2} &= 2 e^{-t^2 n / \sigma_1^2} \\
    \frac{\delta}{4} &= e^{-t^2 n / \sigma_1^2} \\
    \log (4 / \delta) &= \frac{t^2 n}{\sigma_1^2} \\
    \frac{\sigma_1^2 \log (4 / \delta)}{n} &= t^2 \\
    \sqrt{\frac{\sigma_1^2 \log (4 / \delta)}{n}} &= t
\end{align*}

We know $\theta_1 - \theta_2 \sim \mathcal{N}(0, \frac{\sigma_1^2}{n} - \frac{\sigma_2^2}{n})$.
\begin{align*}
    \Pr[|\theta_1 - \theta_2| \geq t] &\leq 2 e^{-t^2 n / (\sigma_1^2 - \sigma_2^2)}
\end{align*}
We want this probability to be less than $\delta / 2$.
\begin{align*}
    \frac{\delta}{2} &= 2 e^{-t^2 n / (\sigma_1^2 - \sigma_2^2)} \\
    \frac{\delta}{4} &= e^{-t^2 n / (\sigma_1^2 - \sigma_2^2)} \\
    \log (4 / \delta) &= \frac{t^2 n}{(\sigma_1^2 - \sigma_2^2)} \\
    \frac{(\sigma_1^2 - \sigma_2^2) \log (4 / \delta)}{n} &= t^2 \\
    \sqrt{\frac{(\sigma_1^2 - \sigma_2^2) \log (4 / \delta)}{n}} &= t
\end{align*}

Then consider the difference in the loss.
\begin{align*}
    \ell (\theta_1) - \ell(\theta_2) &\leq H |\theta_1 - \theta_0| \cdot |\theta_1 - \theta_2| \tag{\pref{lem:loss-lem}} \\
    &\leq H \cdot \sqrt{\frac{\sigma_1^2 \log (4 / \delta)}{n}} \cdot \sqrt{\frac{(\sigma_1^2 - \sigma_2^2) \log (4 / \delta)}{n}} \tag{with probability at least $1-\delta$} \\
    &= \frac{H \log(4/\delta) \cdot \sigma_1 \sqrt{\sigma_1^2 - \sigma_2^2}}{n} \tag{with probability at least $1-\delta$}
\end{align*}

Let $X = \ell (\theta_1) - \ell(\theta_2)$. Note that predictability is $\gamma = \E[X]$. Thus, using standard techniques to integrate the tail, we can convert the high-probability bound into a bound on the expectation, yielding the final predictability bound. We know
\begin{align*}
    \Pr[X > \frac{H \log(4 /\delta) \cdot \sigma_1 \sqrt{\sigma_1^2 - \sigma_2^2}}{n}] \leq \delta.
\end{align*}
Let $K = \frac{H \cdot \sigma_1 \sqrt{\sigma_1^2 - \sigma_2^2}}{n}$, then
\begin{align*}
    \Pr[X > t] \leq 4 e^{-t/K}.
\end{align*}

Let $t_0 = K \log (4 )$. Thus,
\begin{align*}
    \E[X] &\leq \int_0^{t_0} 1 dt + \int_{t_0}^{\infty} 4 e^{-t/K} \\
    &= \int_0^{K \log (4 )} 1 dt + \int_{K \log (4 )}^{\infty} 4 e^{-t/K} \\
    &= K \log (4 ) + 4 K e^{-K \log (4 )/K} \\
    &= K \log (4 ) + 4 K e^{- \log (4 )} \\
    &= K \log (4 ) + K \\
    &= K (1 + \log (4 ))
\end{align*}

Thus,
\begin{align*}
    \gamma &= \frac{H(1 + \log (4 )) \cdot \sigma_1 \sqrt{\sigma_1^2 - \sigma_2^2}}{n}.
\end{align*}

In particular, we consider $n, N \rightarrow \infty$, $n / N \rightarrow 0$, so we do not need to consider a finite sample correction when estimating $E_{x \sim \mathrm{Unif}(S \setminus C(S))}[q(x)]$.

Now we compute the exact form of $\sigma_1$ and $\sigma_2$ using $[G^\top \Omega^{-1} G]^{-1}$. For $\gmm_{q \mid C(S)}$, we have $\theta = (p, \lambda)$ and
\begin{align*}
        f_{C(S)}(x, \theta) = \begin{bmatrix}
            w(x)(q(x) - p)
        \end{bmatrix}.
    \end{align*}
    
    We need to compute $[G^\top \Omega^{-1} G]^{-1}$.
    \begin{align*}
        G &= \E \begin{bmatrix}
        \frac{\partial f(x, \theta)}{\partial \theta} \end{bmatrix} \\
        &= \E\begin{bmatrix}
            -w(x)
        \end{bmatrix} \\
        &= -1
    \end{align*}
    \begin{align*}
        \Omega^{-1} &= \E \left[ f(x, \theta) f(x, \theta)^\top \right]^{-1} \\
        &= \E \left[\begin{bmatrix}
            w(x)^2(q(x) - p)^2
        \end{bmatrix} \right]^{-1} \\
        &= \begin{bmatrix}
            \V[w(x) q(x)]
        \end{bmatrix}^{-1} \\
        &= \frac{1}{\V[w(x)q(x)]}
    \end{align*}
    \begin{align*}
        [G(\theta_0)^\top \Omega^{-1}(\theta_0) G(\theta_0)]^{-1} &= \frac{1}{-1 \cdot \frac{1}{\V[w(x)q(x)]} \cdot -1} \\
        &= \V[w(x)q(x)]
    \end{align*}
    
    Then,
    \begin{align*}
        \V[\gmm_{q \mid C(S)}] = \frac{\V[w(x)q(x)]}{n}.
    \end{align*}
    Thus, $\sigma_1^2 = \V[w(x)q(x)]$.

    Next, we establish the variance of optimal predictor $\gmm_{q \mid C(S), A(S)}$, where $\theta = (p, \lambda)$ and $$f_{C(S), A(S)}(x, \theta) = \begin{bmatrix}
        w(x)(q(x) - p) \\
        w(x)g(x,\lambda)  \\
        \Tilde{\lambda} - \lambda
    \end{bmatrix}. $$

    We need to compute $[G^\top \Omega^{-1} G]^{-1}$.
    \begin{align*}
        G &= \E \begin{bmatrix}
        \frac{\partial f(x, \theta)}{\partial \theta} \end{bmatrix} \\
        &= \E \begin{bmatrix}
            -w(x) & 0 \\
            0 & \E[\frac{\partial w(x)g(x, \lambda)}{\partial \lambda}] \\
            0 & -I_d
        \end{bmatrix} \\
        &= \E \begin{bmatrix}
            -1 & 0 \\
            0 & \E[\frac{\partial w(x)g(x, \lambda)}{\partial \lambda}] \\
            0 & -I_d
        \end{bmatrix}
    \end{align*}
    \begin{align*}
        \Omega^{-1} &= \E \left[ f(x, \theta) f(x, \theta)^\top \right]^{-1} \\
        &= \E \left[ \begin{bmatrix}
            w(x)^2(q(x) - p)^2 & w(x)^2(q(x) - p) g(x, \lambda)^\top & w(x)(q(x) - p)(\Tilde{\lambda} - \lambda)^\top \\
            w(x)^2g(x, \lambda)(q(x) - p) & w(x)^2g(x, \lambda) g(x, \lambda)^\top & w(x)g(x, \lambda)(\Tilde{\lambda} - \lambda)^\top \\
            w(x)(\Tilde{\lambda} - \lambda)(q(x) - p) & w(x)(\Tilde{\lambda} - \lambda) g(x, \lambda)^\top & (\Tilde{\lambda} - \lambda)(\Tilde{\lambda} - \lambda)^\top
        \end{bmatrix}\right]^{-1} \\
        &= \begin{bmatrix}
            \V[w(x)q(x)] & \E[w(x)^2(q(x) - p) g(x, \lambda)^\top] & 0 \\
            \E[w(x)^2g(x, \lambda)(q(x) - p)] & \E[w(x)^2g(x, \lambda) g(x, \lambda)^\top] & 0 \\
            0 & 0 & \E[(\Tilde{\lambda} - \lambda)(\Tilde{\lambda} - \lambda)^\top]
        \end{bmatrix}^{-1}
    \end{align*}
    
    Let $U^{-1} = \begin{bmatrix}
        \V[w(x)q(x)] & \E[w(x)^2(q(x) - p) g(x, \lambda)^\top] \\
        \E[w(x)^2 g(x, \lambda)(q(x) - p)] & \E[w(x)^2 g(x, \lambda) g(x, \lambda)^\top]
    \end{bmatrix}^{-1}$.
    
    \begin{align*}
        \Omega^{-1} &= \begin{bmatrix}
            U^{-1}[1,1] & U^{-1}[1,2] & 0 \\
            U^{-1}[2,1] & U^{-1}[2,2] & 0 \\
            0 & 0 & \E[(\Tilde{\lambda} - \lambda)(\Tilde{\lambda} - \lambda)^\top]^{-1}
        \end{bmatrix}
    \end{align*}

    Putting it together,
    \begin{align*}
        G^\top \Omega^{-1} G &= \begin{bmatrix}
            -1 & 0 & 0 \\
            0 & \E[\frac{\partial w(x) g(x, \lambda)}{\partial \lambda}]^\top & -I_d
        \end{bmatrix}\begin{bmatrix}
            U^{-1}[1,1] & U^{-1}[1,2] & 0 \\
            U^{-1}[2,1] & U^{-1}[2,2] & 0 \\
            0 & 0 & \E[(\Tilde{\lambda} - \lambda)(\Tilde{\lambda} - \lambda)^\top]^{-1}
        \end{bmatrix} \begin{bmatrix}
            -1 & 0 \\
            0 & \E[\frac{\partial w(x) g(x, \lambda)}{\partial \lambda}] \\
            0 & -I_d
        \end{bmatrix} \\
        &= \begin{bmatrix}
            -1 & 0 & 0 \\
            0 & \E[\frac{\partial w(x)g(x, \lambda)}{\partial \lambda}]^\top & -I_d
        \end{bmatrix} \begin{bmatrix}
            -U^{-1}[1,1] & U^{-1}[1,2] \E[\frac{\partial w(x)g(x, \lambda)}{\partial \lambda}] \\
            -U^{-1}[2,1] & U^{-1}[2,2] \E[\frac{\partial w(x)g(x, \lambda)}{\partial \lambda}] \\
            0 & -\E[(\Tilde{\lambda} - \lambda)(\Tilde{\lambda} - \lambda)^\top]^{-1}
        \end{bmatrix} \\
        &= \begin{bmatrix}
            U^{-1}[1,1] & -U^{-1}[1,2] \E[\frac{\partial w(x)g(x, \lambda)}{\partial \lambda}] \\
            -\E[\frac{\partial w(x)g(x, \lambda)}{\partial \lambda}]^\top U^{-1}[2,1] & \E[\frac{\partial w(x)g(x, \lambda)}{\partial \lambda}]^\top U^{-1}[2,2] \E[\frac{\partial w(x)g(x, \lambda)}{\partial \lambda}] + \E[(\Tilde{\lambda} - \lambda)(\Tilde{\lambda} - \lambda)^\top]^{-1}
        \end{bmatrix} \\
        &= \begin{bmatrix}
            U^{-1}[1,1] & -U^{-1}[1,2] \E[\frac{\partial w(x)g(x, \lambda)}{\partial \lambda}] \\
            -\E[\frac{\partial w(x)g(x, \lambda)}{\partial \lambda}]^\top U^{-1}[2,1] & \E[\frac{\partial w(x)g(x, \lambda)}{\partial \lambda}]^\top U^{-1}[2,2] \E[\frac{\partial w(x)g(x, \lambda)}{\partial \lambda}] + \V[\Delta]^{-1}
        \end{bmatrix}
    \end{align*}
    
    Using Schur's complement, we can compute the entries of $U^{-1}$. Let $U = \begin{bmatrix}
        A & B \\
        C & D
    \end{bmatrix}$, then
    \begin{align*}
        U^{-1}[1, 1] &= (A - BD^{-1}C)^{-1} \\
        &= \frac{1}{\V[w(x)q(x)] - \E[w(x)^2(q(x) - p) g(x, \lambda)^\top] \E[w(x)^2 g(x, \lambda) g(x, \lambda)^\top]^{-1} \E[w(x)^2(q(x) - p) g(x, \lambda)]} \\
        U^{-1}[1, 2] &= -(A - BD^{-1}C)^{-1} BD^{-1} = -U^{-1}[1, 1] \cdot \E[w(x)^2(q(x) - p) g(x, \lambda)^\top] \E[w(x)^2g(x, \lambda) g(x, \lambda)^\top]^{-1} \\
        U^{-1}[2, 1] &= -D^{-1}C (A - BD^{-1}C)^{-1} = -U^{-1}[1, 1] \cdot \E[w(x)^2g(x, \lambda) g(x, \lambda)^\top]^{-1} \E[w(x)^2(q(x) - p) g(x, \lambda)] \\
        U^{-1}[2, 2] &= D^{-1} + D^{-1} C (A - BD^{-1}C)^{-1} BD^{-1} \\
        &= \E[w(x)^2g(x, \lambda) g(x, \lambda)^\top]^{-1} + (U^{-1}[1, 1] \E[w(x)^2g(x, \lambda) g(x, \lambda)^\top]^{-1} \E[w(x)^2(q(x) - p) g(x, \lambda)] \\
        &\;\;\;\;\; \cdot \E[w(x)^2(q(x) - p) g(x, \lambda)^\top] \E[w(x)^2g(x, \lambda) g(x, \lambda)^\top]^{-1}).
    \end{align*}
    
    The $[1, 1]$ entry of $[G^\top \Omega^{-1} G]^{-1}$ characterizes the variance of $\hat{p}$. For a matrix $M = \begin{bmatrix}
        A & B \\
        C & D
    \end{bmatrix}$,  $M^{-1}[1, 1] = A^{-1} + A^{-1}B (D - CA^{-1}B)^{-1} CA^{-1}$, using Schur's complement.
    \begin{align*}
        [G^\top \Omega^{-1} G]^{-1}[1,1] &= \frac{1}{U^{-1}[1,1]} + \frac{1}{U^{-1}[1,1]^2} U^{-1}[1,2] \E\left[\tfrac{\partial w(x)g(x, \lambda)}{\partial \lambda}\right] \bigg( \E\left[\tfrac{\partial w(x) g(x, \lambda)}{\partial \lambda}\right]^\top U^{-1}[2,2] \E\left[\tfrac{\partial w(x)g(x, \lambda)}{\partial \lambda}\right]  \\
        &\;\;\;\;\;\;+ \V[\Delta]^{-1} - \frac{1}{U^{-1}[1,1]} \E\left[\tfrac{\partial w(x)g(x, \lambda)}{\partial \lambda}\right]^\top U^{-1}[2,1] U^{-1}[1,2] \E\left[\tfrac{\partial w(x)g(x, \lambda)}{\partial \lambda}\right]\bigg )^{-1}  \\
        &\;\;\;\;\;\; \cdot \E\left[\tfrac{\partial w(x)g(x, \lambda)}{\partial \lambda}\right]^\top U^{-1}[2,1] \\
        &= \frac{1}{U^{-1}[1,1]} + \frac{1}{U^{-1}[1,1]^2} \cdot U^{-1}[1,2] \E\left[\tfrac{\partial w(x)g(x, \lambda)}{\partial \lambda}\right] \\
        &\;\;\;\;\;\; \cdot \bigg (\E\left[\tfrac{\partial w(x)g(x, \lambda)}{\partial \lambda}\right]^\top \E[w(x)^2 g(x, \lambda) g(x, \lambda)^\top]^{-1} \E\left[\tfrac{\partial w(x)g(x, \lambda)}{\partial \lambda}\right] + \V[\Delta]^{-1} \bigg )^{-1}  \\
        &\;\;\;\;\;\;\cdot \E\left[\tfrac{\partial w(x)g(x, \lambda)}{\partial \lambda}\right]^\top U^{-1}[2,1] \\
        &= \frac{1}{U^{-1}[1,1]} + \E[w(x)^2(q(x) - p) g(x, \lambda)^\top] \E[w(x)^2g(x, \lambda) g(x, \lambda)^\top]^{-1} \E\left[\tfrac{\partial w(x)g(x, \lambda)}{\partial \lambda}\right] \\
        &\;\;\;\;\;\;\cdot \bigg (\E\left[\tfrac{\partial w(x)g(x, \lambda)}{\partial \lambda}\right]^\top \E[w(x)^2 g(x, \lambda) g(x, \lambda)^\top]^{-1} \E\left[\tfrac{\partial g(x, \lambda)}{\partial \lambda}\right] + \V[\Delta]^{-1} \bigg )^{-1} \\
        &\;\;\;\;\;\;\cdot \E\left[\tfrac{\partial w(x)g(x, \lambda)}{\partial \lambda}\right]^\top \E[w(x)^2 g(x, \lambda) g(x, \lambda)^\top]^{-1} \E[w(x)^2 (q(x) - p) g(x, \lambda)] \\
        &= \frac{1}{U^{-1}[1,1]} + \V[w(x)q(x)] \cdot \Cw \\
        &= \V[w(x)q(x)] \\
        &\;\;\;\;\;\;- \E[w(x)^2(q(x) - p) g(x, \lambda)^\top] \E[w(x)^2 g(x, \lambda) g(x, \lambda)^\top]^{-1} \E[w(x)^2 (q(x) - p) g(x, \lambda)] \\
        &\;\;\;\;\;\;+ \V[w(x)q(x)] \cdot \Cw
    \end{align*}
    where
    \begin{align*}
        \Cw &= \frac{1}{\V[w(x)q(x)]} \cdot \E[w(x)^2(q(x) - p) g(x, \lambda)^\top] \E[w(x)^2g(x, \lambda) g(x, \lambda)^\top]^{-1} \E\left[\tfrac{\partial w(x)g(x, \lambda)}{\partial \lambda}\right] \\
        &\;\;\;\;\;\; \cdot \bigg (\E\left[\tfrac{\partial w(x)g(x, \lambda)}{\partial \lambda}\right]^\top \E[w(x)^2g(x, \lambda) g(x, \lambda)^\top]^{-1} \E\left[\tfrac{\partial w(x)g(x, \lambda)}{\partial \lambda}\right] + \V[\Delta]^{-1} \bigg )^{-1} \\
        &\;\;\;\;\;\; \cdot \E\left[\tfrac{\partial w(x)g(x, \lambda)}{\partial \lambda}\right]^\top \E[w(x)^2g(x, \lambda) g(x, \lambda)^\top]^{-1} \E[w(x)^2(q(x) - p) g(x, \lambda)].
    \end{align*}

    Recall from \pref{def:can-corr},
    \begin{align*}
        \cc (w(x)q(x), w(x)g(x, \lambda)) &= \max_{c \in \mathbb{R}^d} \rho (w(x)q(x), c^\top w(x)g(x, \lambda)) \\
        &= \max_{c \in \mathbb{R}^d} \frac{\Cov(w(x)q(x), c^\top w(x)g(x, \lambda))}{\sqrt{\V[w(x)q(x)] \cdot \V[c^\top w(x)g(x, \lambda)]}} \\
        &= \max_{c \in \mathbb{R}^d} \frac{c^\top \Cov(w(x)q(x), w(x)g(x, \lambda))}{\sqrt{\V[w(x)q(x)] \cdot c^\top \Cov(w(x)g(x, \lambda), w(x)g(x, \lambda))c}}.
    \end{align*}

    Squaring the objective,
    \begin{align*}
        \cc (w(x)q(x), w(x)g(x, \lambda))^2 = \max_{c \in \mathbb{R}^d} \frac{c^\top \Cov(w(x)g(x, \lambda), w(x)q(x)) \Cov(w(x)g(x, \lambda), w(x)q(x))^\top c}{\V[q(x)] \cdot c^\top \Cov(w(x)g(x, \lambda), w(x)g(x, \lambda))c}
    \end{align*}
    This is a generalized Rayleigh quotient with solution
    \begin{align*}
        \cc (w(x)q(x), w(x)&g(x, \lambda))^2 \\
        &= \frac{\Cov(w(x)g(x, \lambda), w(x)q(x))^\top \Cov(w(x)g(x, \lambda), w(x)g(x, \lambda))^{-1} \Cov(w(x)g(x, \lambda), w(x)q(x))}{\V[w(x)q(x)]} \\
        &= \frac{\E[w(x)^2(q(x) - p) g(x, \lambda)^\top] \E[w(x)^2g(x, \lambda) g(x, \lambda)^\top]^{-1} \E[w(x)^2(q(x) - p) g(x, \lambda)]}{\V[w(x)q(x)]}.
    \end{align*}

    Thus,
    \begin{align*}
        [G^\top \Omega^{-1} G]^{-1}[1,1] &= \V[w(x)q(x)] - \V[w(x)q(x)] \cdot \cc (w(x)q(x), w(x)g(x, \lambda))^2 + \V[w(x)q(x)] \cdot \Cw \\
        &= \V[w(x)q(x)] \cdot (1 - \cc (q(x), g(x, \lambda))^2 + \Cw)
    \end{align*}

    Then,
    \begin{align*}
        \V[\gmm_{q \mid C(S), A(S)}] = \frac{\V[w(x)q(x)](1 - \cc (w(x)q(x), w(x)g(x, \lambda))^2 + \Cw)}{n}.
    \end{align*}
    Thus, $\sigma_2^2 = \V[w(x)q(x)](1 - \cc (w(x)q(x), w(x)g(x, \lambda))^2 + \Cw)$.
    \begin{align*}
        \sigma_1 \sqrt{\sigma_1^2 - \sigma_2^2} &= \sqrt{\V[w(x)q(x)]} \cdot \sqrt{\V[w(x)q(x)] - \V[w(x)q(x)](1 - \cc (w(x)q(x), w(x)g(x, \lambda))^2 - \Cw)} \\
        &= \sqrt{\V[w(x)q(x)]} \cdot \sqrt{\V[w(x)q(x)](\cc (w(x)q(x), w(x)g(x, \lambda))^2 - \Cw)} \\
        &= \V[w(x)q(x)] \cdot \sqrt{\cc (w(x)q(x), w(x)g(x, \lambda))^2 - \Cw}
    \end{align*}

    Plugging in the above into the predictability bound, 
    \begin{align*}
        \gamma \leq \frac{H(1 + \log(4))\cdot \V[w(x)q(x)] \cdot \sqrt{\cc (w(x)q(x), w(x)g(x, \lambda))^2 - \Cw}}{n}
    \end{align*}
\end{proof}
\pref{thm:pred-bound-d-prime} follows analogously by plugging in $w'(x) (q(x)-p)$ in place of $w(x) (q(x)-p)$ is the proof above.

\begin{restatable}{theorem}{predmethod} \label{thm:pred-method}
    Let $S$ be a dataset of size $N$ and let $C(S)$ be random sample without replacement of size $n = \alpha N$, $0 < \alpha < 1$. For any dataset $S$, given moment conditions $f_{K(S)}(x, \theta)$, the optimal predictor $\gmm_{q \mid K(S)}$ of $p_{S \setminus C(S)} = \E_{x \sim \mathrm{Unif}(S \setminus C(S))}[q(x)]$ 
    has asymptotic distribution
    \begin{align*}
        \sqrt{(1-\alpha) \alpha N} (\gmm_{q \mid K(S)} - \theta_0) \sim \mathcal{N}(0, [ G(\theta_0)^\top \Omega^{-1}(\theta_0) G(\theta_0)]^{-1}\!).
    \end{align*}
\end{restatable}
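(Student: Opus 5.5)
The plan is to reduce the statement to \pref{thm:gmm-efficiency-swor}, which describes the efficient GMM estimator centered at the \emph{full-population} parameter, and then re-center it at the target $p_{S\setminus C(S)}$. Under $\mathcal{P}_{\mathrm{RS}}$ we have $w(x)=1$. Write $p_S = \frac{1}{N}\sum_{i=1}^N q(x_i)$ and $p_{C(S)}=\frac{1}{n}\sum_{x\in C(S)}q(x)$. The first step is the exact identity
\begin{align*}
    p_{S\setminus C(S)} - p_S = -\frac{\alpha}{1-\alpha}\,(p_{C(S)} - p_S),
\end{align*}
which follows from $N p_S = n\,p_{C(S)} + (N-n)\,p_{S\setminus C(S)}$. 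The target is therefore itself a linear function of the first sample moment $f_{n,1}(\theta_S) = p_{C(S)} - p_S$, and the error splits as
\begin{align*}
    \gmm_{q\mid K(S)} - p_{S\setminus C(S)} = \big(\gmm_{q\mid K(S)} - p_S\big) + \frac{\alpha}{1-\alpha}\big(p_{C(S)} - p_S\big).
\end{align*}

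Next I would use the asymptotic linear representation from the proof of \pref{thm:gmm-efficiency-swor}:
\begin{align*}
    \sqrt{\tfrac{n}{1-\alpha}}\big(\gmm_{q\mid K(S)} - p_S\big) = A^{*\top}\sqrt{\tfrac{n}{1-\alpha}}\, f_n(\theta_S) + o_p(1).
\end{align*}
Here the noise moment $\tilde\lambda-\lambda$ is independent of the sample, so it enters only as an additional independent block of the vector. With this representation, both terms of the decomposition are linear in the same jointly asymptotically normal vector $\sqrt{n/(1-\alpha)}\,f_n(\theta_S)$, using \pref{thm:clt-swor}. Joint asymptotic normality of the sum then follows by the continuous mapping theorem. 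Its variance is a quadratic form in $\Omega$, with the cross term handled by the orthogonality relation $(\bar A - A^*)^\top\Omega A^* = 0$ established in the proof of \pref{thm:dist-of-gmm-diff}. That relation, applied with $\bar A$ the selector of the first moment, gives $\Cov(\gmm_{q\mid K(S)}, p_{C(S)}) = \V[\gmm_{q\mid K(S)}]$. Finally, I would rescale by $\sqrt{n/(1-\alpha)}\cdot\sqrt{1-\alpha}\cdot\frac{1}{\sqrt{1-\alpha}}\cdots$, that is, convert to $\sqrt{(1-\alpha)\alpha N}$ using $n=\alpha N$.

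As a sanity check, with $K(S)=C(S)$ the estimator is $p_{C(S)}$ and the error equals $\frac{1}{1-\alpha}(p_{C(S)}-p_S)$. Its variance is $\frac{1}{(1-\alpha)^2}\cdot\frac{(1-\alpha)\V[q]}{n} = \frac{\V[q]}{(1-\alpha)\alpha N}$, which matches the claimed normalization exactly. So the scaling $\sqrt{(1-\alpha)\alpha N}$ comes from the finite-population factor $(1-\alpha)$ combined with the $\frac{1}{1-\alpha}$ inflation of estimating the complement mean.

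The main obstacle is the case $K(S)=(C(S),A(S))$. There the variance of the combined expression is $\V[\gmm] + \frac{2\alpha}{1-\alpha}\V[\gmm] + \frac{\alpha^2}{(1-\alpha)^2}\V[p_{C(S)}]$, after substituting the covariance identity. This equals $[G^\top\Omega^{-1}G]^{-1}/((1-\alpha)\alpha N)$ only when the two variances are related appropriately. So I expect to need either to redefine the parameter $\theta_0$ inside the moment condition as $p_{S\setminus C(S)}$ directly (moving the $\frac{\alpha}{1-\alpha}$ correction into $G$), or to show that the extra terms are absorbed at leading order. I would first try rewriting the first moment as $q(x) - \big(p_{S\setminus C(S)} + \frac{\alpha}{1-\alpha}(p_{C(S)}-p_S)\big)$ and recomputing $G$ and $\Omega$ in that parametrization.
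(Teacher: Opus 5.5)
\textbf{There is a genuine gap, at exactly the obstacle you flagged.} Your reduction to \pref{thm:gmm-efficiency-swor} and your identity $p_{S\setminus C(S)} = (p_S - \alpha p_{C(S)})/(1-\alpha)$ are the same ingredients the paper uses. The problem is that you apply the identity only to the target, not to the estimator.

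Write $\hat p$ for the efficient GMM estimator of $p_S$ (what you call $\gmm_{q\mid K(S)}$). You use $\hat p$ itself as the predictor of $p_{S\setminus C(S)}$. Using your covariance identity $\Cov(\hat p, p_{C(S)}) = \V[\hat p]$, the error variance becomes
\begin{align*}
\V\big[\hat p - p_{S\setminus C(S)}\big] = \frac{1}{(1-\alpha)^2}\,\V[\hat p] + \frac{\alpha^2}{(1-\alpha)^2}\big(\V[p_{C(S)}] - \V[\hat p]\big).
\end{align*}
The second term is of the same order $1/n$ as the first. It is strictly positive whenever $A(S)$ is informative about $q$, so it is not absorbed at leading order.

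So $\hat p$ is simply not the optimal predictor of the complement mean when $K(S)=(C(S),A(S))$, and your route cannot give the claimed variance. The concrete rewrite you propose also does nothing: $p_{S\setminus C(S)} + \frac{\alpha}{1-\alpha}(p_{C(S)}-p_S)$ is identically $p_S$, so that moment is just $q(x)-p_S$ again.

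\textbf{The missing idea.} The attacker knows $p_{C(S)}$ exactly, so the same affine map should be applied to the estimator:
\begin{align*}
\hat p_{S\setminus C(S)} = \frac{\hat p - \alpha p_{C(S)}}{1-\alpha}.
\end{align*}
Subtracting the identity for the truth, $p_{C(S)}$ cancels exactly:
\begin{align*}
\hat p_{S\setminus C(S)} - p_{S\setminus C(S)} = \frac{1}{1-\alpha}\big(\hat p - p_S\big).
\end{align*}
No cross-covariance or orthogonality argument is then needed. \pref{thm:gmm-efficiency-swor} with $n=\alpha N$ gives variance
\begin{align*}
\frac{1}{(1-\alpha)^2}\cdot\frac{1-\alpha}{\alpha N}\,[G^\top\Omega^{-1}G]^{-1} = \frac{1}{(1-\alpha)\alpha N}\,[G^\top\Omega^{-1}G]^{-1}.
\end{align*}
This is the paper's argument.

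It also explains why your sanity check succeeded. For $K(S)=C(S)$ the map sends $p_{C(S)}$ to itself, so the two predictors coincide; they differ only once $A(S)$ is used.

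Your other idea, parametrizing by $\theta = p_{S\setminus C(S)}$, is right in spirit if implemented as the first moment $q(x) - \alpha p_{C(S)} - (1-\alpha)\theta$. The sample GMM objective is unchanged under the substitution $p = \alpha p_{C(S)} + (1-\alpha)\theta$, so this produces exactly the transformed estimator above. The proposal does not carry this out.
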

\begin{proof}
    The optimal predictor $\gmm$ of $p = \E_{\mathrm{Unif}(S)}[q(x)]$ is 
    \begin{align*}
        \sqrt{\frac{\alpha N}{1-\alpha}} (\gmm - \theta_0) \sim \mathcal{N}(0, [ G(\theta_0)^\top \Omega^{-1}(\theta_0) G(\theta_0)]^{-1}),
    \end{align*}
    by plugging in $n = \alpha N$ and $n$ appropriately into \pref{thm:gmm-efficiency-swor}.

    For any fixed sample $C(S)$, an estimate for $p$ implies an estimate for $p_{S \setminus C(S)}$ using a simple translation and scaling, $p = (1-\alpha) p_{S \setminus C(S)} + \alpha \E_{x \sim \mathrm{Unif}(C(S))}[q(x)]$. $\V[\hat{p}_{S \setminus C(S)}] = \frac{1}{(1-\alpha)^2} \cdot \V[\hat{p}]$. Thus, the optimal predictor $\gmm_{q \mid K(S)}$ of $p_{S \setminus C(S)} = \E_{\mathrm{Unif}(S \setminus C(S))}[q(x)]$ is
    \begin{align*}
        \sqrt{(1-\alpha) \alpha N} (\gmm_{q \mid K(S)} - \theta_0) \sim \mathcal{N}(0, [ G(\theta_0)^\top \Omega^{-1}(\theta_0) G(\theta_0)]^{-1}).
    \end{align*}
\end{proof}

\predboundsigmaderived*
\begin{proof}
    We would like to find the efficient GMM estimator for $p_{S \setminus C(S)} = \E_{x \sim \mathrm{Unif}(S \setminus C(S))}[q(x_i)]$. Typical GMM for sampling without replacement finds the efficient estimator for $p_{S} = \E_{x \sim \mathrm{Unif}(S)}[q(x_i)]$. We find that the efficient GMM estimator for $p_S$ implies an efficient GMM estimator for $p_{S \setminus C(S)}$. We use \pref{thm:pred-method} to find the efficient estimator $\gmm_{q \mid K(S)}$ for $p_{S \setminus C(S)}$. \pref{thm:pred-method} essentially performs a finite sample correction for estimation.

    From \pref{thm:pred-method}, we have
    \begin{align*}
        \V[\gmm_{q \mid C(S)}] = \frac{\sigma_1}{(1-\alpha)\alpha N}
    \end{align*}
    and
    \begin{align*}
        \V[\gmm_{q \mid C(S), A(S)}] = \frac{\sigma_2}{(1-\alpha)\alpha N}.
    \end{align*}

    Following the proof of \pref{thm:pred-bound}, we have $\sigma_1^2 = \V[w(x)q(x)]$,  $\sigma_2^2 = \V[w(x)q(x)](1 - \cc (w(x)q(x), w(x)g(x, \lambda))^2 + \Cw)$, and
    \begin{align*}
        \sigma_1 \sqrt{\sigma_1^2 - \sigma_2^2} &= \V[w(x)q(x)] \cdot \sqrt{\cc (w(x)q(x), w(x)g(x, \lambda))^2 - \Cw}
    \end{align*}

    We plug in $w(x)=1$ to get
    \begin{align*}
        \sigma_1 \sqrt{\sigma_1^2 - \sigma_2^2} &= \V[q(x)] \cdot \sqrt{\cc (q(x), g(x, \lambda))^2 - \Cwone}
    \end{align*}

    Thus, we have 
    \begin{align*}
        \gamma \leq \frac{H(1 + \log(4))\cdot \V[q(x)] \cdot \sqrt{\cc (q(x), g(x, \lambda))^2 - \Cwone}}{(1-\alpha) \alpha N}
    \end{align*}

\end{proof}

\subsection{Proofs from \pref{sec:pred-of-algs}}

\subsubsection{\pref{thm:noisy-stat}, Private Dataset Statistics}

\noisystat*
\begin{proof}
    From \pref{corr:pred-bound-sigma-derived}, we know
    \begin{align*}
        \Cwone &= \frac{1}{\V[q(x)]} \cdot \E[(q(x) - p) g(x, \lambda)^\top] \E[g(x, \lambda) g(x, \lambda)^\top]^{-1} \E\left[\tfrac{\partial g(x, \lambda)}{\partial \lambda}\right] \\
        &\;\;\;\;\;\; \cdot \bigg (\E\left[\tfrac{\partial g(x, \lambda)}{\partial \lambda}\right]^\top \E[g(x, \lambda) g(x, \lambda)^\top]^{-1} \E\left[\tfrac{\partial g(x, \lambda)}{\partial \lambda}\right] + \V[\Delta]^{-1} \bigg )^{-1} \\
        &\;\;\;\;\;\; \cdot \E\left[\tfrac{\partial g(x, \lambda)}{\partial \lambda}\right]^\top \E[g(x, \lambda) g(x, \lambda)^\top]^{-1} \E[(q(x) - p) g(x, \lambda)].
    \end{align*}

    When $g(x, \lambda) = \Gamma (x) - \lambda$, $\E\left[\frac{\partial g(x, \lambda)}{\partial \lambda}\right] = 1$. Also notice that $\E[g(x, \lambda) g(x, \lambda)^\top] = \E[(\Gamma (x) - \lambda)(\Gamma (x) - \lambda)^\top] = \V[\Gamma (x)]$. Thus,
    \begin{align*}
        \Cwone &= \frac{1}{\V[q(x)]} \cdot \E[(q(x) - p) g(x, \lambda)^\top] \E[g(x, \lambda) g(x, \lambda)^\top]^{-1} \left(\E[g(x, \lambda) g(x, \lambda)^\top]^{-1} + \V[\Delta]^{-1} \right)^{-1} \\
        &\;\;\;\;\;\; \cdot \E[g(x, \lambda) g(x, \lambda)^\top]^{-1} \E[(q(x) - p) g(x, \lambda)] \\
        &= \frac{1}{\V[q(x)]} \cdot \E[(q(x) - p) g(x, \lambda)^\top] \V[\Gamma(x)]^{-1} \left(\V[\Gamma(x)]^{-1} + \V[\Delta]^{-1} \right)^{-1} \V[\Gamma(x)]^{-1} \\
        &\;\;\;\;\;\; \cdot \E[(q(x) - p) g(x, \lambda)] \\
        &= \frac{1}{\V[q(x)]} \cdot \E[(q(x) - p) g(x, \lambda)^\top] \cdot \frac{1}{\V[\Gamma(x)]} \left(\frac{1}{\V[\Gamma(x)]} + \frac{1}{\V[\Delta]} \right)^{-1} \frac{1}{\V[\Gamma(x)]} \\
        &\;\;\;\;\;\;\cdot \E[(q(x) - p) g(x, \lambda)] \\
        &= \frac{1}{\V[q(x)]} \cdot \E[(q(x) - p) g(x, \lambda)^\top] \cdot \frac{1}{\V[\Gamma(x)]} \left(\frac{\V[\Gamma(x)] + \V[\Delta]}{\V[\Gamma(x)]\V[\Delta]} \right)^{-1} \frac{1}{\V[\Gamma(x)]} \\
        &\;\;\;\;\;\;\cdot \E[(q(x) - p) g(x, \lambda)] \\
        &= \frac{1}{\V[q(x)]} \cdot \E[(q(x) - p) g(x, \lambda)^\top] \cdot \frac{1}{\V[\Gamma(x)]} \left(\frac{\V[\Gamma(x)]\V[\Delta]}{\V[\Gamma(x)] + \V[\Delta]} \right) \frac{1}{\V[\Gamma(x)]} \cdot \E[(q(x) - p) g(x, \lambda)] \\
        &= \frac{1}{\V[q(x)]} \cdot \E[(q(x) - p) g(x, \lambda)^\top] \cdot \frac{1}{\V[\Gamma(x)]} \left(\frac{\V[\Delta]}{\V[\Gamma(x)] + \V[\Delta]} \right) \E[(q(x) - p) g(x, \lambda)] \\
        &= \frac{\Cov (q(x)-p, g(x, \lambda))}{\V[q(x)]\V[\Gamma(x)]} \cdot \frac{\V[\Delta]}{\V[\Gamma(x)] + \V[\Delta]} \\
        &= \frac{\Cov (q(x), \Gamma(x))}{\V[q(x)]\V[\Gamma(x)]} \cdot \frac{\V[\Delta]}{\V[\Gamma(x)] + \V[\Delta]} \\
        &= \rho (q(x), \Gamma(x))^2 \cdot \frac{\V[\Delta]}{\V[\Gamma(x)] + \V[\Delta]} \\
        &= \rho (q(x), \Gamma(x))^2 \cdot \left( 1- \frac{\V[\Gamma(x)]}{\V[\Gamma(x)] + \V[\Delta]} \right)
    \end{align*}

    Plugging the above into \pref{corr:pred-bound-sigma-derived} along with $\cc (q(x), g(x, \lambda))^2 = \rho (q(x), \Gamma(x))^2$,
    \begin{align*}
        \gamma &= \frac{H(1 + \log(4))\cdot \V[q(x)] \cdot \sqrt{\rho (q(x), \Gamma(x))^2 - \rho (q(x), \Gamma(x))^2 \cdot \left( 1- \frac{\V[\Gamma(x)]}{\V[\Gamma(x)] + \V[\Delta]} \right)}}{(1-\alpha) \alpha N} \\
        &= \frac{H(1 + \log(4))\cdot \V[q(x)] \cdot \sqrt{\rho (q(x), \Gamma(x))^2 \cdot \frac{\V[\Gamma(x)]}{\V[\Gamma(x)] + \V[\Delta]}}}{(1-\alpha) \alpha N} \\
        &= \frac{H(1 + \log(4))\cdot \V[q(x)] \cdot \rho (q(x), \Gamma(x)) \cdot \sqrt{\frac{\V[\Gamma(x)]}{\V[\Gamma(x)] + \V[\Delta]}}}{(1-\alpha) \alpha N}
    \end{align*}
\end{proof}

\subsubsection{\pref{thm:noisy-erm}, Private ERM}

\noisyerm*
\begin{proof}
    The result directly follows by plugging in $g(z, \wE) = \grad \ell (z, \wE) = \grad_z$ into \pref{corr:pred-bound-sigma-derived} with $\E \left[ \frac{\partial g(x, \lambda)}{\partial \lambda}\right] = \grad^2 \ell (x, \wE) = H_z$.

    In particular, $\Cwone = \frac{1}{\V[q(z)]} \cdot \E[(q(x) - p) \grad_z^\top] \allowbreak \E[\grad_z \grad_z^\top]^{-1} \allowbreak \E[H_z] \allowbreak (\E[H_z] \allowbreak \E[\grad_z \grad_z^\top]^{-1} \allowbreak \E[H_z] + \allowbreak \V[\Delta]^{-1})^{-1} \allowbreak \E[H_z] \allowbreak \E[\grad_z \grad_z^\top]^{-1} \allowbreak \E[(q(x) - p) \grad_z]$

\end{proof}

\subsubsection{\pref{lem:targeted-noise}, Calibrated Noise Scheme}

\targetednoise*
\begin{proof}
    Let $A = \E[w(z) H_z]^\top \E[w(z)^2 \grad_z \grad_z^\top]^{-1} \E[w(z) H_z]$. Then 
    \begin{align*}
        \Cw &= \E[w(z)^2 (q(z) - p) \grad_z^\top] \E[w(z)^2 \grad_z \grad_z^\top]^{-1} \E[w(z) H_z] \left(A + \E[(\tilde{w}-\wE)(\tilde{w}-\wE)^\top]^{-1}\right)^{-1} \\
        &\;\;\;\;\;\; \cdot \E[w(z) H_z]^\top \E[w(z)^2 \grad_z \grad_z^\top]^{-1} \E[w(z)^2(q(z) - p) \grad_z]
    \end{align*}
    and
    \begin{align*}
        \E[(\tilde{w}-\wE)(\tilde{w}-\wE)^\top]^{-1} = \frac{1}{\sigma^2} \cdot A
    \end{align*}

    Plugging in $\E[(\tilde{w}-\wE)(\tilde{w}-\wE)^\top]^{-1}$, we have
    \begin{align*}
        \Cwone &= \frac{1}{\V[w(z)q(z)]} \cdot \E[w(z)^2(q(z) - p) \grad_z^\top] \E[w(z)^2 \grad_z \grad_z^\top]^{-1} \E[w(z) H_z] \left(A + \frac{1}{\sigma^2} \cdot A\right )^{-1} \\
        &\;\;\;\;\;\;\cdot \E[w(z) H_z]^\top \E[w(z)^2 \grad_z \grad_z^\top]^{-1} \E[w(z)^2 (q(z) - p) \grad_z] \\
        &= \frac{1}{\V[w(z)q(z)]} \cdot \E[w(z)^2(q(z) - p) \grad_z^\top] \E[w(z)^2 \grad_z \grad_z^\top]^{-1} \E[w(z) H_z] \left( \left(1 + \frac{1}{\sigma^2}\right)A \right)^{-1} \\
        &\;\;\;\;\;\;\cdot \E[w(z) H_z]^\top \E[w(z)^2 \grad_z \grad_z^\top]^{-1} \E[w(z)^2 (q(z) - p) \grad_z] \\
        &= \frac{1}{\V[w(z)q(z)]} \cdot \frac{1}{1 + \frac{1}{\sigma^2}} \cdot \E[w(z)^2(q(z) - p) \grad_z^\top] \E[w(z)^2 \grad_z \grad_z^\top]^{-1} \E[w(z) H_z] A^{-1} \\
        &\;\;\;\;\;\;\cdot \E[w(z) H_z]^\top \E[w(z)^2 \grad_z \grad_z^\top]^{-1} \E[w(z)^2 (q(z) - p) \grad_z]  \\
        &= \frac{1}{\V[w(z)q(z)]} \cdot \frac{\sigma^2}{\sigma^2 +1} \cdot \E[w(z)^2(q(z) - p) \grad_z^\top] \E[w(z)^2 \grad_z \grad_z^\top]^{-1} \E[w(z) H_z] \E[w(z) H_z]^{-1}   \\
        &\;\;\;\;\;\; \cdot \E[w(z)^2 \grad_z \grad_z^\top] [\E[w(z)H_z]^\top]^{-1} \E[w(z)H_z]^\top \E[w(z)^2 \grad_z \grad_z^\top]^{-1} \E[w(z)^2 (q(z) - p) \grad_z] \\
        &= \frac{1}{\V[w(z)q(z)]} \cdot \frac{\sigma^2}{\sigma^2 +1} \cdot \E[w(z)^2(q(z) - p) \grad_z^\top] \E[[w(z)^2\grad_z \grad_z^\top]^{-1} \E[[w(z)^2(q(z) - p) \grad_z] \\
        &= \frac{\sigma^2}{\sigma^2 +1} \cdot \cc (q', g')^2
    \end{align*}

    Plugging in $\Cwone$ to the predictability bound from \pref{thm:pred-bound},
    \begin{align*}
        \gamma &= \frac{H (1 + \log(4)) \cdot \V[q']}{(1-\alpha) \alpha N} \cdot \sqrt{(\cc (q', g')^2 - \Cwone)} \\
        &= \frac{H (1 + \log(4)) \cdot \V[q']}{(1-\alpha)\alpha N} \cdot \sqrt{\cc (q', g')^2 - \frac{\sigma^2}{\sigma^2 +1} \cdot \cc (q', g')^2} \\
        &= \frac{H (1 + \log(4)) \cdot \V[q']}{(1-\alpha) \alpha N} \cdot \sqrt{\frac{\cc (q', g')^2}{\sigma^2 +1}}.
    \end{align*}
\end{proof}

\accuracychangetargeted*
\begin{proof}
    \begin{align*}
        \E \left[ \frac{1}{N} \sum_{i=1}^N (\tilde{w}^\top x_i - \wE^\top x_i)^2 \right] &= \E \left[ \frac{1}{N} \sum_{i=1}^N (\Delta^\top x_i)^2 \right] \\
        &= \E \left[ \frac{1}{N} \sum_{i=1}^N \Delta^\top x_i x_i^\top \Delta \right] \\
        &= \E \left[\Delta^\top \E[xx^\top] \Delta \right] \\
        &= \E\left[\mathrm{tr}(\Delta^\top \E[xx^\top] \Delta)\right] \\
        &= \E\left[\mathrm{tr}( \E[xx^\top] \Delta \Delta^\top)\right] \\
        &= \mathrm{tr}( \E[xx^\top] \cdot \E[\Delta \Delta^\top] ) \\
        &= \mathrm{tr}( \E[xx^\top] \cdot \sigma^2 \E[xx^\top]^{-1} \E[(\wE^\top x - y)^2 xx^\top] \E[xx^\top]^{-1}) \\
        &= \sigma^2 \cdot \mathrm{tr}(\E[(\wE^\top x - y)^2 xx^\top] \E[xx^\top]^{-1}) \\
        &= \sigma^2 \cdot \E \left[ \mathrm{tr}((\wE^\top x - y)^2 xx^\top \E[xx^\top]^{-1})\right] \\
        &= \sigma^2 \cdot \E \left[ (\wE^\top x - y)^2 \cdot \mathrm{tr}( xx^\top \E[xx^\top]^{-1})\right] \\
        &= \sigma^2 \cdot \E \left[ (\wE^\top x - y)^2 \cdot \mathrm{tr}(x^\top \E[xx^\top]^{-1}x)\right] \\
        &= \sigma^2 \cdot \E \left[ (\wE^\top x - y)^2 \cdot x^\top \E[xx^\top]^{-1}x\right]
    \end{align*}

    From here, we establish two upper bounds. First, we establish an upper bound based on the average empirical loss.
    \begin{align*}
        \E \left[ \frac{1}{N} \sum_{i=1}^N (\tilde{w}^\top x_i - \wE^\top x_i)^2 \right] &= \sigma^2 \cdot \E \left[ (\wE^\top x - y)^2 \cdot x^\top \E[xx^\top]^{-1}x\right] \\
        &\leq \sigma^2 \cdot \E \left[ (\wE^\top x - y)^2 \cdot \frac{\|x\|^2}{\lambda_{\min}(\E[xx^\top])}\right] \\
        &\leq \sigma^2 \cdot \E \left[ (\wE^\top x - y)^2 \cdot \frac{\max_i \|x_i\|^2}{\lambda_{\min}(\E[xx^\top])}\right] \\
        &\leq \frac{\sigma^2 \cdot (\max_i \|x_i\|^2)}{\lambda_{\min}(\E[xx^\top])} \cdot \E[(\wE^\top x - y)^2] \\
        &= \frac{\sigma^2 \cdot (\max_i \|x_i\|^2)}{\lambda_{\min}(\E[xx^\top])} \cdot \left(\frac{1}{N} \sum_{i=1}^N (\wE^\top x_i - y_i)^2\right)
    \end{align*}
    
    Next, we establish an upper bound based on the maximum empirical loss.
    \begin{align*}
        \E \left[ \frac{1}{N} \sum_{i=1}^N (\tilde{w}^\top x_i - \hat{w}^\top x_i)^2 \right] &= \sigma^2 \cdot \E \left[ (\hat{w}^\top x - y)^2 \cdot x^\top \E[xx^\top]^{-1}x\right] \\
        &\leq \left(\max_{i} (\hat{w}^\top x_i - y_i)^2\right) \cdot \sigma^2 \cdot \E \left[x^\top \E[xx^\top]^{-1}x\right] \\
        &= \left(\max_{i} (\hat{w}^\top x_i - y_i)^2\right) \cdot \sigma^2 \cdot \E \left[\mathrm{tr}(\E[xx^\top]^{-1}xx^\top)\right] \\
        &= \left(\max_{i} (\hat{w}^\top x_i - y_i)^2\right) \cdot \sigma^2 \cdot \mathrm{tr}(\E[xx^\top]^{-1} \E[xx^\top]) \\
        &= \left(\max_{i} (\hat{w}^\top x_i - y_i)^2\right) \cdot \sigma^2 \cdot \mathrm{tr}(I_d) \\
        &= \left(\max_{i} (\hat{w}^\top x_i - y_i)^2\right) \cdot \sigma^2 \cdot d
    \end{align*}    
\end{proof}

\accuracychangeuniform*
\begin{proof}
    \begin{align*}
        \E \left[ \frac{1}{N} \sum_{i=1}^N (\tilde{w}^\top x_i - \wE^\top x_i)^2 \right] &= \E \left[ \frac{1}{N} \sum_{i=1}^N (\Delta^\top x_i)^2 \right] \\
        &= \E \left[ \frac{1}{N} \sum_{i=1}^N \Delta^\top x_i x_i^\top \Delta \right] \\
        &= \E \left[ \frac{1}{N} \sum_{i=1}^N \mathrm{tr}(\Delta^\top x_i x_i^\top \Delta) \right] \\
        &= \E \left[ \frac{1}{N} \sum_{i=1}^N \mathrm{tr}(\Delta \Delta^\top x_i x_i^\top) \right] \\
        &= \frac{1}{N} \sum_{i=1}^N \mathrm{tr}(\E[\Delta \Delta^\top] \cdot x_i x_i^\top) \\
        &= \frac{1}{N} \sum_{i=1}^N \mathrm{tr}(\sigma^2 I_d \cdot x_i x_i^\top) \\
        &= \sigma^2 \cdot \left(\frac{1}{N} \sum_{i=1}^N \mathrm{tr}(x_i x_i^\top)\right) \\
        &= \sigma^2 \cdot \left(\frac{1}{N} \sum_{i=1}^N \mathrm{tr}(x_i^\top x_i)\right) \\
        &= \sigma^2 \cdot \left(\frac{1}{N} \sum_{i=1}^N \|x_i\|^2\right)
    \end{align*}  
\end{proof}

\subsection{Proofs from \pref{sec:query-families}}

\subsubsection{\pref{thm:pred-finite}, Finite Family of Queries and Processes}
\predfinite*
\begin{proof}
    Let $n = (1-\alpha) \alpha N$. For a weight $w \in \mathcal{W}_{\Pfam}$ and query $q \in \mathcal{Q}$, let $X_{w,q} = \ell (\gmm_{q \mid C(S)}) - \ell(\gmm_{q \mid C(S), A(S)})$. From \pref{thm:pred-bound}, we know
    \begin{align*}
        \Pr[X_{w,q} > \frac{H \log(4  /\delta) \cdot \sigma_{w,q,1} \sqrt{\sigma_{w,q,1}^2 - \sigma_{w,q,2}^2}}{n}] \leq \delta.
    \end{align*}
    Furthermore, every $X_{w,q}$ satisfies
    \begin{align*}
        \Pr[X_{w,q} > \frac{H \log(4  /\delta) \cdot \sup_{q \in \mathcal{Q}} \sup_{w \in \mathcal{W}_{\Pfam}} \sigma_{w,q,1} \sqrt{\sigma_{w,q,1}^2 - \sigma_{w,q,2}^2}}{n}] \leq \delta.
    \end{align*}
    Let $Y = \sup_{q \in \mathcal{Q}} \sup_{w \in \mathcal{W}_{\Pfam}} X_{w,q}$. Then, using a union bound, with probability at least $1-\delta$,
    \begin{align*}
        \Pr[Y > \frac{H \log(4 |\mathcal{Q}| |\mathcal{W}_{\Pfam}| /\delta) \cdot \sup_{q \in \mathcal{Q}} \sup_{w \in \mathcal{W}_{\Pfam}} \sigma_{w,q,1} \sqrt{\sigma_{w,q,1}^2 - \sigma_{w,q,2}^2}}{n}] \leq \delta.
    \end{align*}
    Now, consider $\E[Y]$. Thus, using standard techniques to integrate the tail, we can convert the high-probability bound into a bound on the expectation, yielding the final predictability bound. Following the proof from \pref{thm:pred-bound}, we can conclude that
    \begin{align*}
        \E[Y] \leq \frac{H (1+\log(4 |\mathcal{Q}| |\mathcal{W}_{\Pfam}| )) \cdot \sup_{q \in \mathcal{Q}} \sigma_{w,q,1} \sqrt{\sigma_{w,q,1}^2 - \sigma_{w,q,2}^2}}{n}
    \end{align*}
    where $n = (1-\alpha) \alpha N$.

    From \pref{thm:pred-bound}, we have that 
    \begin{align*}
        \sigma_{w,q,1}^2 &= \V[w(x)q(x)], \\
        \sigma_{w,q,1}^2 - \sigma_{w,q,2}^2 &= \V[w(x)q(x)] \cdot \sqrt{\cc (w(x)q(x), w(x)g(x, \lambda))^2 - \Cwone}
    \end{align*}

    Thus, we can conclude that
    \begin{align*}
        \E[Y] &\leq \frac{H (1+\log(4 |\mathcal{Q}| |\mathcal{W}_{\Pfam}| )) \cdot \sup_{q \in \mathcal{Q}} \V[q(x)] \cdot \sqrt{\cc (w(x)q(x), w(x)g(x, \lambda))^2 - \Cwone}}{n}.
    \end{align*}

    Lastly, note that 
    \begin{align*}
        \sup_{\mathcal{P} \in \Pfam} \gamma (\mathcal{P}, \mathcal{Q}, \ell, A) \leq \E[Y]
    \end{align*}
    by pulling out the supremum using Jensen's inequality.
\end{proof}

\subsubsection{\pref{thm:pred-euclid-ball}, Linear Queries}

\begin{definition} [Gaussian Complexity \citep{bartlett2003rademacherandgaussiancomplexities}] \label{def:gaussian-complexity}
    Let $\xi \sim \mathcal{N}(0, I_d)$. The Gaussian complexity of a set $K$ is defined as $\mathcal{G}(K) = \E \left[ \sup_{u \in K} \langle u, \xi \rangle \right]$.
\end{definition}
\begin{restatable}{theorem}{predlinear} \label{thm:pred-linear}
    Let query family $\mathcal{Q}_{\mathrm{lin}} = \{q(x) = u^\top x \mid{} u \in \mathcal{U}_{\mathrm{lin}} \subseteq \mathbb{R}^d\}$. Let diameter $\diam_{\Sigma}(K) = \sup_{u \in K} u^\top \Sigma u$. Let $\mathcal{U}_{\mathrm{lin}, \Sigma} = \{ \Sigma^{1/2} u \mid{} u \in \mathcal{U}_{\mathrm{lin}}\}$. Using moment conditions
    \begin{equation}
        f_{C(S)}(x, \theta) = [x - \mu ] \text{ and } f_{C(S), A(S)}(x, \theta) = [x - \mu, g(x, \lambda), \tilde{\lambda} - \lambda]^\top,
    \end{equation}
    we have %
    \begin{align*}
        \sqrt{(1-\alpha) \alpha N} (\gmm_{x \mid C(S)} - \mu) \sim \mathcal{N}(0, \Sigma_1) \text{ and } \sqrt{(1-\alpha) \alpha N} (\gmm_{x \mid C(S), A(S)} - \mu) \sim \mathcal{N}(0, \Sigma_2)
    \end{align*}
    where %
    \begin{align*}
        \Sigma_1 &= \V[x], \\
        \Sigma_2 &= \V[x] - \E[(x - \mu) g(x, \lambda)^\top] \E[g(x, \lambda) g(x, \lambda)^\top]^{-1} \E[g(x, \lambda)(x - \mu)^\top] - \Cx.
    \end{align*}
    and $\Cx = \E[(x - \mu) g(x, \lambda)^\top] \allowbreak \E[g(x, \lambda) g(x, \lambda)^\top]^{-1} \allowbreak \E\left[\frac{\partial g(x, \lambda)}{\partial \lambda}\right] \allowbreak \Big (\E\left[\frac{\partial g(x, \lambda)}{\partial \lambda}\right] \allowbreak \E[g(x, \lambda) g(x, \lambda)^\top]^{-1} \allowbreak \E\left[\frac{\partial g(x, \lambda)}{\partial \lambda}\right] \allowbreak + \Sigma_\Delta^{-1} \Big )^{-1} \allowbreak \E\left[\frac{\partial g(x, \lambda)}{\partial \lambda}\right] \allowbreak \E[g(x, \lambda) g(x, \lambda)^\top]^{-1} \allowbreak \E[g(x, \lambda)(x - \mu)^\top]$.
    
    The predictability of an algorithm $A$ with respect to $\mathcal{Q}_{\mathrm{lin}}$ under a convex, $H$-smooth loss $\ell$ is

    \vspace{-2\belowdisplayskip}
    {\small \begin{align*}
        \gamma &\leq \frac{H}{(1-\alpha) \alpha N} \left( \mathcal{G}(\mathcal{U}_{\mathrm{lin}, \Sigma_1}) + \sqrt{2 (1+\log (2)) \cdot \diam_{\Sigma_1}(\mathcal{U}_{\mathrm{lin}})} \right) \left( \mathcal{G}(\mathcal{U}_{\mathrm{lin}, \Sigma_1 - \Sigma_2}) + \sqrt{2 (1+\log (2)) \cdot \diam_{\Sigma_1 - \Sigma_2}(\mathcal{U}_{\mathrm{lin}})} \right).
    \end{align*}}
\end{restatable}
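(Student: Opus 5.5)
The plan has two parts. First, compute the asymptotic covariances of the vector estimators $\gmm_{x \mid C(S)}$ and $\gmm_{x \mid C(S), A(S)}$. Second, push the vector concentration through \pref{lem:loss-lem} for each $q(x)=u^\top x$ and take a supremum over $u \in \mathcal{U}_{\mathrm{lin}}$ using Gaussian complexity.

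For the covariances, we apply \pref{thm:pred-method} (with $w(x)=1$ under $\mathcal{P}_{\mathrm{RS}}$) to the moment conditions stated in the theorem. The first-stage computation is immediate: $G=-I_d$ and $\Omega=\V[x]$, so $\Sigma_1=\V[x]$. For the second, we repeat the block computation in the proof of \pref{thm:pred-bound} verbatim. The only change is that the scalar block $q(x)-p$ becomes the $d$-dimensional block $x-\mu$, and $\Omega$ is still block diagonal in the noise coordinate. The same Schur complement gives the $[1,1]$ block of $[G^\top\Omega^{-1}G]^{-1}$ as $\V[x]-\Cov(x,g)\V[g]^{-1}\Cov(g,x)+\Cx$. (The sign in front of $\Cx$ should be checked against the scalar case; the difference $\Sigma_1-\Sigma_2$ is what matters.) Next, the argument of \pref{thm:dist-of-gmm-diff} carries over to vector parameters unchanged, since it only used $\bar A^\top G=I$ and the Lagrangian optimality of $A^*$. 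This gives $\sqrt{n}(\gmm_{x\mid C(S)}-\gmm_{x\mid C(S),A(S)})\sim\mathcal{N}(0,\Sigma_1-\Sigma_2)$, where $n=(1-\alpha)\alpha N$.

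For linear queries, the GMM estimator of $p_u=u^\top\mu$ is $u^\top\gmm_x$ (linear functionals of efficient estimators are efficient). So \pref{lem:loss-lem} gives, for every $u$,
\begin{align*}
\ell(\hat\theta_{u\mid C})-\ell(\hat\theta_{u\mid C,A})\le H\,|u^\top Z_1|\cdot|u^\top Z_2|/n ,
\end{align*}
where $Z_1\sim\mathcal{N}(0,\Sigma_1)$ and $Z_2\sim\mathcal{N}(0,\Sigma_1-\Sigma_2)$ are asymptotic limits. Taking the supremum inside the expectation and bounding $\sup_u |u^\top Z_1||u^\top Z_2|\le \sup_u|u^\top Z_1|\cdot\sup_u|u^\top Z_2|$ reduces the problem to controlling suprema of Gaussian processes. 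Writing $Z_1=\Sigma_1^{1/2}\xi$, we get $\sup_u u^\top Z_1=\sup_{v\in\mathcal{U}_{\mathrm{lin},\Sigma_1}}\langle v,\xi\rangle$, whose mean is $\mathcal{G}(\mathcal{U}_{\mathrm{lin},\Sigma_1})$. Gaussian concentration (Borell–TIS) with variance proxy $\diam_{\Sigma_1}(\mathcal{U}_{\mathrm{lin}})$ then gives, with probability $1-\delta/2$, a deviation of at most $\sqrt{2\diam_{\Sigma_1}\log(4/\delta)}$ beyond the mean; the absolute value is handled by a symmetric factor of $2$. The same holds for $Z_2$ with $\Sigma_1-\Sigma_2$. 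A union bound and tail integration, exactly as at the end of the proof of \pref{thm:pred-bound}, then convert the product high-probability bound into the expectation bound with the constants $(1+\log 2)$. The Euclidean-ball corollary \pref{thm:pred-euclid-ball} follows from $\mathcal{G}(\mathcal{U}_{\Sigma})\le B\sqrt{\mathrm{tr}\,\Sigma}$ and $\diam_\Sigma\le B^2\lambda_{\max}(\Sigma)\le B^2\mathrm{tr}\,\Sigma$. We also relate $\mathrm{tr}(\Sigma_1-\Sigma_2)$ to $\mathrm{tr}(\V[x])\,\cc^1{}^2-\lambda_{\min}(\Cx)$, using $\mathrm{tr}(\Sigma^{1/2}M\Sigma^{1/2})\le\lambda_{\max}(M)\mathrm{tr}\Sigma$.

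The main obstacle is the tail-to-expectation step for the product of two correlated suprema, each with a nonzero mean (the Gaussian complexity). I would handle it by bounding $\E[XY]\le\sqrt{\E X^2\,\E Y^2}$ via Cauchy–Schwarz, and computing each second moment as $(\E\sup)^2$ plus a variance term controlled by concentration. This is expected to yield factors of the form $\mathcal{G}+\sqrt{2(1+\log 2)\diam}$, matching the statement; the exact constants are where I would be least certain.
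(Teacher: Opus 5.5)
Your proposal is correct. It follows the paper's proof up to the tail-to-expectation step, where your Cauchy--Schwarz argument is a genuinely different route.

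\textbf{Where you agree with the paper.} The paper obtains $\Sigma_1,\Sigma_2$ from the same block Schur-complement computation, in \pref{thm:sigma-x-form}. That computation gives $+\Cx$ in $\Sigma_2$, as you suspected, so the minus sign in the displayed statement is a typo; only $\Sigma_1-\Sigma_2$ enters the bound. The paper also uses \pref{thm:dist-of-gmm-diff} for the difference, applies \pref{lem:loss-lem} for each $u$, and bounds each supremum with Borell--TIS (\pref{thm:linear-concentration}).

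\textbf{How the last step differs.} Write $\mathcal{G}_i=\mathcal{G}(\mathcal{U}_{\mathrm{lin},\Sigma})$ and $\diam_i=\diam_{\Sigma}(\mathcal{U}_{\mathrm{lin}})$, with $\Sigma=\Sigma_1$ for $i=1$ and $\Sigma=\Sigma_1-\Sigma_2$ for $i=2$, and let $n=(1-\alpha)\alpha N$.
\begin{itemize}
\item The paper applies concentration to both factors at the common level $\delta/2$. The high-probability bound is then one function $f(u)=\tfrac{H}{n}(\mathcal{G}_1+\sqrt{2\diam_1 u})(\mathcal{G}_2+\sqrt{2\diam_2 u})$ of $u=\log(2/\delta)$. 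Integrating the tail gives $\E[X]\le\E[f(\beta+\log 2)]$ with $\beta\sim\mathrm{Exp}(1)$. The cross term is linear in $\beta$ and is computed exactly, and Jensen handles the $\sqrt{\beta+\log 2}$ terms.
\item Your Cauchy--Schwarz route is valid and, done carefully, reproduces the stated constants exactly. Take $X=\sup_u|u^\top Z_1|$ and $\mathcal{U}_{\mathrm{lin}}$ symmetric, so that $\E[X]=\mathcal{G}_1$. The two-sided Gaussian tail gives $\V[X]\le\int_0^\infty\min\{1,2e^{-s/(2\diam_1)}\}\,ds=2(1+\log 2)\diam_1$. Hence $\sqrt{\E[X^2]}\le\mathcal{G}_1+\sqrt{2(1+\log 2)\diam_1}$, and likewise for the second factor.
\item Using the Gaussian Poincar\'e inequality instead ($\V[X]\le\diam_1$) would even improve the constant $\sqrt{2(1+\log 2)}$ to $1$.
\end{itemize}
Cauchy--Schwarz is shorter, needs only second moments, and does not couple the two deviations through a common $\delta$. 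The paper's route also yields the intermediate with-probability-$1-\delta$ bound on the supremum. That is the form needed for further union bounds (as in \pref{thm:pred-finite}) or for finite-sample versions.

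\textbf{Bookkeeping on the absolute value.} Your union bound over the two signs produces $\log(4/\delta)$. Pushed through the paper's integration, that gives $(1+\log 4)$, not the $(1+\log 2)$ you claim. It is cleaner to apply Borell--TIS directly to $\mathcal{U}_{\mathrm{lin}}\cup(-\mathcal{U}_{\mathrm{lin}})$. This set has the same Lipschitz constant $\sqrt{\diam_i}$, and its Gaussian complexity equals $\mathcal{G}_i$ when $\mathcal{U}_{\mathrm{lin}}$ is symmetric (e.g.\ the Euclidean ball). Symmetry is also what the paper tacitly assumes when it applies the one-sided \pref{thm:linear-concentration} to $|u^\top\cdot|$.
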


\begin{proof}
    For each $u \in \mathcal{U}_{\mathrm{lin}}$, $\gmm_{u \mid C(S)}$ estimates $\E[u^\top x \mid C(S)]$ and $\gmm_{u \mid C(S), A(S)}$ estimates $\E[u^\top x \mid C(S), A(S)]$. Consider $\gmm_{x \mid C(S)}$ which estimates $\E[x \mid C(S)]$ and $\gmm_{x \mid C(S), A(S)}$ estimates $\E[x \mid C(S), A(S)]$. \\

We want to bound $\sup_{u \in \mathcal{U}_{\mathrm{lin}}} \ell (\gmm_{u \mid C(S)}) - \ell (\gmm_{u \mid C(S), A(S)})$ where $\ell$ is the expected log loss.
\begin{align*}
    \sup_{u \in \mathcal{U}_{\mathrm{lin}}} \ell (\gmm_{u \mid C(S)}) - \ell (\gmm_{u \mid C(S), A(S)}) &\leq \sup_{u \in \mathcal{U}_{\mathrm{lin}}} H \cdot |\gmm_{u \mid C(S)} - \theta_0^u| \cdot |\gmm_{u \mid C(S)} - \gmm_{u \mid C(S), A(S)}| \\
    &= \sup_{u \in \mathcal{U}_{\mathrm{lin}}} H \cdot |u^\top (\gmm_{x \mid C(S)} - \mu)| \cdot |u^\top (\gmm_{x \mid C(S)} - \gmm_{x \mid C(S), A(S)})|
\end{align*}

From GMM and \pref{thm:dist-of-gmm-diff}, we know that $\sqrt{n} (\gmm_{x \mid C(S)} - \mu) \sim \mathcal{N}(0, \Sigma_1)$, $\sqrt{n} (\gmm_{x \mid C(S), A(S)} - \mu) \sim \mathcal{N}(0, \Sigma_2)$, and $\sqrt{n} (\gmm_{x \mid C(S)} - \gmm_{x \mid C(S), A(S)}) \sim \mathcal{N}(0, \Sigma_1 - \Sigma_2)$, $n = (1-\alpha) \alpha N$. Using \pref{thm:linear-concentration}, we can bound the $|u^\top (\gmm_{x \mid C(S)} - \mu)|$ and $|u^\top (\gmm_{x \mid C(S)} - \gmm_{x \mid C(S), A(S)})| $ with high probability. Thus, with probability at least $1 - \delta$,
\begin{align*}
    \sup_{u \in \mathcal{U}_{\mathrm{lin}}} \ell (\gmm_{u \mid C(S)}) - \ell (\gmm_{u \mid C(S), A(S)}) &\leq \sup_{u \in \mathcal{U}_{\mathrm{lin}}} H \cdot |u^\top (\gmm_{x \mid C(S)} - \mu)| \cdot |u^\top (\gmm_{x \mid C(S)} - \gmm_{x \mid C(S), A(S)})| \\
    &\leq \frac{H}{n} \cdot \left( \mathcal{G}(\mathcal{U}_{\mathrm{lin}, \Sigma_1}) + \sqrt{2 \log (2 / \delta) \cdot \sup_{u \in \mathcal{U}_{\mathrm{lin}}} u^\top \Sigma_1 u} \right) \\
    &~~~~~~\cdot \left( \mathcal{G}( \mathcal{U}_{\mathrm{lin}, \Sigma_1 - \Sigma_2}) + \sqrt{2 \log (2 / \delta) \cdot \sup_{u \in \mathcal{U}_{\mathrm{lin}}} u^\top (\Sigma_1 - \Sigma_2) u} \right) \\
    &= \frac{H}{n} \cdot \left( \mathcal{G}(\mathcal{U}_{\mathrm{lin}, \Sigma_1}) + \sqrt{2 \log (2  / \delta) \cdot \diam_{\Sigma_1}(\mathcal{U}_{\mathrm{lin}})} \right) \\
    &\;\;\;\;\;\cdot \left( \mathcal{G}( \mathcal{U}_{\mathrm{lin}, \Sigma_1 - \Sigma_2}) + \sqrt{2 \log (2  / \delta) \cdot \diam_{\Sigma_1 - \Sigma_2}(\mathcal{U}_{\mathrm{lin}})} \right).
\end{align*}

Let $X = \sup_{u \in \mathcal{U}_{\mathrm{lin}}} \ell (\gmm_{u \mid C(S)}) - \ell (\gmm_{u \mid C(S), A(S)})$. Note that predictability is $\gamma = \E[X]$. Thus, using standard techniques to integrate the tail, we can convert the high-probability bound into a bound on the expectation, yielding the final predictability bound. Let $\mathcal{G}_1 = \mathcal{G}(\mathcal{U}_{\mathrm{lin}, \Sigma_1})$, $\mathcal{G}_2 = \mathcal{G}( \mathcal{U}_{\mathrm{lin}, \Sigma_1 - \Sigma_2})$, $\diam_1 = \diam_{\Sigma_1}(\mathcal{U}_{\mathrm{lin}})$, and $\diam_2 = \diam_{\Sigma_1 - \Sigma_2}(\mathcal{U}_{\mathrm{lin}})$. We know
\begin{align*}
    \Pr[X > \frac{H}{n} \cdot \left( \mathcal{G}_1 + \sqrt{2 \log (2 / \delta) \cdot \diam_1} \right) \cdot \left( \mathcal{G}_2 + \sqrt{2 \log (2 / \delta) \cdot \diam_2} \right)] \leq \delta.
\end{align*}
Let $u = \log(2 / \delta)$, then
\begin{align*}
    \Pr[X > \frac{H}{n} \cdot \left( \mathcal{G}_1 + \sqrt{2\diam_1 u} \right) \cdot \left( \mathcal{G}_2 + \sqrt{2 \diam_2 u} \right)] &\leq 2 e^{-u} \\
    \Pr[X > f(u)] &\leq 2 e^{-u}
\end{align*}

Let $t_0 = f( \log (2))$. Thus,
\begin{align*}
    \E[X] &\leq \int_0^{t_0} 1 dt + \int_{t_0}^{\infty} \Pr[X > t] \\
    &= t_0 + \int_{t_0}^{\infty} \Pr[X > t] \\
    &= f( \log (2)) + \int_{\log (2)}^{\infty} \Pr[X > f(u)] f'(u) du \\
    &= f( \log (2)) + \int_{\log (2)}^{\infty} 2 e^{-u} f'(u) du \\
    &= f( \log (2)) - f( \log (2)) + \int_{\log (2)}^{\infty} 2 e^{-u} f(u) du \\
    &= \int_{\log (2)}^{\infty} 2 e^{-u} f(u) du \\
    &= \int_{0}^{\infty} e^{-u'} f(u' + \log (2)) du \tag{$u' = u - \log (2)$} \\
    &= \E[f(\beta + \log (2))] \tag{$\beta \sim \mathrm{Exp}(1)$} \\
    &= \frac{H}{n} \cdot \left( \mathcal{G}_1 \mathcal{G}_2 + \mathcal{G}_1 \E\left[\sqrt{2 \diam_2 (\beta + \log (2))}\right] + \mathcal{G}_2 \E\left[\sqrt{2\diam_1 (\beta + \log (2))} \right] + 2 \sqrt{\diam_1 \diam_2} \cdot \E \left[ \beta + \log (2)) \right]\right) \\
    &= \frac{H}{n} \cdot \left( \mathcal{G}_1 \mathcal{G}_2 + \mathcal{G}_1 \sqrt{2 \diam_2 (1 + \log (2))} + \mathcal{G}_2 \sqrt{2\diam_1 (1 + \log (2))}]  + 2 \sqrt{\diam_1 \diam_2} (1 + \log (2))\right) \tag{Jensen's inequality and $\E[\beta]=1$} \\
    &= \frac{H}{n} \cdot \left( \mathcal{G}_1 + \sqrt{2\diam_1 (1 + \log (2))} \right) \cdot \left( \mathcal{G}_2 + \sqrt{2 \diam_2 (1 + \log (2))} \right) \\
    &= \frac{H}{(1-\alpha) \alpha N} \left( \mathcal{G}(\mathcal{U}_{\mathrm{lin}, \Sigma_1}) + \sqrt{2 (1+\log (2 )) \cdot \diam_{\Sigma_1}(\mathcal{U}_{\mathrm{lin}})} \right) \left( \mathcal{G}(\mathcal{U}_{\mathrm{lin}, \Sigma_1 - \Sigma_2}) + \sqrt{2 (1+\log (2 )) \cdot \diam_{\Sigma_1 - \Sigma_2}(\mathcal{U}_{\mathrm{lin}})} \right).
\end{align*}

The exact form of $\Sigma_1$ and $\Sigma_2$ are described in \pref{thm:sigma-x-form}.
\end{proof}

\begin{theorem} \label{thm:sigma-x-form}
    Applying GMM with moment conditions
    \begin{align*}
        f_{C(S)}(x, \theta) = \begin{bmatrix}
            x - \mu
        \end{bmatrix}
        \text{ and }
        f_{C(S), A(S)}(x, \theta) = \begin{bmatrix}
        x - \mu \\
        g(x,\lambda)  \\
        \Tilde{\lambda} - \lambda
    \end{bmatrix},
    \end{align*}
    we have $\sqrt{n} (\gmm_{x \mid C(S)} - \mu) \sim \mathcal{N}(0, \Sigma_1)$, $\sqrt{n} (\gmm_{x \mid C(S), A(S)} - \mu) \sim \mathcal{N}(0, \Sigma_2)$, where
    \begin{align*}
        \Sigma_1 &= \V[x], \\
        \Sigma_2 &= \V[x] - \E[(x - \mu) g(x, \lambda)^\top] \E[g(x, \lambda) g(x, \lambda)^\top]^{-1} \E[g(x, \lambda)(x - \mu)^\top] + \Cx, \\
        \Sigma_1 - \Sigma_2 &= \E[(x - \mu) g(x, \lambda)^\top] \E[g(x, \lambda) g(x, \lambda)^\top]^{-1} \E[g(x, \lambda)(x - \mu)^\top] - \Cx
    \end{align*}
    where $\Cx = \E[(x - \mu) g(x, \lambda)^\top] \E[g(x, \lambda) g(x, \lambda)^\top]^{-1} \E\left[\frac{\partial g(x, \lambda)}{\partial \lambda}\right] \Big (\E\left[\frac{\partial g(x, \lambda)}{\partial \lambda}\right]^\top \E[g(x, \lambda) g(x, \lambda)^\top]^{-1} \E\left[\frac{\partial g(x, \lambda)}{\partial \lambda}\right] + \V[\Delta]^{-1} \Big )^{-1} \E\left[\frac{\partial g(x, \lambda)}{\partial \lambda}\right]^\top \E[g(x, \lambda) g(x, \lambda)^\top]^{-1} \E[g(x, \lambda)(x - \mu)^\top]$
\end{theorem}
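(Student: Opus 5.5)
The plan is to mirror the scalar computation in the proof of \pref{thm:pred-bound}, now with a vector parameter $\mu\in\mathbb{R}^d$ in place of the scalar $p$. We take $w(x)=1$ throughout, since the process is $\mathcal{P}_{\mathrm{RS}}$.

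\textbf{Step 1: $\Sigma_1$.} For $f_{C(S)}(x,\theta)=x-\mu$ with $\theta=\mu$, the Jacobian is $G=-I_d$ and the long-run covariance is $\Omega=\V[x]$. By \pref{thm:gmm-efficiency} (or \pref{thm:gmm-efficiency-swor} together with the scaling of \pref{thm:pred-method}), the efficient estimator has asymptotic covariance $[G^\top\Omega^{-1}G]^{-1}=\V[x]=\Sigma_1$.

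\textbf{Step 2: the joint system.} For $f_{C(S),A(S)}$ take $\theta=(\mu,\lambda)$. The Jacobian is block structured:
\[
G=\begin{bmatrix}-I_d & 0\\ 0 & J\\ 0 & -I\end{bmatrix},\qquad J=\E\left[\tfrac{\partial g}{\partial\lambda}\right].
\]
The covariance is block diagonal, $\Omega=\mathrm{diag}(U,\Sigma_\Delta)$. This holds because $\Delta$ is mean zero and independent of the sample, so its cross terms with the sample moments vanish. Here $U$ is the joint second-moment matrix of $(x-\mu,g)$, with blocks $\V[x]$, $B=\E[(x-\mu)g^\top]$, $B^\top$, and $D=\E[gg^\top]$.

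\textbf{Step 3: invert $U$ and form $G^\top\Omega^{-1}G$.} Invert $U$ blockwise with the Schur complement $P=\V[x]-BD^{-1}B^\top$. This gives
\[
U^{-1}[1,1]=P^{-1},\quad U^{-1}[1,2]=-P^{-1}BD^{-1},\quad U^{-1}[2,2]=D^{-1}+D^{-1}B^\top P^{-1}BD^{-1}.
\]
Then
\[
G^\top\Omega^{-1}G=\begin{bmatrix}P^{-1} & -U^{-1}[1,2]J\\ -J^\top U^{-1}[2,1] & J^\top U^{-1}[2,2]J+\Sigma_\Delta^{-1}\end{bmatrix}.
\]

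\textbf{Step 4: extract the $\mu$ block.} Apply the block-inverse formula $M^{-1}[1,1]=A^{-1}+A^{-1}B'(D'-C'A^{-1}B')^{-1}C'A^{-1}$ with $A=P^{-1}$. The inner Schur complement is $J^\top U^{-1}[2,2]J+\Sigma_\Delta^{-1}-J^\top U^{-1}[2,1]P\,U^{-1}[1,2]J$. After cancellation this reduces to $J^\top D^{-1}J+\Sigma_\Delta^{-1}$. The prefactors $P\,U^{-1}[1,2]$ become $-BD^{-1}$, so
\[
\Sigma_2=P+BD^{-1}J\,(J^\top D^{-1}J+\Sigma_\Delta^{-1})^{-1}J^\top D^{-1}B^\top=\V[x]-BD^{-1}B^\top+\Cx.
\]
This matches the stated expression for $\Sigma_2$, and subtracting from Step 1 gives the stated $\Sigma_1-\Sigma_2$.

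\textbf{Main obstacle.} The hardest part is the cancellation inside the Schur complement in Step 4, since it is now a matrix identity and no longer a scalar one. I will verify that $U^{-1}[2,2]-U^{-1}[2,1]P\,U^{-1}[1,2]=D^{-1}$ directly from the block formulas. The $D^{-1}B^\top P^{-1}BD^{-1}$ terms cancel exactly. I will also keep transposes and symmetry careful, since $P$ and $D$ are symmetric. Finally, I will note that the asymptotic normality statements themselves follow from the GMM theorems cited above.
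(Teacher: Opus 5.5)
Your proposal is correct and follows the paper's proof essentially step for step. The paper also gets $\Sigma_1$ from $G=-I_d$ and $\Omega=\V[x]$, takes $\Omega=\mathrm{diag}(U,\Sigma_\Delta)$ for the joint system, inverts $U$ by Schur complements, and applies the block-inverse formula to reduce the $\mu$-block of $[G^\top\Omega^{-1}G]^{-1}$ to $[U^{-1}[1,1]]^{-1}+\Cx$. Your explicit check that $U^{-1}[2,2]-U^{-1}[2,1]P\,U^{-1}[1,2]=D^{-1}$, together with your careful transposes on $J$, makes precise a cancellation the paper performs without writing it out.
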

\begin{proof}
    First, we establish the variance of the optimal predictor $\gmm_{x \mid C(S)}$ where $\theta = (\mu)$ and
    \begin{align*}
        f_{C(S)}(x, \theta) = \begin{bmatrix}
            x - \mu
        \end{bmatrix}.
    \end{align*}
    
    We need to compute $[G^\top \Omega^{-1} G]^{-1}$.
    \begin{align*}
        G &= \E \begin{bmatrix}
        \frac{\partial f(x, \theta)}{\partial \theta} \end{bmatrix} \\
        &= \begin{bmatrix}
            -I_d
        \end{bmatrix}
    \end{align*}
    \begin{align*}
        \Omega^{-1} &= \E \left[ f(x, \theta) f(x, \theta)^\top \right]^{-1} \\
        &= \E \left[(x - \mu)^2 \right]^{-1} \\
        &= \V[x]^{-1}
    \end{align*}
    \begin{align*}
        [G(\theta_0)^\top \Omega^{-1}(\theta_0) G(\theta_0)]^{-1} &= [I_d \V[x]^{-1} I_d]^{-1} \\
        &= \V[q(x)]
    \end{align*}
    Thus, $\Sigma_1 = \V[q(x)]$.

    Next, we compute the variance of the optimal predictor $\gmm_{x \mid C(S), A(S)}$ where $\theta = (\mu, \lambda)$ and
    $$f_{C(S), A(S)}(x, \theta) = \begin{bmatrix}
        x - \mu \\
        g(x,\lambda)  \\
        \Tilde{\lambda} - \lambda
    \end{bmatrix}. $$

We need to compute $[G^\top \Omega^{-1} G]^{-1}$.
    \begin{align*}
        G &= \E \begin{bmatrix}
        \frac{\partial f(x, \theta)}{\partial \theta} \end{bmatrix} \\
        &= \begin{bmatrix}
            -I_d & 0 \\
            0 & \E[\frac{\partial g(x, \lambda)}{\partial \lambda}] \\
            0 & -I_d
        \end{bmatrix}
    \end{align*}
    \begin{align*}
        \Omega^{-1} &= \E \left[ f(x, \theta) f(x, \theta)^\top \right]^{-1} \\
        &= \E \left[ \begin{bmatrix}
            (x - \mu)^2 & (x - \mu) g(x, \lambda)^\top & (x - \mu)(\Tilde{\lambda} - \lambda)^\top \\
            g(x, \lambda)(x - \mu)^\top & g(x, \lambda) g(x, \lambda)^\top & g(x, \lambda)(\Tilde{\lambda} - \lambda)^\top \\
            (\Tilde{\lambda} - \lambda)(x - \mu)^\top & (\Tilde{\lambda} - \lambda) g(x, \lambda)^\top & (\Tilde{\lambda} - \lambda)(\Tilde{\lambda} - \lambda)^\top
        \end{bmatrix}\right]^{-1} \\
        &= \begin{bmatrix}
            \V[x] & \E[(x - \mu) g(x, \lambda)^\top] & 0 \\
            \E[g(x, \lambda)(x - \mu)^\top] & \E[g(x, \lambda) g(x, \lambda)^\top] & 0 \\
            0 & 0 & \E[(\Tilde{\lambda} - \lambda)(\Tilde{\lambda} - \lambda)^\top]
        \end{bmatrix}^{-1}
    \end{align*}
    
    Let $U^{-1} = \begin{bmatrix}
        \V[x] & \E[(x - \mu) g(x, \lambda)^\top] \\
        \E[g(x, \lambda)(x - \mu)^\top] & \E[g(x, \lambda) g(x, \lambda)^\top]
    \end{bmatrix}^{-1}$.
    
    \begin{align*}
        \Omega^{-1} &= \begin{bmatrix}
            U^{-1}[1,1] & U^{-1}[1,2] & 0 \\
            U^{-1}[2,1] & U^{-1}[2,2] & 0 \\
            0 & 0 & \E[(\Tilde{\lambda} - \lambda)(\Tilde{\lambda} - \lambda)^\top]^{-1}
        \end{bmatrix}
    \end{align*}

    Putting it together,
    \begin{align*}
        G^\top \Omega^{-1} G &= \begin{bmatrix}
            -I_d & 0 & 0 \\
            0 & \E[\frac{\partial g(x, \lambda)}{\partial \lambda}] & -I_d
        \end{bmatrix}\begin{bmatrix}
            U^{-1}[1,1] & U^{-1}[1,2] & 0 \\
            U^{-1}[2,1] & U^{-1}[2,2] & 0 \\
            0 & 0 & \E[(\Tilde{\lambda} - \lambda)(\Tilde{\lambda} - \lambda)^\top]^{-1}
        \end{bmatrix} \begin{bmatrix}
            -I_d & 0 \\
            0 & \E[\frac{\partial g(x, \lambda)}{\partial \lambda}] \\
            0 & -I_d
        \end{bmatrix} \\
        &= \begin{bmatrix}
            -I_d & 0 & 0 \\
            0 & \E[\frac{\partial g(x, \lambda)}{\partial \lambda}] & -I_d
        \end{bmatrix} \begin{bmatrix}
            -U^{-1}[1,1] & U^{-1}[1,2] \E[\frac{\partial g(x, \lambda)}{\partial \lambda}] \\
            -U^{-1}[2,1] & U^{-1}[2,2] \E[\frac{\partial g(x, \lambda)}{\partial \lambda}] \\
            0 & -\E[(\Tilde{\lambda} - \lambda)(\Tilde{\lambda} - \lambda)^\top]^{-1}
        \end{bmatrix} \\
        &= \begin{bmatrix}
            U^{-1}[1,1] & -U^{-1}[1,2] \E[\frac{\partial g(x, \lambda)}{\partial \lambda}] \\
            -\E[\frac{\partial g(x, \lambda)}{\partial \lambda}] U^{-1}[2,1] & \E[\frac{\partial g(x, \lambda)}{\partial \lambda}] U^{-1}[2,2] \E[\frac{\partial g(x, \lambda)}{\partial \lambda}] + \E[(\Tilde{\lambda} - \lambda)(\Tilde{\lambda} - \lambda)^\top]^{-1}
        \end{bmatrix} \\
        &= \begin{bmatrix}
            U^{-1}[1,1] & -U^{-1}[1,2] \E[\frac{\partial g(x, \lambda)}{\partial \lambda}] \\
            -\E[\frac{\partial g(x, \lambda)}{\partial \lambda}] U^{-1}[2,1] & \E[\frac{\partial g(x, \lambda)}{\partial \lambda}] U^{-1}[2,2] \E[\frac{\partial g(x, \lambda)}{\partial \lambda}] + \V[\Delta]^{-1}
        \end{bmatrix}
    \end{align*}
    
    Using Schur's complement, we can compute the entries of $U^{-1}$. Let $U = \begin{bmatrix}
        A & B \\
        C & D
    \end{bmatrix}$, then
    \begin{align*}
        U^{-1}[1, 1] &= (A - BD^{-1}C)^{-1} = [\V[x] - \E[(x - \mu) g(x, \lambda)^\top] \E[g(x, \lambda) g(x, \lambda)^\top]^{-1} \E[g(x, \lambda)(x - \mu)^\top]]^{-1} \\
        U^{-1}[1, 2] &= -(A - BD^{-1}C)^{-1} BD^{-1} = -U^{-1}[1, 1] \cdot \E[(x - \mu) g(x, \lambda)^\top] \E[g(x, \lambda) g(x, \lambda)^\top]^{-1} \\
        U^{-1}[2, 1] &= -D^{-1}C (A - BD^{-1}C)^{-1} = -U^{-1}[1, 1] \cdot \E[g(x, \lambda) g(x, \lambda)^\top]^{-1} \E[g(x, \lambda)(x - \mu)^\top] \\
        U^{-1}[2, 2] &= D^{-1} + D^{-1} C (A - BD^{-1}C)^{-1} BD^{-1} \\
        &= \E[g(x, \lambda) g(x, \lambda)^\top]^{-1} \\
        &\;\;\;\;\;+ U^{-1}[1, 1] \E[g(x, \lambda) g(x, \lambda)^\top]^{-1} \E[g(x, \lambda)(x - \mu)^\top] \E[(x - \mu) g(x, \lambda)^\top] \E[g(x, \lambda) g(x, \lambda)^\top]^{-1}.
    \end{align*}
    
    The $[1, 1]$ entry of $[G^\top \Omega^{-1} G]^{-1}$ characterizes the variance of $\hat{p}$. For a matrix $M = \begin{bmatrix}
        A & B \\
        C & D
    \end{bmatrix}$,  $M^{-1}[1, 1] = A^{-1} + A^{-1}B (D - CA^{-1}B)^{-1} CA^{-1}$, using Schur's complement.
    \begin{align*}
        [G^\top \Omega^{-1} G]^{-1}[1,1] &= [U^{-1}[1,1]]^{-1} + U^{-1}[1,1]^{-1} U^{-1}[1,2] \E\left[\frac{\partial g(x, \lambda)}{\partial \lambda}\right] \bigg( \E\left[\frac{\partial g(x, \lambda)}{\partial \lambda}\right] U^{-1}[2,2] \E\left[\frac{\partial g(x, \lambda)}{\partial \lambda}\right]  \\
        &\;\;\;\;\;\; + \V[\Delta]^{-1} - \E\left[\frac{\partial g(x, \lambda)}{\partial \lambda}\right] U^{-1}[2,1] [U^{-1}[1,1]]^{-1} U^{-1}[1,2] \E\left[\frac{\partial g(x, \lambda)}{\partial \lambda}\right]\bigg )^{-1}  \E\left[\frac{\partial g(x, \lambda)}{\partial \lambda}\right] \\
        &\;\;\;\;\;\; \cdot U^{-1}[2,1] [U^{-1}[1,1]]^{-1} \\
        &= [U^{-1}[1,1]]^{-1} + [U^{-1}[1,1]]^{-1} U^{-1}[1,2] \E\left[\frac{\partial g(x, \lambda)}{\partial \lambda}\right] \\
        &\;\;\;\;\;\; \cdot \bigg (\E\left[\frac{\partial g(x, \lambda)}{\partial \lambda}\right] \E[g(x, \lambda) g(x, \lambda)^\top]^{-1} \E\left[\frac{\partial g(x, \lambda)}{\partial \lambda}\right] + \V[\Delta]^{-1} \bigg )^{-1}  \\
        &\;\;\;\;\;\;\cdot \E\left[\frac{\partial g(x, \lambda)}{\partial \lambda}\right] U^{-1}[2,1] [U^{-1}[1,1]]^{-1} \\
        &= [U^{-1}[1,1]]^{-1} + \E[(x - \mu) g(x, \lambda)^\top] \E[g(x, \lambda) g(x, \lambda)^\top]^{-1} \E\left[\frac{\partial g(x, \lambda)}{\partial \lambda}\right] \\
        &\;\;\;\;\;\;\cdot \bigg (\E\left[\frac{\partial g(x, \lambda)}{\partial \lambda}\right] \E[g(x, \lambda) g(x, \lambda)^\top]^{-1} \E\left[\frac{\partial g(x, \lambda)}{\partial \lambda}\right] + \V[\Delta]^{-1} \bigg )^{-1} \\
        &\;\;\;\;\;\;\cdot \E\left[\frac{\partial g(x, \lambda)}{\partial \lambda}\right] \E[g(x, \lambda) g(x, \lambda)^\top]^{-1} \E[g(x, \lambda)(x - \mu)^\top] \\
        &= [U^{-1}[1,1]]^{-1} + \Cx \\
        &= \V[x] - \E[(x - \mu) g(x, \lambda)^\top] \E[g(x, \lambda) g(x, \lambda)^\top]^{-1} \E[g(x, \lambda)(x - \mu)^\top] + \Cx
    \end{align*}
    where
    \begin{align*}
        \Cx &= \E[(x - \mu) g(x, \lambda)^\top] \E[g(x, \lambda) g(x, \lambda)^\top]^{-1} \E\left[\frac{\partial g(x, \lambda)}{\partial \lambda}\right] \\
        &\;\;\;\;\;\; \cdot \bigg (\E\left[\frac{\partial g(x, \lambda)}{\partial \lambda}\right] \E[g(x, \lambda) g(x, \lambda)^\top]^{-1} \E\left[\frac{\partial g(x, \lambda)}{\partial \lambda}\right] + \V[\Delta]^{-1} \bigg )^{-1} \\
        &\;\;\;\;\;\; \cdot \E\left[\frac{\partial g(x, \lambda)}{\partial \lambda}\right] \E[g(x, \lambda) g(x, \lambda)^\top]^{-1} \E[g(x, \lambda)(x - \mu)^\top].
    \end{align*}
    Thus, $\Sigma_2 = \V[x] - \E[(x - \mu) g(x, \lambda)^\top] \E[g(x, \lambda) g(x, \lambda)^\top]^{-1} \E[(x - \mu) g(x, \lambda)] + \Cx$.

    Finally, we have $\Sigma_1 - \Sigma_2 = \E[(x - \mu) g(x, \lambda)^\top] \E[g(x, \lambda) g(x, \lambda)^\top]^{-1} \E[(x - \mu) g(x, \lambda)] - \Cx$
\end{proof}

\begin{theorem} \label{thm:linear-concentration}
    Let $\xi_\Sigma \sim \mathcal{N}(0, \Sigma)$. With probability at least $1 - \delta$,
    \begin{align*}
        \sup_{u \in K} \langle u, \xi_\Sigma \rangle \leq \mathcal{G}(\Sigma^{1/2} K) + \sqrt{2 \log (1 / \delta) \cdot \sup_{u \in K} u^\top \Sigma u}
    \end{align*}
\end{theorem}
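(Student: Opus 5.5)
The plan is to reduce to a standard Gaussian and then apply Gaussian concentration for Lipschitz functions. Write $\xi_\Sigma = \Sigma^{1/2}\xi$ with $\xi \sim \mathcal{N}(0, I_d)$. Then
\begin{align*}
    \sup_{u \in K} \langle u, \xi_\Sigma \rangle = \sup_{u \in K} \langle \Sigma^{1/2} u, \xi \rangle = \sup_{v \in \Sigma^{1/2} K} \langle v, \xi \rangle =: F(\xi).
\end{align*}
By \pref{def:gaussian-complexity}, the expectation is $\E[F(\xi)] = \mathcal{G}(\Sigma^{1/2} K)$. This identifies the first term of the bound as the mean of the supremum. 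The remaining task is to control the deviation of $F(\xi)$ above its mean.

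Next I would show that $F$ is Lipschitz in $\xi$ with respect to the Euclidean norm. Since $F$ is a supremum of linear functionals, for any $\xi, \xi'$ we have
\begin{align*}
    F(\xi) - F(\xi') \leq \sup_{v \in \Sigma^{1/2}K} \langle v, \xi - \xi' \rangle \leq \sup_{v \in \Sigma^{1/2}K} \|v\| \, \|\xi - \xi'\|.
\end{align*}
By symmetry, $F$ is $L$-Lipschitz with
\begin{align*}
    L^2 = \sup_{u \in K} \|\Sigma^{1/2}u\|^2 = \sup_{u \in K} u^\top \Sigma u.
\end{align*}
Here I would assume $K$ is bounded in the $\Sigma$-seminorm, so that $L < \infty$ and $F$ is finite and measurable. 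For infinite $K$, I would handle this by taking a countable dense subset, since $F$ is a supremum of continuous functions.

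Then I would invoke the Tsirelson--Ibragimov--Sudakov (Borell--TIS) Gaussian concentration inequality for $L$-Lipschitz functions of a standard Gaussian:
\begin{align*}
    \Pr\bigl[F(\xi) - \E[F(\xi)] \geq t\bigr] \leq e^{-t^2/(2L^2)}.
\end{align*}
Setting the right-hand side equal to $\delta$ gives $t = \sqrt{2 L^2 \log(1/\delta)}$. Substituting $\E[F] = \mathcal{G}(\Sigma^{1/2}K)$ and $L^2 = \sup_{u\in K} u^\top\Sigma u$ yields the claimed bound with probability at least $1-\delta$.

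The main obstacle is the Lipschitz concentration step itself, which I would cite as a standard result rather than reprove. Two further points need care. First, $\Sigma$ may be only positive semidefinite, as with $\Sigma_1-\Sigma_2$. In that case I would use the PSD square root; the representation $\xi_\Sigma = \Sigma^{1/2}\xi$ still holds in distribution, so the argument goes through unchanged. Second, when the result is applied to $|u^\top(\cdot)|$ in \pref{thm:pred-linear}, one must bound both tails, or symmetrize $K$ by replacing it with $K \cup (-K)$. This is the source of the $\log(2/\delta)$ there, via a union bound on the two one-sided events.
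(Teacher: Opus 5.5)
Your proof is correct and follows the paper's argument step for step: write $\xi_\Sigma = \Sigma^{1/2}\xi$, identify $\E\bigl[\sup_{u\in K}\langle \Sigma^{1/2}u,\xi\rangle\bigr]$ with $\mathcal{G}(\Sigma^{1/2}K)$, show the supremum is Lipschitz with constant $L=\sup_{u\in K}\sqrt{u^\top\Sigma u}$, and apply Borell--TIS concentration with $t=\sqrt{2L^2\log(1/\delta)}$. Your extra remarks on a merely PSD $\Sigma$, on boundedness and separability of $K$, and on two-sided use in \pref{thm:pred-linear} go beyond what the paper writes, but they do not change the argument.
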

\begin{proof}
    Start by considering $\mathcal{G}_{\Sigma}(K)$,
    \begin{align*}
        \mathcal{G}_{\Sigma}(K) = \E \left[ \sup_{u \in K} \langle u, \xi_\Sigma \rangle \right].
    \end{align*}
    Let $\xi \sim \mathcal{N}(0, I_d)$. Note that $\xi_\Sigma = \Sigma^{1/2} \xi$. Thus, 
    \begin{align*}
        \mathcal{G}_{\Sigma}(K) &= \E \left[ \sup_{u \in K} \langle u, \xi_\Sigma \rangle \right] \\
        &= \E \left[ \sup_{u \in K} \langle u, \Sigma^{1/2} \xi \rangle \right] \\
        &= \E \left[ \sup_{u \in K} \langle \Sigma^{1/2} u, \xi \rangle \right] \tag{since $\Sigma^{1/2}$ is symmetric} \\
        &= \E \left[ \sup_{u' \in K_{\Sigma^{1/2}}} \langle u', \xi \rangle \right] \tag{$K_{\Sigma^{1/2}} = \{\Sigma^{1/2} u \mid u \in K\}$} \\
        &= \mathcal{G}(K_{\Sigma^{1/2}})
    \end{align*}
    Let $\xi \sim \mathcal{N}(0, I_d)$. The mapping $g(\xi) : \xi \mapsto \sup_{u \in K} \langle \Sigma^{1/2} u, \xi \rangle$ is Lipschitz with $L = \sup_{u \in K} \sqrt{u^\top \Sigma u}$. Thus, we can use the concentration for Lipschitz functionals of standard Gaussians (Borel-CIS inequality) to get a high probability bound.
    
    Let $Z = \sup_{u \in K} \langle u, \xi_\Sigma \rangle$. Then
    \begin{align*}
        \Pr[Z \geq \E[Z] + t] = \Pr[Z \geq \mathcal{G}(\Sigma^{1/2} K) + t] \leq e^{-t^2 / 2L^2}.
    \end{align*}

    Let 
    \begin{align*}
        \delta &= e^{-t^2 / 2L^2} \\
        \log (1 / \delta) &= \frac{t^2}{2L^2} \\
        2L^2 \log (1 / \delta) &= t^2 \\
        \sqrt{2L^2 \log (1 / \delta)} &= t
    \end{align*}

    Thus, with probability at least $1 - \delta$,
    \begin{align*}
        \sup_{u \in K} \langle u, \xi_\Sigma \rangle \leq \mathcal{G}(K_{\Sigma^{1/2}}) + \sqrt{2 \log (1 / \delta) \cdot \sup_{u \in K} u^\top \Sigma u}.
    \end{align*}
\end{proof}

We can use standard techniques to compute Gaussian complexity $\mathcal{G}(\Sigma^{1/2} \mathcal{Q}_{\mathrm{lin}})$. For example, if $\mathcal{Q}_{\mathrm{lin}}$ is the Euclidean ball of radius $B$, then $\mathcal{G}(\Sigma^{1/2} \mathcal{Q}_{\mathrm{lin}}) = B \sqrt{\mathrm{tr}(\Sigma)}$. If $\mathcal{Q}_{\mathrm{lin}}$ is the L1 ball of radius $B$, then $\mathcal{G}(\Sigma^{1/2} \mathcal{Q}_{\mathrm{lin}}) = B \sqrt{\log d} \cdot \max_i \sqrt{\Sigma[i,i]}$. We refer the reader to \cite{bartlett2003rademacherandgaussiancomplexities, kakade2008complexityoflinearpred} for more details.

\predeuclidball*
\begin{proof}
    From \pref{thm:pred-linear}, we know that
    \begin{align*}
        \gamma \leq \frac{H}{(1-\alpha) \alpha N} \left( \mathcal{G}(\mathcal{U}_{\mathrm{lin}, \Sigma_1}) + \sqrt{2 (1+\log (2 )) \cdot \diam_{\Sigma_1}(\mathcal{U}_{\mathrm{lin}})} \right) \left( \mathcal{G}(\mathcal{U}_{\mathrm{lin}, \Sigma_1 - \Sigma_2}) + \sqrt{2 (1+\log (2 )) \cdot \diam_{\Sigma_1 - \Sigma_2}(\mathcal{U}_{\mathrm{lin}})} \right)
    \end{align*}

    When $\mathcal{U}_{\mathrm{lin}}$ is contained within the Euclidean ball of radius $B$,
    \begin{align*}
        \mathcal{G}(\mathcal{U}_{\mathrm{lin}, \Sigma}) &= B \sqrt{\mathrm{tr}(\Sigma)}, \\
        \diam_{\Sigma} (\mathcal{U}_{\mathrm{lin}}) &= B^2 \cdot \lambda_{\max}(\Sigma).
    \end{align*}

    From \pref{thm:sigma-x-form},
    \begin{align*}
        \Sigma_1 &= \V[x], \\
        \Sigma_2 &= \V[x] - \E[(x - \mu) g(x, \lambda)^\top] \E[g(x, \lambda) g(x, \lambda)^\top]^{-1} \E[g(x, \lambda)(x - \mu)^\top] + \Cx, \\
        \Sigma_1 - \Sigma_2 &= \E[(x - \mu) g(x, \lambda)^\top] \E[g(x, \lambda) g(x, \lambda)^\top]^{-1} \E[g(x, \lambda)(x - \mu)^\top] - \Cx
    \end{align*}

    Then
    \begin{align*}
        \lambda_{\max}(\Sigma_1) &= \lambda_{\max}(\V[x])
    \end{align*}
    \begin{align*}
        \lambda_{\max}(\Sigma_1 - \Sigma_2) &= \lambda_{\max}(\E[(x - \mu) g(x, \lambda)^\top] \E[g(x, \lambda) g(x, \lambda)^\top]^{-1} \E[g(x, \lambda)(x - \mu)^\top] - \Cx) \\
        &\leq \lambda_{\max}(\E[(x - \mu) g(x, \lambda)^\top] \E[g(x, \lambda) g(x, \lambda)^\top]^{-1} \E[g(x, \lambda)(x - \mu)^\top]) - \lambda_{\min}(\Cx) \\
        &= \lambda_{\max}(\V[x] \V[x]^{-1}\E[(x - \mu) g(x, \lambda)^\top] \E[g(x, \lambda) g(x, \lambda)^\top]^{-1} \E[g(x, \lambda)(x - \mu)^\top]) \\
        &\;\;\;\;\;\;- \lambda_{\min}(\Cx) \\
        &\leq \lambda_{\max}(\V[x]) \cdot \lambda_{\max} (\V[x]^{-1}\E[(x - \mu) g(x, \lambda)^\top] \E[g(x, \lambda) g(x, \lambda)^\top]^{-1} \E[g(x, \lambda)(x - \mu)^\top]) \\
        &\;\;\;\;\;\;- \lambda_{\min}(\Cx) \\
        &= \lambda_{\max}(\Sigma_1) \cdot \cc^1 (x, g(x, \lambda))^2 - \lambda_{\min}(\Cx) \\
        &= \lambda_{\max}(\Sigma_1) \cdot \left( \cc^1 (x, g(x, \lambda))^2 - \frac{\lambda_{\min}(\Cx)}{\lambda_{\max}(\Sigma_1)} \right) \\
        &= \lambda_{\max}(\Sigma_1) \cdot \left( \cc^1 (x, g(x, \lambda))^2 - \frac{\lambda_{\min}(\Cx)}{\mathrm{tr}(\Sigma_1)} \right)
    \end{align*}

    Next, consider
    \begin{align*}
        \mathrm{tr}(\Sigma_1 - \Sigma_2) &= \mathrm{tr}(\E[(x - \mu) g(x, \lambda)^\top] \E[g(x, \lambda) g(x, \lambda)^\top]^{-1} \E[g(x, \lambda)(x - \mu)^\top] - \Cx) \\
        &= \mathrm{tr}(\E[(x - \mu) g(x, \lambda)^\top] \E[g(x, \lambda) g(x, \lambda)^\top]^{-1} \E[g(x, \lambda)(x - \mu)^\top]) - \mathrm{tr}(\Cx) \\
        &= \mathrm{tr}(\V[x] \V[x]^{-1}\E[(x - \mu) g(x, \lambda)^\top] \E[g(x, \lambda) g(x, \lambda)^\top]^{-1} \E[g(x, \lambda)(x - \mu)^\top]) - \mathrm{tr}(\Cx) \\
        &\leq \mathrm{tr}(\V[x]) \cdot \lambda_{\max} (\V[x]^{-1} \E[(x - \mu) g(x, \lambda)^\top] \E[g(x, \lambda) g(x, \lambda)^\top]^{-1} \E[g(x, \lambda) (x - \mu)^\top]) - \mathrm{tr}(\Cx) \tag{Holder's inequality for matrices} \\
        &= \mathrm{tr}(\Sigma_1) \cdot \cc^1 (x, g(x, \lambda))^2 - \mathrm{tr}(\Cx) \\
        &= \mathrm{tr}(\Sigma_1) \cdot \cc^1 (x, g(x, \lambda))^2 - \lambda_{\min}(\Cx) \\
        &= \mathrm{tr}(\Sigma_1) \cdot \left( \cc^1 (x, g(x, \lambda))^2 - \frac{\lambda_{\min}(\Cx)}{\mathrm{tr}(\Sigma_1)} \right)
    \end{align*}

    Thus,
    \begin{align*}
        \gamma &= \frac{H}{n} \cdot \left( \mathcal{G}(\mathcal{U}_{\mathrm{lin}, \Sigma_1}) + \sqrt{2(1 + \log (2)) \diam_{\Sigma_1} (\mathcal{U}_{\mathrm{lin}})}\right) \cdot \left( \mathcal{G}(\mathcal{U}_{\mathrm{lin}, \Sigma_1 - \Sigma_2}) + \sqrt{2(1 + \log (2)) \diam_{\Sigma_1 - \Sigma_2} (\mathcal{U}_{\mathrm{lin}})} \right) \\
        &\leq \frac{H}{n} \cdot \left( B \sqrt{\mathrm{tr}(\Sigma_1)} + B \sqrt{2(1 + \log (2)) \cdot \lambda_{\max}(\Sigma_1)}\right) \left( B \sqrt{\mathrm{tr}(\Sigma_1 - \Sigma_2)} + B\sqrt{2(1 + \log (2)) \cdot \lambda_{\max}(\Sigma_1 - \Sigma_2)} \right) \\
        &\leq \frac{HB}{n} \cdot \left( \sqrt{\mathrm{tr}(\Sigma_1)} + \sqrt{2(1 + \log (2)) \cdot \lambda_{\max}(\Sigma_1)}\right) \cdot \Bigg ( \sqrt{\mathrm{tr}(\Sigma_1) \cdot \left( \cc^1 (x, g(x, \lambda))^2 - \frac{\lambda_{\min}(\Cx)}{\mathrm{tr}(\Sigma_1)} \right)} \\
        &\;\;\;\;\;\;+ \sqrt{2(1 + \log (2)) \cdot \lambda_{\max}(\Sigma_1) \cdot \left( \cc^1 (x, g(x, \lambda))^2 - \frac{\lambda_{\min}(\Cx)}{\mathrm{tr}(\Sigma_1)} \right)} \Bigg ) \\
        &= \frac{HB}{n} \cdot \sqrt{\cc^1 (x, g(x, \lambda))^2 - \frac{\lambda_{\min}(\Cx)}{\mathrm{tr}(\Sigma_1)} } \cdot \left( \sqrt{\mathrm{tr}(\Sigma_1)} + \sqrt{2(1 + \log (2)) \cdot \lambda_{\max}(\Sigma_1)}\right)^2 \\
        &= \frac{2HB}{n} \cdot \sqrt{\cc^1 (x, g(x, \lambda))^2 - \frac{\lambda_{\min}(\Cx)}{\mathrm{tr}(\Sigma_1)}} \cdot \left( \mathrm{tr}(\Sigma_1) + 2(1 + \log (2)) \cdot \lambda_{\max}(\Sigma_1)\right) \\
        &\leq \frac{2HB}{n} \cdot \sqrt{\cc^1 (x, g(x, \lambda))^2 - \frac{\lambda_{\min}(\Cx)}{\mathrm{tr}(\Sigma_1)}} \left( \mathrm{tr}(\Sigma_1) + 2(1 + \log (2)) \cdot \mathrm{tr}(\Sigma_1) \right) \\
        &\leq \frac{4HB}{n} \cdot \sqrt{\cc^1 (x, g(x, \lambda))^2 - \frac{\lambda_{\min}(\Cx)}{\mathrm{tr}(\Sigma_1)}} \left((2 + \log (2)) \cdot \mathrm{tr}(\Sigma_1) \right) \\
        &= \frac{4HB (2 + \log (2)) \cdot \mathrm{tr}(\Sigma_1)}{n} \cdot \sqrt{\cc^1 (x, g(x, \lambda))^2 - \frac{\lambda_{\min}(\Cx)}{\mathrm{tr}(\Sigma_1)}} \\
        &= \frac{4HB (2 + \log (2)) \cdot \mathrm{tr}(\V[x])}{(1-\alpha) \alpha N} \cdot \sqrt{\cc^1 (x, g(x, \lambda))^2 - \frac{\lambda_{\min}(\Cx)}{\mathrm{tr}(\V[x])}}
    \end{align*}
\end{proof}

\subsubsection{\pref{thm:pred-lipschitz}, Lipschitz Queries}

\predlipschitz*
\begin{proof}
    We want to bound $\sup_{f \in \mathcal{Q}} \ell (\gmm_{f \mid C(S)}) - \ell (\gmm_{f \mid C(S), A(S)})$ where $\ell$ is the expected log loss.
\begin{align*}
    \sup_{f \in \mathcal{Q}} \ell (\gmm_{f \mid C(S)}) - \ell (\gmm_{f \mid C(S), A(S)}) &\leq \sup_{f \in \mathcal{Q}} H \cdot |\gmm_{f \mid C(S)} - \theta_0^f| \cdot |\gmm_{f \mid C(S)}) - \gmm_{f \mid C(S), A(S)}| \\
    &= \sup_{f \in \mathcal{Q}} H \cdot |f(\gmm_{x \mid C(S)}) - f(\mu^x)| \cdot |f(\gmm_{x \mid C(S)}) - f(\gmm_{x \mid C(S), A(S)})| \\
    &= HL^2 \cdot \norm{\gmm_{x \mid C(S)} - \mu} \cdot \norm{\gmm_{x \mid C(S)} - \gmm_{x \mid C(S), A(S)}}
\end{align*}

Let $n = (1-\alpha) \alpha n$. From GMM and \pref{thm:dist-of-gmm-diff}, we know that $\sqrt{n} (\gmm_{x \mid C(S)} - \mu) \sim \mathcal{N}(0, \Sigma_1)$, $\sqrt{n} (\gmm_{x \mid C(S), A(S)} - \mu) \sim \mathcal{N}(0, \Sigma_2)$, and $\sqrt{n} (\gmm_{x \mid C(S)} - \gmm_{x \mid C(S), A(S)}) \sim \mathcal{N}(0, \Sigma_1 - \Sigma_2)$. Thus, we can use standard concentration bounds for Gaussian random variables.

With probability at least $1-\delta$,
\begin{align*}
    \sup_{f \in \mathcal{Q}} \ell (\gmm_{f \mid C(S)}) - \ell &(\gmm_{f \mid C(S), A(S)}) \\
    &\leq HL^2 \cdot \norm{\gmm_{x \mid C(S)} - \mu} \cdot \norm{\gmm_{x \mid C(S)} - \gmm_{x\mid C(S), A(S)}} \\
    &\leq \frac{HL^2}{n} \cdot \left(\sqrt{\mathrm{tr}(\Sigma_1)} + \sqrt{2\lambda_{\max} (\Sigma_1) \log (2/\delta)} \right) \cdot \left(\sqrt{\mathrm{tr}(\Sigma_1 - \Sigma_2)} + \sqrt{2\lambda_{\max} (\Sigma_1 - \Sigma_2) \log (2/\delta)} \right)
\end{align*}

Let $X = \sup_{f \in \mathcal{Q}} \ell (\gmm_{f \mid C(S)}) - \ell (\gmm_{f \mid C(S), A(S)})$. Note that predictability is $\gamma = \E[X]$. Thus, using standard techniques to integrate the tail, we can convert the high-probability bound into a bound on the expectation, yielding the final predictability bound. Following the proof in \pref{thm:pred-linear},
\begin{align*}
    \gamma &= \E[X] \leq \frac{HL^2}{n} \cdot \left(\sqrt{\mathrm{tr}(\Sigma_1)} + \sqrt{2\lambda_{\max} (\Sigma_1) (1 + \log (2))} \right) \cdot \left(\sqrt{\mathrm{tr}(\Sigma_1 - \Sigma_2)} + \sqrt{2\lambda_{\max} (\Sigma_1 - \Sigma_2) (1+\log (2))} \right)
\end{align*}

We can further simplify this by following \pref{thm:pred-euclid-ball} and substituting in $\Sigma_1$ and $\Sigma_2$ from \pref{thm:sigma-x-form}
\begin{align*}
    \gamma &= \frac{4HL^2 (2 + \log (2)) \cdot \mathrm{tr}(\Sigma_1)}{(1-\alpha) \alpha N} \cdot \sqrt{\cc^1 (x, g(x, \lambda))^2 - \frac{\lambda_{\min}(\Cx)}{\mathrm{tr}(\Sigma_1)}}
\end{align*}
\end{proof}

\subsection{Helpful Theorems}

\begin{theorem}[\cite{mcfadden1999econometricsbook} Chapter 3, Lemma 3.1] \label{thm:opt-w}
    $(G^\top W G)^{-1} G^\top W \Omega W G (G^\top W G)^{-1} - (G^\top \Omega^{-1} G)^{-1}$ is positive semidefinite.
\end{theorem}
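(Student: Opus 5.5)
The plan is to write the difference as a quadratic form in an orthogonal projection. We use the standing assumptions from the regularity conditions in \pref{app:reg-conditions}: $\Omega$ is symmetric positive definite, $W$ is symmetric positive definite, and $G$ is $m\times r$ of full column rank $r$. Under these, both $G^\top W G$ and $G^\top \Omega^{-1} G$ are invertible, and $\Omega^{1/2}$ and $\Omega^{-1/2}$ are well defined and symmetric.

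\textbf{Step 1 (two factorizations).} Define
\begin{align*}
A = \Omega^{1/2} W G (G^\top W G)^{-1}, \qquad B = \Omega^{-1/2} G .
\end{align*}
Then $A^\top A = (G^\top W G)^{-1} G^\top W \Omega W G (G^\top W G)^{-1}$, which is the first (sandwich) term. Also $B^\top B = G^\top \Omega^{-1} G$, so the second term is $(B^\top B)^{-1}$.

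\textbf{Step 2 (the key identity).} We have
\begin{align*}
A^\top B = (G^\top W G)^{-1} G^\top W \Omega^{1/2}\Omega^{-1/2} G = I_r .
\end{align*}
Let $P_B = B (B^\top B)^{-1} B^\top$ be the orthogonal projection onto the column space of $B$. The identity gives
\begin{align*}
A^\top P_B A = (A^\top B)(B^\top B)^{-1}(B^\top A) = (B^\top B)^{-1}.
\end{align*}

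\textbf{Step 3 (conclude).} Subtracting the two expressions,
\begin{align*}
(G^\top W G)^{-1} G^\top W \Omega W G (G^\top W G)^{-1} - (G^\top \Omega^{-1} G)^{-1} = A^\top (I_m - P_B) A .
\end{align*}
The matrix $I_m - P_B$ is a symmetric idempotent, hence positive semidefinite. Therefore for any $v$,
\begin{align*}
v^\top A^\top (I_m-P_B) A v = \|(I_m-P_B)Av\|^2 \ge 0,
\end{align*}
which proves the claim.

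The only real obstacle is finding the right choice of $A$ and $B$, specifically one with $A^\top B = I$, so that the optimal variance appears as $A^\top P_B A$. This is the same orthogonality idea used in the proof of \pref{thm:dist-of-gmm-diff}, where any $\bar A$ with $\bar A^\top G = I$ is compared against the optimal $A^*$. As a consistency check, equality holds precisely when $Av$ lies in the column space of $B$ for all $v$, for instance when $W = \Omega^{-1}$.
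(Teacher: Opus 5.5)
Your proof is correct and complete. The paper gives no argument of its own here (it only cites McFadden), and your factor-through-$\Omega^{1/2}$ projection argument is the standard proof. It is also exactly the orthogonality idea from the proof of \pref{thm:dist-of-gmm-diff}: with $\bar A = WG(G^\top WG)^{-1}$ and $A^* = \Omega^{-1}G(G^\top\Omega^{-1}G)^{-1}$ one has $(I_m-P_B)A = \Omega^{1/2}(\bar A - A^*)$ for your $A$. So your $A^\top(I_m-P_B)A$ is the same matrix as $(\bar A - A^*)^\top\Omega(\bar A - A^*)$, which that proof obtains from $(\bar A - A^*)^\top\Omega A^* = 0$.
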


\begin{theorem} [\cite{mcfadden1999econometricsbook} Chapter 3, Lemma 3.2] \label{thm:func-of-uniform}
    If $Y_n \rightarrow_d Y_0$ and $f$ is a continuous function of an open set that contains the support of $Y_0$, then $$f(Y_n) \rightarrow_d f(Y_0).$$
\end{theorem}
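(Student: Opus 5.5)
We argue through the portmanteau characterization of convergence in distribution, which avoids any uniformity assumption on $f$. Let $U$ be the open set on which $f$ is continuous, with $\Pr[Y_0 \in U] = 1$ because $U \supseteq \mathrm{supp}(Y_0)$. Outside $U$ the map $f$ may be undefined or badly behaved, so we first extend it to a Borel measurable function on the whole space, for instance by a constant on $U^c$. This makes $f(Y_n)$ a random variable. Since $Y_0 \in U$ almost surely, the extension changes neither the law of $f(Y_0)$ nor the limit.

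The key step is the closed-set form of the portmanteau theorem. It suffices to show that for every closed set $F$,
\begin{align*}
\limsup_{n} \Pr[f(Y_n) \in F] \le \Pr[f(Y_0) \in F].
\end{align*}
Write $\overline{f^{-1}(F)}$ for the closure of $f^{-1}(F)$. We claim
\begin{align*}
\overline{f^{-1}(F)} \subseteq f^{-1}(F) \cup U^c.
\end{align*}
To see this, take $y \in \overline{f^{-1}(F)} \cap U$ and a sequence $y_k \in f^{-1}(F)$ with $y_k \to y$. Because $U$ is open, $y_k \in U$ eventually, and continuity of $f$ at $y$ gives $f(y_k) \to f(y)$. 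Since each $f(y_k) \in F$ and $F$ is closed, $f(y) \in F$.

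Applying the portmanteau theorem to $Y_n \rightarrow_d Y_0$ with the closed set $\overline{f^{-1}(F)}$ then gives
\begin{align*}
\limsup_n \Pr[f(Y_n)\in F] \le \limsup_n \Pr\big[Y_n \in \overline{f^{-1}(F)}\big] \le \Pr\big[Y_0 \in \overline{f^{-1}(F)}\big] \le \Pr[Y_0 \in f^{-1}(F)] + \Pr[Y_0 \in U^c].
\end{align*}
The last term vanishes, and the right-hand side equals $\Pr[f(Y_0)\in F]$. This proves $f(Y_n)\rightarrow_d f(Y_0)$.

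An alternative route uses the Skorokhod representation. One realizes $Y_n' \stackrel{d}{=} Y_n$ and $Y_0' \stackrel{d}{=} Y_0$ with $Y_n' \to Y_0'$ almost surely. Since $Y_0' \in U$ almost surely and $U$ is open, $Y_n' \in U$ eventually, so continuity gives $f(Y_n') \to f(Y_0')$ almost surely, which implies convergence in distribution. We prefer the portmanteau argument because it needs no separability assumptions.

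The main obstacle is the behavior of $f$ off $U$. We must ensure that $f(Y_n)$ is measurable and that the mass $Y_n$ places outside $U$ does not affect the limit. Both are handled by the measurable extension together with the closure inclusion above, which confines all boundary effects to the null set $\{Y_0 \in U^c\}$.
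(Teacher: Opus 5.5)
Your argument is correct. It is the standard portmanteau proof of the continuous mapping theorem, and the inclusion $\overline{f^{-1}(F)} \subseteq f^{-1}(F) \cup U^c$ is exactly what lets continuity on $U$ alone suffice: every boundary effect is pushed onto the $Y_0$-null set $U^c$. Since that inclusion only uses the values of $f$ on $U$, it also shows that the choice of extension off $U$ does not matter.

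The paper gives no proof of its own here; the lemma is quoted from McFadden as an auxiliary tool, so there is no competing route to compare against. Where the paper invokes it (the consistency step in the proof of Theorem~\ref{thm:gmm-efficiency-swor}), it passes from $f_n(\theta)\rightarrow_p \E[f(\theta)]$ to $Q_n(\theta)\rightarrow_p \E[f(\theta)]^\top W\E[f(\theta)]$. The limit there is a constant, so your $\rightarrow_d$ conclusion upgrades to the $\rightarrow_p$ statement actually used.

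One small caveat on your closing remark. Deducing $\Pr[Y_0\in U]=1$ from $U\supseteq\mathrm{supp}(Y_0)$ relies on the support carrying full mass, which is guaranteed in separable spaces such as $\mathbb{R}^k$. So the argument needs no separability only if you take $\Pr[Y_0\in U]=1$ itself as the hypothesis.
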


\begin{theorem} [\cite{mcfadden1999econometricsbook} Chapter 3, Lemma 3.4] \label{thm:continous-mapping}
    If $Y_n (\theta) \rightarrow_p Y_0 (\theta)$ uniformly for $\theta \in \Theta \subseteq \mathbb{R}^d$, random vectors $\tau_0, \tau_n \in \Theta$ satisfy $\tau_n \rightarrow_p \tau_0$, and $Y_0(\theta)$ is almost surely continuous at $\tau_0$, then $$Y_n (\tau_n) \rightarrow_p Y_0(\tau_0).$$
\end{theorem}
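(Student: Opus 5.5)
The plan is a two-term triangle inequality that splits off the uniform convergence from the continuity of the limit. For every $n$ and every outcome,
\begin{align*}
    |Y_n(\tau_n) - Y_0(\tau_0)| \leq \sup_{\theta \in \Theta} |Y_n(\theta) - Y_0(\theta)| + |Y_0(\tau_n) - Y_0(\tau_0)|,
\end{align*}
which uses only that $\tau_n \in \Theta$. It then suffices to show each term on the right tends to $0$ in probability, since a sum of two $o_p(1)$ terms is $o_p(1)$.

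The first term is exactly the hypothesis: uniform convergence in probability over $\Theta$ means $\sup_{\theta \in \Theta} |Y_n(\theta) - Y_0(\theta)| \rightarrow_p 0$.

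For the second term, fix $\varepsilon > 0$ and $\eta > 0$. On the probability-one event $E$ where $Y_0$ is continuous at $\tau_0$, define the random radius
\begin{align*}
    \delta(\omega) = \sup\{ r \in (0,1] : |Y_0(\theta) - Y_0(\tau_0)| < \varepsilon \text{ whenever } \|\theta - \tau_0\| < r \}.
\end{align*}
This $\delta(\omega)$ is strictly positive on $E$. Hence $\Pr[\delta(\omega) < r] \downarrow 0$ as $r \downarrow 0$, and we may pick a deterministic $r_0 > 0$ with $\Pr[\delta(\omega) < r_0] < \eta$. On the event $\{\|\tau_n - \tau_0\| < r_0\} \cap \{\delta(\omega) \geq r_0\} \cap E$ we have $|Y_0(\tau_n) - Y_0(\tau_0)| < \varepsilon$. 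Therefore
\begin{align*}
    \Pr[|Y_0(\tau_n) - Y_0(\tau_0)| \geq \varepsilon] \leq \eta + \Pr[\|\tau_n - \tau_0\| \geq r_0].
\end{align*}
The last probability tends to $0$ because $\tau_n \rightarrow_p \tau_0$. Since $\eta$ is arbitrary, the $\limsup$ of the left side is $0$. An equivalent route is the subsequence characterization: every subsequence of $\tau_n$ has a further subsequence converging a.s. to $\tau_0$, and along it $Y_0(\tau_{n_k}) \to Y_0(\tau_0)$ a.s. by continuity.

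I expect the main obstacle to be measurability rather than the analysis. The quantities $\sup_{\theta} |Y_n(\theta) - Y_0(\theta)|$ and $\delta(\omega)$ need not be measurable without further structure. I would first try to resolve this by assuming $Y_n$ and $Y_0$ are separable, for instance continuous in $\theta$ as in the regularity conditions of \pref{app:reg-conditions}, so that the suprema may be taken over a countable dense subset of $\Theta$. Failing that, I would interpret all probabilities as outer probabilities; the inequalities above remain valid for outer measure, so the argument is unchanged.
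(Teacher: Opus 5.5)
The paper gives no proof of this lemma; it is only cited from McFadden. Your argument is the standard one and is correct: bound $|Y_n(\tau_n)-Y_0(\tau_0)|$ by $\sup_{\theta\in\Theta}|Y_n(\theta)-Y_0(\theta)|+|Y_0(\tau_n)-Y_0(\tau_0)|$, then dispose of the first term by the uniform hypothesis and the second by continuity at $\tau_0$. Both your random-modulus argument and the subsequence argument work. The proof is complete once the quantities involved are random variables, which the statement tacitly assumes. Your primary fix secures this: with continuity in $\theta$, suprema reduce to a countable dense subset of $\Theta$, and $Y_0(\tau_n(\omega),\omega)$ is measurable because $Y_0$ is then a Carath\'eodory function.

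One correction to your closing remark: switching to outer probabilities does \emph{not} leave the argument unchanged. Outer probability is monotone and subadditive but not continuous from above. Continuity from above is exactly what your step $\Pr[\delta(\omega)<r]\downarrow 0$ uses.

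The conclusion can in fact fail without measurability. Partition $[0,1]$, with Lebesgue measure, into disjoint sets $B_1,B_2,\dots$, each of outer measure one (a classical construction), and set $\delta(\omega)=1/k$ on $B_k$. Take $\Theta=\mathbb{R}$, $\tau_0=0$, $\tau_n=1/n$, and $Y_n=Y_0$ with $Y_0(\theta,\omega)=\ind{|\theta|\geq\delta(\omega)}$. Every path $Y_0(\cdot,\omega)$ vanishes on $(-\delta(\omega),\delta(\omega))$, so it is continuous at $0$, and $Y_0(\tau_n,\omega)\to Y_0(0,\omega)$ for every $\omega$. Yet $\{Y_0(\tau_n)\neq Y_0(0)\}=\bigcup_{k\geq n}B_k$ has outer probability one for every $n$. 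The same example shows the subsequence route does not survive either. So measurability is a genuine hypothesis here, not something outer measure lets you bypass.
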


\end{document}